\newcommand{\siamonly}[1]{}
\newcommand{\tronly}[1]{#1}
\renewenvironment{abstract}{%
\hfill\begin{minipage}{0.9\textwidth}
\par\small\textbf{Abstract.}}
{\par\noindent\end{minipage}\hfill}
\definecolor{darkcolor}{rgb}{0.03, 0.27, 0.49}
\numberwithin{theorem}{section}
\numberwithin{algorithm}{section}
\numberwithin{equation}{section}
\newif\iflowresfigs%
\newcommand{\siamwidth}[1]{%
\begin{minipage}{5.125in}%
{#1}%
\end{minipage}}
\renewcommand{\cite}{\citep}
\definecolor{darkgreen}{rgb}{0.53, 0.66, 0.42}
\newcommand{\probref}[1]{\hyperref[#1]{Problem~\eqref{#1}}}
\newcommand{\subprobref}[1]{{\hypersetup{hidelinks}\hyperref[#1]{subproblem}}~\eqref{#1}}
\newcommand{\Subprobref}[1]{\hyperref[#1]{Subproblem~\eqref{#1}}}
\newcommand{\stMinCut}{MinCut\xspace}
\newcommand{\secref}[1]{\hyperref[#1]{Chapter~\ref{#1}}}
\newcommand{\Sbar}{\bar{S}}
\newcommand{\Rbar}{\bar{R}}
\newcommand{\rvol}{\text{rvol}}
\renewcommand{\ones}{\textbf{1}}
\DeclareMathOperator{\diag}{diag}
\newcommand*{\metis}{\textsc{metis}\xspace}
\newcommand*{\LFI}[1][]{%
	\textsc{lfi}\ifthenelse{\equal{#1}{}}{}{\mbox{-}\ensuremath{#1}}\xspace%
}
\newcommand*{\FI}{\textsc{fi}\xspace}
\newcommand*{\MQI}{\textsc{mqi}\xspace}
\newcommand*{\mgs}{\texttt{MGS}\xspace}
\newcommand*{\PRs}{\textsc{PR}\xspace}
\crefname{subsection}{section}{sections}
\Crefname{subsection}{Section}{Sections}
\definecolor{shadecolor}{gray}{0.9}
\newcounter{AsideNumber}
\Crefname{Aside}{Aside}{Asides}
\crefname{aside}{aside}{asides}
\newcommand*{\aside}[2]{\refstepcounter{AsideNumber}%
	\label{#1}%
	\begin{wrapfigure}{r}{0.4\linewidth}%
		\vspace*{-10pt}\begin{shaded*}%
			\footnotesize%
			\textsc{Aside~\theAsideNumber.~}\textit{#2}%
		\end{shaded*}\vspace*{-10pt}%
	\end{wrapfigure}%
}
\titlerule\vspace{1ex}]
\def\input@path{{}}%
\long\def \IfFileExists#1#2#3{%
    \ifx\input@path\@undefined
      \def\reserved@a{#3}%
    \else
      \def\reserved@a{\@iffileonpath{#1}{#2}{#3}}%
    \fi
  \reserved@a}
\newcommand\l@subroutine{\@dottedtocline{1}{1.5em}{2.3em}}\makeatother
\renewcommand*{\backref}[1]{}%
\renewcommand*{\backrefalt}[4]{%
  \ifcase #1 %
    No citations.%
  \or
    Cited on page #2.%
  \else
    Cited on pages #2.%
  \fi
}%
\begin{document}

\title{Flow-based Algorithms for Improving Clusters: A Unifying Framework, Software, and Performance}

\author{
        Kimon~Fountoulakis%
        \thanks{School of Computer Science, University of Waterloo, Waterloo, ON, Canada. E-mail: kfountou@uwaterloo.ca.
                KF would like to acknowledge DARPA and NSERC for providing partial support for this work.
        }
        \and
        Meng~Liu%
        \thanks{Department of Computer Science, Purdue University, West Lafayette, IN, USA. e-mail: liu1740@purdue.edu.
        }
        \and
        David~F.~Gleich%
        \thanks{Department of Computer Science, Purdue University, West Lafayette, IN, USA. e-mail: dgleich@purdue.edu.
        	  DFG would like to acknowledge NSF IIS-1546488, CCF-1909528, the NSF Center for Science of Information STC, CCF-0939370, DOE DE-SC0014543, NASA, and the Sloan Foundation for partial support for this work. 
        } 
        \and
        Michael~W.~Mahoney%
        \thanks{ICSI and Department of Statistics, University of California at Berkeley, Berkeley, CA, USA. E-mail: mmahoney@stat.berkeley.edu.
                MWM would like to acknowledge ARO, DARPA, NSF, ONR, Cray, and Intel for providing partial support for this work. 
        }
}

\date{}

\maketitle

\thispagestyle{empty}

\begin{abstract}
\noindent
Clustering points in a vector space or nodes in a graph is a ubiquitous primitive in statistical data analysis, and it is commonly used for exploratory data analysis.  
In practice, it is often of interest to ``refine'' or ``improve'' a given cluster that has been obtained by some other method.\
In this survey, we focus on principled algorithms for this \emph{cluster improvement problem}.\
Many such cluster improvement algorithms are flow-based methods, by which we mean that operationally they require the solution of a sequence of maximum flow problems on a (typically implicitly) modified data graph.\ 
These cluster improvement algorithms are powerful, both in theory and in practice, but they have not been widely adopted for problems such as community detection, local graph clustering, semi-supervised learning, etc.\
Possible reasons for this are: the steep learning curve for these algorithms; the lack of efficient and easy to use software; and the lack of detailed numerical experiments on real-world data that demonstrate their usefulness.\
Our objective here is to address these issues.\
To do so, we guide the reader through the whole process of understanding how to implement and apply these powerful algorithms.\ 
We present a unifying fractional programming optimization framework that permits us to distill, in a simple way, the crucial components of all these algorithms.\
It also makes apparent similarities and differences between related methods.\ 
Viewing these cluster improvement algorithms via a fractional programming framework suggests directions for future algorithm development.\
Finally, we develop efficient implementations of these algorithms in our LocalGraphClustering Python package, and we perform extensive numerical experiments to demonstrate the performance of these methods on social networks and image-based data graphs.
\end{abstract}

	\tableofcontents
	
	\addcontentsline{toc}{part}{Part I. Introduction and Overview of Main Results}
	\section*{\large Part I. Introduction and Overview of Main Results}

\section{Introduction}

Clustering is the process of taking a set of data as input and returning meaningful groups of that data as output. 
The literature on clustering is tremendously and notoriously extensive~\citep{Luxburg-2012-clustering,Ben-David-2018-clustering}; see also comments by Hand in the discussion of~\citet{Friedman-2004-clustering}.
It can seem that nearly every conceivable perspective on the clustering problem---from statistical to algorithmic, from optimization-based to information theoretic, from applications to formulations to implementations---that could be explored, has been explored.  Applications of clustering are far too numerous to discuss meaningfully, and they are often of greatest practical interest for ``soft'' downstream objectives such as those common in \emph{Exploratory Data Analysis}. Yet, despite comprehensive research into the problem, there are still useful and surprising new results on clustering discovered on a regular basis~\citep{Kleinberg:2002:ITC:2968618.2968676,Ackerman-2008-cluster-quality,Awasthi-2015-relax,Abbe-2018-sbm}.

Graph clustering is a special instance of the general clustering problem, where the input is a graph, in this case, a set of nodes and edges, and the output is a meaningful grouping of the graph's nodes. 
\tronly{The ubiquity of sparse relational data from internet-based applications to biology, from complex engineered systems to neuroscience, as well as new problems inspired by these domains~\citep{Newman:2010:NI:1809753,Easley:2010:NCM:1805895,brandes2005-network-analysis,Estrada2010,Traud2011,Grindrod2013,Liberti2014,Bienstock2014,Jia2015,Bertozzi2016,Estrada2016,Rombach2017,Fosdick2018,Fennell2019,Shi2019,Ehrhardt2019}, has precipitated a recent surge of graph clustering research~\citep{Newman-2006-modularity,LLDM2009,Eckles-2017-network-inference}.} 
\siamonly{The ubiquity of sparse relational data from internet-based applications to biology, from complex engineered systems to neuroscience, as well as new problems inspired by these domains~\citep{Newman:2010:NI:1809753,Easley:2010:NCM:1805895,brandes2005-network-analysis} (and within SIAM Review itself during the past decade~\citet{Estrada2010,Traud2011,Grindrod2013,Liberti2014,Bienstock2014,Jia2015,Bertozzi2016,Estrada2016,Rombach2017,Fosdick2018,Fennell2019,Shi2019,Ehrhardt2019}), has precipitated a recent surge of graph clustering research~\citep{Newman-2006-modularity,LLDM2009,Eckles-2017-network-inference}.}
For instance, in graph and network models of complex systems, the \emph{community detection} or \emph{module detection} problem is a specific instance of the graph clustering problem, in which one seeks to identify clusters that exhibit relationships distinctly different from other parts of the network.  
Consequently, there are now a large number of tools and techniques that generate clusters from graph data. 

The tools and techniques we study in this survey arise from a different and complementary perspective.
As such, they are designed to solve a different and complementary problem.
The clustering problem itself is somewhat ill-defined, but the way one often applies it in practice is while performing exploratory data analysis.
That is, one uses a clustering algorithm to ``play with'' and ``explore'' the data, tweaking the clustering to see what insights about the data are revealed.
Motivated by this, and the well-known fact that the output of even the best clustering algorithm is typically imperfectly suited to the downstream task of interest (for example~\citet{Carrasco-2003} mentions ``neither [\ldots] seems to yield really good [\ldots] clusterings of our dataset, so we have resorted to hand-built combinations''), we are interested in tools and techniques that seek to \emph{improve} or \emph{refine} a given cluster---or more generally a representative set of vertices---in a fashion that is computationally efficient, that yields a result with strong optimality guarantees, and that is useful in practice.  

Somewhat more formally, here is the \emph{cluster improvement problem}: given a graph $G=(V,E)$ and a subset of vertices $R$ that serve as a \emph{reference cluster} (or \emph{seed set}), find a nearby set $S$ that results in an \emph{improved cluster}. 
That is, \medskip 
\begin{quote}
when given as input a graph $G=(V,E)$ and a set $R \subset V$, \\
\hspace*{20pt}a \emph{cluster improvement algorithm} returns a set $S \subset V$,\\
\hspace*{20pt}where $S$ is in some sense ``better'' than $R$.
\end{quote} \medskip 
A very important point here is that both $G$ and $R$ are regarded as input to the cluster improvement problem.
This is different from more traditional graph clustering, which typically takes only $G$ as input, and it is a source of potential confusion. 
See \Cref{fig:example-sbm}, which we explain in depth in \Cref{sec:improve-vs-clustering}, for an illustration.

How to choose the set $R$, which is part of the input to a cluster improvement algorithm, is an important practical problem (akin to how to construct the input graph in more traditional graph clustering). It depends on the application of interest, and we will see several examples of it.

In the settings we will investigate in this survey, we will be (mainly) interested in graph conductance (which we will define in \Cref{sec:metrics} formally) as the cluster quality metric.
Thus, the optimization goal will be to produce a set $S$ with smaller (i.e., better) conductance than $R$. 
Generally speaking, a set of small conductance in a graph is a hint towards a bottleneck revealing an underlying cluster. While we focus on conductance, the techniques we review are more general and powerful.
For example, these ideas, algorithms, and approaches can be adapted to other graph clustering objectives such as ratio-cut~\citep{LR04}, normalized-cut~\citep{Hoc13}, and other closely related ``edge counting'' objective functions and scenarios~\citep{VKG18,Veldt-2019-resolution}. We return to the utility of conductance as an objective function to improve clusters, even those output from related objectives and algorithms, via an example in Section~\ref{sec:improve-vs-clustering}. 

We define the precise improvement problems via optimization in subsequent sections. 
For now, we treat them as black-box algorithms to explain how they might be used. 
These introductory examples use one of two algorithms, MQI~\citep{LR04} and LocalFlowImprove~\citep{OZ14}, that we will study in depth. 
Both of these cluster improvement algorithms execute an intricate sequence of max-flow or min-cut computations on graphs derived from $G$ and $R$. 
A technical difference with important practical consequences is the following: 
\medskip \begin{quote} 
$\text{MQI always returns a set $S$ of \emph{exactly optimal} conductance }$ \\ 
\hspace*{20pt}$\text{\emph{contained within the reference cluster $R$}; whereas  }$ \\
$\text{LocalFlowImprove finds an improved cluster $S$ with }$ \\
\hspace*{20pt}$\text{conductance \emph{at least as good} as that found by MQI,}$ \\
\hspace*{20pt}$\text{\emph{by both omitting vertices of $R$ and adding vertices outside $R$.}}$
\end{quote} \medskip 
In addition to these two algorithms, we will also discuss in depth the FlowImprove~\citep{AL08_SODA} method. 

\subsection{Cluster improvement: compared with graph clustering}
\label{sec:improve-vs-clustering}
\newcommand{\asidelouvain}{\aside{aside:louvain}{For this particular example, there are ways of getting a completely accurate answer that involve re-running the Louvain method or tweaking parameters. Our point is simply that we can easily improve existing clustering pipelines with flow-based improvement methods.}}

To start, consider \Cref{fig:example-sbm}, in which we consider a synthetic graph model called a stochastic block model. 
In our instance of the stochastic block model, we plant $5$ clusters of 20 vertices. 
Edges \tronly{\asidelouvain}  
between vertices in the same cluster occur at random
 with probability $0.3$. 
Edges between vertices in different clusters occur at random with probability $0.0157$. 
A popular algorithm for graph clustering is the Louvain method~\citep{Blondel-2008-louvain}. 
\siamonly{\asidelouvain}
On this problem input instance, running the Louvain method often produces a clustering with a small number of errors (\Cref{aside:louvain}). 
By using 
the LocalFlowImprove algorithm on each cluster returned by Louvain, we can directly refine the clusters output by the Louvain method (i.e., we can choose our input set $R$ to be the output of some other method). 
This example involves running the improvement algorithm one time for each cluster returned by the Louvain method. Doing so results in a perfectly accurate clustering for this instance. That said, the Louvain method is designed to \emph{partition} the dataset and insists on a cluster for each node, whereas improving each cluster may result in some vertices unassigned to a cluster or assigned to multiple clusters. Although this does not occur in this instance on the block model, it ought to be expected in general. There are a variety of ways to address this difference in output given the domain specific usage. For instance, to reobtain a partition, one can create clusters of unassigned vertices and pick a single assignment out of the multiple assignments based on problem or application specific criteria.

\begin{figure}[tp]
\centering
\textbf{Graph clustering}\\[-2\baselineskip]
	\subfigure[Input is a graph; this one has 5 planted clusters.]{\includegraphics[width=0.4\linewidth]{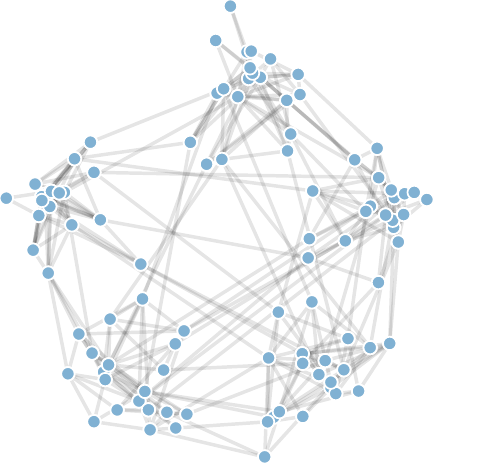}}\hspace{1em}
	\subfigure[Output is a cluster for each node; we highlight mistakes for the 5 groups. ]{\includegraphics[width=0.4\linewidth]{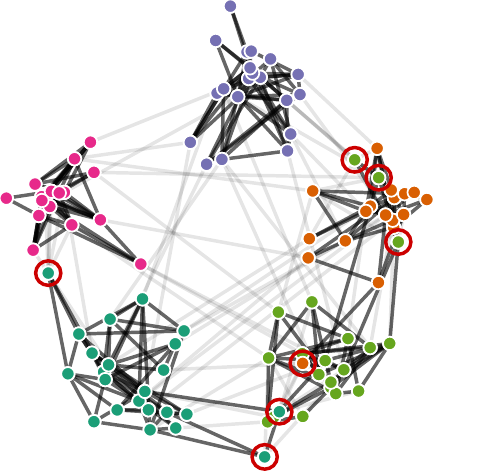}}
~\\[1ex]
	
	\textbf{Cluster improvement}\\[-2\baselineskip]
\subfigure[Input is a graph and a seed set of nodes.]{\includegraphics[width=0.4\linewidth]{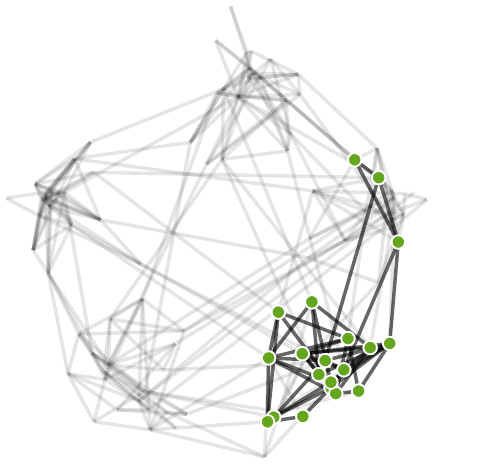}}\hspace{1em}
\subfigure[Output is an improved set of nodes.]{\includegraphics[width=0.4\linewidth]{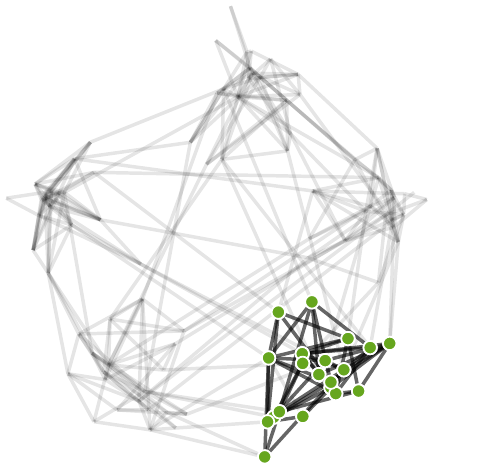}}

	\caption{Graph clustering (known as community detection in some areas) is a problem where the input is a graph and the output is a labeling or partition indicator for each node, indicating the group/cluster to which each node belongs.  This is illustrated in (a) and (b).  Cluster improvement is different problem. In cluster improvement problems, the input is both a graph and a set of nodes, and the output is a set of nodes that is improved in some sense.  As an example, in (c), we show the input as the same graph from (a) along with one of the groups from (b) that has a few mistakes. The result of cluster improvement in (d) has no mistakes. See replication details in the appendix.  }
	\label{fig:example-sbm}
\end{figure}	

For this example, we'd like to highlight the difference in objective functions between the modularity measure optimized by the Louvain algorithm and the conductance measure optimized by LocalFlowImprove. Despite differences in these objectives (modularity compared with conductance), many clustering objective functions are related in their design to balance boundary and size tradeoffs (i.e.~isoperimetry). Consequently, exactly or optimally improving a related objective is likely to result in benefits to nearby measures. %
	Moreover, conductance and modularity are indeed close cousins as established either by how they make cut and volume tradeoffs~\cite{Gleich-2016-mining} or by relationships with Markov stability~\cite{Delvenne2010}. Thus, it is not surprising that LocalFlowImprove is able to assist Louvain, despite the difference in objectives. (Let us also note that flow-based algorithms can be designed around a variety of more general objective functions as well, see, Section~\ref{sec:qcut-general}.) Thus this example mixes pieces that commonly arise in real-world uses: (i) the end goal (find the hidden structures), (ii) an objective function formulation of a related goal (optimize modularity), and (iii) an algorithmic procedure for that task (Louvain method). Given the output from (iii), the improvement algorithms produce an \emph{exactly} optimal solution to a nearby problem that (in this case) captures exactly the true end goal (i).

\subsection{Cluster improvement: compared with seeded graph diffusion}\label{subsec:clustimprov}

Another common scenario in applied work with graphs is what we will call a target identification problem. In this setting, there is a large graph and we are given only one, or a very small number of vertices, from a hidden target set. See \Cref{fig:example-geograph-input} for an illustration. 
Seeded graph diffusions are a common technique for this class of problems. 
In a seeded graph diffusion, the input is a seed node $s$ and the output is a set of nearby graph vertices related to~$s$~\citep{zhu2003semi,Faloutsos:2004:FDC:1014052.1014068,ZBLWS04,tong2006-random-walk-restart,KK2014}. 
Arguably, the most well-known and widely-applied of these seeded graph methods is seeded PageRank~\citep{ACL06, G15}. In essence, seeded PageRank problems identify related vertices as places where a random walk in the graph is likely to visit when it is frequently restarted at $s$.

\begin{figure}[tp]
	\def\stackalignment{r}
	\subfigure[\label{fig:example-geograph-input} Our target and a seed node (orange).]{
		\topinset
			{\fcolorbox{black}{white}{\includegraphics[width=0.2\linewidth]{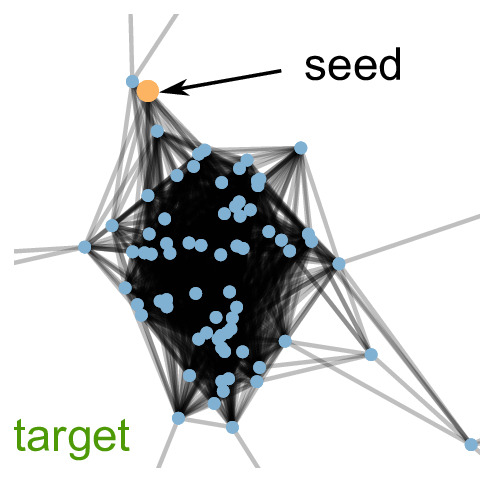}}}
			{\includegraphics[width=0.48\linewidth]{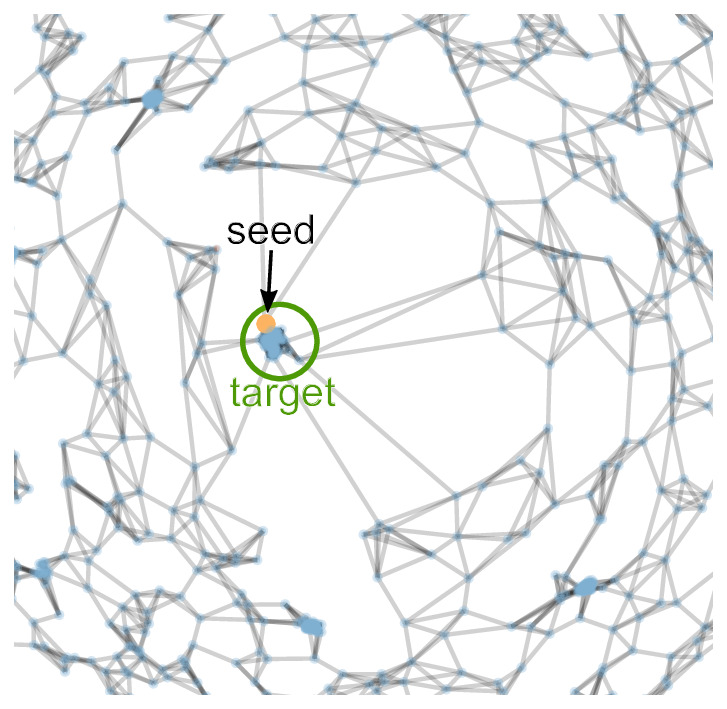}}
			{5pt}{2pt}}\hfill 
	\subfigure[\label{fig:example-geograph-pr}The seeded PageRank result (red).]{\includegraphics[width=0.48\linewidth]{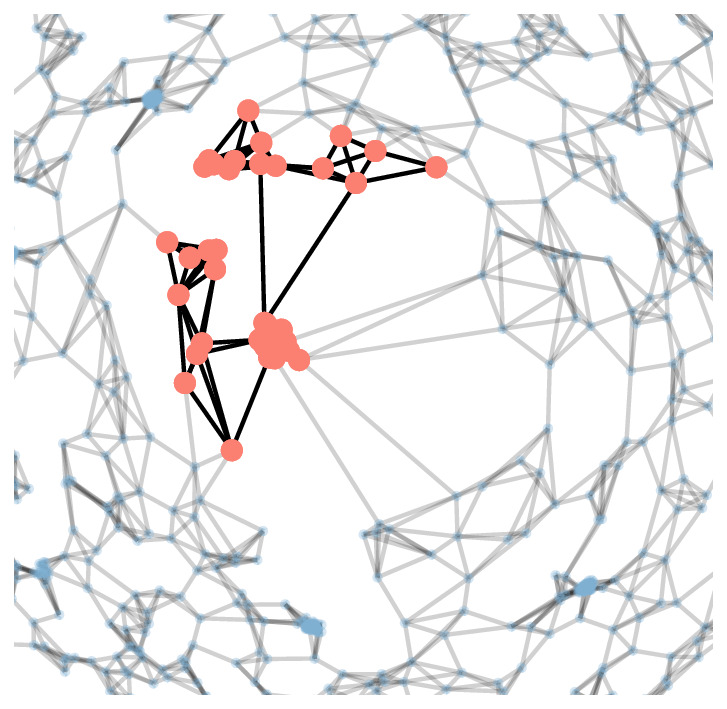}}\\
	\subfigure[\label{fig:example-geograph-mqi}MQI-based improvement (red) of the seeded PageRank result set (inset orange nodes)]{
		\bottominset%
		{\fcolorbox{black}{white}{\includegraphics[width=0.2\linewidth]{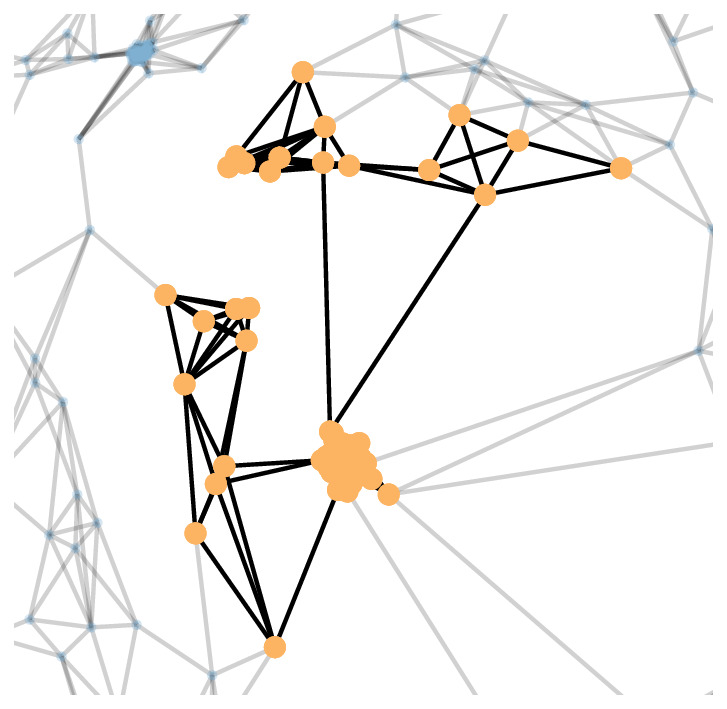}}}%
		{\includegraphics[width=0.48\linewidth]{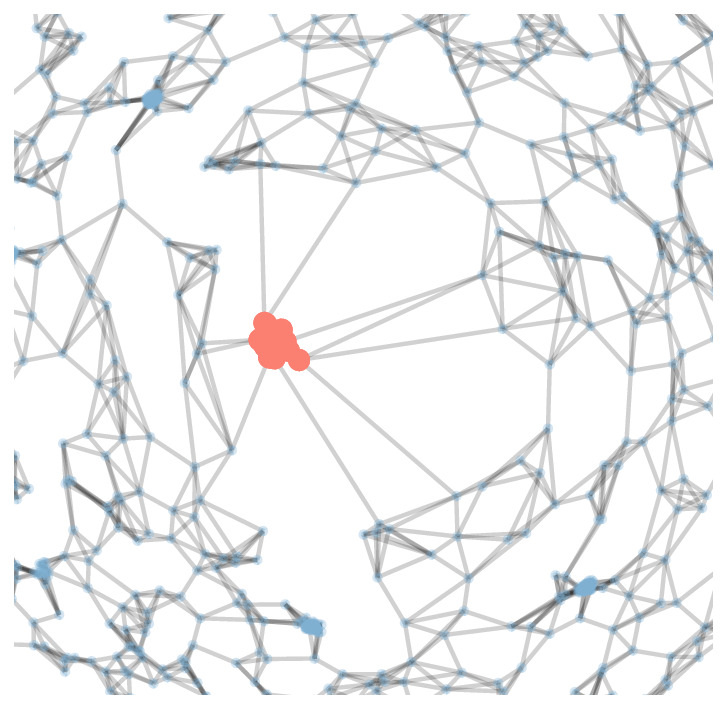}}%
		{5pt}{2pt}}
	\subfigure[\label{fig:example-geograph-lfi}LocalFlowImprove result (red) on a one-step neighborhood of the seed (inset orange nodes)]{
		\bottominset%
			{\fcolorbox{black}{white}{\includegraphics[width=0.2\linewidth]{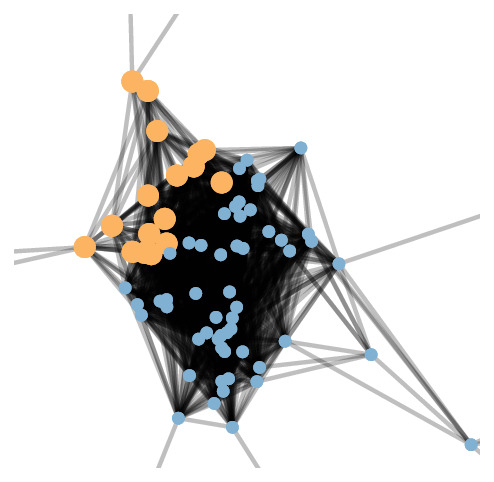}}}%
			{\includegraphics[width=0.48\linewidth]{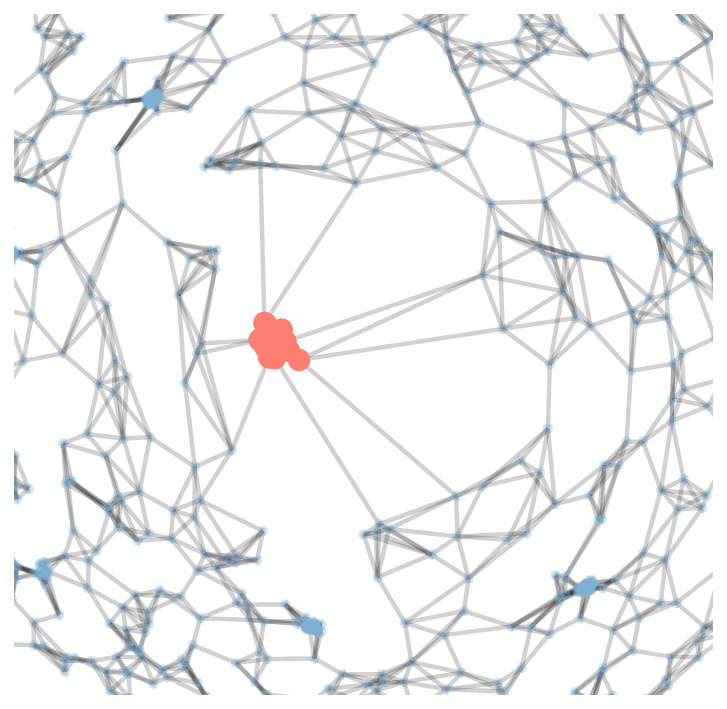}}%
			{5pt}{2pt}}
	\caption{Cluster improvement with MQI~\citep{LR04} and LocalFlowImprove~\citep{OZ14} on a large graph. 
                 We show a piece of a larger graph with a target cluster in the middle of (a) and an expanded view of the target and seed in the inset of (a).
                 If we run a seeded PageRank-based method to search for a cluster nearby the seed, then the result \emph{leaks out} into the rest of the graph and fails to capture the boundary of the cluster, as shown in~(b).
                 If, using the seeded PageRank result as the reference set $R$ (shown in orange in the inset of (c)), we run MQI, then we accurately identify the target in (c) in red.
                 Likewise, if, using the one-step neighborhood of the seed as $R$ (shown in orange in the inset of (d)), we run LocalFlowImprove, then we also accurately identify the target (d) in red.
                 See~\Cref{sxn:app-replication} for details. 
                 }
	\label{fig:example-geograph} 
\end{figure}

Cluster improvement algorithms are different than but closely related to \emph{seeded graph diffusion} problems. 
This relationship is both formal and applied. 
It is related in a formal (and obvious) sense because seeded PageRank and its relatives correspond to an optimization problem that will also provably identify sets of small conductance~\cite{ACL06}. 
It is related in an applied sense for the following (important, but initially less obvious) reason: the improvement methods we describe are excellent choices to refine clusters produced by seeded PageRank and related Laplacian-based spectral graph methods~\citep{Lang2005-spectral-weaknesses,FKSCM2017,veldticml2016}. 
The basic reason for this is that spectral methods often exhibit a ``leak'' nearby a boundary. 
For instance, if a node at the boundary of an idealized target cluster is visited with a non-trivial probability from a random walk, then neighbors will also be visited with non-trivial probability.  In particular, this means that such spectral methods tend to output clusters with larger conductance, more false positives (in terms of the target set), and sometimes fewer true positives as well. 

An illustration of this \emph{leaking out} of a spectral method is given in \Cref{fig:example-geograph}. 
Here, we are using the algorithms to study a graph with a planted target cluster of 72 vertices in the center of a much larger 3000 node graph. 
If we run a seeded PageRank algorithm from a node nearby the boundary of the target, then the result set expands too far beyond the target cluster (\Cref{fig:example-geograph-pr}).
If we then run the MQI cluster improvement method on the output of seeded PageRank, then we accurately identify the target cluster alone (\Cref{fig:example-geograph-mqi}). 
Likewise, if we simply expand the seed node into a slightly larger set by adding all of the seed's neighbors, and we then perform a single run of the LocalFlowImprove method, then we will accurately identify this set. 

\subsection{Cluster improvement: compared with image segmentation}

Our final introductory example is given in \Cref{fig:example-astronaut}, and it illustrates these improvement algorithms in the context of image segmentation. 
Here, an input image is translated into a weighted graph through a standard technique. The goal of that technique is to ensure that similar regions of the image appear as clusters in the resulting graph; this standard process is described formally in \Cref{sec:image-to-graph}.
On this graph representing an image, the target set identification problem from \Cref{subsec:clustimprov} yields an effective image segmentation procedure, albeit with a much larger set of seed nodes. 

\aside{aside:seg-examples}{These image segmentation examples are used to illustrate properties of the algorithms that are difficult to visualize on natural graphs. They are not intended to represent state of the art segmentation procedures.}

We focus on the face of the astronaut Eileen Marie Collins (a retired NASA astronaut and United States Air Force colonel)~\cite{wiki:eileen} as our target set. 
\Cref{fig:astronaut_boundaries_input_mqi} shows a superset of the face. 
When given as the input set to the MQI cluster improvement method (which, recall, always returns a subset of the input), the result closely tracks the face, as is shown in \Cref{fig:astronaut_boundaries_mqi}.
Note that there are still a small number of false positives around the face---see the region left of the neck below the ear---but the number of false positives decreases dramatically with respect to the input.
Similarly, when given a subset of the face, we can use LocalFlowImprove (which, recall, can expand or contract the input seed set) to find most of it. 
We present in \Cref{fig:astronaut_boundaries_input_sl} the input cluster to LocalFlowImprove, which is clearly a subset of the face; and the output cluster for LocalFlowImprove is shown in \Cref{fig:astronaut_boundaries_sl}, which again closely tracks the face with a few false negatives around the mouth.

\begin{figure}[tp]
	\centering
	\subfigure[Input to MQI.\label{fig:astronaut_boundaries_input_mqi}]%
		{\includegraphics[width=0.4\linewidth]{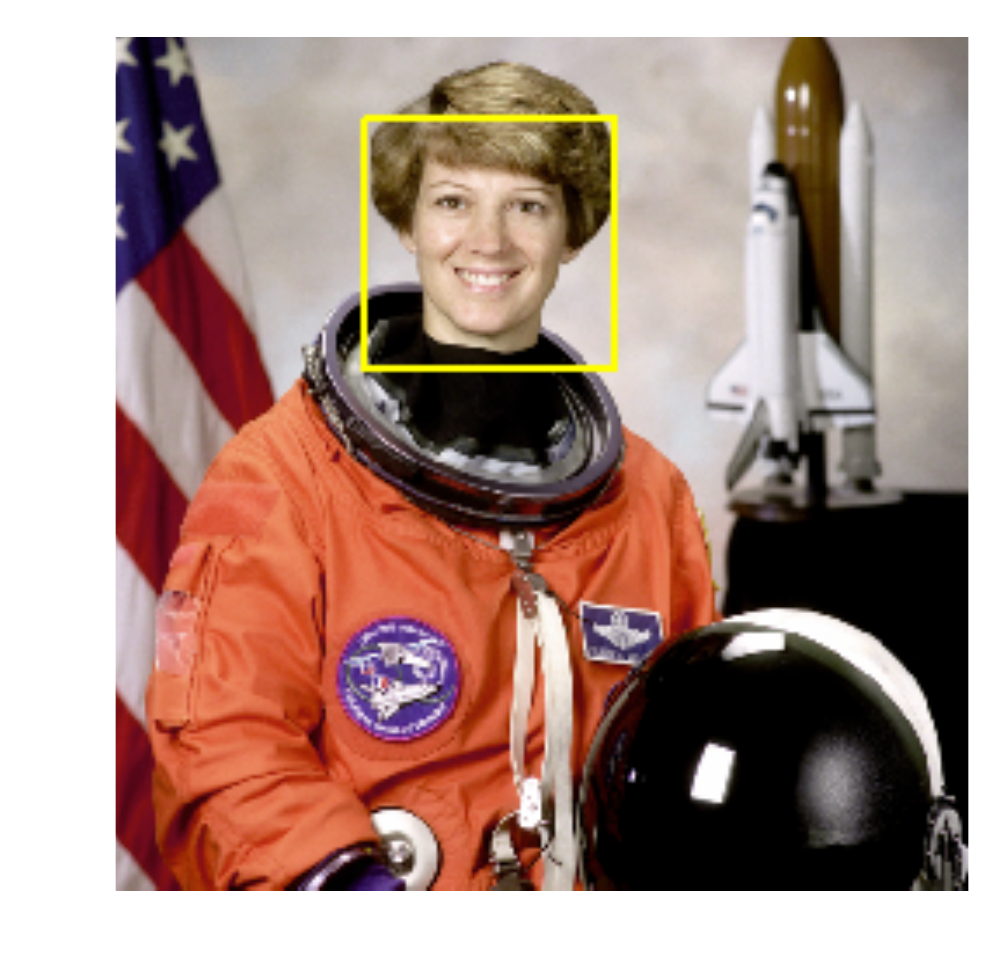}}
	\subfigure[Output of MQI. \label{fig:astronaut_boundaries_mqi}]%
		{\includegraphics[width=0.4\linewidth]{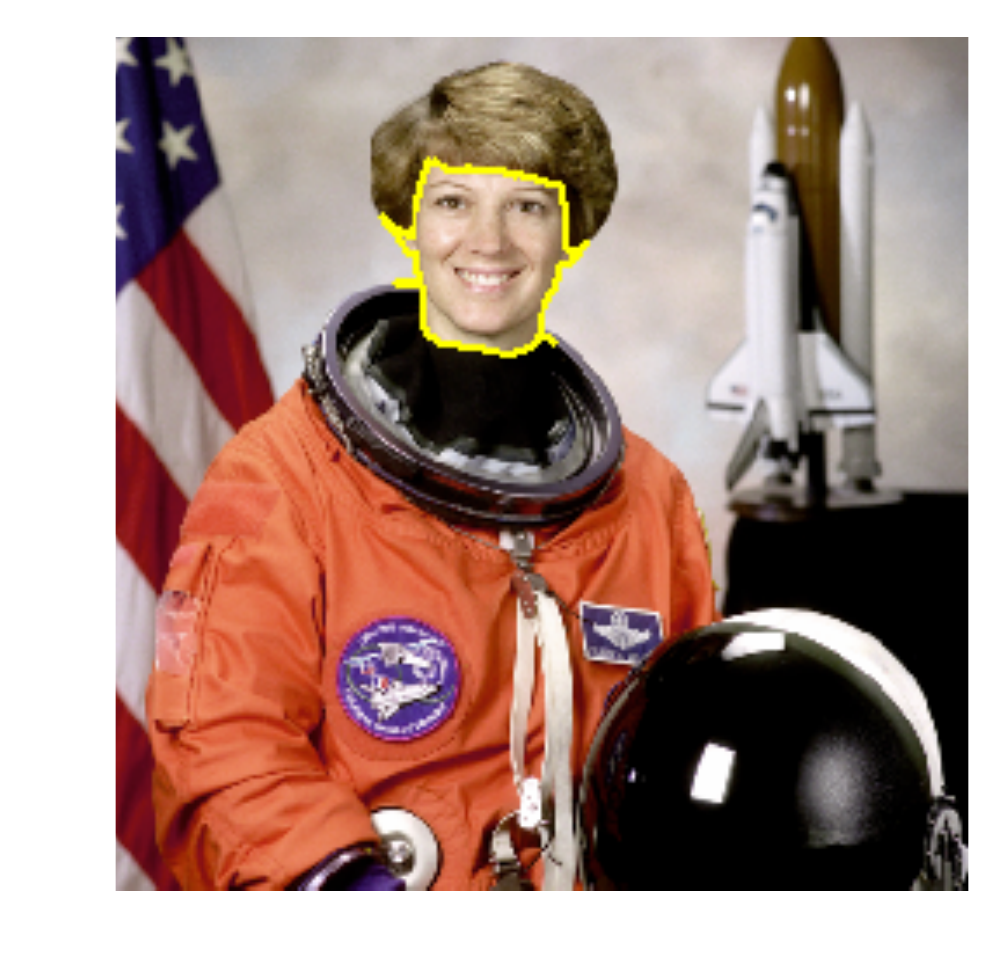}}\\
	\subfigure[Input to LocalFlowImprove. \label{fig:astronaut_boundaries_input_sl}]{\includegraphics[width=0.4\linewidth]{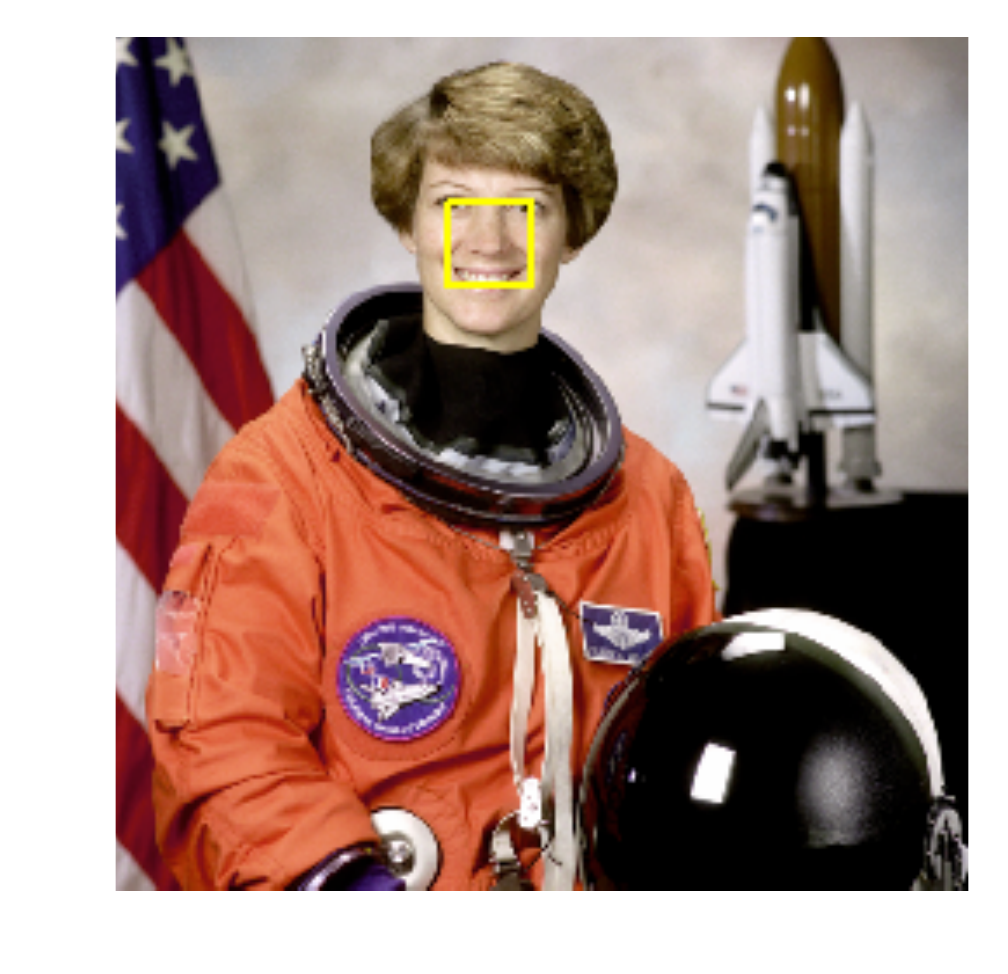}}
	\subfigure[Output of LocalFlowImprove. \label{fig:astronaut_boundaries_sl}]{\includegraphics[width=0.4\linewidth]{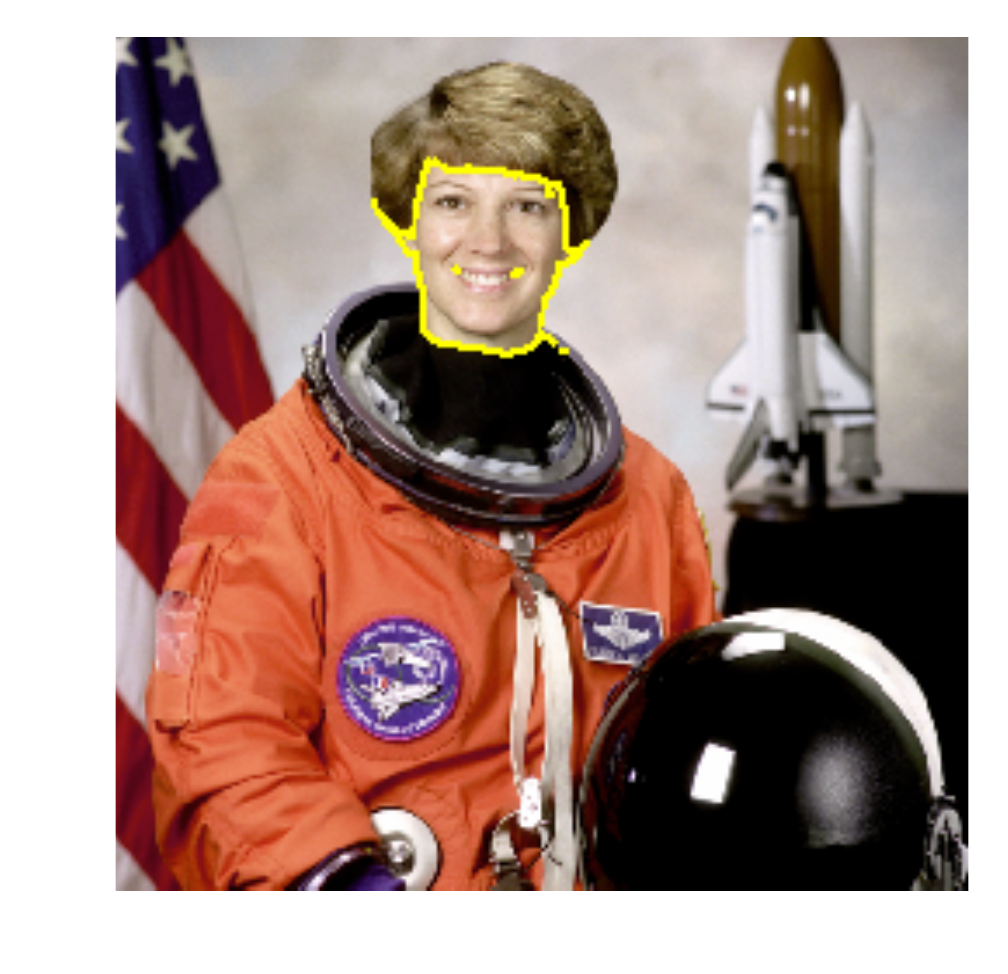}}
	
	\caption{Illustration of cluster improvement with MQI~\citep{LR04} and LocalFlowImprove~\citep{OZ14} on an image. 
	         In Figure~\ref{fig:astronaut_boundaries_input_mqi}, we show the input set of nodes to MQI. 
                 The set of nodes consists of the pixels inside the yellow square. 
                 Note that MQI looks for good clusters within the input square, and the target cluster is the face of Eileen Marie Collins (a retired NASA astronaut and United States Air Force colonel)~\cite{wiki:eileen}.
                 In Figure~\ref{fig:astronaut_boundaries_mqi}, we show the output, which demonstrates that MQI-based cluster improvement decreases the number of false positives. 
	         In Figure~\ref{fig:astronaut_boundaries_input_sl}, we show the input set of nodes to LocalFlowImprove. 
                 The set of nodes consists of the pixels inside the yellow square. 
                 Note that LocalFlowImprove looks for good clusters around the region of the input square and the target cluster is the face of the Eileen Marie Collins.
                 In Figure~\ref{fig:astronaut_boundaries_sl}, we show the output, which demonstrates that LocalFlowImprove-based cluster improvement increases the number of true positives. 
                 See Appendix~\ref{sxn:app-replication} for details.  
                }
	\label{fig:example-astronaut}
\end{figure}

\subsection{Overview and Summary}

One challenge with the flow-based cluster improvement literature is that (so far) it has lacked the simplicity of related spectral methods and seeded graph diffusion methods like PageRank~\citep{G15,zhu2003semi,Faloutsos:2004:FDC:1014052.1014068,ZBLWS04,tong2006-random-walk-restart,KK2014}. 
These spectral methods are often easy to explain in terms of random walks, Markov chains, linear systems, and intuitive notions of \emph{diffusion}. 
Instead, the flow-based literature involves complex and seemingly arbitrary graph constructions that are then used, almost like magic (at least to researchers and downstream scientists not deeply familiar with flow-based algorithms), to show impressive theoretical results. 
Our goal here is to pull back the curtain on these constructions and provide a unified framework based on a class of optimization methods known as fractional programming. 

The connection between flow-based local graph clustering and fractional programming is not new, e.g., \citet{LR04} cite one relevant paper~\citep{gallogrigoriadis}. Both \citet{LR04} and \citet{AL08_SODA} mention binary search for finding optimal ratios akin to root-finding. \citet{Hoc10} was the first to develop a general framework of root-finding algorithms for global flow-based fractional programming problems. 
However, specialization of these results to the FlowImprove problem require special treatment which is not discussed in~\cite{Hoc10}. That said, our purpose in using these connections is that they make the methods simpler to understand. 
Thus, we will make the connection extremely clear, and we will demonstrate that our fractional programming optimization perspective unifies \emph{all} existing flow-based cluster improvement methods. 
\emph{Indeed, it is our hope that this perspective will be used to develop new theoretically-principled and practically useful methodologies.}

\subsection{Reproducible Software: the LocalGraphClustering package}

In addition to the detailed and unified explanation of the flow-based improvement methods, we have implemented these algorithms in a software package with a user-friendly Python interface. 
The software is called LocalGraphClustering~\citep{git:localgraphclustering} (which, in addition to implementing flow improvement methods that we review here we implement spectral diffusion methods for clustering, methods for multi-label classification, network community profiles and network drawing methods).
As an example of using this package, running the seeded PageRank followed by MQI for the results shown in Figure~\ref{fig:example-geograph} is as simple as: 
\begin{quote}\footnotesize 
	\begin{verbatim}
import localgraphclustering as lgc             # load the package 
G = lgc.GraphLocal("geograph-example.edges")   # load the graph 
seed = 305                                     # set the seed and compute 
R,cond = lgc.spectral_clustering(G,[seed],method='l1reg') # seeded PageRank
S,cond = lgc.flow_clustering(G,R,method='mqi') # improve with MQI
	\end{verbatim}
\end{quote}
This software also enables us to explore a number of interesting applications of flow-based cluster improvement algorithms that demonstrate uses beyond simply improving the conductance of sets. The implementation of the methods scales to graphs with billions of edges when used appropriately. In this survey, we explore graphs with up to 117 million edges (\Cref{subsec:findingThousandOfClusters}). 

This package is useful generally. For reproducibility we also provide code that reproduces all the experiments that are presented in this survey. %

\subsection{Outline}

There are three major parts to our survey; and these are designed to be relatively modular to enable one to read parts (e.g., to focus on the theoretical results or the empirical results) separately.

In the first part, we introduce the fundamental concepts and techniques, both informally as in this introduction and formally through our notation (\Cref{sec:notation}) and fractional programming sections (\Cref{sec:fractionalProg}). 
In particular, we introduce graph cluster metrics such as conductance in \Cref{sec:metrics}. 
We also introduce fundamental ideas related to \emph{local graph computations} in \Cref{sec:local-graph-algorithms}, which discusses the distinction between strongly and weakly local graph algorithms. 
These ideas are then used to explain the precise objective functions and settings for flow-based cluster improvement algorithms in \Cref{sec:fractionalProg}. 
This part continues with an overview of how these methods fit into the broader literature of graph-based algorithms (\Cref{sec:background}), and it includes a brief discussion of other scenarios where \emph{max-flow} and \emph{min-cut} algorithms are used as a fundamental computational primitive (\Cref{sec:network-flow}), as well as infinite dimensional analogues to these ideas (\Cref{sec:infinite-flow}). 
We also include a number of ideas that show how the methods generalize beyond using conductance. 

In the second part, we provide the technical core of the survey. 
We begin our description of the details of the methods with a review of concepts from minimum flow and maximum cuts (\Cref{sec:mincutmaxflow}). 
In particular, this section has a careful derivation of these problems as duals in terms of linear programs. 
The next three sections, \Cref{chap:mqi,chap:flowimprove,chap:localflowimprove}, cover the three algorithms that we use in the experiments: MQI, FlowImprove, and LocalFlowImprove. For each algorithm, we provide a thorough discussion on how to define each step of the algorithm.
On a high level, these algorithm require at each iteration the solution of a max-flow problem. However, to actually implement these methods one requires construction of a locally modified version of the given graphs.

In the final part, we provide an extensive empirical evaluation and demonstration of these algorithms (\Cref{sec:experiments}). 
This is done in the context of a number of datasets where it is possible to illustrate clearly and easily the benefits of these techniques. 
Examples in this evaluation include images, as we saw in the introduction, as well as road networks, social networks, and nearest neighbor graphs that represent relationships among galaxies. 
This section also includes experiments on graphs with up to 117 million edges. We also describe strategies to generate local network visualizations from these local graph clustering methods that highlight characteristic differences in how the flow-based methods treat networks. 

In addition, we provide an appendix with full reproducibility details for all of the figures and tables (\Cref{sec:image-to-graph,sxn:app-replication}). 
These include references to specific Python notebooks for replication of the experiments.

\section{Notation, Definitions, and Terminology}
\label{sec:notation}

We begin by reviewing specific mathematical assumptions, notation, and terminology that we will use. 
To start, we use the following standard notations: 

\begin{center} 
	\begin{tabular}{ll} 
$\ZZ$ & denotes the set of integer numbers, \\
$\RR$ & denotes the set of real-valued numbers, \\
$\RR_+$ & denotes the set of real-valued non-negative numbers, \\
$\RR^n$ & denotes the set of real-valued vectors of length $n$, \\
$\RR^{n\times n}$ & denotes the set of real-valued $n\times n$ matrices, \\
$\RR^n_+$ & denotes the set of real-valued non-negative vectors of length $n$, and \\
$\RR^{n\times n}_+$ & denotes the set of real non-negative $n\times n$ matrices.
\end{tabular}\end{center}

\subsection{Graph notation}\label{sec:notation-graph}

Given a graph $G=(V,E)$, we let $V$ denote the set of nodes and $E$ denote the set of edges.
We assume an undirected, weighted graph throughout, although some of the constructions and concepts involved in a flow computation are often best characterized through directed graphs. (See also Aside~\ref{aside:directed}.)
For an unweighted graph, everything we do will be equivalent to assigning an edge weight of $1$ to all edges. 
Also, we also assume that the given graphs have no self-loops.

\aside{aside:directed}{%
Our techniques would extend to any clustering function on directed graphs that defines a hypergraph using techniques from~\citet{Benson-2016-motif-spectral} based on motif enumeration. For adaptations of these techniques to hypergraphs, see~\citet{Veldt-2020-hypergraph,veldt2020localized} for some examples.
}

The cardinality of the set $V$ is denoted by $n$, i.e., there are $n$ nodes, and we assume that the nodes are arbitrarily ordered from $1$ to $n$.
Therefore, we can write $V := \{1,2,\dots,n\}$. %
We use $v_i$ to denote node $i$, and when it is clear, we will use $i$ to denote that node.
We assume that the edges $E$ in the graph are arbitrarily ordered.
The cardinality of the set $E$ is denoted by $m$, i.e., there are $m$ edges.
We will use $e_{ij}$ to denote an edge. Also, if a node $j$ is a neighbor of node $i$, we denote this relationship by $j\sim i$.
A path is a sequence of edges which connect a sequence of distinct vertices. 
A connected component is a subset of nodes such that there exists a path between any pair of nodes in that subset. 

We frequently work with subsets of vertices. Let $S \subseteq V$, for example. Then 
$\Sbar$ denotes the complement of subset $S\subseteq V$, formally, $\Sbar = \{ v \in V \ | \ v \notin S \}$.  
The notation $\partial S$ represents the node-boundary of the set $S$; formally, it denotes the set of nodes that are in $\Sbar$ and are connected with an edge to at least one node in $S$. In set notation, we have $\partial S = \{ v, \text{ where } v \in \Sbar, \text{ and there exists } (u,v) \in E \text{ with } u \in S \}$.

\subsection{Matrices and vectors for graphs}
\label{sec:matvec}

Here, we define matrices that can be used to define models and objective functions on graph data. 
They can also provide a compact way to understand and describe algorithms that operate on graphs.

The \emph{adjacency matrix} $\mA\in\{0,1\}^{n\times n}$ (or $\in\RR^{n\times n}_+$ if the graph is weighted) provides perhaps the most simple representation of a graph using a matrix.
In $\mA$, row $i$ corresponds to node $i$ in the graph, and element $A_{ij}$ is non-zero if and only if nodes $i$ and $j$ are connected with an edge in the given graph. The value of $A_{ij}$ is the edge weight for a weighted graph, or simply $1$ for an unweighted graph. 
Since we are working with undirected graphs, the adjacency matrix is symmetric, i.e., $A_{ij} = A_{ji}$, where $A_{ij}$ is the element at the $i$th row and $j$th column of matrix $A$. 

The \emph{diagonal weighted degree matrix} $\mD\in\ZZ^{n\times n}_+$ (or $\in\RR^{n\times n}_+$ if the graph is weighted) is a matrix that stores the degree information for every node. 
The element $D_{ii}$ is the sum of weights of the edges of node~$i$, i.e., $D_{ii} := \sum_{j\in V : j\sim i} A_{ij}$; and off-diagonal elements, i.e., $D_{ij}$, for $i \ne j$, equal zero. 

The \emph{degree vector} is defined as $\vd = \text{diag}(\mD)$, where $\text{diag}(\cdot)$ takes as input a vector or a matrix and returns, respectively, a diagonal matrix with the vector in the diagonal or a vector with diagonal elements of a matrix.

The \emph{edge-by-node incidence matrix} $\mB\in\{0,-1,1\}^{m\times n}$ (where, recall, $n$ is the number of nodes, and $m$ is the number of edges) is often used to measure differences among nodes. %
Each row of this matrix represents an edge, and each column represents a node. 
For example, row $k$ in $B$ represents the $k$th edge in the graph (arbitrarily ordered) that corresponds (say) to nodes $i$ and $j$ in the graph. 
Row $k$ in $B$ then has exactly two nonzero elements; $-1$ for the source of the edge and $1$ for the target of the edge, at the $i$ and $j$ position, respectively. 
If the graph is undirected, then we can arbitrarily choose which node is the source and which node is the target on an edge, without loss of generality. Note that because we assume no self-loops, the incidence matrix contains the full information about the edges of the graph. 

The \emph{diagonal edge-weight or edge-capacity matrix} $C \in \RR^{m\times m}_+$ is a diagonal matrix where each diagonal element corresponds to the weight of an edge in the graph. This matrix is the identity for an unweighted graph. 
For example, the $k$th diagonal element corresponds to weight of the $k$th edge in the~graph.

The \emph{Laplacian matrix} $\mL\in\ZZ^{n\times n}$ (or $\in\RR^{n\times n}$ if the graph is weighted) is defined as $\mL = \mD - \mA$ or equivalently $\mL = \mB^T \mC \mB$.

Vectors of all-ones and all-zeros, denoted $1_n$ and $0_n$, respectively, are column vectors of length $n$. 
If the dimensions of each vector will be clear from the context, then we omit the subscript. 
The \emph{indicator vector} $\mathbf{1}_i$ is a column vector that is equal to $1$ at the $i$th index and zero elsewhere.  If the indicator is used with a node, then the length of the vector $\mathbf{1}_i$ is $n$. For an edge, its length is $m$. 

If $S$ is a subset of nodes or a subset of indices and $\mA$ is any matrix, e.g., the adjacency matrix, then $\mA_S$ is a submatrix of $\mA$ that corresponds to the rows and columns with indices in $S$. Likewise, $\mathbf{1}_S$ is a column vector with ones in entries for~$S$. These indicator vectors have length $n$. 

\subsection{Vector norms}
\label{sec:norms}

We denote the vector $1$-norm by $\|\vx\|_1 = \sum_{i} |x_i|$ and the $2$-norm by $\|\vx\|_2 = \sqrt{\sum_{i} (x_i)^2}$
We will use these norms to measure differences among nodes that are represented in a vector $x$, i.e., every node corresponds to an element in vector $x$. 
For example, $\|\mB\vx\|_1 = \sum_{e_{ij}\in E} |x_i - x_j|$ is the sum of differences among node representations in $x$. 
In the case of weighted graphs, this can be generalized to $\|\mB\vx\|_{\mC,1} = \sum_{e_{ij}\in E} C_{e_{ij}}|x_i - x_j| = \sum_{e_{ij}\in E} A_{ij}|x_i - x_j|$. %
For the $2$-norm, we have $\|\mB\vx\|_{\mC,2}^2 = \sum_{e_{ij}\in E} C_{e_{ij}}(x_i - x_j)^2=\sum_{e_{ij}\in E} A_{ij}(x_i - x_j)^2$.

\subsection{Graph cuts and volumes using set and matrix notation}

Much of our discussion will fluidly move between set-based descriptions and matrix-based descriptions. 
Here, we give a simple example of how this works in terms of a graph cut and volume of a set. 

\paragraph{Graph cut} 
We say that a pair of complement sets $(S,\Sbar)$, where $S\subseteq V$, is a \emph{global graph partition} of a given graph with node set $V$. 
Given a partition $(S,\Sbar)$, the \emph{cut of the partition} is the sum of weights of edges between $S$ and $\Sbar$, which can be denoted by either
\begin{equation}
\begin{aligned}
\cut(S,\Sbar) & = \sum_{i\in S, j\in \Sbar} A_{ij}, & \quad \text{or} \quad & \cut(S) & = \sum_{i\in S, j\in \Sbar} A_{ij}. 
\end{aligned}
\end{equation}
Instead of using set notation to denote a partition of the graph, i.e., $(S,\Sbar)$, we can use indicator vector notation $\vx = \mathbf{1}_S \in \{0,1\}^n$ to denote a partition.
In this case, the cut of the partition is 
\begin{equation}
\cut(S,\Sbar) = \sum_{i,j} A_{ij} |x_i - x_j| = \|\mB \mathbf{1}_S\|_{C,1} .
\end{equation}
Note that both expressions are symmetric in terms of $S$ and $\Sbar$.

\paragraph{Graph volume}
The \emph{volume of a set of nodes} $S$ is equal to the sum of the degrees of all nodes in $S$, i.e., \
\begin{equation} \vol(S) = \sum_{i\in S} d_i. \end{equation}
We will use the notation $\vol(G)$ to denote the \emph{volume of the graph}, which is equal to $\vol(V)$.
Using this definition and our matrix definitions above, we have that the \emph{volume of a subset of nodes} is $\vol(S) = \mathbf{1}_S^T \vd $.

\subsection{Relative volume}
\label{sec:rvol}

FlowImprove and LocalFlowImprove formulations are simpler to explain by introducing the idea of \emph{relative volume}. 
The \emph{relative volume} of $S$ with respect to $R$ and $\kappa$ is 
\begin{equation} \label{eq:rvol}
	\rvol(S; R, \kappa) = \vol(S \cap R) - \kappa \vol(S \cap \Rbar).
\end{equation} 
The relative volume is a very useful concept that we will use to define the objective functions of the local flow-based problems, MQI, FlowImprove and LocalFlowImprove.
The purpose of the relative volume is to measure the volume of the intersection of $S$ with the input seed set nodes $R$, while penalizing the volume of
the intersection of $S$ with the complement $\Rbar$. This is important when we define the objective functions of MQI, FlowImprove and LocalFlowImprove, since we want to penalize
sets $S$ that have little intersection with $R$ and high intersection with $\Rbar$. This makes sense, since in local flow-based improve methods the goal is often to improve the input set $R$, thus we 
want the output $S$ of a method to be ``related'' to $R$ more than $\Rbar$.

\subsection{Cluster quality metrics}
\label{sec:metrics}

Here, we discuss scores that we use to evaluate the quality of a cluster. 
For all of these measures, \emph{smaller values correspond to better clusters}, i.e., correspond to a cluster of higher quality. 
\paragraph{Conductance} 
The \emph{conductance function} is defined as the ratio between the number of edges that connect the two sides of the partition $(S,\Sbar)$ and the minimum ``volume'' of $S$ and $\Sbar$: 
\[ 
   \phi(S) = \frac{\text{cut}(S)}{\text{min}(\text{vol}(S), \text{vol}(\Sbar))}  .
\]
A set of minimal conductance is a fundamental bottleneck in a graph.
For example, small conductance in a set is often interpreted as an information bottleneck revealing community or module structure, or (relatedly) as a bottleneck to the mixing of random walks on the graph. 
Note that conductance values are always between $0$ and $1$, and they can be interpreted as a probability. 
(Formally, this is the probability that random walk moves between $S$ and $\Sbar$ in a single prescribed step after the walk has fully mixed.) 

\paragraph{Normalized Cuts}
The \emph{normalized cut function} is a related notion that provides a score that is often used in image segmentation problems~\citep{SM2000}, where a graph is constructed from a given image and the objective is to partition the graph in two or more segments. 
In the case of a bi-partition problem, the normalized cuts score reduces to:
\[ \text{ncut}(S) = \frac{\text{cut}(S)}{\text{vol}(S)} + \frac{\text{cut}(\Sbar)}{\text{vol}(\Sbar)}. \]
The normalized cuts and conductance scores are related, in that $\phi(S) \le \text{ncut}(S) \le 2 \phi(S)$. 
There is a related concept, called \text{ncut'}~\cite{Sharon2006,Hoc10} 
that just measures the cut to volume ratio for a single set $\text{ncut'}(S) = \text{cut}(S)/\text{vol}(S)$.
Observe that this is equal to $\phi(S)$ for any set with less than half of the volume. 

\paragraph{Expansion} 
The \emph{expansion function} or \emph{expansion score} is defined as the ratio between the number of edges that connect the two sides of the partition $(S,\Sbar)$ and the minimum ``size'' of $S$ and $\Sbar$: 
\[ \tilde{\phi}(S) = \frac{\text{cut}(S)}{\text{min}(|S|, |\Sbar|)}. \]
\aside{aside:sparsity}{Our definition of expansion used here is sometimes used as the definition for sparsity. The literature is not entirely consistent on these terms.} 
Compared to the conductance score, which uses the volume (related to number of edges) of the sets $S$ and $\Sbar$ in the denominator, the expansion score counts the number of nodes in $S$ or $\Sbar$.
This has the property that the expansion score is less affected by high degree nodes.  
Similarly to conductance, smaller expansion scores correspond to better clusters. 
However, these values are not necessarily between $0$ and $1$.

\paragraph{Sparsity}
The \emph{sparsity measure} of a set is a topic that arises often in theoretical computer science. It is closely related to expansion, but measures the fraction of edges that exist in the cut compared to the total possible number
\[ \psi(S) = \frac{\text{cut}(S)}{|S||\Sbar|}. \]
This value is always between $0$ and $1$. Also, $\tilde{\phi}(S) \le n \psi(S) \le 2 \tilde{\phi}(S)$ because $n \psi(S) = \frac{\text{cut}(S)}{|S|} + \frac{\text{cut}(\Sbar)}{|\Sbar|}.$ Hence, sparsity is a scaled measure akin to normalized cut. 

\paragraph{Ratio cut} 
The \emph{ratio cut function} provides a score that is often used in data clustering problems, where a graph is constructed by measuring similarities among the data, and the objective is to partition the data into multiple clusters \citep{HK1992}.
In the case of the bi-partition problem, the ratio cut score reduces to:
\[ 
   \text{rcut}(S) = \frac{\text{cut}(S)}{|S|}.
\]
Observe that the ratio cut and expansion scores are related, in the sense that the latter is equal to the former if the input set of nodes $S$ has cardinality less than or equal to $n/2$. The ratio cut was popularized due 
to its importance in image segmentation problems \citep{FH2004}. Usually, this ratio is minimized by performing a spectral relaxation \citep{L2007}.

\subsection{Strongly and weakly local graph algorithms}
\label{sec:local-graph-algorithms}

Local graph algorithms and locally-biased graph algorithms are the ``right'' setting to discuss cluster improvement algorithms on large-scale data graphs. 
For the purposes of this survey, there are two key types of (related but quite distinct) local graph algorithms: 
\begin{itemize}
	\item \textbf{Strongly local graph algorithms.} These algorithms take as input a graph $G$ and a reference cluster of vertices $R$; and they have a runtime and resource usage that only depends on the size of the reference cluster $R$ (or the output $S$, but not the size of the entire graph $G$).
	\item \textbf{Weakly local graph algorithms.} These algorithms take as input a graph $G$ and a reference cluster of vertices $R$; and they return an answer whose size will depend on $R$, but whose runtime and resource usage may depend on the size of the entire graph $G$ (as well as the size of $R$). 
\end{itemize}
That is, in both cases, one wants to find a good/better cluster near $R$, and in both cases one outputs a small cluster $S$ that is near $R$, but in one case the running time of the algorithm is independent of the size of the graph $G$, while in the other case the running time depends on the size of $G$. 
For more about local and locally-biased graph algorithms, we recommend~\citet{Gleich-2016-mining,FDM2017} and also~\citet{MOV12,LBM16_TR,lawlor2016mapping} for overviews. 

It is easy to quantify the size of the output $S$ being small; but, in general, the \emph{locality of an algorithm}, i.e., how many nodes/edges are touched at intermediate steps, may depend on how the graph is represented. 
We typically assume something akin to an adjacency list representation that enables:
\begin{itemize}
	\item constant time access to a list of neighbors; and
	\item constant or nearly constant (e.g., $O(\log |V|)$ time access to an arbitrary edge.
\end{itemize}
Moreover, the cost of building this structure is \emph{not counted} in the runtime of the algorithm, e.g., since it may be a one-time cost when the graph is stored. 
Note that, in addition to a reference cluster $R$, these algorithms could take information about vertices in a reference set, such as a vector of values, as well. 

The importance of these characterizations and this discussion is the following: 
\begin{quote} \itshape
for strongly local graph algorithms  \\
\hspace*{1em} the runtime is independent of the size of the graph.
\end{quote}
In particular, this means that the algorithm does not even touch all of the nodes of the graph $G$.
This makes a strongly local graph algorithm an extremely useful tool for studying large data graphs. 
For instance, in \Cref{fig:example-geograph}, none of the algorithms used information from more than about 500 vertices of the the total 3000 vertices of the graph, and this result wouldn't have changed at all if the entire graph was 3 million vertices (or more as in ~\citet{SKFM2016}). 

To contrast with strongly-local graph algorithms, most graph and mesh partitioning tools---and even the improved and refined variations---are global in nature. 
In other words, the methods take as input a graph, and the output of the methods is a global partitioning of the entire graph. 
In particular, this means that the methods have running time which depends on the size of the whole graph.
This makes it \emph{very} challenging to apply these methods to even moderately large graphs. 
\section{Main Theoretical Results: Flow-based Cluster Improvement and Fractional Programming Framework}
\label{sec:fractionalProg}

In this section, we will introduce and discuss the fractional programming problem and its relevance to flow-based cluster improvement.
The motivation is that work on cluster improvement algorithms has thus far proceeded largely on a case-by-case basis; but as we will describe, fractional programming is a class of optimization problems that provides a way to generalize and unify existing cluster improvement algorithms.

\subsection{Cluster improvement objectives and their properties}

\label{sec:improvement-objectives}
For the problem of conductance-based cluster improvement, the three methods we consider exactly optimize the following objective functions:
\begin{equation*}
\begin{array}{@{}rl@{}}
	\text{MQI:} & 	\MINone{S \subset V}{\displaystyle \frac{\cut(S)}{\vol(S)}}{S \subseteq R} \\ \addlinespace
	\text{FlowImprove:} & \MINone{S \subset V}{\displaystyle \frac{\cut(S)}{\rvol(S; R,  \vol(R)/\vol(\Rbar)) }}{\rvol(S; \ldots) > 0} \\ \addlinespace
	\mathop{\text{LocalFlowImprove}(\delta)}\limits_{\smash{\delta \ge 0}}: & \MINone{S \subset V}{\displaystyle \frac{\cut(S)}{\rvol(S; R,  \vol(R)/\vol(\Rbar) + \delta) }}{\rvol(S; \ldots) > 0}  \\
	\end{array}
\end{equation*}
The constraint $\rvol(S; \ldots) > 0$ simply means that we only consider sets where the denominator is positive (we omit repeating all the parameters from the denominator for simplicity). 
Because we are minimizing over discrete sets, there is not a closure problem with the resulting strict inequality ($\rvol(S,\ldots) > 0$), so these are all well-posed.

Recall that $\rvol(S;R, \kappa) = \vol(S \cap R) - \kappa \vol(S \cap \Rbar)$. This definition implies that sets $S$ such that $\rvol(S; R,  \vol(R)/\vol(\Rbar))  \le 0$ cannot be optimal solutions for FlowImprove, and that even fewer sets can be optimal for LocalFlowImprove. 
On the other hand, note that $\text{LocalFlowImprove}(\delta)$ interpolates between the FlowImprove ($\delta = 0$) and MQI ($\delta = \infty$) because when $\delta$ is sufficiently large, then the term $\vol(S \cap \Rbar)$ that arises in rvol must be 0 in order for the set $S$ feasible for the non-negative rvol constraint. In fact, if $\delta > \vol(R)(1-1/\vol(\Rbar))$ then positive denominators alone will require $S \subset R$. 

To understand better the connections between these three objectives, we begin by stating a simple property of these objective functions.
The following theorem states that conductance gets \emph{smaller}, i.e., \emph{better}, as we move from MQI to LocalFlowImprove to FlowImprove.
\begin{theorem}
\label{thm:conductance}
Let $G$ be an undirected, connected graph with non-negative weights.  Let $R \subset V$ have $\vol(R) \le \vol(\Rbar)$, where $\Rbar$ is the complement of $R$. Let $S_{\text{MQI}}, S_{\text{FI}}, S_{\text{LFI}}$ be the optimal solution of the MQI, FlowImprove, and $\text{LocalFlowImprove}(\delta)$ objectives, respectively. 
If the solutions of FlowImprove and LocalFlowImprove satisfy $\vol(S_{\text{FI}}) \le \vol(\Sbar_{\text{FI}})$ and $\vol(S_{\text{LFI}}) \le \vol(\Sbar_{\text{LFI}})$ (that is, the solution set is on the small side of the cut),  
then for any $\delta\ge0$ in LocalFlowImprove, we have that 
\[ \phi(S_{\text{FI}}) \le 	\phi(S_{\text{LFI}}) \le \phi(S_{\text{MQI}}). \]
\end{theorem}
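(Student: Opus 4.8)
The plan is to reduce both inequalities to a single monotonicity lemma describing how the optimal set's ``reach outside $R$'' shrinks as the penalty $\kappa$ in the denominator grows. First I would fix notation: for any set $S$ write $p(S) = \vol(S \cap R)$ and $q(S) = \vol(S \cap \Rbar)$, so that $\vol(S) = p(S) + q(S)$ and $\rvol(S; R, \kappa) = p(S) - \kappa\, q(S)$. All three problems then minimize $\cut(S)/\bigl(p(S) - \kappa\, q(S)\bigr)$ over $\{S : p(S) - \kappa\, q(S) > 0\}$, with $\kappa_{\mathrm{FI}} = \vol(R)/\vol(\Rbar)$ for FlowImprove, $\kappa_{\mathrm{LFI}} = \kappa_{\mathrm{FI}} + \delta$ for LocalFlowImprove, and MQI being the limit $\kappa = \infty$ (equivalently the constraint $S \subseteq R$, so $q \equiv 0$). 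Two elementary facts get used repeatedly: (i) since $\kappa$ only subtracts a nonnegative quantity, the feasible regions are nested, $\{S \subseteq R\} \subseteq \text{(LFI-feasible)} \subseteq \text{(FI-feasible)}$, so in particular $S_{\mathrm{LFI}}$ is FI-feasible and $S_{\mathrm{MQI}}$ is LFI-feasible; (ii) because the graph is connected, every proper nonempty $S$ has $\cut(S) > 0$, and feasibility forces $p(S) > 0$.

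The heart of the argument is the following lemma. Suppose $\kappa_a < \kappa_b$ (allowing $\kappa_b = \infty$), let $S_a, S_b$ minimize the objectives with parameters $\kappa_a, \kappa_b$, and assume each lies on the small side of its cut, so $\phi(S_a) = \cut(S_a)/\vol(S_a)$ and likewise for $S_b$. Then $\phi(S_a) \le \phi(S_b)$. To prove it I would combine the optimality inequality for $S_a$ (valid because $S_b$ is feasible for the smaller parameter $\kappa_a$), namely $\cut(S_a)\,(p_b - \kappa_a q_b) \le \cut(S_b)\,(p_a - \kappa_a q_a)$, with the key claim that $q(S_a)/p(S_a) \ge q(S_b)/p(S_b)$ (the optimizer for the larger penalty reaches relatively less outside $R$). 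A short cross-multiplication reduces the desired $\phi(S_a) \le \phi(S_b)$ to $(p_a - \kappa_a q_a)(p_b + q_b) \le (p_a + q_a)(p_b - \kappa_a q_b)$, and the two sides differ by $(1+\kappa_a)\,(p_a q_b - p_b q_a)$, which is $\le 0$ precisely when $q(S_a)/p(S_a) \ge q(S_b)/p(S_b)$.

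Establishing this ratio claim is where I expect the real work to be, and I would handle it by cases. If $S_a$ happens to be feasible for the larger parameter $\kappa_b$, then I also have the optimality inequality for $S_b$, namely $\cut(S_b)\,(p_a - \kappa_b q_a) \le \cut(S_a)\,(p_b - \kappa_b q_b)$; multiplying the two optimality inequalities together and cancelling the positive cut factors leaves $(\kappa_b - \kappa_a)(p_a q_b - p_b q_a) \le 0$, which forces $p_a q_b \le p_b q_a$, i.e.\ the ratio inequality. If instead $S_a$ is \emph{not} $\kappa_b$-feasible, then $p_a - \kappa_b q_a \le 0$, i.e.\ $q_a/p_a \ge 1/\kappa_b$, while feasibility of $S_b$ gives $p_b - \kappa_b q_b > 0$, i.e.\ $q_b/p_b < 1/\kappa_b$, so the ratio inequality is immediate. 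For $\kappa_b = \infty$ (the MQI comparison) the claim is trivial because $q(S_{\mathrm{MQI}}) = 0$.

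Finally I would apply the lemma twice. With $(\kappa_a, \kappa_b) = (\kappa_{\mathrm{FI}}, \kappa_{\mathrm{LFI}})$ it yields $\phi(S_{\mathrm{FI}}) \le \phi(S_{\mathrm{LFI}})$, using the two small-side hypotheses supplied in the theorem; with $(\kappa_a, \kappa_b) = (\kappa_{\mathrm{LFI}}, \infty)$ it yields $\phi(S_{\mathrm{LFI}}) \le \phi(S_{\mathrm{MQI}})$, where the small-side condition for $S_{\mathrm{MQI}}$ is automatic since $S_{\mathrm{MQI}} \subseteq R$ together with $\vol(R) \le \vol(\Rbar)$ force $\vol(S_{\mathrm{MQI}}) \le \vol(\Sbar_{\mathrm{MQI}})$. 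Chaining the two gives the claimed $\phi(S_{\mathrm{FI}}) \le \phi(S_{\mathrm{LFI}}) \le \phi(S_{\mathrm{MQI}})$. The one subtlety I would flag is the degenerate case $\delta = 0$, where FlowImprove and LocalFlowImprove are literally the same program and share an optimal value; there the first inequality holds with equality under a canonical (e.g.\ minimal-volume) choice of optimizer, so nothing is lost.
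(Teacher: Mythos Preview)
Your argument is correct, and it differs meaningfully from the paper's. The paper proves the two inequalities by separate mechanisms. For $\phi(S_{\mathrm{LFI}}) \le \phi(S_{\mathrm{MQI}})$ it uses the one-line observation that
\[
\phi(S_{\mathrm{LFI}}) \;\le\; \frac{\cut(S_{\mathrm{LFI}})}{\rvol(S_{\mathrm{LFI}};R,\kappa)} \;\le\; \min_{S\subseteq R}\frac{\cut(S)}{\rvol(S;R,\kappa)} \;=\; \min_{S\subseteq R}\phi(S)\;=\;\phi(S_{\mathrm{MQI}}),
\]
which is shorter than instantiating your lemma at $\kappa_b=\infty$. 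For $\phi(S_{\mathrm{FI}}) \le \phi(S_{\mathrm{LFI}})$, however, the paper invokes an external structural result from \citet{OZ14}, namely that LocalFlowImprove$(\delta)$ is equivalent to FlowImprove subject to an extra constraint $\vol(S\cap R)/\vol(S)\ge C$, and then argues by contradiction that if $\phi(S_{\mathrm{FI}})>\phi(S_{\mathrm{LFI}})$ then $S_{\mathrm{FI}}$ would also satisfy that constraint and beat $S_{\mathrm{LFI}}$ at its own game.

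Your route avoids this dependence entirely: the monotonicity lemma in $\kappa$ is self-contained and treats both inequalities uniformly. The ``ratio claim'' $q(S_a)/p(S_a)\ge q(S_b)/p(S_b)$ is exactly the structural insight that the Orecchia--Zhu characterization encodes (larger penalty keeps the optimizer proportionally more inside $R$), and your two-case proof of it via multiplying optimality inequalities is clean. So you gain self-containedness and a reusable lemma; the paper's version is more direct for the MQI half but leans on outside machinery for the FlowImprove half.
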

\begin{proof}
The first piece, that $\phi(S_{\text{LFI}}) \le \phi(S_{\text{MQI}})$, is a simple, useful exercise we repeat from~\citet[Theorem 4]{veldticml2016}. 
Note that if $S \subseteq R$ then $\phi(S) = \smash{\tfrac{\cut(S)}{\rvol(S; R, \kappa)}}$ for any $\kappa$. Now, note that for all $\rvol$ terms in the LocalFlowImprove($\delta$) objective with $\delta > 0$, we have $\kappa \ge  \vol(R)/\vol(\Rbar)$. Moreover, solutions are constrained to only consider sets where $\rvol$ is positive. Thus, for the value of $\kappa$ used in LocalFlowImprove, and also any positive $\kappa$, we have 
\[ 
\phi(S_{\text{LFI}})  = \frac{\cut(S_{\text{LFI}})}{\vol(S_{\text{LFI}})} 
\le \frac{\cut(S_{\text{LFI}})}{\vol(S_{\text{LFI}}) - \kappa \vol(S_{\text{LFI}} \cap \Rbar)}  
\le \frac{\cut(S_{\text{LFI}})}{\rvol(S_{\text{LFI}}; R, \kappa)}. 
\]
Next, note that for the chosen setting of $\kappa$, we have that $\rvol(S; R, \kappa) > 0$ for all $S \subseteq R$. Thus, we have 
\[ \phi(S_{\text{LFI}}) \le \mathop{\text{minimum}}_{S \subseteq R} \frac{\cut(S)}{\rvol(S; R, \kappa)} = \mathop{\text{minimum}}_{S\subseteq R} \phi(S) = \phi(S_{\text{MQI}}). \] 
This shows that both LocalFlowImprove and FlowImprove give better conductance sets than MQI. %
	
For the second piece, we use an alternative characterization of LocalFlowImprove as discussed in \citet{OZ14}. LocalFlowImprove($\delta$) is equivalent to solving the following optimization problem for some constant $C$:
\begin{equation*}\label{eq:LFI-optimization}
\MINone{S\subset V}{\frac{\text{cut}(S)}{\text{rvol}(S;R,\text{vol}(R)/\text{vol}(\bar{R}))}}{\frac{\text{vol}(S\cap R)}{\text{vol}(S)}\ge C, \rvol(S; \ldots) > 0 }
\end{equation*}
while FlowImprove solves the same problem without the constraint involving $C$. Then we have:
\[\frac{\text{cut}(S_{FI})}{\text{rvol}(S_{FI};R,\text{vol}(R)/\text{vol}(\bar{R}))} \leq \frac{\text{cut}(S_{LFI})}{\text{rvol}(S_{LFI};R,\text{vol}(R)/\text{vol}(\bar{R}))}\]
\[\frac{\text{cut}(S_{FI})}{\text{cut}(S_{LFI})}\leq \frac{\text{rvol}(S_{FI};R,\text{vol}(R)/\text{vol}(\bar{R}))}{\text{rvol}(S_{LFI};R,\text{vol}(R)/\text{vol}(\bar{R}))}  .  \]
If $\phi(S_{FI})>\phi(S_{LFI})$, we have
\[\frac{\text{cut}(S_{FI})}{\text{cut}(S_{LFI})}>\frac{\text{vol}(S_{FI})}{\text{vol}(S_{LFI})}  .  \]
Thus,
\[\frac{\text{rvol}(S_{FI};R,\text{vol}(R)/\text{vol}(\bar{R}))}{\text{rvol}(S_{LFI};R,\text{vol}(R)/\text{vol}(\bar{R}))} \geq \frac{\text{cut}(S_{FI})}{\text{cut}(S_{LFI})} > \frac{\text{vol}(S_{FI})}{\text{vol}(S_{LFI})}  .  \]
If we now substitute the definition of $\text{rvol}$ and $\text{vol}(S\cap \bar{R})=\text{vol}(S)-\text{vol}(S\cap R)$,
\[\frac{(1+\text{vol}(R)/\text{vol}(\bar{R})) \cdot \text{vol}(S_{FI}\cap R)- \text{vol}(R)/\text{vol}(\bar{R})\cdot \text{vol}(S_{FI})}{(1+\text{vol}(R)/\text{vol}(\bar{R}))\cdot \text{vol}(S_{LFI}\cap R)-\text{vol}(R)/\text{vol}(\bar{R})\cdot \text{vol}(S_{LFI})} > \frac{\text{vol}(S_{FI})}{\text{vol}(S_{LFI})}\]
\[\frac{\text{vol}(S_{FI}\cap R)}{\text{vol}(S_{FI})}>\frac{\text{vol}(S_{LFI}\cap R)}{\text{vol}(S_{LFI})}\geq C  .  \]
This means that $S_{FI}$ also satisfies the additional constraint in the optimization problem of LFI. But $S_{FI}$ has smaller objective value, which is a contradiction to the fact that $S_{LFI}$ is the optimal solution of LFI optimization problem.
\end{proof}

\begin{table}[t] %
	\caption{Characteristics of the MQI, FlowImprove, and LocalFlowImprove methods.}
	\label{tab:algocharacteristics}
	\centering
	\begin{tabularx}{\linewidth}{lXXXX}
		\toprule
		Method & Strongly \mbox{local} & \mbox{Explores} \mbox{beyond} $R$ & \mbox{Easy to} \mbox{implement} & Section\\
		\midrule
		MQI & \checkmark &  & \checkmark & \Cref{chap:mqi}\\
		FlowImprove &  & \checkmark & \checkmark & \Cref{chap:flowimprove}\\
		LocalFlowImprove & \checkmark & \checkmark & & \Cref{chap:localflowimprove} \\
		\bottomrule
	\end{tabularx}
\end{table}

Theorem~\ref{thm:conductance} would suggest that one should always use FlowImprove to minimize the conductance around a reference set $R$, but there are other aspects to implementations which should be taken into account.
The three most important, summarized in Table \ref{tab:algocharacteristics}, are described here.

\begin{itemize}
\item
\textbf{Locality of algorithm.}
For strongly local algorithms, the output is a small cluster around the reference set $R$ \emph{and} the running time depends only on the size of the output but is independent of the size of the graph.
Only the former is true for weakly local algorithms.
As we will show in the coming sections, both MQI and LocalFlowImprove are strongly local.
This enables both of them to be run quickly on very large graphs, assuming $R$ is not too large and $\delta$ is not too~small.
\item
\textbf{Exploration properties of algorithm.}
Some methods ``shrink'' the input, in the sense that the output is a subset of the input, while other methods do not have this restriction, i.e., they can (depending on the input graph and seed set) possibly shrink or expand the input.
This classification is particularly useful when we view the methods as a way to explore the graph around a given set of seed nodes.
For example, MQI only explores the region induced by $R$, and so it is not suitable for various tasks that involve finding \emph{new nodes}.
\item
\textbf{Ease of implementation.}
A final important property of methods regards how easy they are to implement.
MQI and FlowImprove are easy to implement because they rely on standard primitives like simple MaxFlow computations.
This means that one can black-box max-flow computations by calling existing efficient software packages.
For LocalFlowImprove, however, getting a strongly local algorithm requires a more delicate algorithm.
Therefore, we consider it to be a more difficult algorithm to implement.
\end{itemize}

As a simple and quick justification of the locality property of the solution (which is distinct from an algorithmic approach to achieve it), note the following simple-to-establish relationship between $\delta$ and the size of the output set for LocalFlowImprove. This was originally used in~\citet{VKG18} as a small subset of a~proof. 

\begin{lemma} \label{lem:lfi-size}
Let $G$ be an undirected, connected graph with non-negative weights.  Let $S^*$ be an optimal solution of the LocalFlowImprove objective with $\vol(R) \le \vol(\Rbar)$. Then $\vol(S^*) < \left(1 + \frac{\vol(\Rbar)}{\vol(R) + \delta \vol(\Rbar)} \right) \vol(R)$. 
\end{lemma}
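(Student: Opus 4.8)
The plan is to extract everything from the single feasibility constraint $\rvol(S^*; \ldots) > 0$ that every admissible set (in particular the optimum $S^*$) must satisfy; no appeal to optimality of the cut is needed. Recall that in $\text{LocalFlowImprove}(\delta)$ the relative volume uses the parameter $\kappa = \vol(R)/\vol(\Rbar) + \delta$, so that $\rvol(S^*; R, \kappa) = \vol(S^* \cap R) - \kappa\,\vol(S^* \cap \Rbar)$. First I would rewrite the positivity constraint as a bound on the ``leakage'' of $S^*$ into $\Rbar$:
\[
\vol(S^* \cap R) > \kappa\,\vol(S^* \cap \Rbar) \quad\Longleftrightarrow\quad \vol(S^* \cap \Rbar) < \tfrac{1}{\kappa}\,\vol(S^* \cap R),
\]
which is legitimate because $\kappa > 0$ (it is a sum of the nonnegative quantity $\delta$ and the strictly positive ratio $\vol(R)/\vol(\Rbar)$, using that $R$ is a nonempty reference set in a connected graph).

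Next I would decompose the total volume as $\vol(S^*) = \vol(S^* \cap R) + \vol(S^* \cap \Rbar)$ and substitute the leakage bound to obtain
\[
\vol(S^*) < \vol(S^* \cap R) + \tfrac{1}{\kappa}\,\vol(S^* \cap R) = \left(1 + \tfrac{1}{\kappa}\right)\vol(S^* \cap R).
\]
The final step is to use the trivial containment $S^* \cap R \subseteq R$, which gives $\vol(S^* \cap R) \le \vol(R)$, and then to unpack $\kappa$. Writing $\kappa = (\vol(R) + \delta\,\vol(\Rbar))/\vol(\Rbar)$ yields $1/\kappa = \vol(\Rbar)/(\vol(R) + \delta\,\vol(\Rbar))$, so that
\[
\vol(S^*) < \left(1 + \frac{\vol(\Rbar)}{\vol(R) + \delta\,\vol(\Rbar)}\right)\vol(R),
\]
which is exactly the claimed bound.

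There is essentially no hard step here; the whole argument is a one-line manipulation of the feasibility constraint followed by substitution. The only points requiring a moment of care are (i) confirming $\kappa > 0$ so that dividing preserves the inequality direction, and (ii) preserving the \emph{strict} inequality throughout, which is inherited directly from the strict constraint $\rvol(S^*;\ldots) > 0$ and propagates cleanly since we only add a nonnegative term $\vol(S^*\cap\Rbar)$ and then relax $\vol(S^*\cap R)$ upward to $\vol(R)$. The hypothesis $\vol(R)\le\vol(\Rbar)$ is not actually needed for this volume bound itself, but it is the standing assumption under which the objectives are posed, so I would simply carry it along without invoking it.
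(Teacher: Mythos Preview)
Your proposal is correct and follows essentially the same approach as the paper: both arguments use only the feasibility constraint $\rvol(S^*;R,\kappa)>0$, decompose $\vol(S^*)$ as $\vol(S^*\cap R)+\vol(S^*\cap\Rbar)$, and bound $\vol(S^*\cap R)$ by $\vol(R)$ to obtain $\vol(S^*)<(1+1/\kappa)\vol(R)$ before unpacking $\kappa$. The only cosmetic difference is that the paper substitutes $\vol(S^*\cap\Rbar)=\vol(S^*)-\vol(S^*\cap R)$ directly into the positivity inequality, whereas you first isolate a bound on $\vol(S^*\cap\Rbar)$ and then add; the algebra is identical.
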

\begin{proof}
	For simplicity, let $\sigma = \vol(R)/\vol(\Rbar) + \delta$. Then because the denominator at any solution must be positive, we have $0 < \vol(S^* \cap R) - \sigma \vol(S^* \cap \Rbar)$. Note that $ \vol(S^* \cap \Rbar) = \vol(S^*) - \vol(S^* \cap R)$, so $0 < (1+\sigma)\vol(R \cap S^*) - \sigma \vol(S^*)$. Thus, $\vol(S^*) < (1 + 1/\sigma) \vol(R)$. The result follows by substituting the definition for $\sigma$. 
\end{proof}

As we will show, all of the algorithms for these objectives fit into a standard fractional programming framework, which provides a useful way to reason about the opportunities and trade-offs.
An even more general setting for such problems are \emph{quotient cut problems} that we discuss in \Cref{sec:qcut-general}.
While they are often described in this literature on a case-by-case basis, quotient cut problems are all instances of the more general fractional programming class of problems. 
\subsection{The basic fractional programming problem}
\label{sec:fractionalProg-prob}

A fractional program is a ratio of two objective functions: $N(x)$ for the numerator and $D(x)$ for the denominator. It is often defined with respect to a subset $S$ of $\RR^{n}$
\begin{equation}\label{eq:basic-fractional}
\MINone{x}{N(x) / D(x)}{x \in S}
\end{equation}
where $D(x) > 0$ for all $x \in S$.
Fractional programming is an important branch of nonlinear optimization \citep{Frenk2009}. 
The key idea in fractional programming is to relate \eqref{eq:basic-fractional} to the function
\[ f(\delta) = \text{minimize } N(x)  - \delta D(x) \text{ subject to } x \in S, \]
which captures the minimum value of the objective function for this minimization problem as a function of $\delta$. 
Below, we use ``\text{argmin}'' as the expression for an input or argument that minimizes the problem.
Note that $f(\delta) \le 0$ if there exists $x$ such that $N(x)/D(x) \le \delta$. Moreover, if $N(x)$ and $D(x)$ are linear functions and $S$ is a set described by linear constraints, then $f(\delta)$ can be easily computed by solving a linear program, for instance.

We now specialize this general framework for cluster improvement. Note that we will continue to use $\delta$ as the ratio between the numerator and denominator instead of as the LocalFlowImprove parameter until \Cref{sec:experiments}.

\subsection{Fractional programming for cluster improvement}

\aside{aside:fractional}{Most commonly fractional programming is defined for subsets of $\RR^{n}$ as the domain. In our case, we use set-based domains.} 
When we consider the objective functions from \Cref{sec:improvement-objectives}, note that we can translate them into problems closely related to the fractional programming \probref{eq:basic-fractional}. 
Let $Q \subseteq V$ represent a subset of vertices. For MQI, this is $R$ itself and for the others, it is just $V$.  Now let $g(S \subseteq Q) \to \RR$ represent the denominator terms for the MQI, FlowImprove, or LocalFlowImprove  objectives from~\Cref{sec:improvement-objectives}. Then, in a fractional programming perspective on the problems, we are interested in solving the following problem
\begin{equation}
\label{eq:fracprob}
 \MINone{}{\phi_g(S) := 
   \begin{cases} \frac{\cut(S)}{g(S)} & g(S) > 0 \\ \infty & \text{otherwise} \end{cases}}{S \subseteq Q.}
\end{equation}
Let us assume that there is at least one feasible set $S \subseteq Q$ where $g(S) > 0$. This is satisfied for all the examples above when $S = R$. Also note that if $Q = V$ and if $g(V) > 0$, the entire node set $V$ is immediately a solution. For FlowImprove and LocalFlowImprove, though, $g(V) \le 0$, and so $V$ is never a solution, and, in fact, the value of $\kappa$ in FlowImprove is chosen exactly so that $g(V) = 0$. 

As discussed above, we will use a sequence of related parametric problems to find the optimal solution.
Thus, we introduce the parametric function
\begin{equation*}
z(S,\delta):= \cut(S) - \delta g(S),
\end{equation*}
where the parameter $\delta\in\RR$.
We also define the function
\begin{equation} \label{eq:frac-subprob}
\hat{z}(\delta) := \minimize_S \ z(S,\delta) \text{ where } S\subseteq Q, g(S) > 0.
\end{equation}
Computing the value of $\hat{z}(\delta)$ is a key component that we will discuss in \Cref{sec:frac-algo-parts} and also \Cref{chap:mqi,chap:flowimprove,chap:localflowimprove}.
Given this, we can consider solving the following equation
\begin{equation}
\label{eq:paramprob}
\hat{z}(\delta)=0  ,
\end{equation}
which is a simple root finding problem because $\hat{z}(\delta)$ is monotonically increasing as $\delta \to 0$ and also $\hat{z}(0) \ge 0$ and $\hat{z}(\phi_g(R)) \le 0$ (for our objectives). Note that $\hat{z}(0) = 0$ if $\cut(S) = 0$ for some set $S \subseteq Q$ with $g(S) > 0$, which can happen for a disconnected graph.

We now provide a theorem that establishes the relationship between the root finding \probref{eq:paramprob} and the basic fractional programming \probref{eq:fracprob}.
This theorem establishes that by solving \probref{eq:paramprob} we solve \probref{eq:fracprob} as well.
A similar theorem can be found in~\citet{DIN1967}. %
\begin{theorem}\label{thm:fractionalprogramming}
Let $G$ be an undirected, connected graph with non-negative weights.  A set of nodes
$S^*$ is a solution of \probref{eq:fracprob} iff
\begin{equation*}
\hat{z}\left(\tfrac{\cut(S^*)}{g(S^*)}\right)=0.
\end{equation*}
\end{theorem}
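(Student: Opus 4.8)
The plan is to run the standard Dinkelbach-style argument that underlies fractional programming (cf.\ the cited \citet{DIN1967}), specialized to our discrete, set-valued setting. Throughout, I treat $S^*$ as a feasible set, so $S^* \subseteq Q$ and $g(S^*) > 0$, which is exactly what makes the quantity $\delta^* := \cut(S^*)/g(S^*) = \phi_g(S^*)$ well defined; note that any solution of \probref{eq:fracprob} is feasible by definition. The single elementary observation driving everything is the identity $z(S^*, \delta^*) = \cut(S^*) - \delta^* g(S^*) = 0$, obtained by substituting the definition of $\delta^*$. Since $S^*$ is feasible, this already shows $\hat z(\delta^*) \le 0$.

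The key tool is a ``sign transfer'': for any feasible $S$ (so $g(S) > 0$) and for the fixed value $\delta^*$, we have $z(S,\delta^*) \ge 0 \iff \cut(S) \ge \delta^* g(S) \iff \phi_g(S) \ge \delta^*$, where the last equivalence divides by the strictly positive $g(S)$. This converts statements about the ratio objective $\phi_g$ into statements about the sign of the parametric objective $z(\cdot,\delta^*)$, and back.

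For the forward direction, assume $S^*$ solves \probref{eq:fracprob}. Then $\phi_g(S) \ge \phi_g(S^*) = \delta^*$ for every feasible $S$, so by the sign transfer $z(S,\delta^*) \ge 0$ for every feasible $S$; taking the minimum gives $\hat z(\delta^*) \ge 0$. Combined with $\hat z(\delta^*)\le 0$ from the identity above, we conclude $\hat z(\delta^*) = 0$. For the reverse direction, assume $\hat z(\delta^*) = 0$. Then for every feasible $S$ we have $z(S,\delta^*) \ge \hat z(\delta^*) = 0$, so the sign transfer yields $\phi_g(S) \ge \delta^* = \phi_g(S^*)$; hence $S^*$ attains the minimum and solves \probref{eq:fracprob}.

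The proof is not really obstructed by any hard step; the only points requiring care are bookkeeping ones. First, one must keep the feasibility constraint $g(S) > 0$ attached at every inequality, since both dividing by $g(S)$ and the very definition of $\phi_g$ rely on it. Second, because we minimize over the finite collection of subsets of $V$ (a discrete domain, as flagged in \Cref{aside:fractional}), the minimum defining $\hat z(\delta)$ is genuinely attained, so ``$\hat z(\delta^*)=0$'' is an exact equality rather than an infimum, and the existence of a feasible set (guaranteed since $S=R$ works for all three objectives) makes $\hat z$ finite. No appeal to monotonicity of $\hat z$ is needed for the equivalence, since $\delta^*$ is pinned to $S^*$; monotonicity is only relevant later for turning this characterization into a root-finding algorithm for \probref{eq:paramprob}.
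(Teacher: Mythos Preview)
Your proof is correct and follows essentially the same Dinkelbach-style argument as the paper: both directions hinge on the identity $z(S^*,\delta^*)=0$ together with the equivalence $z(S,\delta^*)\ge 0 \iff \phi_g(S)\ge \delta^*$ for feasible $S$. Your exposition is arguably a bit cleaner in isolating the ``sign transfer'' as a reusable tool and in noting explicitly that $\hat z(\delta^*)\le 0$ follows immediately from $z(S^*,\delta^*)=0$, whereas the paper's reverse direction additionally (and somewhat redundantly) invokes that $S^*$ is a minimizer of $z(\cdot,\delta^*)$.
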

\begin{proof}
For the first part of the proof, let us assume that $S^*$ is a solution of \probref{eq:fracprob}. This implies that $g(S^*)>0$.
We have that
\begin{equation*}
\delta^* := \frac{\cut(S^*)}{g(S^*)} \le \phi_g(S) \quad \text{ for all }  S\subseteq Q, g(S)>0.
\end{equation*}
Hence,
\begin{equation*}
\cut(S^*) - \delta^*g(S^*) = 0,
\end{equation*}
and
\begin{equation*}
\cut(S) - \delta^*g(S) \ge 0 \text{ for all } S \subseteq Q, g(S)>0.
\end{equation*}
Using the above we have that $\{\min z(S,\delta^*) \mid S\subseteq Q, g(S) > 0)\}$ is bounded below by zero, and this bound is achieved by $S^*$.
Therefore, $\hat{z}(\delta^*) = 0$, $z(S^*,\delta^*)=0$.

For the second part of the proof, assume that $\hat{z}(\delta^{*})=0$ such that
\begin{equation}
\delta^{*} = \frac{\mbox{cut}(S^{*})}{g(S^{*})},
\end{equation}
for some optimal $S^*$ of the minimization problem in $\hat{z}$.
Then
\begin{equation}
\mbox{cut}(S^{*}) - \delta^{*}g(S^{*}) = 0 \le \mbox{cut}(S,\Sbar) - \delta^{*}g(S)  \quad \text{ for all } S \subseteq Q, g(S) > 0.
\end{equation}
From the second inequality, we have that $\phi_g(S) \ge \delta^{*}$ $\forall S\subseteq Q, g(S)>0$. This means that the optimal solution of \probref{eq:fracprob} is bounded below by $\delta^{*}$.
From the first equation above, we get that this bound is achieved by $S^{*}$. Therefore, $S^*$ solves \probref{eq:fracprob}.
\end{proof}

\subsection{Dinkelbach's algorithm for fractional programming}
\label{sec:fracalgo}

Based on \Cref{thm:fractionalprogramming}, the root of \probref{eq:paramprob} will be the optimal value of the general cluster improvement \probref{eq:fracprob}.
To find the root of \probref{eq:paramprob}, we will use a modified version of Dinkelbach's algorithm~\citep{DIN1967}.

\paragraph{Dinkelbach's algorithm}
Dinkelbach's algorithm is given in \Cref{algo:fractionalprog}.
Note that we had to modify the original algorithm slightly since we do not assume that $g(S)>0$, $\forall S\subseteq Q$.

\begin{algorithm}
\caption{Dinkelbach's Algorithm}
\label{algo:fractionalprog}
\begin{algorithmic}[1]
\STATE Initialize $k := 1$, $S_1:=R$ and $\delta_1:=\cond_g(S_1)$.
\WHILE{we have not exited via the else clause}
\STATE Compute $\hat{z}(\delta_k)$ by solving
$S_{k+1} \; := \; \mathop{\text{argmin}}_{S}  {z(S,\delta_k)} \; \subjectto \; {S\subseteq Q}$
\IF{$\cond_g(S_{k+1}) < \delta_{k}$ \COMMENT{Recall $\phi_g(S) = \infty$ if $g(S) \le 0$}} 
	\STATE $\delta_{k+1}:=\phi_g(S_{k+1})$
\ELSE
	\STATE $\delta_k$ is optimal, return previous solution $S_k$.
\ENDIF
\STATE $k:=k+1$
\ENDWHILE
\end{algorithmic}
\end{algorithm}

\paragraph{Convergence of Dinkelbach's algorithm}

We now provide a theorem that establishes that the subproblem at Step $3$ of \Cref{algo:fractionalprog} does not output infeasible solutions, such as an $S$ that satisfies $g(S) \le 0$.
Based on this, we can establish that the objective function of \probref{eq:fracprob} is decreased at each iteration of \Cref{algo:fractionalprog}.

\begin{theorem}[Convergence]
\label{thm:monotonicfracprog}
Let $G$ be an undirected, connected graph with non-negative weights. Let $\delta^*$ be the optimal value of \probref{eq:fracprob}.
The subproblem in Step $3$ of  \Cref{algo:fractionalprog} cannot have solutions that satisfy $g(S) \le 0$ for $\delta > \delta^*$.
Such solutions are in the solution set of the subproblem if and only if $\delta \le \delta^*$.
Moreover, the sequence $\delta_k$, which is set to be equal to $\cond_g(S_k)$, decreases monotonically at each iteration. The algorithm returns a solution where $g(S_k) > 0$. 
\end{theorem}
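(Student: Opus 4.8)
The plan is to reduce everything to a sign analysis of the relaxed objective $z(S,\delta) = \cut(S) - \delta g(S)$ and then feed that conclusion into a finite induction on the iterations. First I would record two structural facts. For any $S \subseteq Q$ with $g(S) \le 0$ and any $\delta \ge 0$, we have $z(S,\delta) = \cut(S) - \delta g(S) \ge 0$, since $\cut(S) \ge 0$ (non-negative weights) and $-\delta g(S) \ge 0$. For any $S$ with $g(S) > 0$, we can rewrite $z(S,\delta) = g(S)\,(\phi_g(S) - \delta)$. I would also note that $\delta^* = \min\{\phi_g(S) : S \subseteq Q,\ g(S) > 0\}$ is attained (finitely many subsets) at some $S^*$ with $g(S^*) > 0$, and that $\delta^* \ge 0$.

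The first two claims then follow directly. For $\delta > \delta^*$, the set $S^*$ gives $z(S^*,\delta) = g(S^*)(\delta^* - \delta) < 0$, so the relaxed minimum $\min_{S \subseteq Q} z(S,\delta)$ is strictly negative; since every $S$ with $g(S) \le 0$ yields $z(S,\delta) \ge 0$, no such set can be a minimizer, which proves that Step~3 cannot return a set with $g(S) \le 0$ when $\delta > \delta^*$. For the ``if and only if'' of the second claim, the $\Rightarrow$ direction is the contrapositive of what was just shown, and for $\Leftarrow$ I would observe that when $0 \le \delta \le \delta^*$ every $S$ with $g(S) > 0$ satisfies $\phi_g(S) \ge \delta^* \ge \delta$, hence $z(S,\delta) \ge 0$, while the empty set has $g(\emptyset) = 0$ and $z(\emptyset,\delta) = 0$; thus the relaxed minimum equals $0$ and is attained by $\emptyset$, a set with $g \le 0$, so such solutions lie in the solution set exactly when $\delta \le \delta^*$.

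The monotonicity and termination claims I would prove by induction on $k$, maintaining the invariant that $g(S_k) > 0$ and $\delta_k = \phi_g(S_k) \ge \delta^*$ (the base case $k=1$ holds since $S_1 = R$ is feasible and $\phi_g(R) \ge \delta^*$). If $\delta_k > \delta^*$, the first claim guarantees the Step-3 minimizer $S_{k+1}$ has $g(S_{k+1}) > 0$ and $z(S_{k+1},\delta_k) = \hat z(\delta_k) < 0$, which rearranges (dividing by $g(S_{k+1}) > 0$) to $\phi_g(S_{k+1}) < \delta_k$; hence the \texttt{if}-branch fires, $\delta_{k+1} := \phi_g(S_{k+1}) < \delta_k$ strictly, and $\delta_{k+1} \ge \delta^*$ preserves the invariant. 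If instead $\delta_k = \delta^*$, then every feasible set has $\phi_g \ge \delta^* = \delta_k$, so the test $\phi_g(S_{k+1}) < \delta_k$ fails, the \texttt{else}-branch fires, and the algorithm returns the previous iterate $S_k$, which by the invariant satisfies $g(S_k) > 0$. Because $Q$ has finitely many subsets and hence $\phi_g$ takes finitely many values, the strictly decreasing sequence $\delta_k$ must reach $\delta^*$ after finitely many steps, giving termination with monotone decrease and a returned solution obeying $g > 0$.

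I expect the only genuinely delicate point to be the first claim --- namely, that relaxing away the explicit constraint $g(S) > 0$ in Step~3 is harmless precisely because the sign of $z$ on the region $\{g \le 0\}$ is controlled (non-negative) while the minimum is driven strictly negative by a feasible $g > 0$ set. Everything downstream is bookkeeping, with the finiteness of the subset lattice supplying termination. One edge case I would flag explicitly is the behaviour at $\delta = \delta^*$, where $\emptyset$ (or, for FlowImprove, $V$) may tie the optimal $S^*$ at objective value $0$; this is exactly the configuration that triggers the \texttt{else}-branch, so it must be handled carefully to ensure the algorithm halts rather than cycles.
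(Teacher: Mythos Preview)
Your proposal is correct and follows essentially the same approach as the paper's proof: both hinge on the sign analysis of $z(S,\delta)$ (non-negative on $\{g\le 0\}$, driven strictly negative by a feasible optimizer $S^*$ when $\delta>\delta^*$) and then deduce monotone decrease of $\delta_k$ from $z(S_{k+1},\delta_k)<0$. Your write-up is in fact slightly more complete than the paper's: you explicitly supply the $\Leftarrow$ direction of the ``if and only if'' via the witness $\emptyset$ (the paper only argues the contrapositive direction), and you make termination explicit through finiteness of the range of $\phi_g$, whereas the paper leaves that implicit.
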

\begin{proof}
For the first part of the theorem, let $\delta \ge 0$, $\hat{S}\in \{\argmin z(S,\delta)\}$; and let us assume for the sake of contradiction that $g(\hat{S})\le 0$.
Then
\begin{equation*}
z(S,\delta) \ge z(\hat{S},\delta) \ge 0 \ \forall S \subseteq Q.
\end{equation*}
Hence,
\begin{equation*}
\phi_g(S) \ge \delta \ \forall S\in \{ S \subset Q \ | \ g(S) > 0\},
\end{equation*}
however, this can only be true if $\delta \le \delta^*$.
Otherwise, for $\delta > \delta^*$ we have a contradiction, and this implies that $g(\hat{S})> 0$.\
Therefore, a solution $\hat{S}\in \{\argmin z(S,\delta)\}$ satisfies $g(\hat{S})> 0$, unless $\delta \le \delta^*$.

For the second part of the theorem, let $k$ be such that $\delta_k > \delta^*$.
Then, we have that $z(S_{k+1},\delta_k) < 0$, since $z(S_{k+1},\delta_k) < z(S_{k},\delta_k) = 0$ (where we get 0 by the definition of $\delta_k$ and $S_k$).
Because $z(S_{k+1},\delta_k)<0$, we have that $\phi_g(S_{k+1}) = \delta_{k+1} < \delta_k = \phi_g(S_k)$.
Note that because $g(S_{k+1})>0$ for any $\delta_k> \delta^*$ then we must have $\delta_{k+1}\ge \delta^*$.

Note that because of the algorithm, $\delta_k$ can never be less than $\delta^*$. Thus, the remaining case is detecting that $\delta_k = \delta^*$. Suppose this is the case and also $g(S_{k+1}) > 0$, then $\delta_{k+1} = \delta_k=\delta^*$, and based on Theorem \ref{thm:fractionalprogramming} the algorithm terminates with an optimal solution because either $S_{k+1}$ or $S_{k}$ are solutions. 
If $\delta_k = \delta^*$ and $g(S_{k+1}) \le 0$, then the algorithm terminates (because $\phi_g(S_{k+1}) = \infty$). Thus, $S_k$ must have been optimal (if not, then $g(S_{k+1})$ must be larger than 0) and so the algorithm outputs an optimal solution.
\end{proof}

\paragraph{Iteration complexity of Dinkelbach's algorithm}

The iteration complexity of a method allows us to deduce a bound on the number of iterations necessary. 
We now provide an iteration complexity result for \Cref{algo:fractionalprog}.
This involves two results. We begin with \Cref{lem:monotonicFracProg}.
This lemma describes several interesting properties of \Cref{algo:fractionalprog} which have an important practical implication. Specifically, it shows that $g(S_{k+1})<g(S_k)$. This result has important \emph{practical} implications, since it shows that \Cref{algo:fractionalprog} is searching for subsets $S$ that have a \emph{smaller} value of the function~$g$.
\Cref{lem:monotonicFracProg} will then allow us to prove an iteration complexity result of \Cref{algo:fractionalprog} in \cref{thm:itercomplexityDinkel}. %
A similar result can be found in~\citet[Lemma 4.3]{gallogrigoriadis}, but we repeat it in \Cref{lem:monotonicFracProg} for completeness.
In \Cref{lem:monotonicFracProg}, we also show that the numerator of the objective function in \probref{eq:fracprob} decreases monotonically.

\begin{lemma}\label{lem:monotonicFracProg}
If \Cref{algo:fractionalprog} proceeds to iteration $k+1$, then it satisfies both $g(S_{k+1}) < g(S_k)$ and $\cut(S_{k+1}) < \cut(S_k)$.
\end{lemma}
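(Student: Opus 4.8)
The plan is to exploit the fact that each iterate $S_{k+1}$ is a minimizer of the affine-in-$\delta$ map $z(S,\delta)=\cut(S)-\delta g(S)$ at $\delta=\delta_k$, and to play two consecutive optimality conditions against one another. Throughout I will lean on what \Cref{thm:monotonicfracprog} has already secured: when the algorithm proceeds to iteration $k+1$ we have $g(S_{k+1})>0$, the value $\delta_{k+1}=\phi_g(S_{k+1})<\delta_k$ is a \emph{strict} decrease, and $z(S_{k+1},\delta_k)<z(S_k,\delta_k)=0$. The last equality holds because $\delta_k=\phi_g(S_k)=\cut(S_k)/g(S_k)$, so that $\cut(S_k)=\delta_k g(S_k)$. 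I also record that $\delta_k>0$ whenever the algorithm proceeds: the strict bound $z(S_{k+1},\delta_k)<0$ reads $\cut(S_{k+1})<\delta_k\, g(S_{k+1})$, and since $\cut(S_{k+1})\ge 0$ while $g(S_{k+1})>0$, this is impossible unless $\delta_k>0$.

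First I would prove the $g$-decrease for $k\ge 2$, where both $S_k=\mathop{\mathrm{argmin}}_{S\subseteq Q} z(S,\delta_{k-1})$ and $S_{k+1}=\mathop{\mathrm{argmin}}_{S\subseteq Q} z(S,\delta_k)$ are genuine minimizers. Testing each minimizer against the other yields the two inequalities $\cut(S_{k+1})-\delta_k g(S_{k+1})\le \cut(S_k)-\delta_k g(S_k)$ and $\cut(S_k)-\delta_{k-1} g(S_k)\le \cut(S_{k+1})-\delta_{k-1} g(S_{k+1})$. Adding them and cancelling the cut terms collapses everything to $(\delta_{k-1}-\delta_k)\,g(S_{k+1})\le(\delta_{k-1}-\delta_k)\,g(S_k)$; the monotonicity of \Cref{thm:monotonicfracprog} gives $\delta_{k-1}-\delta_k>0$, so $g(S_{k+1})\le g(S_k)$. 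To upgrade this to a strict inequality I would argue by contradiction: if $g(S_{k+1})=g(S_k)$, the same two inequalities force $\cut(S_{k+1})=\cut(S_k)$, whence $\delta_{k+1}=\cut(S_{k+1})/g(S_{k+1})=\delta_k$, contradicting the strict decrease $\delta_{k+1}<\delta_k$.

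The base case $k=1$ is where this two-sided argument breaks down, because $S_1=R$ is only the seed set and not an $\mathop{\mathrm{argmin}}$; this is the main obstacle. I would resolve it by observing that for each of the three objectives $g$ attains its maximum uniquely at $S=R$. For MQI, $g(S)=\vol(S)$ on $S\subseteq Q=R$; for FlowImprove and LocalFlowImprove, $g(S)=\rvol(S;R,\kappa)=\vol(S\cap R)-\kappa\vol(S\cap\Rbar)$ with $\kappa>0$. In every case each node of $R$ contributes positively and each node of $\Rbar$ contributes negatively, so $g(R)=\vol(R)=\max_{S\subseteq Q} g(S)$ with a unique maximizer (all degrees being positive on a connected graph). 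Since proceeding forces $\phi_g(S_2)<\phi_g(S_1)$ we must have $S_2\ne R$, and therefore $g(S_2)<g(R)=g(S_1)$.

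Finally I would derive the numerator decrease from the denominator decrease, uniformly in $k$. Combining $\cut(S_k)=\delta_k g(S_k)$ with the strict bound $\cut(S_{k+1})<\delta_k g(S_{k+1})$ coming from $z(S_{k+1},\delta_k)<0$, and using the already-established $g(S_{k+1})<g(S_k)$ together with $\delta_k>0$, gives $\cut(S_{k+1})<\delta_k g(S_{k+1})<\delta_k g(S_k)=\cut(S_k)$. Both claims then follow. I expect the genuinely delicate points to be the strictness of the $g$-decrease and the separate handling of the $k=1$ seed step; the cut inequality should fall out as a corollary once these are in place.
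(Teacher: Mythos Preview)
Your proposal is correct, and for $k\ge 2$ it is essentially the paper's argument: both exploit the optimality of $S_k$ at $\delta_{k-1}$ together with information at $\delta_k$ to force $(\delta_{k-1}-\delta_k)(g(S_k)-g(S_{k+1}))>0$. The paper does this via an add-and-subtract of $\delta_{k-1}g(S_{k+1})$ and the strict bound $z(S_{k+1},\delta_k)<0$, obtaining strictness in one shot; you instead add the two optimality inequalities to get $g(S_{k+1})\le g(S_k)$ and then upgrade to a strict inequality by contradiction. These are algebraically equivalent routes to the same conclusion. Your derivation of $\cut(S_{k+1})<\cut(S_k)$ via $\cut(S_{k+1})<\delta_k g(S_{k+1})<\delta_k g(S_k)=\cut(S_k)$ is a bit cleaner than the paper's appeal to ``ratio and denominator both decrease, hence numerator decreases.''

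The one substantive difference is your treatment of $k=1$. You correctly observe that the two-sided optimality argument breaks down there because $S_1=R$ is a seed, not an $\mathop{\mathrm{argmin}}$, and you patch it by noting that for each of MQI, FlowImprove, and LocalFlowImprove the function $g$ is uniquely maximized at $R$, so $S_2\neq R$ forces $g(S_2)<g(S_1)$. The paper's proof simply invokes ``iterations $k$ and $k-1$'' and the optimality of $S_k$ at $\delta_{k-1}$ without singling out this base case, so your version is more careful here. Note that your $k=1$ argument is genuinely specific to these three objectives; for a general $g$ initialized at an arbitrary $R$ the claim $g(S_2)<g(S_1)$ need not hold, so this specialization is not just convenient but necessary.
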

\begin{proof}
Consider iterations $k$ and $k-1$ and assume that $\delta_k > \delta^*$.
Then, from Theorem \ref{thm:monotonicfracprog}, in iteration $k-1$, we have that $z(S_k,\delta_{k-1}) < z(S_{k-1},\delta_{k-1}) = 0 $.
In iteration $k$, we have that
\begin{equation*}
z(S_{k+1},\delta_{k}) = \cut(S_{k+1}) - \delta_k g(S_{k+1}) < 0.
\end{equation*}
By adding and subtracting $\delta_{k-1}g(S_{k+1})$ to the latter, we~get
\begin{equation*}
z(S_{k+1},\delta_{k}) = \cut(S_{k+1}) - \delta_{k-1}g(S_{k+1}) + \delta_{k-1}g(S_{k+1}) - \delta_k g(S_{k+1}) < 0.
\end{equation*}
Note that the first two terms on the right side of the equality are the minimization problem for that gave the solution $S_k$. Hence, we can lower-bound $\cut(S_{k+1}) - \delta_{k-1}g(S_{k+1})$ via $S_k$ to get
\begin{equation*}
\cut(S_{k}) - \delta_{k-1}g(S_{k}) + \delta_{k-1}g(S_{k+1}) - \delta_k g(S_{k+1}) \le  z(S_{k+1},\delta_{k}) < 0.
\end{equation*}
Because $z(S_{k},\delta_{k}) = 0$, we get that $\cut(S_k) = \delta_k g(S_k)$.
Thus, using this in the latter inequality, we get
\begin{equation*}
\delta_k g(S_k) - \delta_{k-1}g(S_{k}) + \delta_{k-1}g(S_{k+1}) - \delta_k g(S_{k+1}) \le  z(S_{k+1},\delta_{k}) < 0,
\end{equation*}
which is equivalent to
\begin{equation*}
(g(S_k) - g(S_{k+1}))(\delta_k- \delta_{k-1}) < 0.
\end{equation*}
However, because the algorithm monotonically decreases $\delta_k$, we have that $\delta_{k-1}  - \delta_k < 0$, and therefore we must have that
\begin{equation*}
g(S_k) > g(S_{k+1}).
\end{equation*}
This means that the denominator of the objective function in \probref{eq:fracprob} decreases monotonically.
Additionally, from Theorem \ref{thm:monotonicfracprog} we have that the objective function decreases monotonically.
These two imply that the numerator of the objective function, i.e., $\cut(S,\Sbar)$, decreases monotonically.
\end{proof}

Given this result, we can establish the following theorem, which provides an iteration complexity for \Cref{algo:fractionalprog}. This basic result can be improved, as we describe in \Cref{subsec:fastdinkel}, next.

\begin{theorem}[Iteration complexity for Dinkelbach's algorithm]
\label{thm:itercomplexityDinkel}
Consider using Dinkelbach's algorithm \Cref{algo:fractionalprog} for solving MQI, FlowImprove, or LocalFlowImprove on an undirected, connected graph with non-negative integer weights when starting with the set $R$. Then the algorithm needs at most $\cut(R) \le \vol(R)$ iterations to converge to a solution. 
\end{theorem}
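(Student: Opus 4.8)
The plan is to leverage the strict monotonicity of the cut values established in \Cref{lem:monotonicFracProg} together with the integrality of the edge weights to obtain a finite iteration bound. The only nontrivial ingredient is already in place: \Cref{lem:monotonicFracProg} guarantees that whenever \Cref{algo:fractionalprog} proceeds to the next iteration (i.e., the \texttt{if}-branch is taken, which by \Cref{thm:monotonicfracprog} happens exactly when $\delta_k > \delta^*$), the numerator strictly decreases, $\cut(S_{k+1}) < \cut(S_k)$.

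First I would observe that, because the graph has non-negative integer weights, every cut value $\cut(S_k) = \sum_{i \in S_k, j \in \Sbar_k} A_{ij}$ is a non-negative integer. Combined with the strict decrease from \Cref{lem:monotonicFracProg}, this means each proceeding iteration reduces $\cut(S_k)$ by at least $1$. Since the algorithm is initialized with $S_1 = R$, so that $\cut(S_1) = \cut(R)$, after $t$ successive proceeding iterations we have $\cut(S_{t+1}) \le \cut(R) - t$. Non-negativity of the cut then forces $t \le \cut(R)$, so the algorithm can take the \texttt{if}-branch at most $\cut(R)$ times before it is driven into the terminating \texttt{else}-branch; by \Cref{thm:monotonicfracprog} the returned set satisfies $g(S_k) > 0$ and is optimal. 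This yields the claimed bound of at most $\cut(R)$ iterations.

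It remains to justify the inequality $\cut(R) \le \vol(R)$. I would do this directly: every edge counted in $\cut(R) = \sum_{i \in R, j \in \Rbar} A_{ij}$ has an endpoint $i \in R$ and therefore contributes its weight $A_{ij}$ to the degree $d_i$, hence to $\vol(R) = \sum_{i\in R} d_i$; since $\vol(R)$ additionally counts the internal edges of $R$ (each twice) and all weights are non-negative, we obtain $\cut(R) \le \vol(R)$.

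The argument is short, so the main points requiring care are bookkeeping rather than deep ideas. The chief subtlety is aligning the counting with the exact control structure of \Cref{algo:fractionalprog}: the strict-decrease guarantee of \Cref{lem:monotonicFracProg} applies precisely to the iterations in which the \texttt{if}-branch is taken (equivalently $\delta_k > \delta^*$), while the final \texttt{else}-branch iteration merely detects optimality without decreasing the cut further. I would therefore be careful to phrase the bound as counting the iterations that actually perform a reduction of the objective, and to invoke the integrality hypothesis (which appears in the theorem but not in \Cref{lem:monotonicFracProg}) exactly at the step that upgrades the strict inequality $\cut(S_{k+1}) < \cut(S_k)$ to the quantized gap $\cut(S_{k+1}) \le \cut(S_k) - 1$.
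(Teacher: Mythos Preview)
Your proposal is correct and follows essentially the same approach as the paper: invoke \Cref{lem:monotonicFracProg} for strict decrease of $\cut(S_k)$, use integrality of the weights to quantize the decrease to at least $1$ per iteration, and bound the number of iterations by the initial value $\cut(R)$, then note $\cut(R)\le\vol(R)$. Your write-up is in fact more careful than the paper's about the bookkeeping of the \texttt{if}/\texttt{else} branches, but the argument is the same.
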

\begin{proof}
For all of the above programs, $R$ is a feasible set and thus we can initialize our algorithms with $R$. From \Cref{lem:monotonicFracProg}, we have that $\cut(S)$ decreases monotonically at each iteration.
Since we assume that the graph is integer-weighted, then $\cut(S)$ is integer valued and so $\cut(R)$ gives an upper bound on the number of iterations. Note that $\cut(R) \le \vol(R)$ for any set and so the algorithms need at most $\cut(R) \le \vol(R)$ iterations to converge to a solution~$S^*$. 
\end{proof}

\begin{remark} 
\label{remark:itercomplexityWeightedGraphs}
A weakness of the previous result is that it does not give a complexity result for graphs with non-integer weights. For weighted graphs with non-integer weights, if the weights come from an ordered field where the minimum relative spacing between elements is $\mu$, such as would exist for rational-valued weights or floating point weights, then the above argument gives $\cut(R)/\mu$ iterations. This is essentially tight as the following construction gives two sets whose cut and volume differ only by $\mu$.
\[
\text{\includegraphics[height=0.85in]{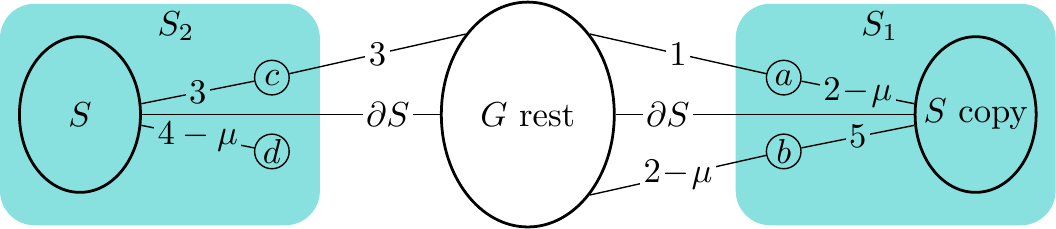}}
\]
Here, $S$ and $S$ copy are duplicates of the same subgraph, so their cut $\partial S$ is identical. Assume $S$ is small enough that we do not need to take into consideration the min term in conductance. Then  
note that $\phi(S_1) = \frac{\cut(S_1)}{\vol(S_1)} = \frac{\cut(S_2) - \mu}{\vol(S_2) - \mu}$. Furthermore, there is no obvious way to detect this scenario as we have a set of well-spaced distinct edge weights (1, $2-\mu$, 3, $4-\mu$, $5-\mu$). (Assuming all other edges in the graph have weight~1.) 
\end{remark}
\emph{For this reason, we do not consider the iteration complexity of algorithms for graphs with non-integer weights and we would recommend the algorithm in the next section to get an approximate answer.}

\subsection{A faster version of Dinkelbach's algorithm via root finding}
\label{subsec:fastdinkel}

\Cref{algo:fractionalprog} requires at most $\vol(Q)$ iterations to converge to an \textit{exact} solution for non-negative integer-weighted graphs.
If we are not interested in exact solutions, then we can improve the iteration complexity of \Cref{algo:fractionalprog} by performing a binary search on $\delta$. This is possible because it is easy to get bounds on the optimal range of $\delta$. We have zero as a simple lower-bound and, for the MQI, FlowImprove, and LocalFlowImprove objectives, $\phi(R) \le 1$ is an easy-to-compute upper-bound on the optimal $\delta$. 
\Cref{algo:fasterfracprog} presents a modified version of Dinkelbach's algorithm that accomplishes this.
In particular, the subproblem in Step $4$ in \Cref{algo:fasterfracprog} is the same as the subproblem in Step $3$ of the original \Cref{algo:fractionalprog}.

\begin{algorithm}
\caption{Fast Dinkelbach's Algorithm for \probref{eq:fracprob}}\label{algo:fasterfracprog}
\begin{algorithmic}[1]
\STATE Initialize $k := 1$, $\delta_{\min} := 0$, $\delta_{\max} \ge p:=\{\max \phi_g(S) \ | \ S\subseteq Q\}$ and $\eps \in (0,1]$
\WHILE{$\delta_{\max} - \delta_{\min} > \eps \delta_{\min}$}
\STATE $\delta_k:= (\delta_{\max} + \delta_{\min})/2$
\STATE Compute $\hat{z}(\delta_k)$ by solving
$S_{k+1} \; := \; \text{argmin}_{S} \; {z(S,\delta_k)} \;\; \subjectto \; {S\subseteq Q}$
\IF{$g(S_{k+1}) > 0$ \COMMENT{Then $\delta_k \ge \delta^*$.}}
\STATE $\delta_{\max} := \phi_g(S_{k+1})$ and set $S_{\max} := S_{k+1}$. \COMMENT{Note $\phi_g(S_{k+1}) \le \delta_k$.}
\ELSE
\STATE $\delta_{\min} := \delta_k$
\ENDIF
\STATE $k:=k+1$
\ENDWHILE
\STATE Return $\ARGMINzero{}{S\subseteq Q}{z(S,\delta_{\max})}$ or $S_{\max}$ based on minimum $\phi_g$
\end{algorithmic}
\end{algorithm}
At Steps $5$ to $8$ of \Cref{algo:fasterfracprog}, we make the decision to update $\delta_{\max}$ and $\delta_{\min}$ based on the optimal value of the subproblem. 
We further store the best solution so far in $S_{\max}$. In Step 10, we test if another solve with $\delta_{\max}$ produces a solution with a better objective than $S_{\max}$. This test would allow us to certify that $S_{\max}$ was optimal if the subsequent objective was not lower.  

In order to have a convergent algorithm, we have to guarantee that this decision results in a well-defined binary search.
In the following lemma, we discuss this~issue.

\begin{theorem}[Convergence of \Cref{algo:fasterfracprog} and iteration complexity]\label{thm:complexityfastdinkel}
Let $G$ be an undirected, connected graph with non-negative weights.
The binary search procedure in \Cref{algo:fasterfracprog} is well-defined, in the sense that the binary search interval includes the optimal solution, and condition in Step $5$ tells us on which side of the optimal solution the current solution is. Moreover, the sequence $\delta_k$ of \Cref{algo:fasterfracprog} converges to an approximate solution $|\delta^* - \delta_k|/\delta^* \le \eps$ in $\mathcal{O}(\log (\delta_{\max}/\eps))$ iterations, where $\delta^* = \cond_{g}(S^*)$ and $S^*$ is an optimal solution to problem \eqref{eq:fracprob}.
\end{theorem}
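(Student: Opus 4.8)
The plan is to split the statement into its two assertions and prove them in order: first, that the binary search is \emph{well-defined}, meaning the invariant $\delta_{\min} \le \delta^* \le \delta_{\max}$ is maintained and the test in Step~5 correctly reports which side of $\delta^*$ the current guess $\delta_k$ lies on; and second, the binary-search iteration count, adapted to the relative stopping criterion. Throughout I would write $\delta^* = \phi_g(S^*)$ for the optimal value of \probref{eq:fracprob} and work with the sign structure of the subproblem minimizer, recalling $z(S,\delta) = \cut(S) - \delta g(S)$ and that $z(\emptyset,\delta) = 0$, so the (unconstrained-in-$g$) minimum solved in Step~4 is always $\le 0$.

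The crux of well-definedness is the equivalence
\[ g(S_{k+1}) > 0 \ \Longleftrightarrow\ \delta_k \ge \delta^* , \]
for a minimizer $S_{k+1}$ of $z(\cdot,\delta_k)$ over $S \subseteq Q$. The implication $\delta_k > \delta^* \Rightarrow g(S_{k+1}) > 0$ is exactly the first part of \Cref{thm:monotonicfracprog}, which I would simply invoke. For the converse I would argue directly: if $\delta_k < \delta^*$, then every $S \subseteq Q$ with $g(S) > 0$ satisfies $\phi_g(S) \ge \delta^* > \delta_k$, hence $z(S,\delta_k) > 0$; since $z(\emptyset,\delta_k) = 0$, no such set can be a minimizer, forcing $g(S_{k+1}) \le 0$, and the contrapositive gives $g(S_{k+1}) > 0 \Rightarrow \delta_k \ge \delta^*$. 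With this equivalence I would establish the invariant by induction: initially $\delta_{\min} = 0 \le \delta^* \le p \le \delta_{\max}$; in the branch $g(S_{k+1}) > 0$ we have $\delta_k \ge \delta^*$, and since $z(S_{k+1},\delta_k) \le z(\emptyset,\delta_k) = 0$ with $g(S_{k+1}) > 0$ gives $\phi_g(S_{k+1}) \le \delta_k$ while optimality of $\delta^*$ gives $\phi_g(S_{k+1}) \ge \delta^*$, the update $\delta_{\max} := \phi_g(S_{k+1})$ keeps $\delta_{\max} \in [\delta^*,\delta_k]$; in the branch $g(S_{k+1}) \le 0$ we have $\delta_k \le \delta^*$, so $\delta_{\min} := \delta_k$ keeps $\delta_{\min} \le \delta^*$. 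Thus $\delta^*$ always stays in $[\delta_{\min},\delta_{\max}]$, and Step~5 reports the correct side.

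For the complexity, I would note that each pass sets the midpoint $\delta_k = (\delta_{\max}+\delta_{\min})/2$ and then either lowers $\delta_{\max}$ to a value $\le \delta_k$ or raises $\delta_{\min}$ to $\delta_k$; in both cases the width $\delta_{\max}-\delta_{\min}$ is at least halved, so after $t$ iterations it is at most $\delta_{\max}/2^{t}$ (using the initial $\delta_{\min} = 0$). The loop exits once $\delta_{\max}-\delta_{\min} \le \eps\,\delta_{\min}$, and combining this with $\delta_{\min} \le \delta^* \le \delta_{\max}$ yields $|\delta^* - \delta_{\max}| \le \delta_{\max}-\delta_{\min} \le \eps\,\delta_{\min} \le \eps\,\delta^*$, which is the claimed relative accuracy for the returned value. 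Counting the halvings needed to push the width below $\eps\,\delta_{\min}$ then gives the $\mathcal{O}(\log(\delta_{\max}/\eps))$ bound.

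The main obstacle I anticipate is the boundary and degenerate bookkeeping rather than the binary search itself. First, the tie case $\delta_k = \delta^*$ admits minimizers of either sign of $g$, so I must check that the invariant survives and that the Step~10 re-solve certifies a genuinely optimal (not merely $\eps$-approximate) set via \Cref{thm:fractionalprogramming}. Second, the relative stopping criterion tacitly needs $\delta^* > 0$ — which holds here precisely because the graph is connected, so $\cut(S) > 0$ for every nontrivial feasible $S$ — and converting the raw halving count $\log(\delta_{\max}/(\eps\,\delta^*))$ into the stated $\mathcal{O}(\log(\delta_{\max}/\eps))$ form requires relating $\delta_{\min}$ to $\delta^*$, which I would handle by noting $\delta_{\min}$ becomes comparable to $\delta^*$ once the first lower-bound update fires.
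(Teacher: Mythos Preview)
Your proposal is correct and follows essentially the same approach as the paper: establish that $\delta^*$ lies in the initial interval, invoke \Cref{thm:monotonicfracprog} to justify that the sign of $g(S_{k+1})$ determines which side of $\delta^*$ the midpoint lies on, observe the tighter update $\phi_g(S_{k+1})\le\delta_k$, and then count halvings. If anything, you are more careful than the paper in two places: you argue the converse direction ($\delta_k<\delta^*\Rightarrow g(S_{k+1})\le 0$) directly via $z(\emptyset,\delta_k)=0$ rather than loosely citing \Cref{thm:monotonicfracprog}, and you explicitly flag the tie case $\delta_k=\delta^*$ and the role of connectedness in ensuring $\delta^*>0$, both of which the paper glosses over.
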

\begin{proof}
Let $p:=\{\max \phi_g(S) \ | \ S\subseteq Q\}$ and $\delta_{\max} \ge p$.
Let $S^*$ be an optimal solution of \probref{eq:fracprob}.
From Theorem~\ref{thm:fractionalprogramming}, we get that for $(S^*,\delta^*)$ we have that $z(S^*,\delta^*)=0$, which gives $\phi_g(S^*)= \delta^*$.
Therefore, $\delta^*\in [0,\delta_{\max}]$.
We will use this interval as our search space for the binary search. Moreover, if $g(S_{k+1})>0$, then  we get from Theorem \ref{thm:monotonicfracprog} that $\delta_k > \delta^*$. Therefore, we can use $\delta_k$ to update $\delta_{\max}$ in Step~$6$. In fact, because we have a specific set, we know that $\phi_g(S_{k+1}) \le \delta_k$ and so we can use a slightly tighter update. 
However, if $g(S_{k+1})\le0$, then we get from Theorem \ref{thm:monotonicfracprog} that $\delta_k \le \delta^*$, and we can use $\delta_k$ to define $\delta_{\min}$ in Step $8$.
If the initial $\delta_{\max}$ is greater than $p$, then it is easy to see that \Cref{algo:fasterfracprog} converges to an optimal solution of \probref{eq:fracprob}
in at most $\log (\delta_{\max}/\eps)$ iterations, where $\eps>0$ is an accuracy parameter.
\end{proof}

Note that \Cref{thm:complexityfastdinkel} is an improvement over \Cref{thm:itercomplexityDinkel}. The former requires $\mathcal{O}(\log (\delta_{\max}/\eps))$ iterations in the worst-case, while the latter states that Dinkelbach's algorithm requires $\vol(Q)$ (number of edges) iterations.
Similar results about binary search have been discussed in \citet{LR04,AL08_SODA,Hoc10}.  Among other details, what is missing from these references is an exact quantification of the value of $\eps$ necessary for an exact solution which we provide in subsequent sections. 

\subsection{The algorithmic components of cluster improvement}
\label{sec:frac-algo-parts}

We have now shown how to solve cluster improvement problems in the form of \probref{eq:fracprob} via either Dinkelbach's algorithm or the bisection-based root finding variation. The last component of the algorithmic framework is a solver for the \subprobref{eq:frac-subprob} in the appropriate Step (3 or 4) of each algorithm. Solving these subproblems is where the MinCut and MaxFlow-based algorithm arises as they allow us to test $\hat{z}(\delta) < 0$. In \Cref{chap:mqi,chap:flowimprove,chap:localflowimprove} we work through how appropriate \stMinCut and MaxFlow problems can be derived constructively. 

At this point, we summarize the major results and show an overview of the running times of the methods we will establish in these sections.
In particular, in Table~\ref{tab:summary_subproblems}, we provide pointers of algorithms and convergence theorems for each method. Also, in Table~\ref{tab:summary_subproblems} we provide a short summary of running times for each method where we make it clear that the subproblem solve time is a dominant term.

\begin{table}[t] %
    \caption{Specifics of MQI, FlowImprove and LocalFlowImprove, as special cases of Dinkelbach's \Cref{algo:fractionalprog} and its binary search version \Cref{algo:fasterfracprog}. In the table, $R$ is the input seed set of nodes. 
    	The column \textit{Subproblem} refers to the specialized subsolver that is used to solve the subproblem at Step $3$  of \Cref{algo:fractionalprog} or Step $4$ of \Cref{algo:fasterfracprog}. The \textit{Augmented Graph} entry refers to an augmented graph construction that is used to understand the subproblem that is solved at each iteration of Dinkelbach's algorithm. Note that we omit all log-factors and constants from the running times of the algorithms, more detailed running times can be found in the referenced theorems. We use $\tilde{\mathcal{O}}$ as $\mathcal{O}$-notation without logarithmic factors.}
  \centering

\label{tab:summary_subproblems}
\siamwidth{
	\setlength{\tabcolsep}{0.25\tabcolsep}
	\begin{tabularx}{\linewidth}{@{}p{0.8in}@{\hspace{10pt}}XXX@{}}
\toprule 
		\textsf{\itshape Method} & \textsf{\itshape Dinkelbach} & \textsf{\itshape Binary Search} & \textsf{\itshape Subproblem} \\ 
		& \footnotesize and Runtime & \footnotesize and Runtime & \footnotesize \mbox{Construction, Runtime,} and Solvers \\
		\midrule 
		MQI 
		& \Cref{algo:mqi} \vspace{1ex}\newline \footnotesize $\mathcal{O}(\cut(R) \cdot \texttt{subproblem})$ \newline \Cref{thm:itercomplexityMQI} \newline \scriptsize \cite{LR04}
		& \Cref{algo:fastermqi} \vspace{1ex}\newline \footnotesize $\mathcal{\tilde{O}}(\texttt{subproblem})$ \newline  \Cref{thm:complexityfastmqi} \newline  \scriptsize \cite{LR04}
		& \probref{eq:mqi-orig-problem} \vspace{1ex}\newline \footnotesize Augmented Graph \ref{proc:aug_graph_mqi} \newline  \mbox{MaxFlow with $\vol(R)$} edges (\S\ref{sec:mqi_subproblem})  \\
\midrule 
		FlowImprove 
		& \Cref{algo:flowImprove} \vspace{1ex}\newline \footnotesize $\mathcal{O}(\cut(R) \cdot \texttt{subproblem})$ \newline \Cref{thm:itercomplexityFlowImp} \newline \scriptsize \rlap{\mbox{\cite{AL08_SODA}}}
		& \Cref{algo:fasterflowimprove} \vspace{1ex}\newline \footnotesize  $\mathcal{\tilde{O}}(\texttt{subproblem} )$  \newline \Cref{thm:complexityfastflowimp} \newline \scriptsize \rlap{\mbox{\cite{AL08_SODA}}}
		& \probref{eq:mincutflowimprove_2} \vspace{1ex}\newline \footnotesize Augmented Graph~\ref{proc:aug_graph_flowimprove} \newline \mbox{MaxFlow with $\vol(G)$} edges (\S\ref{sec:flowimprove_subproblem})  \\
\midrule
		LocalFlow-Improve$(\delta)$  \vspace{1ex} \newline\footnotesize $\sigma = \delta + \tfrac{\vol(R)}{\vol(\Rbar)}$
		& \raggedright SimpleLocal \vspace{1ex}\newline \footnotesize 
		$\mathcal{O}(\cut(R) \cdot \texttt{subproblem})$ \newline 
		\Cref{thm:localfloworiginallocaltime}, 
		\scriptsize \cite{veldticml2016}
		& \Cref{algo:localflowimprove} \vspace{1ex}\newline \footnotesize $\mathcal{\tilde{O}}(\texttt{subproblem} )$ \newline \Cref{thm:localfloworiginallocaltime} \newline \scriptsize \cite{OZ14}
		& \probref{eq:mincutlocalflowimprove_2} \vspace{1ex}\newline \footnotesize Augmented Graph~\ref{proc:aug_graph_localflowimprove} \newline 
		\mbox{$\mathcal{\tilde{O}}((1\!+\!1/\sigma)^2\vol(R)^2)$ with} Alg~\ref{algo:simplelocal}  (\S\S\ref{subsubsec:localgraph}-\ref{subsubsec:simplelocal})
		\\
\bottomrule 
	\end{tabularx}}
\end{table}

\subsection{Beyond conductance and degree weighted nodes}
\label{sec:qcut-general}
Our discussion and analysis of fractional programming for cluster improvement objectives has, so far, focused on the MQI, FlowImprove and LocalFlowImprove problems as unified through Problem~\ref{eq:fracprob}. However, there is a broader class of objectives that generalizes beyond these specific types of cuts and volume ratios. We will highlight a few definitions that are reasonably straightforward to understand, although we will return to the MQI, FlowImprove, and LocalFlowImprove definitions above in the subsequent discussions. 

As an instance of a more generalized setting, we can define a generalized volume of a set $S$, which we call $\nu$, with respect to an arbitrary vector of positive weights $\vw$, 
\[ \nu(S; \vw) = \sum_{i \in S} \vw_i = \mathbf{1}_S^T \vw. \] 
Note that setting $\vw$ to be the degree vector $\vd$ gives the standard definition of volume, i.e., $\nu(S; \vd) = \vol(S)$. Then we can seek solutions of 
\[ \MINone{S \subset V}{\displaystyle \frac{\cut(S)}{\nu(S \cap R; \vw) - \kappa \nu(S \cap \Rbar; \vw) }}{\text{denominator} > 0} \] as a generalized notion of MQI, FlowImprove, and LocalFlowImprove (where $\kappa \ge \nu(R; \vw)/\nu(\Rbar; \vw)$). 

A particularly useful instance is where $\vw$ is simply the vector of all ones $1_n$. In which case $\nu(S, 1_n)$ is simply the cardinality of the set $S$. In this case $\cut(S)/\nu(S, 1_n)$ is the expansion or ratio-cut value of a set (\Cref{sec:metrics}). This approach was used in the original MQI paper~\cite{LR04}, as that paper discussed ratio-cuts instead of conductance values. This more general notion of volume also appeared in the FlowImprove paper~\cite{AL08_SODA} in order to unify the analysis of ratio-cuts and conductance objectives. While these two choices have been explored, of course, the theory allows us to choose virtually any vector and this gives a large amount of flexibility. The MaxFlow and mincut constructions for the subproblems in the subsequent sections would need to be adjusted to account for this type of arbitrary choice. This is reasonably straightforward given our derivations. For example, we could set $\vw = \sqrt{\vd}$ to generate a hybrid objective between expansion and conductance. 

As another example of how the framework can be even more general, we mention the ideas from \citet{VKG18} that \emph{penalize} excluding nodes from $R$ in the solution set $S$. These penalties can be set sufficiently large such that we can solve variations of FlowImprove and LocalFlowImprove where all the nodes in $R$ \emph{must} be in the result, for instance 
\[ \MINone{S \subset V}{\displaystyle \frac{\cut(S)}{\nu(S \cap R; \vw) - \kappa \nu(S \cap \Rbar; \vw) }}{R \subset S, \text{denominator} > 0}. \]
They can also be set smaller, however, such that we wish to have \emph{most} of $R$ within the solution $S$. This scenario is helpful when the element of $R$ may have a \emph{confidence} associated with them.

All of the analysis in subsequent sections -- including the locality of computations -- applies to these more general settings; however, the generalized details often obscure the simplicity and connections among the methods. So we do not conduct the most general description possible.  We simply wish to emphasize that it is possible and useful to do so.

\section{Cluster Improvement, Flow-based, and Other Related Methods}
\label{sec:background}
\label{sec:related-work}

As we have already briefly discussed, graph clustering is a well-established problem with an extensive literature. 
Cluster improvement algorithms have received comparatively little attention. 
In this section, we will discuss how the cluster improvement problem and algorithms for solving this problem are similar to and different than other related techniques in the literature. 
Our goal is to draw a helpful distinction and explain the relationship between cluster improvement problems/algorithms and a number of other (sometimes substantially but sometimes superficially) related topics. 

For instance, we will discuss how the cluster improvement perspective yields the best results on graph and mesh partitioning benchmark problems (\Cref{sec:mesh-clustering}). We will then highlight key differences between the types of graphs arising in scientific and distributed computing and the types of graphs based on sparse relational data and complex systems (\Cref{sec:data-in-complex-systems}), which strongly motivates the use of \emph{local algorithms} for these data.
These local graph clustering algorithms, in turn, have strong relationships with the community detection problem in networks as well as with inferring metadata, which we will explore more concretely in the empirical sections. 

Taking a step back, we explain our cluster improvement algorithms in terms of finding sets of small conductance, and so we also briefly survey the state of conductance optimization techniques more generally (\Cref{sec:conductance-optimization}). 
Likewise, our algorithms are all based on using a network flow optimizer as a subroutine to accomplish something else. 
Since this scenario is surprisingly common, e.g., because there are fast algorithms for network flow computations, we highlight a few notable applications of network-flow based computing (\Cref{sec:network-flow}) as well as the current state of the art for computing network flows (\Cref{sec:flow-algorithms}). 

Finally, we conclude this section by relating our cluster improvement perspective to network flows in continuous domains (\Cref{sec:continuous}), total variation metrics, and a wide range of work in using graph cuts and flows in image segmentation (\Cref{sec:image-graph-cuts}).

\subsection{Graph and mesh partitioning in scientific computing}
\label{sec:mesh-clustering}

Graph and mesh partitioning are important tools in  parallel and distributed computing, where the goal is to 
partition a computation into \emph{many, large} pieces that can be treated with minimal dependencies among the pieces. 
This can then be used to maximize parallelism and minimize communication in large scientific computing algorithms~\cite{Pothen-1990-partitioning,simon1991partitioning,karypis1998-metis,hendrickson1995improved,hendrickson1995multi,Karypis-1999-parmetis,leland1995chaco,Walshaw-MPDD-07,walshaw2000mesh,pellegrini1996scotch,Knight-2014-powers}. 
The traditional inputs to graph partitioning for scientific computing are graphs representing computational dependencies involved in solving a spatially discretized partial differential equation. 
In these problems, there is often a strong underlying geometry, where nodes are localized in space and edges are between nearby nodes. 
Furthermore, one of the key goals (indeed, almost a constraint in this application) is that the partitions be very well balanced so that no piece is much larger than the others.

In the context of this literature, our goal is not to produce an overall partitioning of the graph. Rather, given a piece of a partition, our tools and algorithms would enable a user to \emph{improve} that partition in light of an objective function such as graph conductance or another related objective. Indeed, work on improving and refining the quality of an initial graph bisections can be found in the Fiduccia-Mattheyses implementation of the Kernighan-Lin method~\cite{Fiduccia-1982-linear}.  Given a quality score for a two-way partition of a graph and a desired balance size, this algorithm searches among a class of local moves that could improve the quality of the partition. This improvement technique is incorporated, for instance, into the SCOTCH~\cite{pellegrini1996scotch}, Chaco~\cite{leland1995chaco}, and METIS~\cite{karypis1998-metis} partitioners.

This strategy for partition-and-improvement is also a highly successful paradigm for generating the best quality bisections and partitions on benchmark data. 
For example, on the Walshaw collection of partitioning test cases~\cite{Soper-2004-benchmark}, around half of the current best known results are the result of improving an existing partitioning using an improvement algorithm~\cite{Henzinger-2018-ilp-improve}. This has occurred a few times in the past as well~\cite{Sanders-2010-engineering,Hein-2011-relaxations,LR04}. There are important differences between the applications we consider (which are more motivated by machine learning and data science) and those in mesh partitioning for scientific computing. 
Most notably, having good balance among all the partitions is extremely important for efficient parallel and distributed computing, but it is much less so for social and information networks, as we discuss in the next section.

\subsection{The nature of clusters in sparse relational data and complex systems}
\label{sec:data-in-complex-systems}

Beyond the runtime difference between local and global graph analysis tools, there is another important reason to consider local graph analysis for sparse relational data such as social and information networks, machine learning, and complex systems. 
There is strong evidence that large-scale graphs arising in these fields~\cite{LLDM2009,LLDM08_communities_CONF,LLM10_communities_CONF,Gargi-2011-youtube,Jeub15} have interesting small-scale structure, as opposed to interesting and non-trivial large-scale global structure. 
Even aside from running time considerations, this means that global graph methods tend to have trouble identifying these small and good clusters and thus may not be well-applicable to many large graphs that arise in large-scale data applications. 
As a simple example of the impact the differences of data may have on a method, note that for graphs such as discretizations of a partial differential equation, simply enlarging a spatially coherent set of vertices results in a set of better conductance (until it is more than half the graph).
On the other hand, the sets of small conductance in machine learning and social network based graphs tend to be small, in which case enlarging them simply makes them worse in terms of conductance. 
This has been quantified by the Network Community Profile (NCP) plot~\cite{LLDM2009,Jeub15}.

\subsection{Local graph clustering, community detection, and metadata inference}
\label{sec:local-graph-clustering}
Local graph clustering is, by far, the most highly developed setting for local graph algorithms. 
A local graph clustering method seeks a cluster nearby the reference set $R$, which can be as small as a single node. 
Cluster improvement algorithms are, from this perspective, instances of local graph clustering where the input is a good cluster $R$ and the output is an even better cluster $S$. 
Local graph clustering itself emerged simultaneously out of the study of partitioning graphs for improvement in theoretical runtime of Laplacian solvers~\cite{ST13} and the limitations of global algorithms applied to machine learning and data analysis based graphs~\cite{Lang2005-spectral-weaknesses,andersen2006-communities,ACL06}. 
Subsequently, there have been a large number of developments in both theory, practice, and applications. 
These include:
\begin{itemize}
	\item improved theoretical bounds~\cite{ZLM13,AGPT2016}, 
	\item novel recovery scenarios \cite{KK2014},
	\item optimization-based approaches and formulations \cite{GM14_ICML,Gleich-2015-robustifying,FDM2017,FKSCM2017},
	\item heat kernel-based approaches \cite{chung2007-pagerank-heat,C09,CS14,KG14,Avron-2015-pagerank}, 
	\item Krylov and Lanczos-based approaches~\cite{Li-2015-lemon,2017-ecml-pkdd},
	\item local higher-order clustering based on triangles \cite{YBLG2017,Tsourakakis-2017-motif},
	\item large-scale parallel approaches \cite{SKFM2016}.
\end{itemize}

One reason for the diversity of methods in this area is that local graph clustering is a common technique to study the community structure of a complex system or social network~\cite{LLDM2009,LLDM08_communities_CONF,LLM10_communities_CONF}. 
The communities, or modules, of a network represent a coarse-grained view of the underlying system~\cite{Newman-2006-modularity,Palla2005-overlapping}. 
In particular, local clustering, local improvement, and local refinement algorithms are often used to generate overlapping groups of communities from any community partition~\cite{Lancichinetti-2009-overlapping,Xie-2013-overlapping,Whang-2016-community}. 
This is often called a \emph{local optimization and expansion} methodology. %

Another application of local graph clustering is metadata inference. The metadata inference problem is closely related to semi-supervised learning, where the input is a graph and a set of labels with many missing entries. The goal is to \emph{interpolate} the labels around the remainder of the graph. Hence, any local clustering method can also be used for semi-supervised learning problems~\cite{joachims2003transductive,ZBLWS04,liu2009robust,belkin2006manifold,zhu2003semi} (and thus, metadata inference). That said, the metadata application raises a variety of statistical consistency questions~\cite{WFM2019}, methodological questions due to a no-free-lunch theorem~\cite{Peel-2016-metadata}, as well as data suitability questions~\cite{Peel-2017-relational}. We omit these discussions in the interest of brevity and note that some caution with this approach is advisable.

Among the local graph clustering methods, the Andersen-Chung-Lang algorithm for seeded PageRank computation~\cite{ACL06} is often the de facto choice.  This method has both useful theoretical and empirical properties, namely, recovery guarantees in terms of small conductance clusters~\cite{ACL06,ZLM13} and extremely fast computation~\cite{ACL06}. It also has close relationships to many other perspectives on graph problems (e.g.~\cite{Gleich-2015-robustifying,FKSCM2017,FDM2017}, including robust and $1$-norm regularized versions of these problems. 

Cluster improvement algorithms are a natural fit for both community detection and metadata inference setting. Given any partition of the network, set of communities, set of overlapping communities, or other set of vertex sets, we can study the results of improving each set individually. This is exactly the setting of \Cref{fig:example-sbm}, where we were able to find a better partition of the network given an initial partition. (Although, these techniques may not result in a partition.) 
 Second, for metadata inference,  we simply seek to use a given label as a reference set that we \emph{improve}. 
We explore these applications from an empirical perspective in \Cref{sec:experiments}, where we compare them to a relative of the Andersen-Chung-Lang method for these tasks.

\subsection{Conductance optimization}
\label{sec:conductance-optimization}

Taking a step back, the cluster improvement algorithms we discuss improve the \emph{conductance} or \emph{ratio-cut} scores. Finding the overall minimum conductance set in a graph is a well-known NP-hard problem~\cite{SM90,LR99}. That said, there exist approximation algorithms based on linear programming~\cite{LR88,LR99}, semi-definite programming~\cite{ARV09}, and so-called cut-matching games~\cite{KRV09,OSVV12}. A full comparison and discussion of these ideas is beyond the scope of this survey. We note that these techniques are not often implemented due to complexities in the theory needed to get the sharpest possible bounds. However, these techniques do inspire new scalable approaches, for instance~\cite{Lang-2009-partitioning}.

\subsection{Network flow-based computing}
\label{sec:network-flow}

More broadly beyond conductance optimization, our work relates to the idea of using \emph{network flow} as a fundamental computing primitive itself. 
By this, we mean that many other algorithms can be cast as an instance of network flow or a sequence of network flow problems. 
When this is possible, it enables us to use highly optimized solvers for this specific purpose that often outperform more general methods. 
Bipartite matching is a well known, textbook example of this scenario~\cite[Section 7.5]{Kleinberg-book}. 
Other examples include finding the densest subgraph of a network, which is the subset of vertices with highest average degree. 
Formally, if we define
\[ 
   \text{density}(S) = \frac{\vol(S)-\cut(S)}{|S|}, 
\]
then the set $S$ that maximizes this quantity is polynomial time computable via a sequence of network flow problems~\cite{Goldberg-1984-densest-subgraph}. 
Another instance is one of the many definitions of \emph{communities} on the web that can be solved exactly as a max-flow problem~\cite{Flake-2000-communities}. 
More relevant to our setting is the work of \citet{Hoc13}, who showed that the sets that minimize 
\[ \minimize_S \; \frac{\cut(S)}{\vol(S)} \quad \text{ and } \quad  \minimize_S \;  \frac{\cut(S)}{|S|} \]
can be found in polynomial time through a sequence of max-flow and min-cut computations. 
Although feasible to compute, in general these sets are unlikely to be interesting on many machine learning and data analysis based graphs, as they will tend to be very large sets that cut off a small piece of the rest of the graph. 
(Formally, suppose there exists a node of degree 1 in an unweighted graph, then the complement set of that node will be the solution.) Among other reasons, this is the reason we use the objective functions that are symmetric in $S$ and $\Sbar$. 

Four other interesting cases show the diversity of this technique. First, the semi-supervised learning algorithm of \citet{Blum-2001-mincuts} uses the mincut algorithm to identify other vertices likely to share the same label as those that are given. Second is the use of flows to estimate a gradient in an algorithm for ranking a set of data due to \citet{Osting-2013-statistics}. Third, there are useful connections between \emph{matching} algorithms (which can be solved as flow problems) and semi-supervised learning problems~\cite{Jacobs2018}. Finally, there is a recent set of research on \emph{total variation} or \emph{TV} norms in graphs and the connections to network flow~\cite{Jung2019}. These were originally conceptualized for semi-supervised learning. They can also be used to build local clustering mechanisms that optimize a combination of 2-norm and 1-norm  objectives with max-flow techniques~\cite{Jung2021}.

\subsection{Recent progress on network flow algorithms}
\label{sec:flow-algorithms}
Having flow as a subroutine is useful because there is a large body of work in both theory and practice at making flow computations fast. For an excellent survey of the overall problem, the challenges, and recent progress, we recommend~\citet{Goldberg-2014-flows}. This overview touches on the exciting line of work in theory that showed a connection between Laplacian linear system solving and approximate maximum flow computations~\cite{Christiano-2011-max-flow,Lee-2013-max-flow} as well as recent progress on the exact problem~\cite{Orlin-2013-max-flow}. %
We refer readers to~\citet{LS13,PS19} as well. Also, we refer the reader to software packages that compute maximum flows fast~\cite{LEMON}.

\subsection{Continuous and infinite dimensional network flow and cuts}
\label{sec:infinite-flow}
\label{sec:continuous}
Our approach in this survey begins with a finite graph based on data and is entirely finite dimensional. 
Alternative approaches seek to understand problems in the continuous or infinite dimensional setting. 
For instance, \citet{Strang-1983-continuous-max-flow} posed a continuous maximum-flow problem in a domain, where the goal is to identify a function that satisfies continuous generalizations of the flow-conditions. 
As a quick example of these generalizations, recall that the cut of a set $S$ can be computed as $\normof[\mC,1]{\mB \vx}$. The total variation of an indicator function for a set generalizes the cut quantity to a continuous domain. 
This connection, and it's relationship to sharp boundaries, motivates total variation image denoising~\cite{Rudin-1992-tvd} as well as ideas of continuous minimum cuts~\cite{Chan-2006-denoising}.
Continued development of the theory~\cite{Strang-2010-max-flow-plane} has led to interesting new connections between the infinite dimensional and finite dimensional cases~\cite{Yuan-2010-continuous-max-flow}. 
There are strong connections in motivation between our cluster improvement framework and finding optimal continuous functions in these settings -- e.g., we can think of sharpening a blurry, noisy image as improving a cluster (see \Cref{fig:mqi-for-images}) -- but the details of the algorithms and data are markedly different. 
In particular, we largely think of the cluster improve routine as a strongly local operation.
Understanding how these ideas generalize to continuous or infinite dimensional scenarios is an important problem raised by our approach.

\begin{figure}
\newlength{\binaryimagesize}
\setlength{\binaryimagesize}{0.97in}
\tronly{\setlength{\binaryimagesize}{1.23in}}
\centering \footnotesize 
\noindent\begin{tabular}{*{5}{@{}p{\binaryimagesize}}@{}l@{}}
\includegraphics[width=\linewidth]{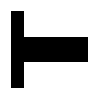} & 	
\includegraphics[width=\linewidth]{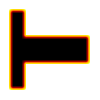} & 
\includegraphics[width=\linewidth]{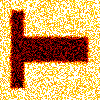} & 
\includegraphics[width=\linewidth]{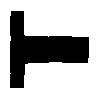} &
\includegraphics[width=\linewidth]{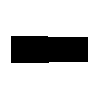} &
\includegraphics{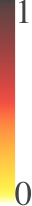} \\
 (a) Original & 
 (b) Boundary blur & 
 (c) Blur and noise & 
 (d) MQI-like \newline ($\delta\!=\!0.11$) solution & 
(e) MQI-like \newline ($\delta\!=\!0.04$) solution 
\end{tabular}
\caption{An example of using MQI-like procedures to reconstruct a binary image (a) from a blurry (b), noisy sample (c). Here, the result set is a binary image, which is a set in the grid graph. The value $\delta$ is from the fractional programming 
subproblem~\eqref{eq:frac-subprob}
with a custom denominator term as described in the reproduction details.  Using $\delta = 0.11$ produces 209 error pixels around the boundary (d). Reducing $\delta$ to 0.04 (e) produces a convex shape due to the conductance-like bias in this setting where convex shapes are optimal for isoperimetric-like objectives on grids.}
\label{fig:mqi-for-images}	
\end{figure}

\subsection{Graph cuts and max flow-based image segmentation}
\label{sec:image-graph-cuts}

One final application of maximum flows is graph cut-based image processing~\cite{Boykov-2006-graph-cut,Marlet-2017-graph-cuts}. 
The general setting in which these arise is an energy minimization framework~\cite{Greig-1989-exact,Kolmogorov-2004-graph-cuts} with binary variables. 
The goal is to identify a binary latent feature in an image as an exact or approximate solution of an optimization problem. 
An extremely large and useful class of these energy functions can be solved via a single or a sequence of max-flow computations. 
The special properties of the max-flow problems on image-like data motivated the development of specialized max-flow solvers that, empirically, have running time that scales linearly in the size of the data~\cite{Boykov-2004-maxflow}. 

This methodology has a number of applications in image segmentation in 2d and 3d images~\cite{Boykov-2006-graph-cuts} such as MRIs. For instance, one task in medical imaging is separating water from fat in an MRI, for which a graph cut based approach is highly successful~\cite{Hernando-2010-robust}. More recently, deep learning-based methods have often provided a substantial boost in performance for image processing tasks. Even these, however, benefit from a cluster improvement perspective. Multiple papers have found that post-processing or refining the output of a convolutional neural net using a graph cut approach to yield improved results in segmenting tumors~\cite{Ullah-2018-cnn-graph-cut,Ma-2018-tumor-graph-cut}. These recent applications are an extremely close fit for our cluster improvement framework, where the goal is to find a small object in a big network starting from a good reference region. We often illustrate the benefits and differences between our methodologies with a closely related problem of refining a local image segmentation output, e.g, \Cref{fig:example-astronaut}.

	\addcontentsline{toc}{part}{Part II. Technical Details Underlying the Main Theoretical Results}
	\section*{\large Part II. Technical Details Underlying the Main Theoretical Results}

\section{Minimum Cut and Maximum Flow Problems}
\label{sec:mincutmaxflow}

As a simple introduction to our presentation of the technical details of MQI, FlowImprove, and LocalFlowImprove, we will start with the minimum cut and maximum flow problems. We will review the basics of these problems from an optimization and duality perspective. This is because our technical discussions in subsequent sections will constitute related, but more intricate, transformations, and will use maximum flow problems as a subroutine. To simplify the text, we use the names MinCut and MaxFlow to refer to the $s$-$t$ minimum cut and $s$-$t$ maximum flow problems,  which are the fully descriptive terms for these~problems.

\subsection{\stMinCut}\label{subsec:mincut}

Given a graph $G=(V,E)$, let $s$ and $t$ be two special nodes where $s$ is commonly called the \emph{source node} and $t$ is the \emph{sink node}.
The undirected \stMinCut problem is:
\begin{equation}
\label{eq:mincut}
 \MINone{S}{\cut(S,\Sbar)}{s \in S, t \in \Sbar,\ S\subseteq V  .}
\end{equation}
The objective function of the \stMinCut problem measures the sum of the weights of edges between the sets $S$ and $\Sbar$.
The constraints encode the idea that we want to separate the source from sink and so we want the source node $s$ to be in $S$ and the sink node $t$ to be in $\Sbar$. Putting the objective function
and the constraints together, we see that the purpose of the \stMinCut problem is to find a partition $(S,\Sbar)$ that minimizes the number of edges needed to separate node $s$ from node $t$.
As an example, see \Cref{fig:mincutfig}, where we demonstrate the optimal partition for the MinCut \probref{eq:mincut} on a toy graph.

\begin{figure}[t]
	\centering
		\includegraphics[width=0.5\linewidth]{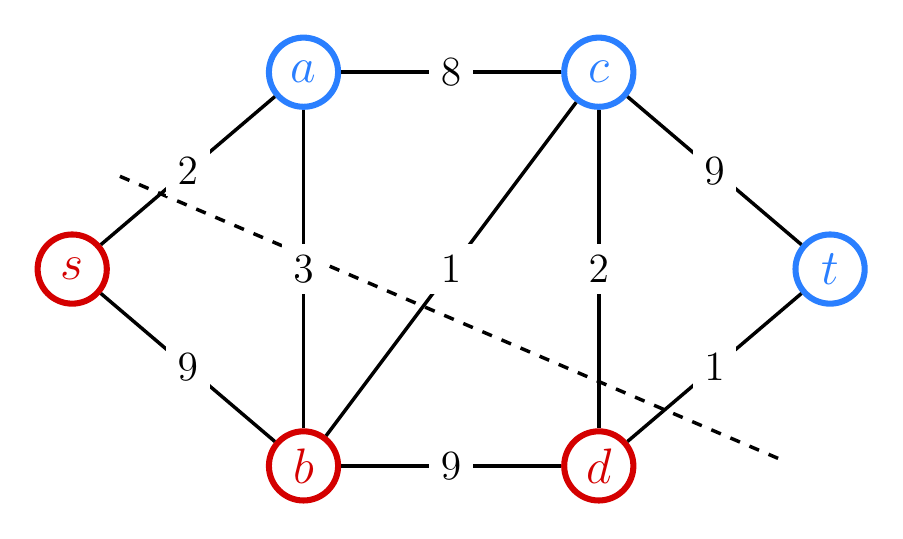}
	\caption{Demonstration of the optimal MinCut solution of \probref{eq:mincut}.
                 The numbers show the weight of each edge.
                 The red nodes ($s$, $b$, $d$) and the blue nodes ($a$, $c$, $t$) denote the optimal partition $(S, \Sbar)$, respectively, for \probref{eq:mincut}.
                 The black dashed line denotes the edges that are being cut, i.e., the edges that cross the partition $S$ and $\Sbar$.
                 The optimal objective value of \probref{eq:mincut} for this example is equal to $9$.
                 }
	\label{fig:mincutfig}
\end{figure}

We can express the \stMinCut problem in other equivalent ways, some of which are more convenient for analysis and  implementations.
For example, we use indicator vector notation and the incidence matrix from \Cref{sec:notation} to represent \probref{eq:mincut}~as
\begin{equation}
\label{eq:mincut_B}
\MINone{\vx}{\|\mB\vx\|_{\mC,1}}{x_s = 1, x_t = 0, \vx \in \{0,1\}^{n}.}
\end{equation}
\tronly{\enlargethispage{-\baselineskip}}
Expressing the \stMinCut problem with this notation will be especially useful later when we develop a unified framework for many cluster improvement algorithms.
In practice, when implementing a solver for this problem, we need not take the binary constraints into account.\ This is because we can relax them \emph{without changing the objective value} to obtain the following equivalent form of the \stMinCut problem:
\begin{equation}
\label{eq:mincut_b_relax}
\MINone{\vx}{\|\mB\vx\|_{\mC,1}}{x_s = 1, x_t = 0, \vx \in \RR^{n}.}
\end{equation}
It can be shown that there exists a solution to \eqref{eq:mincut_b_relax} that has the same objective function value as the optimal solution of \eqref{eq:mincut_B}.
Given any solution to the relaxed problem, the integral solution can be obtained by an exact rounding procedure. In that sense, the relaxed problem \eqref{eq:mincut_b_relax} and the integral problem \eqref{eq:mincut_B} are equivalent~\citep{PS82}.
In the next subsection, we will obtain a solution to \eqref{eq:mincut_B} through the MaxFlow problem.

\subsection{Network Flow and MaxFlow}\label{sec:flow}
\label{sec:maxflow}
We provide a basic definition of a network flow, which is crucial for defining MaxFlow. For more details about network flows we recommend reading the notes of \citet{trevisan2011}.

Network flows are commonly defined on directed graphs. Given an undirected graph, we will simply allow flow to go in both directions of an edge.
This means that instead of doubling the number of edges, which is a common technique in the literature, we fix an arbitrary direction of the edges, encoded in the $\mB$ matrix, and simply let flow go in either direction by allowing the flow variables to be negative.
Also, in the context of flows, edge weights are usually called edge capacities. We will use these terms interchangeably, but we tend to use capacities when discussing flow and weights when discussing~cuts.

A network flow is a mapping that assigns values to edges, i.e., a mapping $f : E\to \mathbb{R}$ from the set of edges $E$ to $\mathbb{R}$, which also satisfies capacity and flow conservation constraints. We view $\vf$ as a vector that encodes this mapping for a fixed ordering of the edges consistent with the incidence matrix. The capacity constraints are easy to state. Let $\vc = \diag(\mC)$ be the capacity for each edge, we need \[ -\vc \le \vf \le \vc \]
so that the flow along an edge is bounded by its respective capacity.
The flow preservation constraints ensure that flow is only created at the source and removed at the sink and that all other nodes neither create nor destroy flow. This can be evaluated using the incidence matrix that, given a flow $\vf$ mapping, computes the changes via $\mB^T \vf$. Consequently, flow conservation is written
\begin{align*}
\mB^T \vf = \vq-\vp
\end{align*}
where $p_s\in\RR, p_i=0 \ \forall i\in V\backslash \{s\} $ and $q_t\in\RR, q_i=0 \ \forall i\in V\backslash \{t\}$.
The maximum flow problem is to compute a feasible network flow with the maximum amount of flow that emerges from the source and gets to the sink.
The corresponding MaxFlow optimization problem can be expressed~as:
\begin{equation}
\label{eq:maxflow}
\MAXfour{\vf,\vp,\vq}{\vp^T \mathbf{1}_s}{\mB^T \vf = \vq-\vp }{p_s\in\RR, p_i=0 \ \forall i\in V\backslash \{s\}}{q_t\in\RR, q_i=0 \ \forall i\in V\backslash \{t\}}{-\vc \le \vf \le \vc.}
\end{equation}
See \Cref{fig:maxflowfig} for a visual demonstration of the flow variables and the optimal solution of \probref{eq:maxflow} for the same graph used in \Cref{fig:mincutfig}.
\begin{figure}[ht]
	\centering
	\subfigure[Demonstration of a (non-optimal) flow\label{fig:aflow}]%
	{\includegraphics[width=0.45\linewidth]{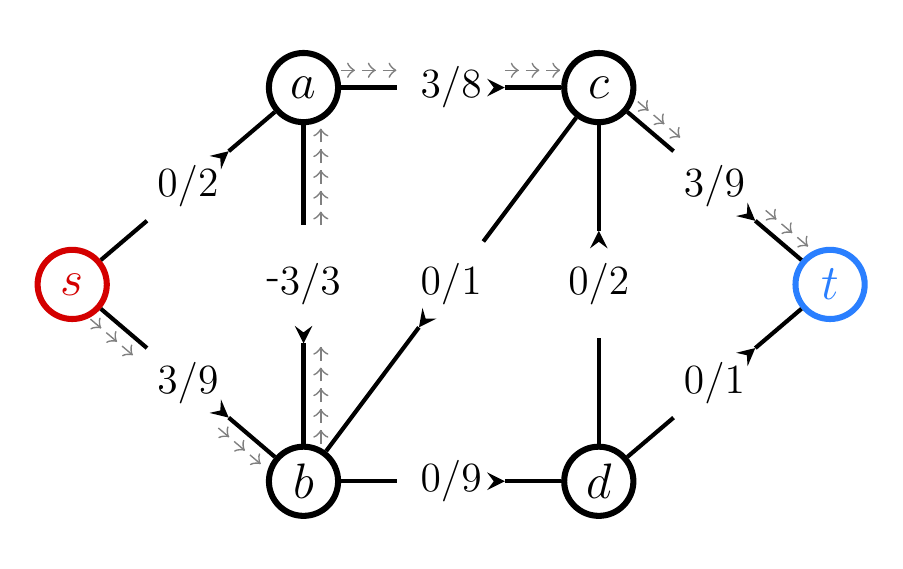}}
	\hfill
	\subfigure[Demonstration of the MaxFlow solution\label{fig:maxflowsol}]%
	{\includegraphics[width=0.45\linewidth]{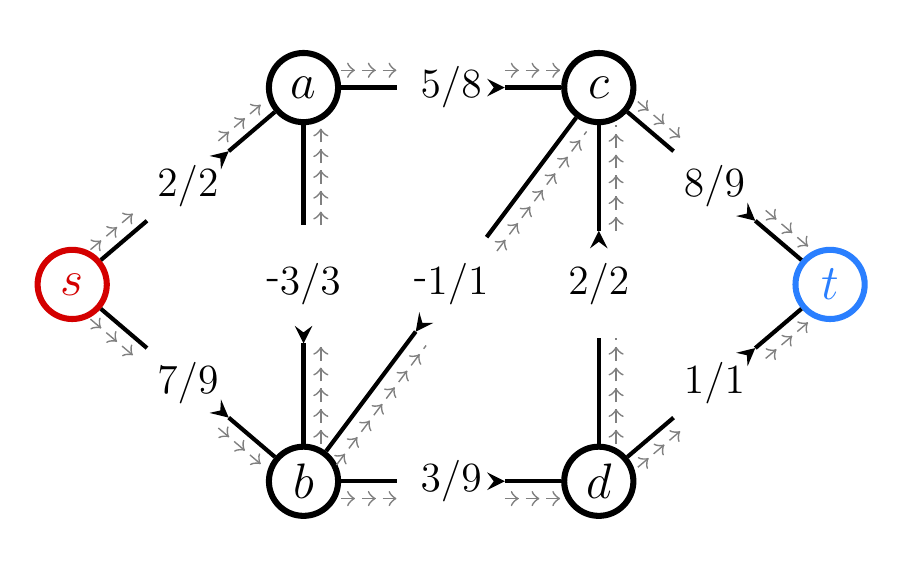}}
  \caption{In this figure, all edges are undirected edges but each edge has an arrow in the middle indicating the positive direction of flow. A negative flow value on an edge means that the flow is flowing against the positive direction. The numerators in each expression show the flow that passes through an edge, and the denominators in each edge show the capacity of each edge. In Subfigure \ref{fig:aflow}, we demonstrate a flow that starts from the source node $s$ and ends to the sink node $t$ and has value equal to $3$.
		The path of that flow is highlighted by gray dashed arrows and includes nodes $s$, $b$, $a$, $c$ and $t$.
		Note that the flow in Subfigure \ref{fig:aflow} is not optimal since we can send more flow from the source to the sink while satisfying the constraints of Problem \eqref{eq:maxflow}.
		The optimal solution of the MaxFlow \probref{eq:maxflow} for this toy graph is shown in Subfigure \ref{fig:maxflowsol}.
		The optimal flow that can be sent from the source to the sink is equal to $9$. 
	}
	\label{fig:maxflowfig}
\end{figure}

We will obtain the MaxFlow \probref{eq:maxflow} by computing the Lagrange dual of the relaxed MinCut \probref{eq:mincut_b_relax}.\
For basics about Lagrangian duality, we refer the reader to Chapter $5$ in~\citet{boyd2004convex}.\
The process of obtaining the dual of a problem is important for us, because it will allow us to understand how to implement flow-based clustering methods in subsequent sections.\
First, we will convert \probref{eq:mincut_b_relax} into an \emph{equivalent} linear program
\begin{equation}
\label{eq:mincut_b_relax_lp_uv}
\MINthree{\vx, \vu, \vv}{\vc^T\vu + \vc^T\vv}{\mB\vx = \vu - \vv}{x_s = 1, x_t = 0, \vx \in \RR^{n}}{\vu,\vv\ge 0.}
\end{equation}
This can be done by starting with \probref{eq:mincut_b_relax} and following standard steps in conversion of a linear program into standard form. Here, this involves introducing non-negative variables $\vu$ and $\vv$ such that $\mB \vx = \vu - \vv$ and then writing the objective as above. (Note that due to the minimization, at optimality, we will never have both $\vu$ and $\vv$ non-zero in the same index.)
Consequently, the Lagrangian function of \probref{eq:mincut_b_relax_lp_uv} is given by
\begin{equation}\label{eq:mincut-lagrangian}
\begin{aligned}
L(\vu,\vv,\vx,\vf,\vs,\vg,\vp,\vq) & = \vc^T\vu + \vc^T\vv - \vf^T(\mB\vx - \vu + \vv) - \vs^T\vu - \vg^T\vv \\
    					       & \qquad - \vp^T(\vx-\mathbf{1}_s) + \vq^T\vx \\
					       & = (\vf - \vs + \vc)^T\vu + (-\vf - \vg + \vc)^T\vv + (-\mB^T\vf - \vp \\
					       & \qquad + \vq)^T\vx + \vp^T \mathbf{1}_s,
\end{aligned}
\end{equation}
where $\vs,\vg \ge 0$, $\vf\in\RR^m$, $\vp_s\in\RR$ and $\vp_i=0$ $\forall i\in V\backslash \{s\}$, $\vq_t\in\RR$ and $\vq_i=0$ $\forall i\in V\backslash \{t\}$.
The latter constraints are important for Lagrangian duality because they guarantee that the dual function (that we will derive below) will provide a lower bound for the optimal solution of the primal \probref{eq:mincut_b_relax}.
See Chapter $5$ in~\citet{boyd2004convex}.
The dual function is
\begin{equation}
h(\vf,\vs,\vg,\vp,\vq) := \min_{\vu,\vv,\vx} \ L(\vu,\vv,\vx,\vf,\vs,\vg,\vp,\vq).
\end{equation}
Note that the Lagrangian function $L$ is a linear function with respect to $\vu,\vv,\vx$.\ Therefore, we can obtain an analytic form for the dual function by requiring the partial derivatives of $L$ with respect to $\vu$, $\vv$ and $\vx$ to be zero.\
The following three equations arise from the latter process:
\[ \mB^T \vf + \vp - \vq = 0_n \qquad \vf - \vs + \vc = 0_n \qquad -\vf - \vg + \vc  = 0_n. \]
By substituting these conditions into~\eqref{eq:mincut-lagrangian}, we have
\begin{equation}
h(\vf,\vs,\vg,\vp,\vq) = \vp^T \mathbf{1}_s,
\end{equation}
with domain that is defined by the following constraints
\[ \begin{array}{ccc}
\mB^T \vf + \vp - \vq = 0_n & \vf - \vs + \vc = 0_n & -\vf - \vg + \vc  = 0_n \\
p_s\in\RR, p_i=0 \ \forall i\in V\backslash \{s\}
& q_t\in\RR, q_i=0 \ \forall i\in V\backslash \{t\}
& \vs,\vg\ge 0.
\end{array}\]
Thus, we obtain that the dual problem of \probref{eq:mincut_b_relax_lp_uv} is
\begin{equation}\label{eq:maxflow_ugly}
\MAXsix{\vf,\vs,\vg,\vp,\vq}{\vp^T \mathbf{1}_s = h(f,s,g,p,q)}{\mB^T \vf = \vq -\vp }{\vf - \vs + \vc = 0}{-\vf - \vg + \vc = 0}{p_s\in\RR, p_i=0 \ \forall i\in V\backslash \{s\}}{q_t\in\RR, q_i=0 \ \forall i\in V\backslash \{t\}}{\vs,\vg\ge 0.}
\end{equation}
By eliminating the variables $\vs$ and $\vg$ we obtain the MaxFlow problem \eqref{eq:maxflow}. (These correspond to \emph{slack} variables associated with $-\vc \le \vf \le \vc$.)

Both the primal~\eqref{eq:mincut_b_relax_lp_uv} and dual~\eqref{eq:maxflow_ugly} are feasible (with a trivial cut and a zero flow, respectively) and also have finite solutions (0 is a lower bound on the cut and $\vol(G) = 1^T \vc $ is an upper bound on the flow). So, strong duality will hold between the two solutions at optimality, and the optimal value of the MaxFlow \probref{eq:maxflow} is equal
to the optimal value of the relaxed MinCut \probref{eq:mincut_b_relax} (which is equal to the optimal value of~\eqref{eq:mincut_B}). This fact is often one component of the so-called the MaxFlow-MinCut Theorem. Another important piece is discussed next.

\subsection{From MaxFlow to MinCut}
\label{sec:maxtomin}

Assume that we have solved the MaxFlow problem to optimality and that we have obtained the optimal flow $\vf$.\
Then the MaxFlow-MinCut Theorem is a statement about the equivalence between the objective function value of the optimal solution to the MinCut \probref{eq:mincut} and the objective function value of the optimal solution of the MaxFlow \probref{eq:maxflow}.\
In many cases, obtaining this quantity suffices; but, in some cases, we want to work with the actual solutions themselves.\

To obtain the optimal MinCut solution from an optimal MaxFlow solution, we define the notion of a \textit{residual graph}.\
A residual graph $G_{\vf}$ of a given $G$ has the same set of nodes as $G$, but for each edge $e_{ij}\in E$, it has a forward edge $\tilde{e}_{ij}$, i.e., from node $i$ to node $j$, with capacity $\mbox{max}(c_{ij}-f_{ij},0)$ and a backward edge $\hat{e}_{ji}$, i.e., from node $j$ to node $i$, with capacity $\mbox{max}(f_{ij},0)$, where $f$ is the optimal solution of the MaxFlow \probref{eq:maxflow}.\
A demonstration of a residual graph for a given flow is shown in \Cref{fig:residualgraph}.
\begin{figure}[tb]
	\centering
	\subfigure[The residual graph of the non-optimal flow \Cref{fig:aflow}\label{fig:residual_graph_blocking}]%
		{\includegraphics[width=0.45\linewidth]{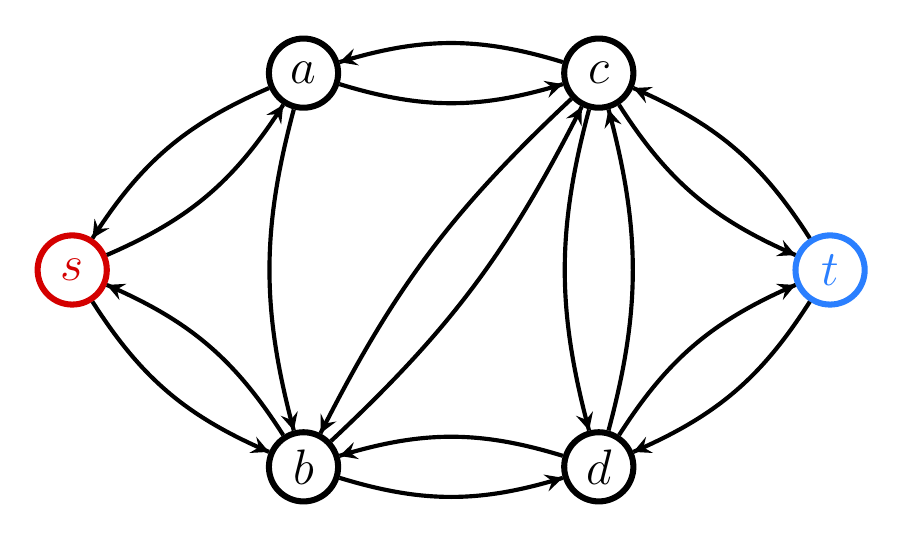}}
				\hfill
	\subfigure[The residual graph of the MaxFlow solution \Cref{fig:maxflowsol}\label{fig:residual_graph_maxflow}]%
		{\includegraphics[width=0.45\linewidth]{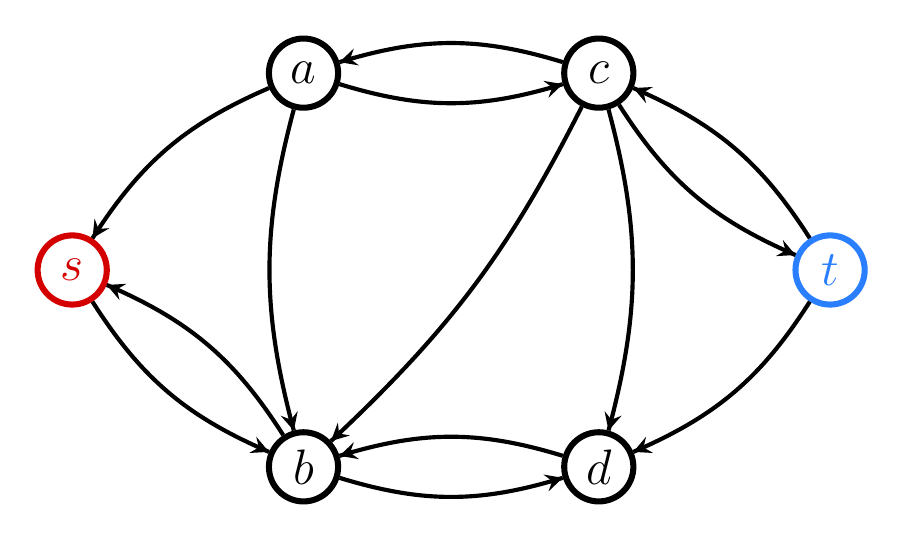}}
	\caption{The two subfigures show the directed residual graph for the flows from \Cref{fig:maxflowfig}. The edge capacities are removed for simplicity. Edges are only shown if they have positive capacity. %
                 Note that the flow in \Cref{fig:aflow} is not optimal since we can send more flow from the source to the sink while satisfying the constraints of \probref{eq:maxflow}, this is equivalent to having a $s$ to $t$ path in the residual graph in Subfigure \ref{fig:residual_graph_blocking}.
				In Subfigure \ref{fig:residual_graph_maxflow}, we show the corresponding residual graph for the optimal flow, and note that in the residual graph of the MaxFlow solution there is no path from the source node to the sink node. 
                }
	\label{fig:residualgraph}
\end{figure}

Note that there cannot exist a path from $s$ to $t$ in the residual graph at a max-flow solution. (Otherwise, we would be able to increase the flow!)
Consequently, we can look at the set $S$ of vertices reachable starting from the source node $s$ (this can be algorithmically identified using a breadth-first or depth-first search starting from $s$). It is now a standard textbook argument that the cut of the set $S$, which does not contain $t$, is equal to the maximum flow.

\subsection{MaxFlow solvers for weighted and unweighted graphs}
 MaxFlow problems can be solved substantially faster than general linear programs. See our discussion in \Cref{sec:flow-algorithms} for more information on state-of-the-art solvers. 

It is often assumed that the graphs are unweighted or have integer positive weights. All of the MaxFlow problems we need to solve will be weighted with rational weights that depend on the current estimate of the ratio in the fractional programming problem. Many of the same algorithms can be applied for weighted problems as well. We explicitly mention both Dinic's algorithm~\cite{Dinitz-1970-max-flow} and the Push-Relabel algorithm~\cite{Goldberg-Rao}, both of which can be implemented for the types of weighted graphs we need. In our implementations, we use Dinic's algorithm. In these cases, however, the runtime becomes slightly tricky to state and is fairly pessimistic. Consequently, when we have a runtime that depends on MaxFlow, we simply state the number of edges involved in the computation as a proxy for the~runtime.

\section{The MQI Problem and Algorithm}
\label{chap:mqi}

In this section, we will describe the MaxFlow Quotient-Cut Improvement (MQI) algorithm, due to~\citet{LR04}.
This cluster improvement method takes as input a graph $G=(V,E)$ and a reference set $R\subset V$, with $\vol(R) \le \vol(G)/2$, and it returns as output an ``improved'' cluster, in the sense that the output is a subset of $R$ of minimum conductance.

The basic MQI problem is 
\begin{equation}
\label{eq:mqiprob}
 \boxed{\MINone{S}{\dfrac{\cut(S)}{\vol(S)}}{S \subseteq R.}}
\end{equation}
Due to the assumption that $\vol(R) \le \vol(G)/2$, this problem is equivalent to 
\begin{equation}
 \MINone{S}{\cond(S)}{S \subseteq R.}
\end{equation}
\aside{aside:big-set}{A curious implication of the MQI objective is that it is NP-hard to find a set $S$ with $\vol(S) \le \vol(G)/2$ that even \emph{contains} the set of minimum conductance.} 
In the equivalence with conductance, this constraint that $\vol(R) \le \vol(G)/2$ is crucial because it makes this problem polynomially solvable. Without this constraint, the problem with conductance is intractable, however we can still minimize the cut to volume ratio even when $\vol(R) > \vol(G)/2$. 

Recall that this MQI problem is related to the fractional programming \probref{eq:fracprob} by setting $g(S):=\vol(S)$ and $Q=R$.
\citet{LR04} describe an algorithm to solve the MQI problem, which is equivalent to what is presented as \Cref{algo:mqi}. (They describe solving \cref{eq:mqi-orig-problem} via the flow procedure we will highlight shortly.) 
It is easy to see that this algorithm is simply \Cref{algo:fractionalprog} for fractional programming specialized to this scenario. Consequently, we can apply our standard~theory.

\begin{algorithm}
\caption{MQI~\citep{LR04}}\label{algo:mqi}
\begin{algorithmic}[1]
\STATE Initialize $k := 1$, $S_1:=R$ and $\delta_1:=\cond(S_1)$.
\WHILE{we have not exited via else clause}
\STATE Solve 
$S_{k+1}:= \mathop{\text{argmin}}\limits_{S \subseteq R} {\cut(S) - \delta_k \vol(S)}$
\IF{$\cond(S_{k+1}) < \delta_{k}$}
	\STATE $\delta_{k+1}:=\cond(S_{k+1})$
\ELSE
	\STATE $\delta_k$ is optimal, return previous solution $S_k$.
\ENDIF
\STATE $k:=k+1$
\ENDWHILE
\end{algorithmic}
\end{algorithm}

The following theorem implies that MQI monotonically decreases the objective function in \probref{eq:mqiprob} at each iteration. 
It was first shown by~\citet{LR04}, but it is a corollary of Theorem \ref{thm:monotonicfracprog}.
Note that $\delta_k$ is equal to the objective function of \probref{eq:mqiprob} evaluated at $S_k$.

\begin{theorem}[Convergence of MQI] 
Let $G$ be an undirected, connected graph with non-negative weights. 
Let $R$ be a subset of vertices with $\vol(R) \le \vol(\Rbar)$. The sequence $\delta_k$ monotonically decreases at each iteration of MQI. 
\end{theorem}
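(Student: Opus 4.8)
The plan is to exhibit MQI as a special case of the general fractional programming framework and then invoke the already-established convergence result, \Cref{thm:monotonicfracprog}, rather than redo any of its analysis. Concretely, I would first make the reduction explicit: the MQI objective \probref{eq:mqiprob} is precisely the cluster-improvement fractional program \probref{eq:fracprob} under the substitution $g(S) := \vol(S)$ and $Q := R$, as is already noted in the surrounding text. Under this choice the generalized ratio becomes $\phi_g(S) = \cut(S)/\vol(S)$, and because $\vol(R) \le \vol(\Rbar)$ (equivalently $\vol(R) \le \vol(G)/2$), every $S \subseteq R$ has $\vol(S) \le \vol(R) \le \vol(\Sbar)$, so $\min(\vol(S),\vol(\Sbar)) = \vol(S)$ and hence $\phi_g(S) = \phi(S)$ throughout the feasible region. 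Thus $\delta_k = \phi_g(S_k) = \phi(S_k)$ along the whole run of the algorithm.

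Next I would verify line by line that \Cref{algo:mqi} is \Cref{algo:fractionalprog} instantiated with these choices. The initialization $S_1 := R$, $\delta_1 := \cond(S_1)$ matches; the subproblem $\mathop{\text{argmin}}_{S \subseteq R} \cut(S) - \delta_k \vol(S)$ is exactly $\mathop{\text{argmin}}_S z(S,\delta_k)$ for $z(S,\delta) = \cut(S) - \delta g(S)$ subject to $S \subseteq Q$; and the branch test $\cond(S_{k+1}) < \delta_k$ together with the update $\delta_{k+1} := \cond(S_{k+1})$ is the specialization of Step~4 of the general algorithm. I would then confirm the hypotheses of \Cref{thm:monotonicfracprog}: the graph is undirected, connected, and non-negatively weighted by assumption, and the framework's standing requirement that some feasible set have positive denominator is met here because $g(R) = \vol(R) > 0$ (nonempty $R$ in a connected graph has every node of positive degree), so $S_1 = R$ is a valid starting point with finite objective. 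With the identification and the hypotheses in hand, \Cref{thm:monotonicfracprog} directly yields that the sequence $\delta_k = \phi_g(S_k)$ decreases monotonically and that the subproblem never returns an infeasible ($g(S) \le 0$) set while $\delta_k > \delta^*$; translating back gives the monotonic decrease of $\phi(S_k)$, which is the claim.

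I do not expect a genuine mathematical obstacle, since the statement is a corollary by construction; the only care needed is bookkeeping. The two points I would double-check are that the constraint $S \subseteq R$ keeps every iterate on the small side of the cut so that $\phi_g$ genuinely coincides with conductance at each step (handled by $\vol(R) \le \vol(\Rbar)$ above), and that the strict denominator constraint $g(S) > 0$ in \probref{eq:fracprob} is automatically satisfied by every nonempty $S \subseteq R$ in a connected graph, so feasibility is never in question. Both follow immediately from the connectivity and volume assumptions, so the reduction is clean and the theorem follows without further work.
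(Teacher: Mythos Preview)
Your proposal is correct and matches the paper's approach exactly: the paper does not give a standalone proof but simply states that the result ``is a corollary of \Cref{thm:monotonicfracprog},'' which is precisely the reduction you carry out. Your additional bookkeeping (verifying $\phi_g = \phi$ on $S \subseteq R$ via the volume assumption, and checking that $g(R) = \vol(R) > 0$ so the starting point is feasible) is more explicit than the paper itself, but the underlying argument is identical.
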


\subsection{Solving the MQI subproblem using MaxFlow algorithms}
\label{sec:mqi_subproblem}

In this subsection, we will discuss how to solve efficiently the subproblem at Step $3$ of MQI \Cref{algo:mqi}, namely 
 \begin{equation} \label{eq:mqi-orig-problem} 
 \ARGMINone{}{S}{\cut(S) - \delta \vol(S)}{S\subseteq R.}	
 \end{equation}
The summary of this subsection is that the subproblem corresponds to a MinCut-like problem and by introducing a number of modifications, we can turn it into an instance of a \stMinCut problem. This enables us to use MaxFlow solvers to compute a binary solution efficiently.  The final solver will run a MaxFlow problem on the subgraph of $G$ induced by $R$ along with a few additional edges. 

By translating \probref{eq:mqi-orig-problem} into indicator notation, we have 
\begin{equation}
\label{eq:mincutmqi_2}
\MINone{\vx}{\|\mB\vx\|_{\mC,1} - \delta\vx^T \vd}{x_i = 0 \ \forall i \in \Rbar, \vx \in \{0,1\}^{n}.}
\end{equation}
This is not a \stMinCut problem as stated, but there exists an equivalent problem that is a \stMinCut problem. To generate this problem, we'll go through two steps. First, we'll shift the objective to be non-negative. This is necessary because a \stMinCut problem always has a non-negative objective. Second, we'll introduce a source and sink to handle the terms that are not of the form $\|\mB\vx\|_{\mC,1}$ and the equality constraints.  Again, this step is necessary because these problems must have a source and sink. 

For step 1, note that the maximum negative term is $\delta \ones^T \vd$. (It's actually smaller due to the equality constraints, but this overestimate will suffice.) Thus, we shift the objective by this value and regroup terms
\begin{equation}
\label{eq:mincutmqi_3}
\MINone{\vx}{\|\mB\vx\|_{\mC,1} + \delta(\ones - \vx)^T \vd}{x_i = 0 \ \forall i \in \Rbar, \vx \in \{0,1\}^{n}.}
\end{equation}

Note that $\ones-\vx$ is simply an indicator for $\Sbar$, the complement solution set. Consequently, we want to introduce a penalty for each node placed in $\Sbar$. To do so, we introduce a source node $s$ that will connect to each node of the graph with weight proportional to the degree of each node. (A penalty for $\Sbar$ corresponds to an edge from the source $s$.) 
Since nothing in $\Rbar$ can be in the solution, we can introduce a sink node $t$ and connect it with infinite weight edges to each node in $\Rbar$. Thus, these edges will never be cut at optimality, as there is a finite-valued objective possible. Also note that the infinite weight can be replaced by a sufficiently large graph-dependent weight to achieve the same effect. 

This \stMinCut construction is given in \Cref{fig:mqi_maxflow_graph_c}, although this omits the edges from $s$ to nodes in $\Rbar$. This construction, however, is not amenable to a strongly local solution method, as it naively involves the entire graph. 
Note that in practice, we can form the graph construction in \Cref{fig:mqi_maxflow_graph_d} with the collapsed vertices without ever examining the whole graph.  

To generate a strongly local method, note that we can collapse all the vertices in $\Rbar$ and $t$ into a single super-sink $t$. This simply involves rewiring all edges $(u,v)$ where $u \notin \Rbar$ and $v \in \Rbar$ into a new edge $(u,t)$ where we handle multiedges by summing their weights. This results in a number of $s$ to $t$ edges, one for each node in $\Rbar$, which we can further delete as they exert a constant penalty of $\delta \vol(\Rbar)$ the final objective. An illustration is given in \Cref{fig:mqi_maxflow_graph_d}.
Importantly, in \Cref{fig:mqi_maxflow_graph_d}, there are only a small number of nodes in $\Rbar$ that are collapsed into the sink node $t$, but $\Rbar$ could have had thousands or millions or billions of nodes.
In that case, the final graph would still have only a very small number of nodes, in which case strongly local algorithms would be \emph{much} faster.

\begin{subroutine}
	\caption{for the subproblem at Step $3$ of MQI \Cref{algo:mqi}}\label{proc:aug_graph_mqi}
	\begin{algorithmic}[1]
		\STATE Extract the subgraph with nodes in $R$ and the edges of these nodes, which we denote by $E(R)$.
		\STATE Add to the set of nodes $R$ a source node $s$ and a sink node $t$. 
		\STATE Add to the set of edges $E(R)$ an edge from the source node $s$ to every node in the seed set of nodes $R$ with weight the degree of that node times $\delta$.
		\STATE For any edge in $G$ from $R$ to $\Rbar$, rewire it to node $t$ and combine multiple edges by summing their weights.
	\end{algorithmic}
\end{subroutine}

To recap, see the Augmented Graph~\ref{proc:aug_graph_mqi} procedure. We now give an explicit instance of the \stMinCut problem to illustrate how it maps to our desired binary objective. Let $\mB(R)$ and $\mC(R)$ be the incidence and weight matrix for the subgraph induced by the set $R$. Then consider the incidence matrix and the diagonal edge-weight matrix of the modified graph, which are 
\[ 
\tilde{\mB} :=
\begin{blockarray}{ccc}
s & R & t \\
\begin{block}{[ccc]}
\ones & -\mI          & 0  \\
0 & \mB(R)            		  & 0 \\
0 & \mI               & -\ones \\
\end{block}
\end{blockarray} 
\qquad 
\tilde{\mC} :=
\begin{blockarray}{ccc}
  &   &  \\
\begin{block}{[ccc]}
\delta\mD_{R} & 0                  & 0  \\
0 & \mC(R)   & 0 \\
0 & 0                   & \mZ \\
\end{block}
\end{blockarray}  ,
\] 
where $\mD_R$ is the submatrix of $\mD$ corresponds to nodes in $R$ (ordered conformally), and 
$\mZ$ is a diagonal matrix that stores the weights of the rewired edges from $R$ to the sink $t$, i.e., 
\[ Z_{ii} = \sum_{ e } c_e, \text{ where $e$ is an edge from $i \in R$ to any node in $\Rbar$ }. \]
(These weights can be zero if there are no edges leaving from a node $i \in R$.) 
The first column of matrix $\tilde{\mB}$ corresponds to the source node, the last column corresponds to the sink node, and all other columns in-between
correspond to nodes in $R$. The first block $\delta\mD_{R}$ in $\tilde{\mC}$ corresponds to edges from the source to nodes in $R$, the second block $\mC_{R}$ in $\tilde{\mC}$ corresponds to edges from $R$ to $R$, and the third block $\mZ$ in $\tilde{\mC}$
corresponds to edges from nodes in $R$ to the sink node $t$.
Let 
\[ \tilde{\vx} := \begin{bmatrix} \vx_s \\
			   		  \vx_R \\
					  \vx_t
\end{bmatrix},
\text{ so that } \tilde{\vx}_1 = \vx_s \text{ and } \tilde{\vx}_{|R|+2} = \vx_t
\]
then the MinCut problem with respect to the modified graph is 
\begin{equation}
\label{eq:mincutmqi}
\MINone{\tilde{\vx}}{\|\tilde{\mB}\tilde{\vx}\|_{\tilde{\mC},1} = \|\mB(R) \vx_R\|_{\mC(R),1} + \delta \ones^T D_{R} (\ones_R - \vx_R) + \ones^T \mZ \vx_R }{\tilde{\vx}_1 = 1, \tilde{\vx}_{|R|+2}=0, \tilde{\vx}_i \in \{0,1\}.}
\end{equation}
It is straightforward to verify that \probref{eq:mincutmqi} is equivalent to a shifted version of \probref{eq:mincutmqi_3} where the objectives differ by $\delta \vol(\Rbar)$. Finally, to get a solution of the original problem, we have to further decrease the objective by the constant $\delta \vol(R)$. 

To solve this \stMinCut problem, we then simply use an undirected MaxFlow solver. The input has $\mathcal{O}(\vol(R))$ edges and $|R|+2$ nodes. %

\begin{figure}
	\centering
\subfigure[Graph and seed set $R$]{\label{fig:mqi_maxflow_graph_a}\includegraphics[scale=0.59]{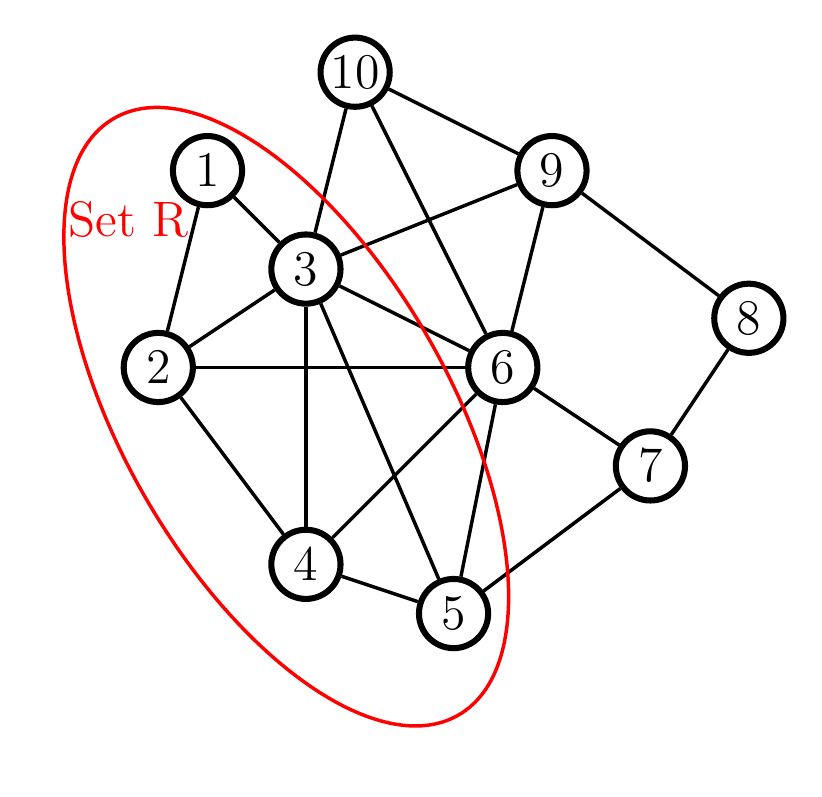}}%
\subfigure[Non-local \stMinCut problem]{\label{fig:mqi_maxflow_graph_c}\includegraphics[scale=0.59]{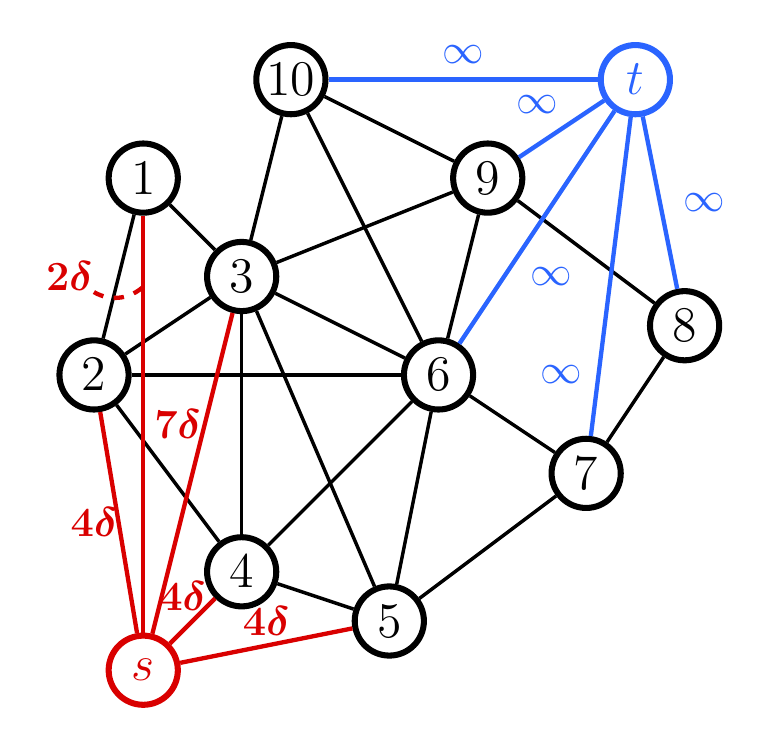}}%
\subfigure[Local \stMinCut problem]{\label{fig:mqi_maxflow_graph_d}\tronly{\hspace*{1em}}\includegraphics[scale=0.6]{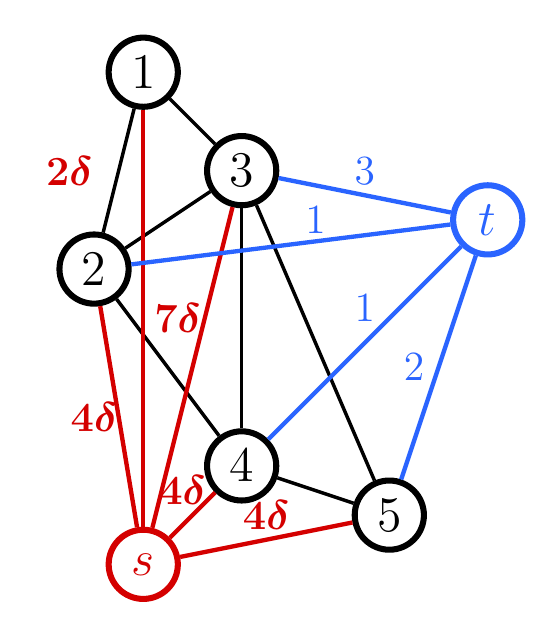}\tronly{\hspace*{1em}}}
\caption{%
Illustration of the augmented graph for solving the MQI subproblem. 
Subfigure \ref{fig:mqi_maxflow_graph_a} illustrates a small graph and a seed set  $R$ denoted by the red ellipse. This set includes nodes with ID $1$ to $5$. Subfigure \ref{fig:mqi_maxflow_graph_c} demonstrates the addition of a source node $s$ and sink node $t$ that involves the entire graph but solves the subproblem. 
Subfigure \ref{fig:mqi_maxflow_graph_d} illustrates the collapse of all nodes in $\Rbar$ into a single sink node $t$. Edges from $R$ to $\Rbar$ are maintained with the same weights but they are rewired to the sink node $t$. 
The final \stMinCut problem in Subfigure \ref{fig:mqi_maxflow_graph_d} can be solved via MaxFlow problem from the source to the sink. }
\label{fig:mqi_maxflow_graph}
\end{figure}

\subsection{Iteration complexity}

We now specialize  our general analysis in \Cref{sec:fractionalProg}, and we present an iteration complexity result for \Cref{algo:mqi}. %
First, we present \Cref{lem:monotonicMqi}, which will be used in the iteration complexity result in Theorem \ref{thm:itercomplexityMQI}.

An interesting property of MQI that is shown in \Cref{lem:monotonicMqi} is that the volume of $S_k$ monotonically decreases at each iteration, i.e., $\vol(S_{k+1})<\vol(S_k)$.
This result has important \emph{practical} implications since it shows that MQI is searching for subsets $S$ that have \emph{smaller} volume than the set $S_1$. 
Moreover, \Cref{lem:monotonicMqi} shows that the numerator of \probref{eq:mqiprob} decreases monotonically.

\begin{lemma}\label{lem:monotonicMqi}
Let $G$ be an undirected, connected graph with non-negative weights.  
If the MQI algorithm proceeds to iteration $k+1$ it satisfies both $\vol(S_{k+1}) < \vol(S_k)$ and $\cut(S_{k+1}) < \cut(S_k)$.
\end{lemma}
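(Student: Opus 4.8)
The plan is to recognize Lemma~\ref{lem:monotonicMqi} as nothing more than the specialization of the general Lemma~\ref{lem:monotonicFracProg} to the MQI setting, and then to transfer the conclusion. Recall that, as noted immediately after Algorithm~\ref{algo:mqi}, MQI is literally Dinkelbach's Algorithm~\ref{algo:fractionalprog} with the denominator term $g(S) := \vol(S)$ and the feasible family $Q = R$. Indeed, the update $S_{k+1} := \operatorname{argmin}_{S \subseteq R} \cut(S) - \delta_k \vol(S)$ in Step~3 of Algorithm~\ref{algo:mqi} is exactly Step~3 of Algorithm~\ref{algo:fractionalprog} under this identification, and the ratio $\cond_g(S) = \cut(S)/\vol(S)$ coincides with $\cond(S)$.

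First I would verify that the hypotheses of Lemma~\ref{lem:monotonicFracProg} are met by this specialization. The lemma is driven by the monotone decrease of the $\delta_k$ produced by the algorithm, which is supplied here by the convergence result for MQI (itself a corollary of Theorem~\ref{thm:monotonicfracprog}). It also implicitly uses that every iterate satisfies $g(S_k) > 0$: since $G$ is connected with non-negative weights, every nonempty subset $S \subseteq R$ has $\vol(S) > 0$, and Theorem~\ref{thm:monotonicfracprog} guarantees that the Step~3 subproblem never returns an infeasible (here, empty) set while $\delta_k > \delta^*$. So the MQI iterates fall squarely inside the regime analyzed by the general lemma.

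Then I would simply invoke Lemma~\ref{lem:monotonicFracProg}: if the algorithm proceeds to iteration $k+1$, then $g(S_{k+1}) < g(S_k)$ and $\cut(S_{k+1}) < \cut(S_k)$. Substituting $g = \vol$ yields $\vol(S_{k+1}) < \vol(S_k)$ and $\cut(S_{k+1}) < \cut(S_k)$, which is precisely the claim. For a fully self-contained version I could instead replay the short computation of Lemma~\ref{lem:monotonicFracProg} in concrete notation: compare iterations $k-1$ and $k$, combine $z(S_{k+1},\delta_k) < z(S_k,\delta_k) = 0$ with the identity $\cut(S_k) = \delta_k \vol(S_k)$ to reach the product inequality $(\vol(S_k) - \vol(S_{k+1}))(\delta_k - \delta_{k-1}) < 0$, and conclude $\vol(S_{k+1}) < \vol(S_k)$ from $\delta_{k-1} > \delta_k$; the strict decrease of $\cut$ then follows because both the objective $\cond$ and the denominator $\vol$ strictly decrease.

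The only real obstacle here is bookkeeping rather than a new idea: one must confirm that the substitution $g = \vol$, $Q = R$ genuinely satisfies the positivity and monotonicity premises of the general lemma (positivity of volume on connected graphs, and the monotone $\delta_k$ guaranteed by Theorem~\ref{thm:monotonicfracprog}). Once that alignment is checked, the statement is an immediate corollary, so I would keep the proof to a one- or two-line appeal to Lemma~\ref{lem:monotonicFracProg}.
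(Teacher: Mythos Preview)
Your proposal is correct and takes essentially the same approach as the paper: the paper's proof is a one-line reference stating that the claim is given in the proof of Lemma~\ref{lem:monotonicFracProg}, and you likewise reduce to that lemma by specializing $g=\vol$, $Q=R$. Your additional verification of the hypotheses and the optional replay of the computation are sound elaborations, but the core argument is identical.
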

\begin{proof}
The proof for this claim is given in the proof of  \Cref{lem:monotonicFracProg}.
\end{proof}

\begin{theorem}[Iteration complexity of MQI]\label{thm:itercomplexityMQI}
Let $G$ be a connected, undirected graph with non-negative integer weights.  \Cref{algo:mqi} has at most $\cut(R)$ iterations before converging to a solution. %
\end{theorem}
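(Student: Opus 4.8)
The plan is to obtain this bound as an immediate specialization of the general iteration-complexity result \Cref{thm:itercomplexityDinkel}, using the MQI-specific monotonicity of \Cref{lem:monotonicMqi}. The key observation, already noted when \Cref{algo:mqi} was introduced, is that MQI is precisely \Cref{algo:fractionalprog} with the choices $g(S) := \vol(S)$ and $Q := R$, so the entire fractional-programming theory developed earlier applies without modification.

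First I would record the setup: the algorithm is initialized with the feasible set $S_1 = R$ (feasible since $R \subseteq R$ and $\vol(R) > 0$ on a connected graph with a nonempty seed), and the initial value $\delta_1 = \cut(R)/\vol(R)$ is finite. Next I would invoke \Cref{lem:monotonicMqi}: whenever the algorithm proceeds from iteration $k$ to iteration $k+1$ (that is, it does not exit through the else clause), one has the strict inequality $\cut(S_{k+1}) < \cut(S_k)$.

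The remaining step is a short counting argument. Because the weights are non-negative integers, $\cut(S)$ is a non-negative integer for every set $S$, so $\{\cut(S_k)\}_k$ is a strictly decreasing sequence of non-negative integers with first term $\cut(S_1) = \cut(R)$. Such a sequence can make at most $\cut(R)$ strict decreases before it would be driven below zero, so the loop proceeds at most $\cut(R)$ times before the else clause triggers and the algorithm returns. This yields the claimed bound.

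I do not anticipate a genuine obstacle, since the statement is a corollary of \Cref{thm:itercomplexityDinkel}; the only point meriting care is the bookkeeping in the counting step, namely accounting for the terminating (else-clause) iteration so that the integer-descent argument produces exactly $\cut(R)$ rather than an off-by-one variant. Indeed, the whole argument can be compressed to the single remark that MQI is the $g = \vol$, $Q = R$ instance of \Cref{algo:fractionalprog}, after which \Cref{thm:itercomplexityDinkel} gives the result directly.
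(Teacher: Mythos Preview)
The proposal is correct and matches the paper's approach exactly: the paper's proof is the single sentence ``This is just an explicit specialization of \Cref{thm:itercomplexityDinkel},'' which is precisely the compression you describe in your final paragraph. Your expanded version simply unpacks the counting argument already contained in \Cref{thm:itercomplexityDinkel} via \Cref{lem:monotonicMqi}/\Cref{lem:monotonicFracProg}.
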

\begin{proof}
This is just an explicit specialization of \Cref{thm:itercomplexityDinkel}.
\end{proof}

\begin{remark}[Time per iteration]
At each iteration a weighted MaxFlow problem is being solved. %
Therefore, the worst-case time of MQI will be its iteration complexity times the cost of computing a MaxFlow on a graph of size $\vol(R)$. Here $\vol(R)$ is an upper bound on the number of edges incident to vertices in $R$ because the weights are integers.
\end{remark}

\subsection{A faster version of the MQI algorithm}

The original MQI algorithm requires at most $\vol(R)$ iterations to converge to the optimal solution for graphs with integer weights. After at most that many iterations the algorithm returns the \textit{exact} output.
However, in the case that we are not interested in exact solutions we can improve the iteration complexity of MQI to at most $\mathcal{O}\left(\log \frac{1}{\eps}\right)$ where $\eps>0$ is an accuracy parameter. 
To achieve this we will use binary search for the variable $\delta$. It is true that for $(S^*,\delta^*)$ we have $\cut(S^*)/\vol(S^*) = \delta^*$.
Therefore, $\delta^*\in [0,1]$. We will use this interval as our search space for the binary search.
The modified algorithm is shown in \Cref{algo:fastermqi}. This algorithm is an instance of \Cref{algo:fasterfracprog}.
Note that the subproblem in Step $4$ in \Cref{algo:fastermqi} is the same as the subproblem in Step $3$ of the original \Cref{algo:mqi}.
The only part that changes is that we introduced binary search for $\delta$.

\begin{algorithm}
\caption{Fast MQI}\label{algo:fastermqi}
\begin{algorithmic}[1]
\STATE Initialize $k := 1$, $\delta_{\min} := 0$, $\delta_{\max} := \phi(R)$ and $\eps \in (0,1]$
\WHILE{$\delta_{\max} - \delta_{\min} > \eps \delta_{\min}$}
\STATE $\delta_k:= (\delta_{\max} + \delta_{\min})/2$ 
\STATE Solve 
$S_{k+1}:= \mathop{\text{argmin}}\limits_{S \subseteq R} {\cut(S) - \delta_k \vol(S)}$ via MaxFlow on Augmented Graph~\ref{proc:aug_graph_mqi}. 
\IF{$\vol(S_{k+1}) > 0$ \COMMENT{Then $\delta_k$ is above $\delta^*$}} 
\STATE $\delta_{\max} := \phi(S_{k+1})$, and set $S_{\max} = S_{k+1}$ \COMMENT{Note $\phi(S_{k+1}) \le \delta_k$}
\ELSE
\STATE $\delta_{\min} := \delta_k$ 
\ENDIF 
\STATE $k:=k+1$
\ENDWHILE
\STATE Return $\mathop{\text{argmin}}_{S \subseteq R} {\cut(S) - \delta_{\max} \vol(S)}$ or $S_{\max}$ based on minimum conductance.
\end{algorithmic}
\end{algorithm}

Putting the iteration complexity of Fast MQI together with its per iteration computational complexity we get the following theorem. 

\begin{theorem}[Iteration complexity of the Fast MQI \Cref{algo:fastermqi}]\label{thm:complexityfastmqi}
Let $G$ be an undirected, connected, graph with non-negative weights. Let $R$ be a subset of vertices with $\vol(R) \le \vol(\Rbar)$. The sequence $\delta_k$ of \Cref{algo:fastermqi} converges to an approximate solution $|\delta^* - \delta_k|/\delta^* \le \eps$ in $\mathcal{O}(\log 1/\eps)$ iterations, where $\delta^* = \cond(S^*)$ and $S^*$ is an optimal solution to problem \eqref{eq:mqiprob}. Moreover, if $G$ has non-negative integer weights then the algorithm will return the exact minimizer when $\eps < \tfrac{1}{\vol(R)^2}$.
\end{theorem}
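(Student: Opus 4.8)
The first, approximate-convergence, claim follows immediately by specialization. \Cref{algo:fastermqi} is exactly the bisection scheme of \Cref{algo:fasterfracprog} applied with denominator $g(S) = \vol(S)$ and feasible ground set $Q = R$, so I would invoke \Cref{thm:complexityfastdinkel} verbatim: it already guarantees $|\delta^* - \delta_k|/\delta^* \le \eps$ after $\mathcal{O}(\log(\delta_{\max}/\eps))$ bisections, with $\delta^* = \phi(S^*)$ and $S^*$ optimal for \probref{eq:mqiprob}. The only MQI-specific ingredient is the initialization $\delta_{\max} = \phi(R)$; since conductance is always at most $1$, we have $\delta_{\max} \le 1$, hence $\log(\delta_{\max}/\eps) \le \log(1/\eps)$, which collapses the bound to $\mathcal{O}(\log 1/\eps)$.

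For the exact-recovery claim on integer-weighted graphs, the plan rests on a separation (gap) lemma for the attainable conductance values. For any nonempty $S \subseteq R$ we have $\phi(S) = \cut(S)/\vol(S)$, a rational with integer numerator $\cut(S) \ge 0$ and integer denominator satisfying $1 \le \vol(S) \le \vol(R)$. Hence, for two sets $S_1, S_2 \subseteq R$ with $\phi(S_1) \ne \phi(S_2)$, writing $\phi(S_i) = a_i / b_i$ with $a_i, b_i \in \mathbb{Z}$ and $1 \le b_i \le \vol(R)$, I would bound
\[ |\phi(S_1) - \phi(S_2)| = \frac{|a_1 b_2 - a_2 b_1|}{b_1 b_2} \ge \frac{1}{\vol(R)^2}, \]
because the numerator is a nonzero integer and the denominator is at most $\vol(R)^2$. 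Thus any two distinct attainable conductance values are separated by at least $1/\vol(R)^2$.

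I would then combine this separation with the binary-search invariant. By the well-definedness part of \Cref{thm:complexityfastdinkel}, $\delta^* \in [\delta_{\min}, \delta_{\max}]$ is maintained throughout, and in particular $\delta_{\min} \le \delta^* \le \phi(R) \le 1$. At termination the stopping test gives $\delta_{\max} - \delta_{\min} \le \eps\,\delta_{\min} \le \eps < 1/\vol(R)^2$, so $0 \le \delta_{\max} - \delta^* < 1/\vol(R)^2$. When $\delta_{\max} > \delta^*$, the final solve with parameter $\delta_{\max}$ returns a set $S_f$ for which $\cut(S_f) - \delta_{\max}\vol(S_f) \le 0$ (the minimum is strictly negative, since $S^*$ already makes the objective negative) and $\vol(S_f) > 0$; consequently $\delta^* \le \phi(S_f) \le \delta_{\max} < \delta^* + 1/\vol(R)^2$. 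Since $\phi(S_f)$ and $\delta^* = \phi(S^*)$ are both attainable conductance values lying in a window strictly narrower than their guaranteed separation, the gap lemma forces $\phi(S_f) = \delta^*$, i.e., $S_f$ is an exact minimizer.

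The hard part will be the boundary case $\delta_{\max} = \delta^*$ — equivalently, when $R$ itself (or a set already recorded) is optimal — because then the final minimization can legitimately return the empty set rather than an optimal $S_f$. This is precisely why the return step also carries the best feasible set $S_{\max}$ found during the search and reports whichever of the two has smaller conductance; I would dispatch this case cleanly by initializing $S_{\max} := R$ to match $\delta_{\max} := \phi(R)$, so that the returned set always has conductance at most $\delta_{\max}$, and the gap argument of the previous paragraph then applies uniformly. Everything else — feasibility and monotonicity of the subproblem solves — is inherited from \Cref{thm:monotonicfracprog} and \Cref{thm:complexityfastdinkel}, so no further work is required.
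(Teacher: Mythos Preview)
Your proposal is correct and follows essentially the same route as the paper: invoke \Cref{thm:complexityfastdinkel} with $\delta_{\max}=\phi(R)\le 1$ for the iteration bound, and for exact recovery prove the gap $|\phi(S_1)-\phi(S_2)|\ge 1/\vol(R)^2$ via the integer numerator $\cut(S_1)\vol(S_2)-\cut(S_2)\vol(S_1)$, then observe that the terminal interval $\delta_{\max}-\delta_{\min}\le\eps\,\delta_{\min}\le\eps$ is narrower than this gap. You are actually slightly more careful than the paper about the final solve and the boundary case $\delta_{\max}=\delta^*$ (including the need to initialize $S_{\max}:=R$), which the paper leaves implicit.
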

\begin{proof}
The iteration complexity of MQI is an immediate consequence of Theorem~\ref{thm:complexityfastdinkel}. %
The exact solution piece is a consequence of the smallest difference between values of conductance among subsets of $R$ for integer weighted graphs. Let $S_1$ and $S_2$ be arbitrary subsets of vertices in $R$ with $\phi(S_1) > \phi(S_2)$. Then
\[\phi(S_1)-\phi(S_2)=\frac{\cut(S_1)\vol(S_2)-\cut(S_2)\vol(S_1)}{\vol(S_1)\vol(S_2)}\geq (\vol(R))^{-2}.\]
The last piece occurs because if $\cut(S_1)\vol(S_2)-\cut(S_2)\vol(S_1)$ is an integer, the smallest possible difference is 1. At termination Fast MQI satisfies $\delta_{\max} - \delta_{\min} \le \eps \delta_{\min}$. By the above difference bound, the next objective function value that is larger than $\delta^*$ is at least $\delta^* + \frac{1}{\vol(R)^2}$. Therefore, setting $\eps<\frac{1}{\vol(R)^2}$, we get that $\delta_{\max} < \delta^* + \frac{1}{\vol(R)^2}$.
\end{proof}

\begin{remark}[Time per iteration]
Each iteration involves a weighted MaxFlow problem on a graph with volume equal to $O(\vol(R))$. 
\end{remark}

\section{The FlowImprove Problem and Algorithm}
\label{chap:flowimprove}

In this section, we will describe the FlowImprove method, due to~\citet{AL08_SODA}. 
This cluster improvement method was designed to address the issue that the MQI algorithm will always return an output set that is strictly a subset of the reference set $R$.
The FlowImprove method also takes as input a graph $G=(V,E)$ and a reference set $R\subset V$, with $\vol(R) \le \vol(G)/2$, and it also returns as output an ``improved'' cluster.
Here, the output is ``improved'' in the sense that it is a set with conductance at least as good as $R$ that is also highly correlated with $R$. 

To state the FlowImprove method, consider the following variant of conductance:
\begin{equation} 
\label{eqn:conductance_FlowImprove}
\cond_R(S) = \begin{cases} \dfrac{\cut(S,\Sbar)}{\rvol(S; R, \theta)} & \text{when the denominator is positive } \\ 
\infty & \text{otherwise} \end{cases}
\end{equation} 
where $\theta = \vol(R)/\vol(\Rbar)$, and where the value is $\infty$ if the denominator is negative. 
This particular value of $\theta$ arises as the smallest value such that the $\rvol(S; R, \theta) = \vol(S \cap R) - \theta \vol(S \cap \Rbar)$ denominator is exactly zero when $S = V$ and hence will rule out trivial solutions. (This idea is equivalent to picking $\theta$ so that the total weight of edges connected to source is equal to the total weight of edges connected to the sink in the coming flow problem~\cite{AL08_SODA}.) Note that this setup is also equivalent to the statement in \Cref{sec:improvement-objectives} where the denominator constraint is adjusted to be a positive infinity value. 

For any set $S$ with $\vol(S) \le \vol(\Sbar)$, since $\rvol(S; R, \theta) = \vol(S \cap R) - \theta \vol(S \cap \Rbar)$, it holds that $\cond_R(S) \ge \cond(S)$.
Thus, this modified conductance score $\cond_R(\cdot)$ provides an upper-bound on the true conductance score $\cond(\cdot)$ for sets that are not too big; but this objective provides a bias toward $R$, in the sense that the denominator penalizes sets $S$ that are outside of the reference set $R$. 

Consequently, the FlowImprove problem is:
\begin{equation}
\label{eq:flowImpProb}
 \boxed{\MINone{S}{\cond_R(S)}{S\subset V.}}
\end{equation}
This FlowImprove problem is related to the fractional programming \probref{eq:fracprob} by setting $g(S):=\vol(S \cap R) - \theta \vol(S \cap \Rbar)$ and $Q=V$.
\citet{AL08_SODA} describe an algorithm to solve the FlowImprove problem, which is equivalent to what we present  as \Cref{algo:flowImprove}.
It is easy to see that this algorithm is a special case of \Cref{algo:fractionalprog} for general fractional programming.

\begin{algorithm}
\caption{FlowImprove~\citep{AL08_SODA}}\label{algo:flowImprove}
\begin{algorithmic}[1]
\STATE Initialize $k = 1$, $S_1:=R$ and $\delta_1=\cond_R(S_1)$.
\WHILE{we have not exited via else clause}
\STATE Solve  
$\ARGMINzero{S_{k+1}:=}{S}{\cut(S) - \delta_k \left(\vol(S\cap R) - \theta \vol(S\cap \Rbar)\right)}$
\IF{$\cond_R(S_{k+1}) < \delta_{k}$}
	\STATE $\delta_{k+1}:=\cond_R(S_{k+1})$
\ELSE
	\STATE $\delta_k$ is optimal, return previous solution $S_k$.
\ENDIF
\STATE $k:=k+1$
\ENDWHILE
\end{algorithmic}
\end{algorithm}

The following theorem implies that FlowImprove monotonically decreases the objective function in \probref{eq:flowImpProb} at each iteration.
It was first shown by~\citet{AL08_SODA}, but it is a corollary of \Cref{thm:monotonicfracprog}.
Note that $\delta_k$ is equal to the objective function of \probref{eq:flowImpProb} evaluated at $S_k$.

\begin{theorem}[Convergence of FlowImprove]
Let $G$ be an undirected, connected graph with non-negative weights.
 Let $R$ be a subset of vertices with $\vol(R) \le \vol(\Rbar)$.  The sequence $\delta_k$ monotonically decreases at each iteration of FlowImprove (\Cref{algo:flowImprove}). 
\end{theorem}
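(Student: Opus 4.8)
The plan is to recognize the FlowImprove procedure (\Cref{algo:flowImprove}) as a verbatim instance of the general Dinkelbach-style scheme \Cref{algo:fractionalprog}, and then to invoke the general convergence result \Cref{thm:monotonicfracprog}. Concretely, I would use the dictionary already recorded just below \probref{eq:flowImpProb}: set $g(S) := \rvol(S; R, \theta) = \vol(S \cap R) - \theta \vol(S \cap \Rbar)$ with $\theta = \vol(R)/\vol(\Rbar)$, and take $Q = V$. Under this identification the fractional objective $\cond_g$ of \probref{eq:fracprob} coincides with the modified conductance $\cond_R$ of \eqref{eqn:conductance_FlowImprove}, the parametric function is $z(S,\delta) = \cut(S) - \delta\, g(S)$, and the minimization of $z(S,\delta_k)$ over $S \subseteq Q$ is exactly the subproblem solved in Step 3 of \Cref{algo:flowImprove} (the nominal constraint $S \subset V$ being vacuous when $Q = V$). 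The initialization $S_1 := R$, $\delta_1 := \cond_R(S_1)$, the update rule, and the stopping test all line up term for term with the general algorithm.

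Before quoting \Cref{thm:monotonicfracprog} I would check its one standing assumption, namely that at least one feasible set with strictly positive denominator exists. This is immediate from $S = R$: since $R \cap \Rbar = \emptyset$ we have $\vol(R \cap \Rbar) = 0$, so $g(R) = \rvol(R; R, \theta) = \vol(R) > 0$ for a connected graph with $R \neq \emptyset$. Hence $R$ is a legitimate initializer and $\delta_1 = \cond_R(R)$ is finite and well defined. Note this feasibility holds regardless of the $\vol(R) \le \vol(\Rbar)$ hypothesis, which enters only to guarantee the conductance interpretation of the objective and causes no difficulty for the monotonicity argument.

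With the identification and feasibility established, \Cref{thm:monotonicfracprog} applies directly, and its conclusion—that the iterates $\delta_k$, each equal to $\cond_g(S_k) = \cond_R(S_k)$, decrease monotonically—is precisely the claim. This parallels the way Convergence of MQI is obtained as a corollary of the same general theorem. I do not expect a genuine obstacle; the only point demanding care is confirming that the FlowImprove denominator $g$ really fits the framework—in particular that the first half of \Cref{thm:monotonicfracprog} rules out subproblem solutions with $g(S) \le 0$ whenever $\delta > \delta^*$, so that the update $\delta_{k+1} := \cond_R(S_{k+1})$ in Step 5 always returns a finite value and the strict decrease $\delta_{k+1} < \delta_k$ holds on every iteration that proceeds past the else clause. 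Once that bookkeeping is in place, monotonic decrease is nothing more than the general convergence theorem specialized to the present $g$ and $Q$.
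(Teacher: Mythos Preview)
Your proposal is correct and matches the paper's approach exactly: the paper treats this theorem as an immediate corollary of \Cref{thm:monotonicfracprog}, obtained by identifying FlowImprove with the general Dinkelbach scheme via the dictionary $g(S) = \rvol(S;R,\theta)$ and $Q = V$, just as you describe. Your additional care in verifying the feasibility hypothesis ($g(R) > 0$) and noting that the first half of \Cref{thm:monotonicfracprog} handles the $g(S) \le 0$ case is sound and more explicit than what the paper spells out, but the underlying argument is identical.
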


\subsection{The FlowImprove subproblem}
\label{sec:flowimprove_subproblem}

In this subsection, we will discuss how to solve efficiently the subproblem at Step $3$ of FlowImprove. 
We will follow similar steps as we did for MQI in \Cref{sec:mqi_subproblem}. That is, we convert the \stMinCut-like problem into a true \stMinCut problem on an augmented graph, and then we use MaxFlow to find the set minimizing the objective. 
As a summary and overview, see the  Augmented Graph~\ref{proc:aug_graph_flowimprove} procedure and an example of this new modified graph in \Cref{fig:flowimp_maxflow_graph}. 
(Observe that here we do \emph{not} have a fourth step where we combine multiple edges, as we did in Augmented Graph~\ref{proc:aug_graph_mqi} and \Cref{fig:mqi_maxflow_graph_d}---thus, the FlowImprove \Cref{algo:flowImprove} will \emph{not} be strongly local.)

\begin{subroutine}[t]
\caption{for the subproblem at Step $3$ of FlowImprove \Cref{algo:flowImprove}}\label{proc:aug_graph_flowimprove}
\begin{algorithmic}[1]
\STATE Add to the set of nodes $V$ a source node $s$ and a sink node $t$. 
\STATE Add to the set of edges $E$ an edge from the source node $s$ to every node in the seed set of nodes $R$ with weight the degree of that node times $\delta$.
\STATE Add to the set of edges $E$ an edge from the sink node $t$ to every node in the set of nodes $\Rbar$ with weight the degree of that node times $\delta \theta$.
\end{algorithmic}
\end{subroutine}

\begin{figure}
	\centering
\subfigure[Graph and seed set $R$ ]{\label{fig:flowimp_maxflow_graph_b}\includegraphics[width=0.5\linewidth]{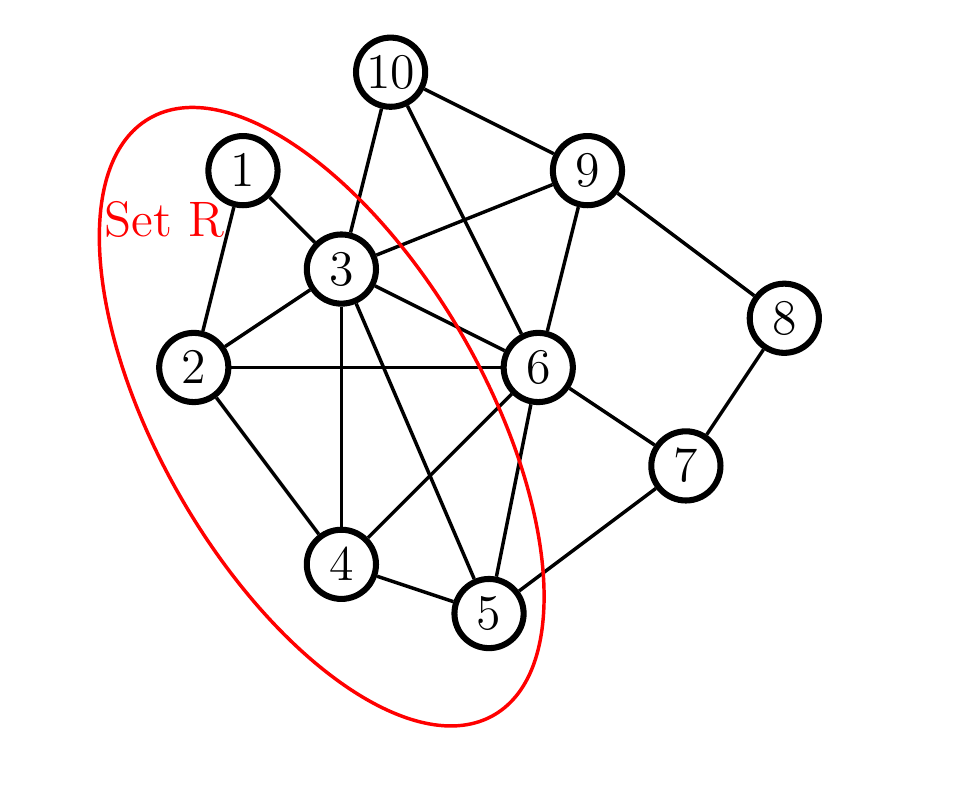}}%
\subfigure[MinCut graph for FlowImprove subproblem]{\label{fig:flowimp_maxflow_graph_d}\includegraphics[width=0.5\linewidth]{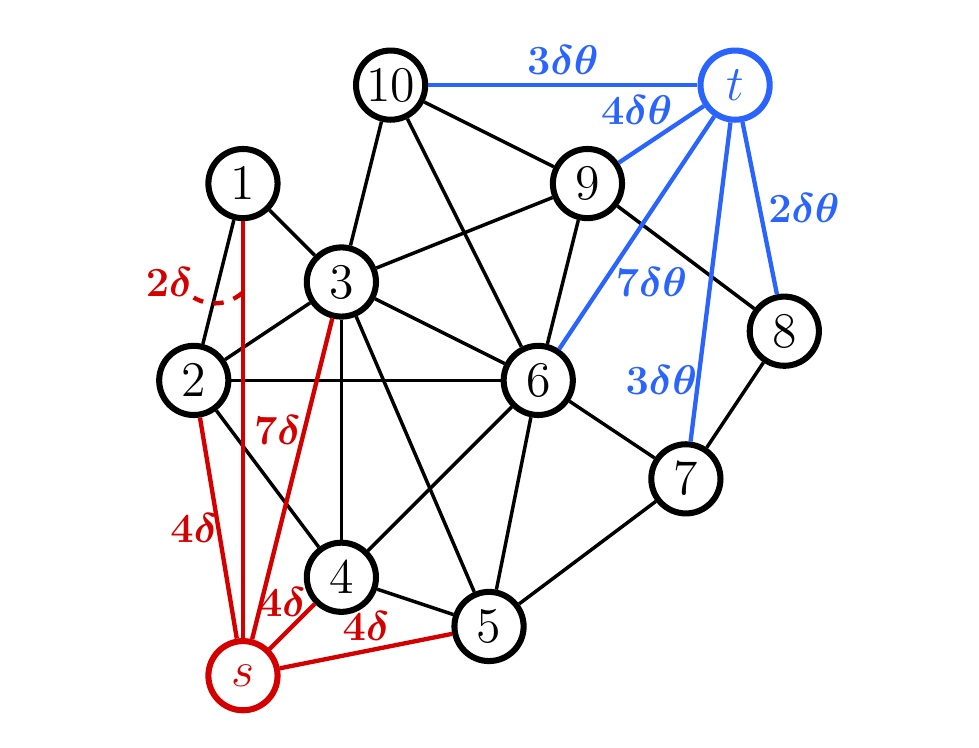}}%
\caption{%
Illustration of the augmented graph for solving the FlowImprove subproblem. 
Subfigure \ref{fig:flowimp_maxflow_graph_b}  illustrates the same graph and seed set from \Cref{fig:mqi_maxflow_graph}.  
Subfigure \ref{fig:flowimp_maxflow_graph_d} demonstrates the addition of a source node $s$ and sink node $t$, along with corresponding edges from $s$ to nodes in $R$ and node $t$ to every node in $\Rbar$.
The \stMinCut problem in Subfigure \ref{fig:flowimp_maxflow_graph_d} can be solved to identify a set via a MaxFlow problem from the source to the sink. 
}
\label{fig:flowimp_maxflow_graph}
\end{figure}

Turning back towards the derivation of this formulation, the MinCut sub-problem at Step $3$ of FlowImprove problem is equivalent to
\begin{equation}
\label{eq:mincutflowimprove_2}
\MINone{\vx}{\|\mB\vx\|_{\mC,1} - \delta\vx^T \hat{\vd}_R + \delta \theta \vx^T \hat{\vd}_{\Rbar}}{\vx \in \{0,1\}^{n},}
\end{equation}
where $\hat{\vd}_R$ is a $n$-dimensional vector that is equal to $\vd$ for components with index in $R$ and zero elsewhere. Similarly for $\hat{\vd}_{\Rbar}$. Consequently, $\vd = \hat{\vd}_R + \hat{\vd}_{\Rbar}$ where the two pieces have disjoint support.

As with the previous case, we shift this and then add sources and sinks. First, the largest possible negative value is at least $ \delta \ones^T\hat{\vd}_{R}$. Adding this yields 
\begin{equation}
\label{eq:mincutflowimprove_3}
\MINone{\vx}{\|\mB\vx\|_{\mC,1} + \delta(\ones - \vx)^T \hat{\vd}_R + \delta \theta \vx^T \hat{\vd}_{\Rbar}}{\vx \in \{0,1\}^{n}.}
\end{equation}
Again, we have penalty terms associated with $S$ (given by non-zero entries of $\vx$) and $\Sbar$ (given by non-zero entries of $\ones - \vx$). For these, we introduce a source and sink. The source connects to penalties associated with $\Sbar$ and the sink connects to penalties associated with $S$. Note that these penalties partition into two groups, associated with $R$ and $\Rbar$. Consequently, we add a source node $s$ and connect it to all nodes in $R$ with weight $\delta \hat{\vd}_R$, and we also add a sink node $t$ and connect it to all nodes in $\Rbar$ with weight $\delta \theta \hat{\vd}_{\Rbar}$. 

The resulting \stMinCut problem is associated with the 
incidence matrix and the diagonal edge-weight matrix of a modified problem as follows
\[ \tilde{\mB} := 
\begin{blockarray}{ccc}
s & V & t  \\
\begin{block}{[ccc]}
\ones & -\mI_R         & 0  \\
0 & \mB  &     		 0 \\
0 & \mI_{\Rbar}         & -\ones \\
\end{block}
\end{blockarray} 
\qquad 
\tilde{\mC} :=
\begin{blockarray}{ccc}
  &   &  \\
\begin{block}{[ccc]}
\delta\mD_{R} & 0                  & 0  \\
0 & \mC   & 0 \\
0 & 0                   & \delta \theta \mD_{\Rbar} \\
\end{block}
\end{blockarray}   .
\]
Here, $\mD_{R}$ and $\mD_{\Rbar}$ are diagonal submatrices of $\mD$ corresponding to nodes in $R$ and $\Rbar$, respectively. Also, $I_R$ and $I_{\Rbar}$ are matrices where each row contains an indicator vector for a node in $R$ and $\Rbar$, respectively.  These matrices give $\mI_R \vx = \vx_R$ and $\mI_{\Rbar} \vx = \vx_{\Rbar}$. They are ordered in the same way as was $D_R$ and $D_{\Rbar}$. 
Let 
\[ \tilde{\vx} := \begin{bmatrix} \vx_s \\
			   		  \vx \\
					  \vx_t
\end{bmatrix}, \text{ so that } \tilde{\vx}_1 = \vx_s \text{ and } \tilde{\vx}_{|R|+2} = \vx_t
\]
then the \stMinCut problem with respect to the modified graph is 
\begin{equation}
\label{eq:mincutflowimprove}
\MINone{\tilde{\vx}}{\|\tilde{\mB}\tilde{\vx}\|_{\tilde{\mC},1} = \|\mB \vx\|_{\mC,1} + \delta (\ones - \vx_R)^T \vd_R + \delta \theta \vx_{\Rbar} \vd_{\Rbar} }{\tilde{\vx}_1 = 1, \tilde{\vx}_{n+2}=0, \tilde{\vx}_i \in \{0,1\} \ \forall i=2\dots n+1.}
\end{equation}
Again, note that this objective corresponds to a constant shift with respect to \probref{eq:mincutflowimprove_2}.
This problem can be solved via MaxFlow to give a set solution. 

\subsection{Iteration complexity}

In \Cref{lem:monotonicFlowImp} we show that when using FlowImprove the denominator of \probref{eq:flowImpProb}, i.e., $\vol(S\cap R) - \theta \vol(S\cap \Rbar)$, decreases monotonically at each iteration.
Moreover, the numerator of \probref{eq:flowImpProb} decreases monotonically as well.

\begin{lemma}\label{lem:monotonicFlowImp}
If the FlowImprove algorithm proceeds to iteration $k+1$ it satisfies $\vol(S_{k+1}\cap R) - \vol(S_k\cap R) < \theta \left(\vol(S_{k+1}\cap \Rbar) - \vol(S_k\cap \Rbar)\right)$ and $\cut(S_{k+1}) < \cut(S_k)$.
\end{lemma}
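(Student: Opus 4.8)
The plan is to recognize this lemma as a direct specialization of the general monotonicity result \Cref{lem:monotonicFracProg} to the FlowImprove denominator, exactly as \Cref{lem:monotonicMqi} was derived for MQI. Recall that FlowImprove (\Cref{algo:flowImprove}) is precisely Dinkelbach's \Cref{algo:fractionalprog} applied with the choice $g(S) = \vol(S\cap R) - \theta\vol(S\cap\Rbar)$ and $Q = V$, as noted when \probref{eq:flowImpProb} was introduced. Consequently, both conclusions of \Cref{lem:monotonicFracProg}---namely $g(S_{k+1}) < g(S_k)$ and $\cut(S_{k+1}) < \cut(S_k)$---hold verbatim whenever the algorithm proceeds to iteration $k+1$.

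The second inequality of the present lemma, $\cut(S_{k+1}) < \cut(S_k)$, is then immediate, being literally the numerator conclusion of \Cref{lem:monotonicFracProg}. For the first inequality, I would substitute the FlowImprove form of $g$ into the denominator conclusion $g(S_{k+1}) < g(S_k)$, which reads
\[
\vol(S_{k+1}\cap R) - \theta\vol(S_{k+1}\cap\Rbar) < \vol(S_k\cap R) - \theta\vol(S_k\cap\Rbar),
\]
and then regroup the $\vol(\,\cdot\,\cap R)$ terms on the left and the $\theta\vol(\,\cdot\,\cap\Rbar)$ terms on the right to obtain exactly the claimed
\[
\vol(S_{k+1}\cap R) - \vol(S_k\cap R) < \theta\bigl(\vol(S_{k+1}\cap\Rbar) - \vol(S_k\cap\Rbar)\bigr).
\]

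Since essentially everything is inherited from \Cref{lem:monotonicFracProg}, there is no real obstacle to anticipate; the only point requiring care is confirming that FlowImprove genuinely instantiates the general algorithm with this $g$ and $Q = V$, so that the hypothesis ``proceeds to iteration $k+1$'' (together with the feasibility condition $g(S_k) > 0$) transfers correctly. This correspondence was already established when \Cref{algo:flowImprove} was presented as a special case of \Cref{algo:fractionalprog}. Thus the proof can be as short as the one given for \Cref{lem:monotonicMqi}: a pointer to the proof of \Cref{lem:monotonicFracProg}, supplemented by the one-line algebraic rearrangement displayed above that converts $g(S_{k+1}) < g(S_k)$ into the stated form.
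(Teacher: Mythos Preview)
Your proposal is correct and matches the paper's own proof essentially verbatim: the paper simply states that this result is a specialization of \Cref{lem:monotonicFracProg} and that the proof is the same. Your additional step of explicitly rearranging $g(S_{k+1}) < g(S_k)$ into the displayed inequality is a helpful clarification, but otherwise the approach is identical.
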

\begin{proof}
This result is a specialization of \Cref{lem:monotonicFracProg} and the proof is the same.
\end{proof}

\begin{theorem}[Iteration complexity of the FlowImprove \Cref{algo:flowImprove}]\label{thm:itercomplexityFlowImp}
Let $G$ be a connected, undirected graph with non-negative integer weights.  Then \Cref{algo:flowImprove} needs at most $\cut(R)$ iterations to converge to a solution. %
\end{theorem}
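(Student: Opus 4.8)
The plan is to obtain this as a direct specialization of the general iteration-complexity result \Cref{thm:itercomplexityDinkel}, which already covers MQI, FlowImprove, and LocalFlowImprove simultaneously, exactly as was done for MQI in \Cref{thm:itercomplexityMQI}. The one-line version of the argument is simply that \Cref{algo:flowImprove} is the instance of Dinkelbach's \Cref{algo:fractionalprog} obtained by setting $g(S) = \vol(S \cap R) - \theta \vol(S \cap \Rbar)$ and $Q = V$, so \Cref{thm:itercomplexityDinkel} applies verbatim and yields the bound $\cut(R)$. For a self-contained exposition, though, I would spell out why the generic bound specializes to $\cut(R)$ here.

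First I would invoke \Cref{lem:monotonicFlowImp} (itself the FlowImprove specialization of \Cref{lem:monotonicFracProg}), which guarantees that whenever the algorithm proceeds from iteration $k$ to iteration $k+1$ — i.e., whenever the else-clause termination test has not fired — the numerator strictly decreases, $\cut(S_{k+1}) < \cut(S_k)$. The feasibility needed here, namely that each iterate satisfies the denominator-positivity constraint $g(S_k) > 0$ and is therefore a genuine feasible set for \probref{eq:flowImpProb}, is supplied by \Cref{thm:monotonicfracprog}.

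Next I would use the integrality hypothesis. Since $G$ has non-negative integer weights, $\cut(S)$ is a non-negative integer for every $S$, so each strict decrease $\cut(S_{k+1}) < \cut(S_k)$ is a decrease by at least $1$. The algorithm starts from $S_1 := R$ with value $\cut(S_1) = \cut(R)$, and $\cut(\cdot) \ge 0$ always, so the strictly decreasing integer sequence $\cut(S_1) > \cut(S_2) > \cdots \ge 0$ can undergo at most $\cut(R)$ strict decreases before the else-clause must trigger and the algorithm returns. This bounds the number of iterations by $\cut(R)$.

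The main (and only mild) obstacle is conceptual rather than computational: one must be careful that the strict monotone decrease of \Cref{lem:monotonicFlowImp} is conditioned precisely on the algorithm advancing past the termination test, and that it is integrality that upgrades a strict decrease into a decrease of at least one full unit. All of the substantive work has already been carried out in \Cref{lem:monotonicFracProg,thm:monotonicfracprog,thm:itercomplexityDinkel}; unlike the MQI case there is no additional difficulty introduced by FlowImprove exploring beyond $R$, because the counting argument depends only on $\cut$ and not on where the optimal set lives. Finally, paralleling \Cref{remark:itercomplexityWeightedGraphs}, I would note that the integer-weight assumption is essential, since for non-integer weights the same argument only yields $\cut(R)/\mu$ for a minimum relative spacing $\mu$, and that this is essentially tight.
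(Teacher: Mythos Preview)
Your proposal is correct and takes exactly the same approach as the paper: the paper's entire proof is the single sentence ``This is just an explicit specialization of \Cref{thm:itercomplexityDinkel},'' and you identify precisely that specialization (with a more detailed unpacking of the underlying cut-monotonicity and integrality argument than the paper bothers to repeat).
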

\begin{proof}
This is just an explicit specialization of \Cref{thm:itercomplexityDinkel}.
\end{proof}

\begin{remark}[Time per iteration] \label{rem:flowimprove-time}
At each iteration a weighted MaxFlow problem is being solved, see \Cref{sec:flowimprove_subproblem}. The MaxFlow problem size is proportional to the whole graph.%
\end{remark}

\subsection{A faster version of the FlowImprove algorithm}

The original FlowImprove algorithm requires at most $\cut(R) \le \vol(R)$ iterations to converge to the optimal solution. After at most that many iterations the algorithm returns the \textit{exact} output.
However, in the case that we are not interested in exact solutions we can improve the iteration complexity of FlowImprove to at most $\mathcal{O}\left(\log \frac{1}{\eps}\right)$ where $\eps>0$ is an accuracy parameter. 
To achieve this we will use binary search for the variable $\delta$. It is true that $\cond_R(R) \in [0,1]$.
Therefore, $\delta^*\in [0,1]$. We will use this interval as our search space for the binary search.
The modified algorithm is shown in \Cref{algo:fasterflowimprove}.
Note that the subproblem in Step $4$ in \Cref{algo:fasterflowimprove} is the same as the subproblem in Step $3$ of the original \Cref{algo:flowImprove}.
The only part that changes is that we introduced binary search for $\delta$.

\begin{algorithm}
\caption{Fast FlowImprove}\label{algo:fasterflowimprove}
\begin{algorithmic}[1]
\STATE Initialize $k := 1$, $\delta_{\min} := 0$, $\delta_{\max} := 1$ and $\eps \in (0,1]$
\WHILE{$\delta_{\max} - \delta_{\min} > \eps \delta_{\min}$}
\STATE $\delta_k:= (\delta_{\max} + \delta_{\min})/2$ 
\STATE Solve $S_{k+1}:= \mathop{\text{argmin}}_S {\cut(S) - \delta_k \left(\vol(S\cap R) - \theta \vol(S\cap \Rbar)\right)}$ via MaxFlow

\IF{$\vol(S_{k+1}\cap R) > \theta \vol(S_{k+1}\cap \Rbar)$ \COMMENT{Then $\delta_k$ is above $\delta^*$}}
\STATE $\delta_{\max} := \phi_R(S_{k+1})$ and set $S_{\max} := S_{k+1}$ \COMMENT{Note $\phi_R(S_{k+1}) \le \delta_k$}
\ELSE
\STATE $\delta_{\min} := \delta_k$ 
\ENDIF 
\STATE $k:=k+1$
\ENDWHILE
\STATE Return $\mathop{\text{argmin}}_S {\cut(S) \!-\!\delta_{\max} (\vol(S\!\cap\!R) \!-\! \theta \vol(S \!\cap\! \Rbar)) }$ or $S_{\max}$ based on min $\phi_R$.
\end{algorithmic}
\end{algorithm}

Putting the iteration complexity of Fast FlowImprove together with its per iteration computational complexity we get the following theorem.
\begin{theorem}[Iteration complexity of the Fast FlowImprove \Cref{algo:fasterflowimprove}]\label{thm:complexityfastflowimp}
Let $G$ be an undirected, connected graph with non-negative weights. Let $R$ be a subset of $V$ with $\vol(R) \le \vol(\Rbar)$. The sequence $\delta_k$ of \Cref{algo:fasterflowimprove} converges to an approximate solution $|\delta^* - \delta_k|\delta^* \le \eps$ in $\mathcal{O}(\log 1/\eps)$ iterations, where $\delta^* = \cond_R(S^*)$ and $S^*$ is an optimal solution to problem \eqref{eq:flowImpProb}. Moreover, if $G$ has non-negative integer weights, then the algorithm will return the exact minimizer when $\eps < \frac{1}{\vol(R)^2 \vol(\Rbar)}$. 
\end{theorem}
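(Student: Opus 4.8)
The plan is to follow the same two-part template as the proof of the Fast MQI result (\Cref{thm:complexityfastmqi}), since Fast FlowImprove (\Cref{algo:fasterflowimprove}) is just the generic bisection scheme \Cref{algo:fasterfracprog} specialized to $g(S) = \rvol(S; R, \theta)$ with $Q = V$ and $\theta = \vol(R)/\vol(\Rbar)$. First I would dispatch the approximate-convergence claim, and then the sharper exact-recovery claim for integer weights, where the real work lies.

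For the first part, I would note that $R$ is feasible and that $\cond_R(R) = \cut(R)/\rvol(R;R,\theta) = \cut(R)/\vol(R) \le 1$, since $R \cap \Rbar = \emptyset$ forces $\rvol(R;R,\theta) = \vol(R)$. Hence $\delta^* = \cond_R(S^*) \le \cond_R(R) \le 1$, so $\delta^* \in [0,1]$ and the initial interval $[\delta_{\min},\delta_{\max}] = [0,1]$ brackets $\delta^*$. The bound $|\delta^* - \delta_k|/\delta^* \le \eps$ in $\mathcal{O}(\log(1/\eps))$ iterations then follows immediately from \Cref{thm:complexityfastdinkel} applied with $\delta_{\max} = 1$.

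For the exact-recovery part, the quantity to lower bound is the smallest positive gap between two distinct values of $\cond_R$ over feasible sets (those with $\rvol(S;R,\theta) > 0$). The main obstacle, and the feature that distinguishes this from the MQI argument, is that the denominator $\rvol(S;R,\theta) = \vol(S\cap R) - \tfrac{\vol(R)}{\vol(\Rbar)}\vol(S\cap\Rbar)$ is only rational, not integer, because of the factor $\theta = \vol(R)/\vol(\Rbar)$. I would handle this by clearing denominators: multiplying by $\vol(\Rbar)$, so that for integer-weighted graphs $B(S) := \vol(\Rbar)\,\rvol(S;R,\theta) = \vol(\Rbar)\vol(S\cap R) - \vol(R)\vol(S\cap\Rbar)$ is an integer. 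Then for feasible $S_1,S_2$ with $\cond_R(S_1)\ne\cond_R(S_2)$, writing $\cond_R(S_i) = \vol(\Rbar)\,\cut(S_i)/B(S_i)$, the difference becomes $\vol(\Rbar)\,\bigl(\cut(S_1)B(S_2) - \cut(S_2)B(S_1)\bigr)/\bigl(B(S_1)B(S_2)\bigr)$, whose numerator is a nonzero integer and hence at least $1$ in absolute value.

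It then remains to upper bound the denominator: since $0 < \rvol(S_i;R,\theta) \le \vol(S_i\cap R) \le \vol(R)$ for feasible $S_i$, we get $0 < B(S_i) \le \vol(R)\vol(\Rbar)$, so $B(S_1)B(S_2) \le \vol(R)^2\vol(\Rbar)^2$. Combining yields $|\cond_R(S_1)-\cond_R(S_2)| \ge \vol(\Rbar)/(\vol(R)^2\vol(\Rbar)^2) = 1/(\vol(R)^2\vol(\Rbar))$, which is exactly the threshold in the statement and accounts for the extra factor of $\vol(\Rbar)$ relative to the MQI bound. I would close the argument as in \Cref{thm:complexityfastmqi}: at termination $\delta_{\max} - \delta_{\min} \le \eps\delta_{\min}$ with invariant $\delta_{\min}\le\delta^*\le\delta_{\max}$ and $\delta_{\max} = \cond_R(S_{\max})$ an achievable value; if $S_{\max}$ were not optimal then the gap bound gives $\delta_{\max} \ge \delta^* + 1/(\vol(R)^2\vol(\Rbar))$, yet $\delta_{\max} \le (1+\eps)\delta_{\min} \le (1+\eps)\delta^* \le \delta^* + \eps$ (using $\delta^*\le 1$), a contradiction once $\eps < 1/(\vol(R)^2\vol(\Rbar))$. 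Hence $S_{\max}$ is the exact minimizer of \probref{eq:flowImpProb}.
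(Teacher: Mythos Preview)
Your proposal is correct and follows essentially the same approach as the paper. The paper organizes the gap computation by introducing the integers $k_1=\cut(S_1)\vol(S_2\cap R)-\cut(S_2)\vol(S_1\cap R)$ and $k_2=\cut(S_1)\vol(S_2\cap\Rbar)-\cut(S_2)\vol(S_1\cap\Rbar)$ and writing the numerator as $k_1\vol(\Rbar)-k_2\vol(R)$, whereas you clear denominators via $B(S)=\vol(\Rbar)\rvol(S;R,\theta)$; a one-line check shows your numerator $\cut(S_1)B(S_2)-\cut(S_2)B(S_1)$ equals exactly $k_1\vol(\Rbar)-k_2\vol(R)$, so the two computations are identical in substance, and both close by deferring to the Fast MQI argument.
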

\begin{proof}
Iteration complexity of FlowImprove is an immediate consequence of \Cref{thm:complexityfastdinkel}. %
The exact solution piece is a consequence of the smallest difference between values of relative conductance for integer weighted graphs. Let $S_1$ and $S_2$ be arbitrary sets of vertices in the graph with $\phi_{R}(S_1) > \phi_{R}(S_2)$. Let
$k_1=\text{cut}(S_1)\vol(S_2\cap R)-\text{cut}(S_2)\vol(S_1\cap R)$ and $k_2=\text{cut}(S_1)\vol(S_2\cap \Rbar)-\text{cut}(S_2)\vol(S_1\cap \Rbar)$. Both are integers. Then
\[
\frac{\text{cut}(S_1)}{\rvol(S_1;R,\theta)}
-\frac{\text{cut}(S_2)}{\rvol(S_2;R,\theta)}
=\frac{k_1\vol(\Rbar)-k_2\vol(R)}{\vol(\Rbar)\rvol(S_1;R,\theta)\rvol(S_2;R,\theta)}\geq \frac{1}{\vol(R)^2 \vol(\Rbar)}. \]
The last piece occurs because $k_1$ and $k_2$ are integers, and thus the smallest positive value of $k_1\vol(\Rbar)-k_2\vol(R)$ is 1. The rest of the argument on the exact solution is the same as the proof of \Cref{thm:complexityfastmqi}. 
\end{proof}

The subproblem is the same and so the cost per iteration is the same as discussed in \Cref{rem:flowimprove-time}.

\subsection{Non-locality in FlowImprove} 

The runtime bounds for FlowImprove assume that we may need to solve a MaxFlow problem with size proportional to the entire graph. We now show that this is essentially tight and that the solution of a FlowImprove problem, in general, is not strongly local.   Indeed, the following example shows that FlowImprove will return one fourth of the graph even when started with a set $R$ that is a singleton. 

\noindent\begin{minipage}{0.55\linewidth}%
\begin{lemma}\label{thm:cycle-graph}
Consider a cycle graph with some extra edges connecting neighbors or neighbors in some parts (illustrated on the right) with $4N+8$ nodes in 4 major regions. Each set $A$ and $B$ has $N$ nodes of degree $4$ corresponding to a contiguous piece of the cycle graph with neighbors and neighbors of neighbors connected. Each set $C$ and $D$ has $N$ degree 2 nodes. This introduces two extra nodes, of degree 3, between each pair of adjacent degree $2$ and degree $4$ regions. Consider using any node of degree $4$ as the seed node to FlowImprove algorithm. Then, at optimality, FlowImprove will return a set with $N+4$ nodes that is a continuous degree 4 region plus the four adjacent degree 3 nodes.%
\end{lemma}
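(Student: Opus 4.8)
First I would fix coordinates on the cycle and pin down the local edge pattern at each of the four transitions so that the stated degrees come out. Writing the degree-$4$ regions as ``squared paths'' (consecutive plus distance-$2$, i.e.\ neighbor-of-neighbor, edges) forces a \emph{unique} configuration near each boundary: if $u$ is the last degree-$4$ node and $t_1,t_2$ are the two adjacent degree-$3$ nodes before the degree-$2$ region begins at $w$, then $t_1$ must carry its single distance-$2$ edge back into the region (to $u-1$) and $t_2$ its distance-$2$ edge to $u$, since any other choice would raise a degree-$2$ node to degree $3$. With this fixed I would record $\vol(G)=12N+24$, and, taking the seed $R=\{r\}$ with $r$ a degree-$4$ node, $\theta=\vol(R)/\vol(\Rbar)=4/(12N+20)=1/(3N+5)$. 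Because $\cond_R(S)=\infty$ unless $\rvol(S;R,\theta)>0$ and $\vol(\{r\})=4$, every feasible $S$ contains $r$, so that $\rvol(S)=4-\theta(\vol(S)-4)$ depends on $S$ only through $\vol(S)$.

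\textbf{The candidate and its value.} Let $S^\star$ be region $A$ together with its four adjacent degree-$3$ nodes, so $|S^\star|=N+4$ and $\vol(S^\star)=4N+12$. From the transition structure the only edges leaving $S^\star$ are the two cycle edges joining the outer degree-$3$ node at each end of $A$ to the first degree-$2$ node, hence $\cut(S^\star)=2$. Then
\[
\rvol(S^\star)=4-\tfrac{4N+8}{3N+5}=\tfrac{8N+12}{3N+5},\qquad \cond_R(S^\star)=\frac{2}{\rvol(S^\star)}=\frac{3N+5}{4N+6}=:\delta^\star .
\]
I would verify directly that $z(S^\star,\delta^\star)=\cut(S^\star)-\delta^\star\rvol(S^\star)=0$; by Theorem~\ref{thm:fractionalprogramming} it then suffices to show $\hat z(\delta^\star)=0$, i.e.\ that $\delta^\star$ is the optimal ratio and $S^\star$ attains it.

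\textbf{Optimality.} By Augmented Graph~\ref{proc:aug_graph_flowimprove}, for $R=\{r\}$ the subproblem \probref{eq:frac-subprob} is the min-cut of the graph in which $s$ feeds only $r$ (capacity $4\delta$) and every other node is joined to $t$ with capacity $\delta\theta\,\deg(\cdot)$; since $z(S,\delta)=[\cut(S)+\delta\theta(\vol(S)-4)]-4\delta$, minimizing $z(\cdot,\delta^\star)$ is equivalent to minimizing the submodular objective $\cut(S)+\delta^\star\theta(\vol(S)-4)$ over $S\ni r$. Because the cut term is submodular and the volume term is modular, the minimizers form a lattice, and I would combine this with the dihedral symmetry of the construction and an uncrossing argument to reduce to \emph{contiguous arcs} containing $r$ (a disconnected set or one with a ``hole'' pays extra cut without lowering the volume penalty). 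Over arcs the problem is one-dimensional in how far $S$ extends past $A$ at each end: retracting inside a transition pair raises the cut from $2$ back toward $4$, whereas advancing into a degree-$2$ region keeps the cut at $2$ but strictly increases $\vol(S)$; either move strictly increases $\cond_R$. This isolates $S^\star$ as the unique minimizer, with $r\in A$ ruling out the symmetric region $B$ (infeasible since $r\notin B$).

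\textbf{Main obstacle.} The volume/cut bookkeeping and the arc optimization are routine; the real difficulty is the reduction from arbitrary subsets to arcs. I expect to discharge it either through the submodular-lattice-plus-symmetry argument above, or, more explicitly, by exhibiting a feasible flow of value $4\delta^\star$ saturating the source edge $s\to r$ in the augmented graph---certifying $\hat z(\delta^\star)=0$ by max-flow/min-cut duality---and then reading off $S^\star$ as the source side of the (essentially unique) minimum cut from the residual graph.
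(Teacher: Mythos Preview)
Your setup and the value $\delta^\star=(3N+5)/(4N+6)$ match the paper exactly, and your overall strategy is sound. But your optimality argument takes a substantially more elaborate route than the paper's, and the ``main obstacle'' you flag is one the paper sidesteps entirely with a much cheaper observation.

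The paper's key step is this: since $R$ is a single degree-$4$ node, $\rvol(S;R,\theta)\le\vol(R)=4$ for every feasible $S$, with equality only at $S=R$. Hence any set with $\cut(S)\ge 4$ has $\phi_R(S)\ge 1=\phi_R(R)$, and the optimum must satisfy $\cut(S^*)\le 3$. (The paper phrases this via \Cref{lem:monotonicFlowImp}: Dinkelbach's iterates have strictly decreasing cut starting from $\cut(R)=4$.) In this particular graph, integer cuts of size $2$ or $3$ can only occur at a handful of specific arc configurations near the degree-$3$ transitions, so the paper simply enumerates five cases by hand and compares their $\phi_R$ values directly. No submodularity, no uncrossing, no flow certificate.

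Your approach---reducing to arcs via submodular-lattice-plus-dihedral-symmetry, or certifying $\hat z(\delta^\star)=0$ by exhibiting an explicit flow of value $4\delta^\star$---would work, but is considerably more machinery than the problem requires. The elementary bound $\cut(S^*)\le 3$ makes the reduction to arcs essentially automatic (any non-arc or any cut in the interior of a degree-$4$ region costs at least $4$), which dissolves exactly the obstacle you identify. Your route would generalize better to larger seed sets $R$ where $\rvol$ is not so tightly capped; the paper's route is tailored to the single-node seed and buys a two-paragraph proof.
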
%
\end{minipage}\begin{minipage}{0.45\linewidth}
	\includegraphics[width=0.99\linewidth]{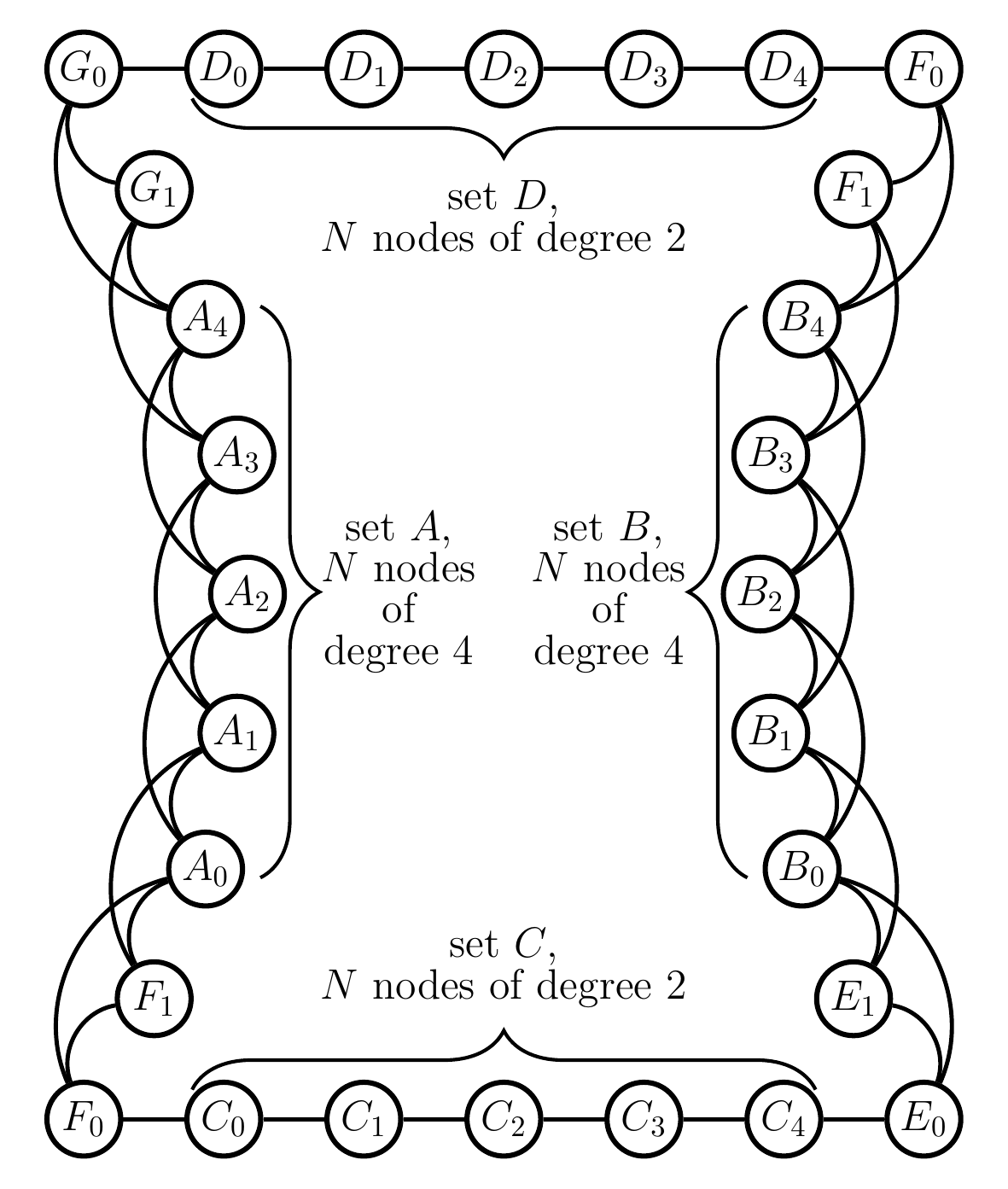}
\end{minipage}%
\begin{proof}
	Without loss of generality, suppose we seed on a node from set $A$. According to \Cref{lem:monotonicFlowImp}, when Dinkelbach's algorithm for FlowImprove proceeds from iteration to iteration, it must return a set with a strictly smaller cut value or the seed set $R$ was optimal. This means FlowImprove will only return one of the following sets. (Due to symmetry, there may be equivalent sets that we don't list.) 
	\begin{enumerate}
		\item The seed node with cut 4. 
		\item A continuous subset of the $A$ region, $G_0$, $G_1$, and a continuous subset of the set $D$, with cut 3. 
		\item All of the $A$ region, two adjacent degree 3 nodes (without loss of generality, $G_0$ and $G_1$) on one end and one adjacency degree 3 node on the other edge ($F_1$), with cut 3.
		\item All of the $A$ region and all adjacency degree 3 nodes ($G_0, G_1, F_0, F_1$), with cut 2. 
		\item All of the $A$ region and all adjacency degree 3 nodes ($G_0, G_1, F_0, F_1$ and additional nodes from sets $C$ and $D$), with cut 2. 
	\end{enumerate}
	The goal is to show that case (4) is optimal, i.e., has the smallest objective value. Obviously, case (5) cannot be optimal since it has the same cut value as case (4) but smaller relative volume. Similarly, case (3) has the same cut value as case (2) but smaller relative volume. So case (3) won't be optimal either. So we only need to compare $\phi_R(S_1)$, $\phi_R(S_2)$ and $\phi_R(S_4)$. Observe that in this setting, $\theta=\frac{\text{vol}(R)}{\text{vol}(\Rbar)}=\frac{4}{(2N-1)4+2N\cdot2+8\cdot3}=\frac{1}{3N+5}$, so we can compute that
	\[\phi_R(S_4)=\frac{2}{4-\theta (4(N-1)+3\cdot4)}=\frac{3N+5}{4N+6}<1=\phi_R(S_1)  .  \]
	On the other hand, suppose in case (2), there are $1\leq k<N$ nodes from $A$ and $m\geq 0$ nodes from $D$, then we can write
	\[\phi_R(S_2)=\frac{3}{4-\theta (4(k-1)+3\cdot2+2m)}\geq \frac{3}{4-6\theta}=\frac{9N+15}{12N+14}>\phi_R(S_4)  .  \]
	So case (4) is optimal.
\end{proof}

\subsection{Relationship with PageRank}
\label{sec:fi-pagerank}
The FlowImprove \subprobref{eq:mincutflowimprove} is closely related to the PageRank problem if the 1-norm objective is translated into a $2$-norm objective and we relax to real-valued vectors (and make a small perturbation to the resulting systems). This was originally observed, in slightly different ways, in our previous work~\cite{GM14_ICML,Gleich-2015-robustifying}. For the same matrix $\tilde{\mB}$, consider the problem 
\begin{equation}
\MINone{\tilde{\vx}}{\|\tilde{\mB}\tilde{\vx}\|_{\tilde{\mC},2}^2}{x_s = \tilde{\vx}_1 = 0, x_t = \tilde{\vx}_{n+2}=1, \tilde{\vx}_i \in \{0,1\} \ \forall i=2\dots n+1.}
\end{equation}
Note that this problem, with the binary constraints, is exactly equivalent to the original problem. However, if we relax the binary constraints to real-valued vectors and substitute in $x_1=1$ and $x_{n+2} = 0$, then this is a strongly-convex quadratic objective, which can be solved as the following linear system: 
\begin{equation}
(\mB^T \mC \mB + \delta \diag(\tilde{d}_R) + \theta \delta \diag (\tilde{d}_{\Rbar})) \vx = \delta/2 \tilde{d}_R  .
\end{equation}
Here, $\mB^T \mC \mB = \mL = \mD - \mA$ is the Laplacian of the original graph. Also, if we had $\theta = 1$ (or simply assume this is true), then $ \delta \diag(\tilde{d}_R) + \theta \delta \diag (\tilde{d}_{\Rbar}) = \delta \mD$. This yields the linear system  
\[ (\mL + \delta \mD) \vx = \delta/2  \quad \Leftrightarrow \quad (\mI - \tfrac{1}{1+\delta} \mA \mD^{-1}) \mD \vx = \delta/(2+2\delta) \tilde{d}_R. \]
The second system is equivalent to a rescaled PageRank problem for an undirected graph $(\mI - \alpha \mA \mD^{-1}) \vy = \gamma \vv$ where $\vy = \mD \vx$. This form, or a scaled version, is widely used in practice~\cite{G15}. 
\section{The LocalFlowImprove (and SimpleLocal) Problem and Algorithm}
\label{chap:localflowimprove}

In this section, we will describe the LocalFlowImprove method, due to~\citet{OZ14}, and the related SimpleLocal, due to~\citet{veldticml2016}.
This cluster improvement method was designed to address the issue that FlowImprove is weakly (and not strongly) local, i.e., that the FlowImprove method has a running time that depends on the size of the entire input graph and not just on the size of the reference set $R$.  %
The setup is the same: LocalFlowImprove method takes as input a graph $G=(V,E)$ and a reference set $R\subset V$, with $\vol(R) \le \vol(G)/2$, and it returns as output an ``improved''~cluster.

To understand the LocalFlowImprove method, consider the following variant of conductance:
\begin{equation}
\label{eqn:conductance_LocalFlowImprove}
\cond_{R,\sigma}(S) = \begin{cases} \dfrac{\cut(S,\Sbar)}{\vol(S \cap R) - \sigma \vol(S \cap \Rbar)} & \text{when the denominator is positive } \\ 
\infty & \text{otherwise} \end{cases}
\end{equation}
where $\sigma \in [\vol(R)/\vol(\Rbar),\infty)$. This is identical to FlowImprove~\eqref{eqn:conductance_FlowImprove}, but we change 
$\theta$ into $\sigma$ 
and allow it to vary. Given this, the basic LocalFlowImprove problem is:
\begin{equation}\label{eq:localflowImpProb}
\boxed{\MINone{S}{\cond_{R,\sigma}(S)}{S\subset V.}}
\end{equation}

\newcommand{\asidedelta}{\aside{aside:sigma-vs-delta}{For the theory in this section, we parameterize the LocalFlowImprove objective with $\sigma$ instead of $\theta + \delta$ as in \Cref{sec:improvement-objectives} and \Cref{sec:experiments}. This choice reduces the number of constants in the statement of theorems. The previous choice of $\delta$ is designed to highlight the FlowImprove to MQI spectrum.}}
\tronly{\asidedelta}
On the surface, it is straightforward to adapt between FlowImprove and LocalFlowImprove. Simply ``repeating'' the entire previous section with $\sigma$ instead of $\theta$ will result in correct algorithms. For example, the original algorithm proposed for LocalFlowImprove by \citet{OZ14} is presented in an equivalent fashion \Cref{algo:localflowimprove}, which is simply an instance of the bisection-based fractional programming \Cref{algo:fasterfracprog}. 

\siamonly{\asidedelta}
The key difference between FlowImprove and LocalFlowImprove is that by setting $\sigma$ larger than $\vol(R)/\vol(\Rbar)$ we will be able to show that the running time is independent of the size of the input graph. Recall that we have already shown the \emph{output set} has a graph-size independent bound in \Cref{lem:lfi-size}.

This strongly-local aspect of LocalFlowImprove manifests in the subproblem solve step. Put another way, we need to crack open the black-box flow techniques in order to make them run in a way that scales with the size of the output rather than the size of the input.  As a simple example of how we'll need to look inside the black box, note that when $\sigma = \infty$, then LocalFlowImprove corresponds to MQI, as discussed in \Cref{sec:improvement-objectives}, which has an extremely simple strongly local algorithm. We want algorithms that will be able to take advantage of this property without needing to be told this will happen. Consequently, in this section, we are going to discuss the subproblem solver extensively.

In particular, we will cover how to adapt a sequence of standard MaxFlow solves to be strongly local, as in the SimpleLocal method of \citet{veldticml2016} (\Cref{subsubsec:localgraph}), as well as improvements that arise from using blocking flows and adapting Dinic's algorithm (\Cref{sec:blockingflow,subsubsec:simplelocal}). We will also cover differences with solvers with different types of theoretical tradeoffs that were discussed in the original \citet{OZ14} paper (\Cref{subsec:approxDinic}).

Note that the SimpleLocal algorithm of~\citet{veldticml2016} did not use binary search on $\delta$ as in \Cref{algo:localflowimprove} (and nor do our implementations), instead it used the original Dinkelbach's algorithm. As we have pointed out a few times,  binary search is not as useful as it may seem for these problems, as a few iterations of Dinkelbach's method is often sufficient on real-world data. The point here is that the tradeoff between bisection and the greedy Dinkelbach's method is independent of the \emph{subproblem} solves that is the heart of what differentiates LocalFlowImprove from~FlowImprove. Finally, note that \Cref{algo:localflowimprove} is also a special instance of \Cref{algo:fasterfracprog}.

\begin{algorithm}
	\caption{LocalFlowImprove~\citep{OZ14}}\label{algo:localflowimprove}
	\begin{algorithmic}[1]
		\STATE Initialize $k := 1$, $\delta_{\min} := 0$, $\delta_{\max} := 1$,  $\sigma \in \left[\frac{\vol(R)}{\vol(\Rbar)},\infty\right)$, and $\eps \in (0, 1]$
		\WHILE{$\delta_{\max} - \delta_{\min} > \eps \delta_{\min}$}
		\STATE $\delta_k:= (\delta_{\max} + \delta_{\min})/2$ 
		\STATE Solve $S_{k+1}:= \mathop{\text{argmin}}_S {\cut(S) - \delta_k \left(\vol(S\cap R) - \sigma \vol(S\cap \Rbar)\right)}$ via MaxFlow
		\IF{$\vol(S_{k+1}\cap R) > \sigma \vol(S_{k+1}\cap \Rbar)$ \COMMENT{Then $\delta_k \ge \delta^*$}}
		\STATE $\delta_{\max} := \phi_{R,\sigma}(S_{k+1})$ and set $S_{\max} := S_{k+1}$ \COMMENT{Note $\phi_{R,\sigma}(S_{k+1}) \le \delta_k$.} 
		\ELSE
		\STATE $\delta_{\min} := \delta_k$ 
		\ENDIF 
		\STATE $k:=k+1$
		\ENDWHILE
		\STATE Return $\mathop{\text{argmin}}_S \cut(S)\!-\!\delta_{\max} (\vol(S \!\cap\! R)\!-\! \sigma \! \vol( S \!\cap\! \Rbar)) $ or $S_{\max}$ based on min $\phi_{R,\sigma}$
	\end{algorithmic}
\end{algorithm}

The iteration complexity of \Cref{algo:localflowimprove} is now just a standard application of the fractional programming theory. 

\begin{theorem}[Iteration complexity of LocalFlowImprove]\label{cor:localflowimprove}
	Let $G$ be an undirected, connected graph with non-negative weights. Let $R$ be a subset of nodes with $\vol(R) \le \vol(\Rbar)$. In \Cref{algo:localflowimprove}, the sequence $\delta_k$ converges to an approximate optimal value $|\delta^* - \delta_k|/\delta^* \le \eps$ in $\mathcal{O}(\log 1/\eps)$ iterations, where $\delta^* = \cond_{R,\sigma}(S^*)$ and $S^*$ is an optimal solution to problem \eqref{eq:localflowImpProb}. Moreover, if $G$ has non-negative integer weights and $\sigma = (\eta + \vol(R))/\vol(\Rbar)$ for an integer value of $\eta$, then the algorithm will return the exact minimizer when $\eps < \frac{1}{\vol(R)^2 \vol(\Rbar)}$.
	\end{theorem}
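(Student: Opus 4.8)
The plan is to mirror the two-part structure used for Fast MQI (\Cref{thm:complexityfastmqi}) and Fast FlowImprove (\Cref{thm:complexityfastflowimp}), since \Cref{algo:localflowimprove} is literally an instance of the bisection fractional-programming \Cref{algo:fasterfracprog} with denominator $g(S) = \vol(S\cap R) - \sigma\vol(S\cap\Rbar)$ and $Q = V$. The first part (the $\mathcal{O}(\log 1/\eps)$ bound) I would obtain by directly invoking \Cref{thm:complexityfastdinkel}. The only thing to check for that invocation is that the initial search interval $[\delta_{\min},\delta_{\max}] = [0,1]$ brackets $\delta^{*}$: non-negativity of $\cut$ and of the (positive) denominator gives $\delta^{*}\ge 0$, while $\delta^{*} \le \phi_{R,\sigma}(R)$, and since $\vol(R\cap\Rbar)=0$ we have $\rvol(R;R,\sigma)=\vol(R)$, hence $\phi_{R,\sigma}(R) = \cut(R)/\vol(R) = \phi(R) \le 1$. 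With a valid bracket in hand, the convergence and complexity claims are exactly \Cref{thm:complexityfastdinkel}.

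The second part — that $\eps < 1/(\vol(R)^2\vol(\Rbar))$ forces exact recovery for integer-weighted graphs — rests on lower-bounding the smallest positive gap between two distinct values of the objective $\phi_{R,\sigma}$. Following the FlowImprove computation, I would take arbitrary $S_1, S_2$ with positive denominators and $\phi_{R,\sigma}(S_1) > \phi_{R,\sigma}(S_2)$, set $k_1 = \cut(S_1)\vol(S_2\cap R) - \cut(S_2)\vol(S_1\cap R)$ and $k_2 = \cut(S_1)\vol(S_2\cap\Rbar) - \cut(S_2)\vol(S_1\cap\Rbar)$ (both integers), and compute that the numerator of $\phi_{R,\sigma}(S_1) - \phi_{R,\sigma}(S_2)$ equals $k_1 - \sigma k_2$. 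Substituting $\sigma = (\eta + \vol(R))/\vol(\Rbar)$ turns this into $\big(k_1\vol(\Rbar) - (\eta+\vol(R))k_2\big)/\vol(\Rbar)$, so that
\[ \phi_{R,\sigma}(S_1) - \phi_{R,\sigma}(S_2) = \frac{k_1\vol(\Rbar) - (\eta+\vol(R))k_2}{\vol(\Rbar)\,\rvol(S_1;R,\sigma)\,\rvol(S_2;R,\sigma)}. \]
Here is where the hypothesis on $\sigma$ is essential, and it is the one step I would flag as the crux: with $\eta$ an integer the numerator is an integer, hence at least $1$ whenever the difference is positive. Bounding each denominator factor by $\rvol(S_i;R,\sigma) \le \vol(S_i\cap R) \le \vol(R)$ then yields the gap bound $\phi_{R,\sigma}(S_1) - \phi_{R,\sigma}(S_2) \ge 1/(\vol(R)^2\vol(\Rbar))$.

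Finally, I would close with the same termination argument as in \Cref{thm:complexityfastmqi}: at exit the bisection guarantees $\delta_{\max} - \delta_{\min} \le \eps\delta_{\min}$, so once $\eps < 1/(\vol(R)^2\vol(\Rbar))$ we have $\delta_{\max} < \delta^{*} + 1/(\vol(R)^2\vol(\Rbar))$; since by the gap bound the next attainable objective value above $\delta^{*}$ is at least $\delta^{*} + 1/(\vol(R)^2\vol(\Rbar))$, the set stored in $S_{\max}$ (respectively the final minimization solve at $\delta_{\max}$) must realize $\delta^{*}$ exactly. The main obstacle is genuinely the integrality check in the gap bound: FlowImprove got integrality for free from $\theta = \vol(R)/\vol(\Rbar)$, whereas for LocalFlowImprove one must impose $\sigma = (\eta+\vol(R))/\vol(\Rbar)$ with integer $\eta$ precisely so that $\vol(\Rbar)\,\rvol(S;R,\sigma)$ stays integer-valued; everything else is a routine specialization of the earlier fractional-programming theorems.
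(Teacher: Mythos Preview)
Your proposal is correct and follows essentially the same approach as the paper: invoke \Cref{thm:complexityfastdinkel} for the $\mathcal{O}(\log 1/\eps)$ bound, then establish the gap lower bound via the same integers $k_1,k_2$ and the identity $\phi_{R,\sigma}(S_1)-\phi_{R,\sigma}(S_2) = \frac{k_1\vol(\Rbar)-(\eta+\vol(R))k_2}{\vol(\Rbar)\,\rvol(S_1;R,\sigma)\,\rvol(S_2;R,\sigma)}$, concluding exactly as in \Cref{thm:complexityfastmqi}. Your explicit verification that $[0,1]$ brackets $\delta^*$ is a small addition the paper leaves implicit.
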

\begin{proof}
	The first part is an immediate consequence of Theorem \ref{thm:complexityfastdinkel} with $\delta_{\max}=1$. 
The exact solution piece is a consequence of the smallest difference between values of relative conductance for integer weights. Let $S_1$ and $S_2$ be arbitrary sets of vertices in the graph with $\phi_{R,\sigma}(S_1) > \phi_{R,\sigma}(S_2)$. Let
$k_1=\text{cut}(S_1)\vol(S_2\cap R)-\text{cut}(S_2)\vol(S_1\cap R)$ and $k_2=\text{cut}(S_1)\vol(S_2\cap \Rbar)-\text{cut}(S_2)\vol(S_1\cap \Rbar)$, which are both integers. Then
\[
\frac{\text{cut}(S_1)}{\rvol(S_1;R,\sigma)}
-\frac{\text{cut}(S_2)}{\rvol(S_2;R,\sigma)}
=\frac{k_1\vol(\Rbar)-k_2(\eta+\vol(R))}{\vol(\Rbar)\rvol(S_1;R,\sigma)\rvol(S_2;R,\sigma)}\geq \frac{1}{\vol(R)^2 \vol(\Rbar)}. \]
The inequality follows because the integrality of $k_1$ and $k_2$ ensures tha the smallest positive value of $k_1\vol(\Rbar)-k_2(\eta+\vol(R))$ is 1. The rest of the argument on the exact solution is the same as the proof of \Cref{thm:complexityfastmqi}.
\end{proof}

\begin{subroutine}
	\caption{for the subproblem at Step $4$ of LocalFlowImprove \Cref{algo:localflowimprove}. This is identical to the FlowImprove procedure with $\sigma$ instead of $\theta$; for LocalFlowImprove we develop algorithms to work with this problem implicitly.}\label{proc:aug_graph_localflowimprove}
	\begin{algorithmic}[1]
		\STATE Add to the set of nodes $V$ a source node $s$ and a sink node $t$. 
		\STATE Add to the set of edges $E$ an edge from the source node $s$ to every node in the seed set of nodes $R$ with weight the degree of that node times $\delta$.
		\STATE Add to the set of edges $E$ an edge from the sink node $t$ to every node in the set of nodes $\Rbar$ with weight the degree of that node times $\delta \sigma$, where $\sigma \in [\vol(R)/\vol(\Rbar),\infty)$.
	\end{algorithmic}
\end{subroutine}

Moreover,  the subproblem construction and augmented graph are identical to FlowImprove, except with $\sigma$ instead of $\theta$.
For the construction of the modified graph to use at the subproblem step, see Augmented Graph~\ref{proc:aug_graph_localflowimprove}. The \stMinCut problem for a specific value of $\delta = \delta_k$ from the algorithm is also equivalent with $\sigma$ instead of $\theta$, 
\begin{equation}
\label{eq:mincutlocalflowimprove_2}
\MINone{\vx}{\|\mB\vx\|_{\mC,1} + (\ones - \delta)\vx^T \hat{\vd}_R + \delta \sigma \vx^T \hat{\vd}_{\Rbar}}{\vx \in \{0,1\}^{n},}
\end{equation}
using the same notation from \eqref{eq:mincutflowimprove_2}. (Here, we have not implemented the subsequent step of associating terms with sources and sinks, as that follows an identical reasoning to FlowImprove problem.) 

However, in practice, we \emph{never explicitly} build this augmented graph, as that would \emph{immediately} preclude a strongly local algorithm, where the runtime depends on $\vol(R)$ instead of $n$ or $m$ (the number of vertices or edges). 
Instead, the algorithms seek to iteratively identify a \emph{local graph}, whose size is bounded by a function of $\vol(R)$ and $\sigma$ that has all of $R$ and just enough of the rest of $G$ to be able to guarantee a solution to \eqref{eq:mincutlocalflowimprove_2}. 

As some quick intuition for \emph{why} the LocalFlowImprove subproblem might have this property, we recall \Cref{lem:lfi-size}, which showed that there is a bound on the output size that is independent of the graph size. We further note the following \emph{sparsity-promoting} intuition in the LocalFlowImprove subproblem. 
\begin{lemma}[Originally from \citet{veldticml2016}, Theorem 1]
The subproblem solve in LocalFlowImprove~\eqref{eq:mincutlocalflowimprove_2} corresponds to a degree-weighted $1$-norm regularized variation on the subproblem solve in FlowImprove~\eqref{eq:mincutflowimprove_3}. More specifically, the $1$-norm regularized problem is
\begin{equation}\label{eq:mincutlocalflowimprove_l1}
\MINone{\vx}{\|\mB\vx\|_{\mC,1} + \hat{\delta}(\ones - \vx)^T \hat{\vd}_R + \hat{\delta} \theta \vx^T \hat{\vd}_{\Rbar} + \kappa \| \mD \vx \|_1 }{\vx \in \{0,1\}^{n}} 
\end{equation}
with $\hat{\delta} = \delta + \kappa$ and $\kappa = \tfrac{\delta \sigma - \delta}{1+\theta}$, where $\theta = \vol(R)/\vol(\Rbar)$. 
\end{lemma}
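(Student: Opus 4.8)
The plan is to prove the equivalence by showing that, on the feasible set $\{0,1\}^n$, the objective of the regularized problem \eqref{eq:mincutlocalflowimprove_l1} agrees with the objective of the LocalFlowImprove subproblem \eqref{eq:mincutlocalflowimprove_2} up to an additive constant that does not depend on $\vx$; since a constant offset cannot change the set of minimizers, this is exactly the asserted equivalence. The only ingredient that is not already linear in $\vx$ is the penalty $\kappa\|\mD\vx\|_1$, so the first and most important step is to linearize it on the binary domain. Because $\mD$ is diagonal with nonnegative diagonal and $x_i\in\{0,1\}$, we have $\|\mD\vx\|_1 = \sum_i D_{ii}|x_i| = \sum_i D_{ii}x_i = \vx^T\vd$, and splitting $\vd = \hat{\vd}_R + \hat{\vd}_{\Rbar}$ rewrites the regularizer as the linear functional $\kappa\,\vx^T\hat{\vd}_R + \kappa\,\vx^T\hat{\vd}_{\Rbar}$. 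I would stress that this reduction is precisely where the binary constraint is essential: off $\{0,1\}^n$ the $\ell_1$ term is genuinely nonsmooth and the two problems diverge, which is consistent with the regularizer being the device that induces sparsity in the relaxed flow formulation.

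With the penalty linearized, the second step is straightforward expansion and collection of coefficients. Using $(\ones-\vx)^T\hat{\vd}_R = \vol(R) - \vx^T\hat{\vd}_R$, the objective of \eqref{eq:mincutlocalflowimprove_l1} becomes
\[
\|\mB\vx\|_{\mC,1} + (\kappa - \hat{\delta})\,\vx^T\hat{\vd}_R + (\hat{\delta}\theta + \kappa)\,\vx^T\hat{\vd}_{\Rbar} + \hat{\delta}\,\vol(R),
\]
where the last term is independent of $\vx$ and may be discarded. It then remains to match the two linear coefficients against those of \eqref{eq:mincutlocalflowimprove_2}, namely $-\delta$ on $\vx^T\hat{\vd}_R$ and $\delta\sigma$ on $\vx^T\hat{\vd}_{\Rbar}$.

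The heart of the argument is therefore the verification of the two scalar identities $\kappa - \hat{\delta} = -\delta$ and $\hat{\delta}\theta + \kappa = \delta\sigma$. The first is immediate from $\hat{\delta} = \delta + \kappa$. The second is the only identity that genuinely constrains the regularization strength: substituting $\hat{\delta}=\delta+\kappa$ leaves an equation that determines $\kappa$ in terms of $\delta$, $\sigma$, and $\theta = \vol(R)/\vol(\Rbar)$, and I would verify it by plugging in the recorded value of $\kappa$. I do not expect a structural obstacle — once the $\ell_1$ term is linearized the remainder is bookkeeping — but the delicate point is exactly this coefficient on $\vx^T\hat{\vd}_{\Rbar}$: the shift from $\delta$ to $\hat{\delta}$ in the base penalty contributes $\kappa\theta$ there, and this must be combined with the $\kappa\,\vx^T\hat{\vd}_{\Rbar}$ coming from the regularizer to land on $\delta\sigma$, so both contributions have to be tracked carefully. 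Once the two coefficients agree, the objectives coincide on $\{0,1\}^n$ up to the constant $\hat{\delta}\,\vol(R)$, the two problems share the same minimizers, and the lemma follows.
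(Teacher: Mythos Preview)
Your proposal is correct and is exactly the approach the paper takes: linearize $\kappa\|\mD\vx\|_1 = \kappa\,\vx^T\hat{\vd}_R + \kappa\,\vx^T\hat{\vd}_{\Rbar}$ on indicator vectors, expand using $\hat{\delta}$, and drop constant terms. One caution when you carry out the final coefficient check: with the printed value $\kappa=\tfrac{\delta\sigma-\delta}{1+\theta}$ you will get $\hat{\delta}\theta+\kappa=\delta\sigma+\delta(\theta-1)$ rather than $\delta\sigma$, so the identity only closes with $\kappa=\tfrac{\delta\sigma-\delta\theta}{1+\theta}$; this is a typo in the statement, not a flaw in your argument.
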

\begin{proof}
The proof follows from expanding~\eqref{eq:mincutlocalflowimprove_l1} using $\hat{\delta}$ and $ \kappa \| \mD \vx \|_1 = \kappa \vx^T\hat{\vd}_R + \kappa \vx^T \hat{\vd}_{\Rbar}$ for indicator vectors. And then ignoring constant terms.
\end{proof}

\aside{aside:l1-strongly-local}{We also note that this idea of adding a $1$-norm penalty is a common design pattern to create strongly local algorithms.}

Given the rich literature on solving $1$-norm regularized problems in time much smaller than the ambient problem space or with provably fewer samples~\cite{tibshirani1996-lasso,Efron-2004-lars,Candes-2006-incomplete,Donoho2008}, these results are perhaps somewhat less surprising.

In the remainder of this section, we will explain two solution techniques for the subproblem solve that will guarantee the following runtime for finding the set that minimizes the LocalFlowImprove objective. 
\begin{theorem}[Running time of LocalFlowImprove, based on \citet{veldticml2016}]
	\label{thm:localfloworiginallocaltime} %
Let $G$ be a connected, undirected graph with non-negative integer weights.  A LocalFlowImprove problem can be solved via Dinkelbach's \Cref{algo:fractionalprog} or 
	\Cref{algo:localflowimprove}. The algorithms terminate in worst-case time  
	\begin{equation*}
	\mathcal{O}({\cut(R)} \cdot \texttt{subproblem}) \text{ for Dinkelbach and } \mathcal{O}(\log\tfrac{1}{\eps} \cdot \texttt{subproblem}) \text{ for bisection}. 
	\end{equation*}
	Let $\gamma = 1 + \frac{1}{\sigma}$. For solving the subproblem, we have the following possible runtimes, %
	\begin{equation*}
	\begin{aligned}
	\text{\Cref{algo:maxflowsimplelocal}} & \quad \gamma\vol(R) \text{ calls to MaxFlow on } \gamma\vol(R) \text{ edges}&& \text{(MaxFlow based)}  \\
	\text{\Cref{algo:simplelocal}} & \quad \mathcal{O}\left(\gamma^2\vol(R)^2 \log [\gamma \vol(R)]\right) & & \text{(BlockingFlow based)}
	\end{aligned}
	\end{equation*}
\end{theorem}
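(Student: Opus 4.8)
The plan is to separate the outer iteration count from the inner subproblem solve, establishing the former by citing the general fractional-programming theory and devoting the bulk of the argument to the latter. For the outer loop, the Dinkelbach bound $\mathcal{O}(\cut(R) \cdot \texttt{subproblem})$ follows immediately from \Cref{thm:itercomplexityDinkel} applied to the LocalFlowImprove denominator $g(S) = \vol(S \cap R) - \sigma \vol(S \cap \Rbar)$ with $Q = V$: starting from $S_1 = R$, each iteration strictly decreases the integer-valued cut, so at most $\cut(R)$ subproblem solves occur. The bisection bound $\mathcal{O}(\log\tfrac{1}{\eps} \cdot \texttt{subproblem})$ follows identically from \Cref{thm:complexityfastdinkel} with $\delta_{\max} = 1$, since each iteration of \Cref{algo:localflowimprove} halves the search interval and performs exactly one subproblem solve. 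Thus both totals reduce to (number of iterations) times (cost of one subproblem), and the remaining work is to bound the subproblem cost.

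The key structural fact for the subproblem is that, although Augmented Graph~\ref{proc:aug_graph_localflowimprove} attaches the sink to every node of $\Rbar$ and hence nominally involves the entire graph, the optimal cut is confined to a small neighborhood of $R$. Concretely, \Cref{lem:lfi-size} gives $\vol(S^*) < (1 + 1/\sigma)\vol(R) = \gamma \vol(R)$ for the minimizer, and I would first show that the subproblem minimizer at each value of $\delta$ obeys the same kind of bound, since the $\delta\sigma$-weighted sink edges on $\Rbar$ penalize pulling in nodes outside $R$. The plan is therefore to work with a \emph{local graph} $L \supseteq R$, built exactly as in the MQI construction (Augmented Graph~\ref{proc:aug_graph_mqi}): all nodes not yet explored are collapsed into the sink $t$, so that $L$ has $O(\vol(L))$ edges. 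Solving MaxFlow on this collapsed graph is valid provided $L$ already contains every node that the true min-cut places on the source side; the algorithm enforces this by growing $L$ whenever the computed cut would ``leak'' across the collapsed boundary, and then re-solving.

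For the MaxFlow-based variant (\Cref{algo:maxflowsimplelocal}), I would argue that each re-solve either certifies that the current cut does not leak---in which case, by the collapse construction, it is optimal for \eqref{eq:mincutlocalflowimprove_2} on the full augmented graph---or else augments the flow and adds at least one new node to $L$. Since the final $L$ has volume $O(\gamma\vol(R))$ by the size bound above, there are at most $\gamma\vol(R)$ growth steps, each triggering a MaxFlow call on a graph with $O(\gamma\vol(R))$ edges, which is exactly the stated count. For the BlockingFlow-based variant (\Cref{algo:simplelocal}), the improvement is to avoid restarting MaxFlow from scratch at each growth step: one maintains the current flow and level graph and runs Dinic-style blocking-flow phases, interleaving local-graph growth with phase advancement. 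Each blocking-flow phase strictly increases the $s$-$t$ distance, and since $L$ has $O(\gamma\vol(R))$ nodes there are $O(\gamma\vol(R))$ phases; each phase costs $O(\gamma\vol(R))$ work to push a blocking flow plus a $\log[\gamma\vol(R)]$ factor for the search and bookkeeping used to discover and splice in new nodes, yielding $\mathcal{O}(\gamma^2\vol(R)^2 \log[\gamma\vol(R)])$ overall.

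The main obstacle---and the part that genuinely requires the specialized analysis of \citet{veldticml2016} and \citet{OZ14} rather than a black-box reduction---is establishing two invariants simultaneously: (i) \emph{correctness}, that a MaxFlow computed on the collapsed local graph $L$ returns the same min-cut as one computed on the full augmented graph once no leakage is detected; and (ii) the \emph{size bound}, that the explored region $L$ never exceeds $O(\gamma\vol(R))$ edges throughout all growth steps. Invariant (i) hinges on showing that collapsing unexplored $\Rbar$-nodes into $t$ can only raise the cut value of sets that reach outside $L$, so a non-leaking optimum of the local problem is genuinely optimal globally. Invariant (ii) is the delicate one: it requires tracking that every node pulled into $L$ lies on the source side of some intermediate cut, and then combining the $\delta\sigma$ penalty with the volume bound from \Cref{lem:lfi-size} to cap the total exploration. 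Controlling the interaction between flow augmentation and graph growth---so that neither the number of re-solves nor the explored volume runs away---is where the care is concentrated.
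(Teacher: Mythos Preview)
Your outer-loop analysis is correct and matches the paper exactly: the paper's proof simply combines \Cref{thm:itercomplexityDinkel} (for Dinkelbach) or \Cref{cor:localflowimprove} (for bisection) with the per-subproblem runtime established separately in \Cref{thm:simplelocaltime}.

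For the subproblem, your architecture---grow a local set $B$ outward from $R$, collapse the remainder into the sink, re-solve when leakage is detected---is also the paper's. But you misidentify the mechanism for the crucial size bound on the explored region. You propose to bound $\vol(R \cup B)$ by invoking \Cref{lem:lfi-size} (a bound on the optimal \emph{solution} $S^*$) and then arguing that ``every node pulled into $L$ lies on the source side of some intermediate cut.'' This conflates two different objects: the explored set $B$ consists of boundary nodes whose sink edges were saturated, which is not the same as any min-cut solution set, and there is no obvious reason those nodes must appear on the source side of a cut. The paper's argument (its Lemma~4.3 from \citet{OZ14}, restated just before \Cref{sec:blockingflow}) is far simpler and purely flow-based: each node $v$ added to $B$ had its sink edge of capacity $\delta\sigma d_v$ saturated, and the total flow out of the source is at most $\delta\vol(R)$, so $\sum_{v\in B}\delta\sigma d_v \le \delta\vol(R)$, i.e., $\vol(B)\le\vol(R)/\sigma$. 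No cut reasoning, no reference to \Cref{lem:lfi-size}, no intermediate solutions---just counting saturated sink capacity against available source capacity. What you flagged as ``the delicate part'' is in fact a two-line pigeonhole on flow.

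One minor correction: the $\log[\gamma\vol(R)]$ factor in the BlockingFlow bound comes from the link-cut tree data structure of \citet{DD83} used to compute a single blocking flow in $O(m\log n)$ time, not from ``search and bookkeeping used to discover and splice in new nodes.''
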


\begin{proof}
	This result can be obtained by combining the iteration complexity of Dinkelbach's \Cref{thm:itercomplexityDinkel} or LocalFlowImprove from \Cref{cor:localflowimprove} with either the running time of the MaxFlow-based SimpleLocal subsolver \Cref{algo:maxflowsimplelocal} or the running time of the blocking flow algorithm, \Cref{thm:simplelocaltime}. 
\end{proof}

See \Cref{subsec:approxDinic} for details on faster algorithms from~\citet{OZ14}.

\subsection{Strongly Local Constructions of the Augmented Graph}\label{subsubsec:localgraph}
Before we present algorithms for the LocalFlowImprove subproblem, we discuss a crucial result from~\citet{OZ14} that reduces the Augmented Graph~\ref{proc:aug_graph_localflowimprove} for the MaxFlow problem to a reduced modified graph that includes only nodes relevant to the optimal solution. The crux of this section is an appreciation of the following statement: 
\begin{quote}\itshape
	An unsaturated edge in a flow is an edge where the flow value is strictly less than the capacity. 
	If, in a solution of MaxFlow on the augmented graph, there is an
	unsaturated edge from a node in $\Rbar$ to $t$, then that node is
	not in the solution MinCut set.
\end{quote}
This result is a fairly simple structural statement about how we might \emph{verify} a solution to such a MaxFlow problem. We will illustrate it first with a simple example where the optimal solution set is contained within $R$, akin to MQI but without that explicit constraint, and then we move to the more general case, which will involve introducing the idea of a \emph{bottleneck set} $B$.  Throughout these discussions, we will use $\hat{G}$ to denote the full $s,t$ augmented graph construction for a LocalFlowImprove subproblem with $R, \delta, \sigma$ fixed. 

Consider what happens in solving a MaxFlow on $\hat{G}$ where $\sigma > \vol(R)$. In this scenario, LocalFlowImprove will always return $S \subseteq R$ and this will be true on subproblem solve as well. (See discussion in \Cref{sec:improvement-objectives}). We will show how we can locally certify a solution on $\hat{G}$---without even creating the entire augmented graph. We first note the structure of the $\hat{G}$ partitioned into the following sets: $R$, $\partial R$ and everything else, i.e.,~$\Rbar - \partial R$. This results in a view of the subproblem as follows: 
\[ \begin{array}{c@{}c@{}c}
\text{full subproblem } \hat{G} & \text{or} & \text{edge subset.}\\
\text{\includegraphics[width=0.45\linewidth]{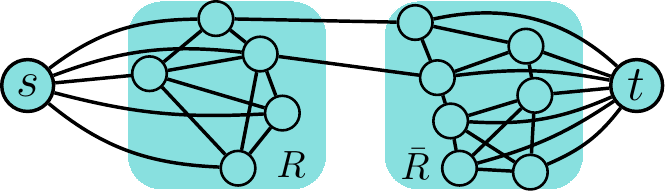}} && \text{\includegraphics[width=0.45\linewidth]{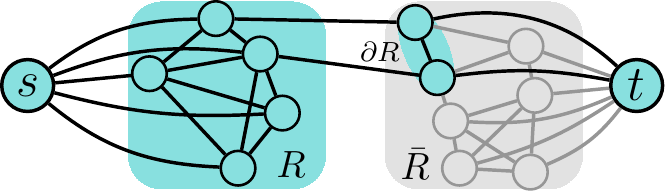}} 
\end{array} \] 
Suppose that we \emph{delete} all the gray edges and solve the resulting MaxFlow problem (or we just solve the problem for the teal-colored subset). This will result in a MaxFlow problem on a subset of the edges of the augmented graph---and one that has size bounded by $\vol(R)$. 
In any solution of the resulting MaxFlow problem, we have that all of the edges from $\partial R$ to $t$ will be unsaturated, meaning that the flow along those edges will be strictly smaller than the capacity. This is straightforward to see because the total flow out of the source is $\delta \vol(R)$ and \emph{each edge} from $\partial R$ to $t$ has weight $d_i \delta \sigma > d_i \delta \vol(R)$. Consequently the nodes in $\partial R$ will always be on the sink side of the MinCut solution. 

This ability of unsaturated edges to provide a local guarantee that we have found a solution arises from two aspects. 
First, we have a strict edge-subset of the true augmented graph, so any flow value we compute will be a lower-bound on the max flow  objective function on the entire graph. Second, we have not removed any edges from the source. Consequently we can locally certify this solution because none of the edges leading to $t$ are saturated, so the bottleneck must have been outside of the boundary of $R$. Put another way, since the edges from $\partial R$ to $t$ are unsaturated, there is no way the omitted gray nodes and edges could have helped get more flow from the source to the sink.

Now, suppose that $\sigma$ was smaller such that at least one node in the boundary of $R$ has a saturated edge to $t$. Then we lose the proof of optimality because it's possible those missing gray nodes and edges could have been used to increase the flow. 
Suppose, however, we add those \emph{bottleneck} nodes in $\partial R$ to a set $B$ and solve for the MaxFlow where $B$ is: 
\[ \begin{array}{c@{}c@{}c}
\text{$B$ is one node} & \text{or} & \text{$B$ is two nodes.}\\
\text{\includegraphics[width=0.45\linewidth]{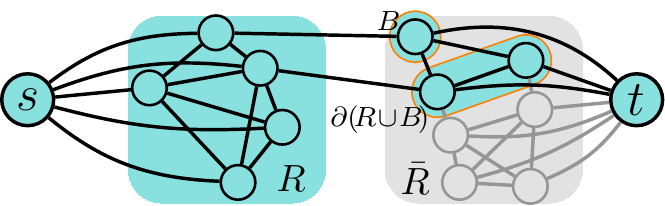}} && \text{\includegraphics[width=0.45\linewidth]{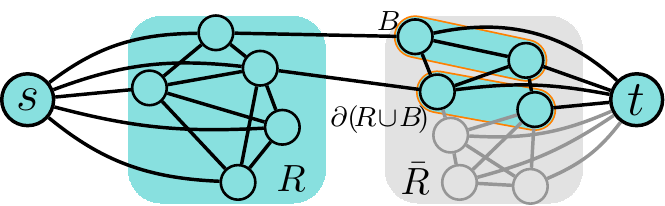}} 
\end{array} \] 
That is, 
the subgraph of $\hat{G}$ with all edges among $s, t$, $R$, $B$, $\partial(R \cup B)$. As long as the bottleneck is not in the boundary $\partial(R \cup B)$, then we have an optimal solution. 
The best way to  think about this is to look at the missing edges in the picture. If all the edges to $t$ from the boundary $\partial (R \cup B)$ are unsaturated, then we must have a solution, as the other edges could not have increased the flow. 

Of course, there may still be saturated edges in the boundary, but this suggests a simple algorithm. To state it, let $G_{R,\sigma,\delta}(B)$ be the $s$,$t$ MaxFlow problem with
\begin{itemize}
	\item all vertices $\{s,t\}\cup R \cup B \cup \partial (R\cup B)$,
	\item all edges from the source $s$ to nodes in set $R$,
	\item all edges with nodes in the set $B \cup \partial (R\cup B)$ to the sink node $t$,
	\item all edges from nodes in $R\cup B$ to nodes in $V$.
\end{itemize}
We iteratively grow $B$ by nodes whose edges to $t$ are saturated in a MaxFlow solve on $G_{R,\sigma,\delta}(B)$, starting with $B$ empty. This procedure is described in \Cref{algo:maxflowsimplelocal}. It uses the idea of solving MaxFlow problems consistently with previous iterations, to which we will return shortly.  
What this means is that among multiple optimal solutions, we choose the one that would saturate edges to $B$ in the same way as previous solutions. There is a simple way to enforce this by using the residual graph, and this really just means that once a node goes into $B$, it stays in $B$.  
\begin{algorithm}[t]
	\caption{MaxFlowSimpleLocal~\citep{veldticml2016}}\label{algo:maxflowsimplelocal}
	\begin{algorithmic}[1]
		\STATE Set $B :=\emptyset$
		\WHILE{the following procedure has not yet returned}
		\STATE Solve the MaxFlow problem on $G_{R,\sigma,\delta}(B)$ consistent with previous iterations.
		\STATE Let $J$ denote the vertices in $\partial (R\cup B)$ whose saturated edges to the sink $t$.
		\STATE \textbf{if} $|J| = 0$ \textbf{then} return the MinCut set $S$ as the solution
		\STATE \textbf{else}  $B \leftarrow B \cup J$ and repeat
		\ENDWHILE
	\end{algorithmic}
\end{algorithm}

\emph{Locally finding the set $B$ is, in a nutshell, the idea behind strongly local algorithms for LocalFlowImprove.} 
These strongly local algorithms construct the set $B$ for each subproblem solve by doing exactly what we describe here, along with a few small ideas to make them go fast. The algorithms to accomplish this will always produce a set $B$ whose size is bounded in terms of $\sigma$ and $\vol(R)$, as guaranteed by the following~result.

\begin{lemma}[Lemma 4.3~\citep{OZ14}]
We have $\vol(B) \le \frac{1}{\sigma} \vol(R)$ for every iteration for the iteratively growing procedure in \Cref{algo:maxflowsimplelocal} . 
\end{lemma}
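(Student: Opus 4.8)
The plan is to prove the bound by a standard max-flow value estimate that plays the total capacity leaving the source against the capacity saturated at the sink by the nodes of $B$. In $G_{R,\sigma,\delta}(B)$ the only edges incident to the source $s$ are those to the nodes of $R$, each with capacity $d_i\delta$. Hence the cut separating $\{s\}$ from the rest of the graph has value $\sum_{i\in R} d_i\delta = \delta\vol(R)$, and so the value $F$ of the maximum flow satisfies $F \le \delta\vol(R)$.

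Next I would account for the flow arriving at the sink $t$. Every node placed in $B$ was added precisely because its edge to $t$ was saturated, and each such node $i \in B \subseteq \Rbar$ has an edge to $t$ of capacity $d_i\delta\sigma$ (the sink-edge weights in Augmented Graph~\ref{proc:aug_graph_localflowimprove} with $\sigma$ in place of $\theta$). Thus the flow delivered to $t$ along the edges emanating from $B$ alone equals $\sum_{i\in B} d_i\delta\sigma = \delta\sigma\,\vol(B)$. Since this is only part of the total flow into the sink, and since the flow into $t$ equals the flow value $F$, we obtain $\delta\sigma\,\vol(B) \le F \le \delta\vol(R)$. Dividing through by $\delta\sigma$ (legitimate because $\delta>0$ and $\sigma \ge \vol(R)/\vol(\Rbar) > 0$) yields $\vol(B) \le \tfrac{1}{\sigma}\vol(R)$, which is the claim.

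The step that requires care---and which I expect to be the main obstacle---is the assertion that at the iteration under consideration every node of $B$ still has a saturated edge to $t$. The set $B$ grows monotonically and is re-used across successive MaxFlow solves on progressively larger graphs $G_{R,\sigma,\delta}(B)$; a priori, enlarging the graph and recomputing a maximum flow from scratch could relieve a previously saturated sink edge and break the counting argument. This is exactly where the phrase ``consistent with previous iterations'' in Step $3$ of \Cref{algo:maxflowsimplelocal} does its work: by continuing from the residual graph of the previous solve rather than recomputing afresh, the flow value is nondecreasing and the flow on each sink edge of a node already in $B$ never decreases, so once such an edge is saturated it remains saturated. I would state this monotonicity explicitly (it is the content of the corresponding construction in \citet{OZ14}) and then the volume inequality follows immediately from the two-sided flow-value estimate above.
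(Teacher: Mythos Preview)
Your proof is correct and follows essentially the same approach as the paper: bound the max-flow value above by the source capacity $\delta\vol(R)$ and below by the saturated sink capacity $\delta\sigma\vol(B)$, then divide. The paper's proof is a two-sentence sketch that glosses over exactly the consistency issue you flag; your explicit treatment of why ``consistent with previous iterations'' guarantees that every node already in $B$ retains a saturated sink edge is a genuine improvement in rigor over the paper's version.
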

\begin{proof} The proof follows because each time a node $v$ is added to $B$, we know there was a flow that saturated the edge with weight $\sigma \delta d_v$. Since the total flow from $s$ is $\delta \vol(R)$, this implies that if $\vol(B) \ge \vol(R)/\sigma$, then we have expanded enough edges to $t$ to guarantee that the flow can be fully realized with no bottlenecks. 
\end{proof}

\subsection{Blocking Flow}
\label{sec:blockingflow}

In each iteration of \Cref{algo:maxflowsimplelocal}, we need to identify the set $J$. We motivated this set with a \emph{maximum flow} on the graph $G_{R,\sigma,\delta}(B)$. 
It turns out that we do not actually need to solve a MaxFlow problem. Instead, the concept of a \emph{blocking flow} suffices. The difference is subtle but important. 
A blocking flow is a flow such that every path from the source to the sink contains at least one saturated edge. For a demonstration of a blocking flow, see Figure \ref{fig:blocking}. See also the helpful descriptions in \citet[Chapter 4]{Williamson2019}. By this definition, a maximum flow is always a blocking flow because the source and sink are disconnected in the residual graph. 
\begin{figure}[t]
	\centering
	\includegraphics[width=0.5\linewidth]{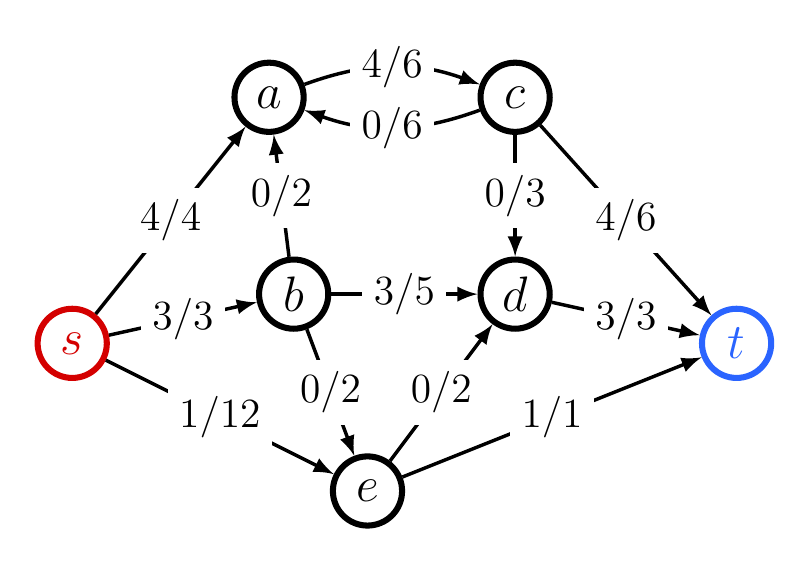}
	\caption{We demonstrate a flow that starts from the source node $s$ and ends to the sink node $t$. This flow includes the paths ($s$,$b$,$d$,$t$), ($s$,$a$,$c$,$t$) and ($s$,$e$,$t$). Note that this flow is a blocking flow since every path from $s$ to $t$ includes at least one saturated edge. It is not a maximum flow because there is a path from $t$ to $s$ with reversed edges.}
	\label{fig:blocking}
\end{figure}

The relevance of blocking flows is that after finding a blocking flow and looking at the residual graph, then the distance from the source and sink increases by one. This is essentially what we do with the set $B$ and $J$. If $B$ is not yet optimal and we find nodes $J$ in \Cref{algo:maxflowsimplelocal} then the \emph{length} of the next path from the source to the sink found to become optimal must increase by one. Blocking flow algorithms use this property, along with other small properties of the residual graph, to accelerate flow computations. We defer additional details to \citet{Williamson2019} in the interest of space. 

The best algorithm for computing blocking flows has been suggested in~\citet{DD83}. There, the authors proposed a link-cut tree data structure that is used to develop a strongly polynomial time algorithm for weighted graphs that computes blocking flows in $\mathcal{O}(m\log n)$ time, where $m$ is the number of edges in the given graph. Blocking flows are a major tool and subroutine inside other solvers for MaxFlow problems. For example, Dinic's algorithm~\cite{Dinitz-1970-max-flow}, simply runs successive blocking flow computations on the residual graph to compute a maximum flow (see \citet[Algorithm 4.1]{Williamson2019} as well). This iteratively finds the maximum flow \emph{up to a distance $d$}. Here, they serve the purpose of giving us a lower bound on the maximum flow that could saturate some edges of the graph.

\subsection{The SimpleLocal subsolver}\label{subsubsec:simplelocal}

For the SimpleLocal subsolver, we will use the concept of local bottleneck graph $G_R(B)$ that was introduced in \Cref{subsubsec:localgraph}. (We omit $\sigma, \delta$ for simplicity.) The only other idea involved is that we can \emph{iteratively} update the entire flow itself using the residual graph. So, rather than solving MaxFlow at each step, we compute a blocking flow to find new elements $J$ and update the residual graph. This ensures that the flow between iterations is consistent in the fashion we mentioned in \Cref{algo:maxflowsimplelocal}. The algorithm is presented in \Cref{algo:simplelocal}. SimpleLocal is exactly Dinic's algorithm but specialized for our LocalFlowImprove~problem. 
\begin{algorithm}
	\caption{SimpleLocal~\citep{veldticml2016}}\label{algo:simplelocal}
	\begin{algorithmic}[1]
		\STATE Initialize the flow variables $f$ to zero and $B :=\emptyset$
		\WHILE{True}
		\STATE Compute a blocking flow $\hat{f}$ for the residual graph of $G(B)$ with the flow $f$, if flow is zero, then stop. 
		\STATE $f \leftarrow f + \hat{f}$
		\STATE Let $J$ denote the vertices in $\partial (R\cup B)$ whose edges to the sink node $t$ get saturated using the new flow variables $f$.
		\STATE $B \leftarrow B\cup J$
		\ENDWHILE
		\STATE The current flow variables $f$ are optimal for the MaxFlow in $G(B)$, return a MinCut set $S$ on the source side $s$.
	\end{algorithmic}
\end{algorithm}

\begin{theorem}[Iteration complexity and running time for SimpleLocal]
	\label{thm:simplelocaltime}
Let $G$ be an undirected, connected graph with non-negative weights. 
	SimpleLocal requires $(1+\frac{1}{\sigma}) \vol(R)$ iterations to converge to the optimal solution of the MaxFlow subproblem and $\mathcal{O}(\vol(R)^2 (1+\frac{1}{\sigma})^2 \log [(1+\frac{1}{\sigma})\vol(R))]$ running time.
\end{theorem}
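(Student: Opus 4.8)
The plan is to regard SimpleLocal (\Cref{algo:simplelocal}) as Dinic's algorithm executed on the growing local bottleneck graph $G(B)$, and to control both the number of blocking-flow phases and the per-phase cost through the volume bound on $B$. The correctness and termination at the optimal flow are inherited from the structural certification in \Cref{subsubsec:localgraph} (the loop halts precisely when no boundary edge to $t$ is saturated), so the work here is purely one of accounting. First I would pin down the size of the ambient graph on which all the work happens. By the bound $\vol(B) \le \vol(R)/\sigma$ established above for the iteratively growing procedure, the final local graph satisfies $\vol(R \cup B) \le \left(1 + \tfrac{1}{\sigma}\right)\vol(R)$. Since every vertex of $R \cup B$ has degree at least one, and every vertex of $\partial(R\cup B)$ is an endpoint of an edge leaving $R \cup B$, both the number of vertices and the number of edges of $G(B)$ are $\mathcal{O}(M)$, where I abbreviate $M := \left(1 + \tfrac{1}{\sigma}\right)\vol(R)$.

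Next I would bound the iteration count. Each pass through the while loop computes one blocking flow on the residual graph of the current $G(B)$ and augments $f$ by it. The classical Dinic analysis gives that a blocking-flow phase strictly increases the shortest residual $s$-$t$ distance, and on a fixed graph this distance is bounded by the vertex count, so $\mathcal{O}(M)$ phases would suffice. The one genuinely new ingredient, and the step I expect to be the main obstacle, is that the graph is not fixed: every iteration may enlarge $B$, adding new boundary vertices together with their edges to the sink. I would argue that this enlargement never decreases the residual $s$-$t$ distance. The reason is structural: a node $v$ enters $B$ only because its edge to $t$ has just been saturated, so in the residual graph the direct link $v \to t$ is already absent; the edges newly created by placing $v$ in $B$ reach $t$ only through a freshly exposed neighbor $w \in \partial(R \cup B)$, that is, via a strictly longer walk $s \rightsquigarrow v \to w \to t$. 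Hence no shortcut to the sink is introduced, the level function is monotone non-decreasing over the entire run, and the number of phases remains bounded by the largest attainable $s$-$t$ distance, namely $\mathcal{O}(M) = \left(1 + \tfrac{1}{\sigma}\right)\vol(R)$, which is the claimed iteration count.

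Finally I would assemble the running time. By the result of~\citet{DD83}, a blocking flow on a graph with $m$ edges and $n$ vertices is computable in $\mathcal{O}(m \log n)$ time using a link-cut-tree data structure, which with $m, n = \mathcal{O}(M)$ is $\mathcal{O}(M \log M)$ per phase. The auxiliary per-iteration work, namely scanning the boundary edges to form $J$ and adjoining the edges incident to the newly added $B$-vertices, is $\mathcal{O}(M)$ and is dominated, and the incremental construction of $G(B)$ over the whole run also costs only $\mathcal{O}(M)$. Multiplying the $\mathcal{O}(M)$ phases by the $\mathcal{O}(M\log M)$ cost of each yields
\[
\mathcal{O}\!\left(M^2 \log M\right) = \mathcal{O}\!\left(\vol(R)^2\left(1 + \tfrac{1}{\sigma}\right)^2 \log\!\left[\left(1 + \tfrac{1}{\sigma}\right)\vol(R)\right]\right),
\]
as stated. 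Everything reduces to a faithful accounting of Dinic's algorithm on a graph of volume $M$; the only delicate point is the monotonicity-of-distances argument in the growing-graph setting, and once it is secured the remaining estimates are routine.
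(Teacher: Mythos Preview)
Your proposal is correct and follows essentially the same route as the paper: view SimpleLocal as Dinic's algorithm on the local graph of volume $M=(1+1/\sigma)\vol(R)$, bound the number of phases by $M$ via residual-distance monotonicity, and bound each phase by the $\mathcal{O}(M\log M)$ link-cut-tree blocking-flow cost. The paper's own proof simply cites \citet{OZ14} (Proposition~A.1, Lemmas~4.2--4.3) for these facts, whereas you spell out the monotonicity argument under graph growth; your sketch of why newly exposed edges cannot shorten the residual $s$--$t$ distance is the substantive content of those cited results.
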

\begin{proof}
	Dinic's algorithm converges in at most $(1+\frac{1}{\sigma}) \vol(R)$ iterations (Proposition A.1 in \citet{OZ14} and Lemma 4.3 of~\citet{OZ14}).
	Each iteration requires a blocking flow operation which costs $\mathcal{O}((1+\frac{1}{\sigma}) \vol(R) \log [(1+\frac{1}{\sigma})\vol(R)])$ time (Lemma 4.2 of~\citet{OZ14}). Hence, SimpleLocal requires $\mathcal{O}(\vol(R)^2 (1+\frac{1}{\sigma})^2 \log [(1+\frac{1}{\sigma})\vol(R)])$ time.
\end{proof}

In~\cite{veldticml2016}, SimpleLocal is described using MaxFlow to compute the blocking flows in Step 3 of \Cref{algo:simplelocal}. We also used Dinkelbach's algorithm instead of binary search. Otherwise, however, the two algorithms are identical. In practice, both of those modifications result in faster computations, although they are slower in theory.

\subsection{More sophisticated subproblem solvers} \label{subsec:approxDinic}
There are more advanced solvers for the LocalFlowImprove algorithm possible in theory. For instance, \citet{OZ14} also presents a solver based on the Goldberg-Rao push relabel method~\cite{Goldberg-Rao} that will yield a strongly local algorithm. Finally, note that the goal in using these algorithms is often to minimize the \emph{conductance} of a set $S$ instead of the relative conductance $\phi_{R,\sigma}(S)$, in which case relative conductance is just a computationally useful proxy. In the analysis of \citet{OZ14}, they show that running algorithm \Cref{algo:simplelocal} for a bounded number of iterations will either return a set $S$ that minimizes the relative conductance exactly, or find an easy-to-identify bottleneck set $S'$ that has conductance $\phi(S') \le 2 \delta$. Using this second property, they are able to relate the runtime of the algorithm to the conductance of the set returned for a slightly different type of guarantee than exactly solving the LocalFlowImprove subproblem \cite[Theorem 1a]{OZ14}.

	\addcontentsline{toc}{part}{Part III. Empirical Performance and Conclusion}
	\section*{\large Part III. Empirical Performance and Conclusion}
\section{Empirical Evaluation}
\label{sec:experiments}

In this section, we provide a detailed empirical evaluation of the cluster improvement algorithms we have been discussing.
The focus of this evaluation is on illustrating how the methods behave and how they might be incorporated into a wide range of use cases.
The specific results we show include the~following.

\begin{enumerate}[leftmargin=*]
\item
\textbf{Reducing conductance.}
(\Cref{sec:small-conductance}.)
Flow-based cluster improvement algorithms are effective at finding sets of smaller conductance near the reference set---as the theory promises.
This is illustrated with examples from a road network, see \Cref{fig:usroads-results} and \Cref{tab:usroads}, where the algorithm finds geographic features to make the conductance small, as well as on a data-defined graph from astronomy, see \Cref{fig:astro} and \Cref{fig:cluster-improvement}.
We also illustrate empirically \Cref{thm:conductance}, which states that FlowImprove and LocalFlowImprove always return smaller conductance sets than MQI.
In our experiments, these improvement algorithms commonly return sets of nodes in which the conductance is cut in half, occasionally reducing it by up to one order of magnitude or more.

\item
\textbf{Growing and shrinking.}
(\Cref{sec:target-set}.)
Flow-based improvement algorithms are useful for the target set recovery task (basically, the task of finding a desired set of vertices in a graph, when given a nearby reference set of nodes), even when the conductance of the input is not especially small.
In particular, we show how these methods can grow and shrink input sets to identify these hidden target sets when seeded nearby, by improving precision (the fraction of correct results) or recall (the fraction of all possible results).
In this case, we use a weighted graph constructed from images, where the goal is to identify an object inside the image, see \Cref{fig:grow-shrink-image}.
We also use a social network, where the goal is to identify students with a specific class year or major within the Johns Hopkins school community, see \Cref{fig:JH}.

\item
\textbf{Semi-supervised learning.}
(\Cref{sec:semi-supervised}.)
Going beyond simple unsupervised clustering methods, semi-supervised learning is the task of predicting the labels of nodes in a graph, when the nearby nodes share the same label and when given a set of true labels.
Flow-based improvement algorithms accurately refine large collections of labeled data in semi-supervised learning experiments.
Our experiments show that flow algorithms are effective for this task, moreso when one is given large collections of true labels, and somewhat less so when one is given only a small number of true labels, see \Cref{fig:ssl-all}.

\item
\textbf{Scalable implementations.}
(\Cref{subsec:findingThousandOfClusters}.)
Our software implementations of these algorithms can be used to find thousands of clusters in a given graph in parallel.
These computations scale to large graphs, see Table~\ref{tab:runtime}. The implementations we have use Dinkelbach's method and Dinic's algorithm for exact solutions of the MaxFlow problems.

\item
\textbf{Locally-biased flow-based coordinates.}
(\Cref{subsec:flowlocalstructure}.)
We can use our flow improvement algorithms to define locally-biased coordinates or embeddings, in a manner analogous to how global spectral methods are often used to define global coordinates or embeddings for data, see \Cref{fig:usroads-local-embed} and \Cref{fig:astro-embed}.  This involves a novel flow-based coordinate system that will highlight subtle hidden structure in data that is distinctly different from what is found by spectral methods, as illustrated on road networks and in the spectra of galaxies.
\end{enumerate}

\aside{aside:largeset}{\textbf{Large set results.} The set found by \FI and \LFI may not have $\vol(S) \le \vol(\Sbar)$. For instance, the \FI result in Figure~\ref{fig:usroads-results_d} has $\vol(S) > \vol(\Sbar)$. In our computer codes, we always give $\vol(S) \le \vol(\Sbar)$ and flip $S$ and $\Sbar$ to force this property. Figure~\ref{fig:usroads-results_d} reverses this flip to show the relationship with $R$.
}
To simplify and shorten the captions, throughout the remainder of this section, we will use the abbreviations \MQI, \FI (FlowImprove), and \LFI (LocalFlowImprove).
Because \LFI depends on a parameter $\delta$, we will simply write \LFI[\delta], e.g.,~\LFI[1.0].
The formal interpretation of this parameter is LocalFlowImprove($R, \sigma=\vol(R)/\vol(\Rbar) + \delta$), where $\delta$ is a non-negative real number.
Recall that \LFI[0.0] is equivalent to \FI and \LFI[\infty] is equivalent to MQI. 

\subsection{Flow-based cluster improvement algorithms reduce conductance}
\label{sec:small-conductance}

The first result we wish to illustrate is that the algorithms \MQI, \FI, and \LFI reduce the conductance of the input reference set, as dictated by our theory.
For this purpose, we are going to study the US highway network as a graph (see~\Cref{fig:usroads}).
Edges in this network represent nationally funded highways, and nodes represent intersections.
There are ferry routes included, and there exist other major roads that are not in this data.
This network has substantial local and small-scale structure that makes it a useful example. It has a natural large-scale geometry that makes it easy to understand visually. And there are large (in terms of number of nodes) good (in terms of conductance) partitions of this network.

\begin{figure}[t]
	\centering
	\includegraphics[width=0.9\linewidth]{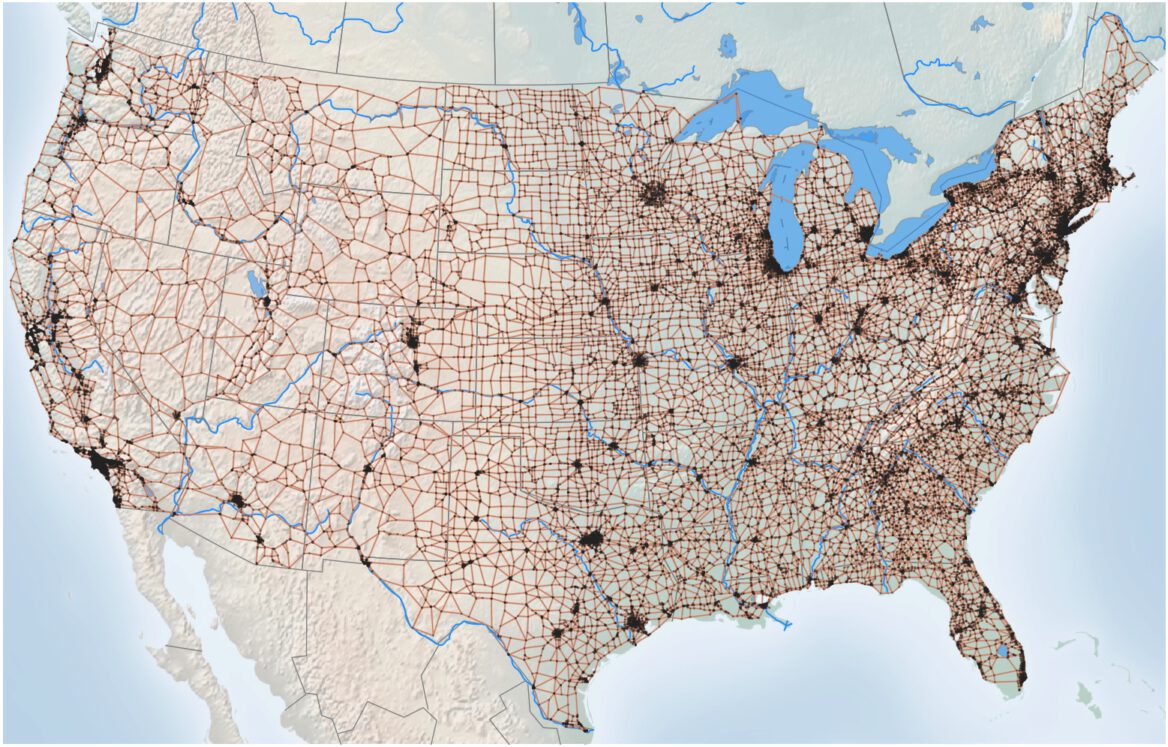}

	\caption{The US National Highway Network as a simplified graph has $51,144$ nodes and $86,397$ undirected edges. Edges represent roads and are shown as orange lines, nodes are places where roads meet and are shown as black dots. This display highlights major topographical features as well as major rivers. Mountain ranges, rivers, and lakes create interesting fine-scale features for our flow algorithms to find. There are also dense local regions around cities akin to small well-connected  pieces of social networks.  }
	\label{fig:usroads}
\end{figure}

We create a variety of reference sets for our flow improvement methods to refine. In \Cref{fig:usroads-results}, nodes in black show a set and purple edges with white interior show the~cut.
\begin{itemize}
\item We start with two partitions, one horizontal and one vertical (\Cref{fig:usroads-results_a,fig:usroads-results_c}) of the network. These are simple-to-create sets based on using latitude and longitude, and they 
 roughly bisect the country into two pieces.
They are also inherently good conductance sets.  This is due to the structure of roads on the surface of the Earth: they are tied to the two-dimensional geometric structure, and thus they have good isoperimetric or surface-to-volume properties

\item Next, we consider a large region in the western US centered on Colorado (\Cref{fig:usroads-results_e}).
Again, this set is shown in black, and the purple edges (with white interior) highlight the cut. The rest of the graph shown in orange.

\item We further consider using the vertices visited in 200 random walks of length 60 around the capital of Virginia (\Cref{fig:usroads-results_g}).
This example will show our ability to refine a set which, due to the noise in the random walks, is of lower~quality.

\item Finally, we consider the result of the \metis program for bisection, which represents our ability to refine a set that is already high quality.
This is not shown because it looks visually indistinguishable from \Cref{fig:usroads-results_d}, although the cut and volume are slightly different, as discussed below and in \Cref{tab:usroads}.
\end{itemize}

The conductance improvement results from a number of our algorithms are shown in \Cref{tab:usroads} and \Cref{fig:usroads-results}.
This table shows additional results that are not present in the figures.
We make several observations.
First, as given by \MQI, the optimal subset of the horizontal split of \Cref{fig:usroads-results_a} identifies a region in the lower US, specifically, the southern California region around Los Angeles, San Diego, and Santa Barbara (\Cref{fig:usroads-results_b}).
The southern California area is separated by mountains and deserts that are spanned by just 12 national highways to connect to the rest of the country.
Second, the result of \FI on the vertical split of \Cref{fig:usroads-results_c} of the US traces the Mississippi, Ohio, and Wabash rivers up to Lake Michigan (\Cref{fig:usroads-results}), splitting just 42 highways and ferry routes.
Note that although we start with the reference on the \emph{east coast}, the set returned by the algorithm is entirely disjoint.
This is because optimizing the FlowImprove objective expanded the set to be larger than half the volume, which caused the returned set to flip to the other coast.
Third, the region around Colorado in \Cref{fig:usroads-results_e} is refined by \LFI[1.0] to include Dallas (which was split in the initial set) and follows the Missouri river up into Montana.
Finally, a set of random walks around the Virginia capital visit much of the interior region of the state, albeit in a noisy fashion.
Using \LFI[1.0] (\Cref{fig:usroads-results_h}) refines the edges of this region to reduce conductance.
Reducing $\delta$ to 0.1 and using \LFI[0.1] (\Cref{fig:usroads-results_i}) results in a bigger set that includes the nearby city (and dense region) of Norfolk.
Note that, for the high quality \metis partition, all of our algorithms return exactly the same result.
(Again, these are not shown because the results are indistinguishable.)
We also note that this set is the overall smallest conductance result in the entire table because the volume is slightly larger than vertical split experiments.

\begin{table}
	\footnotesize
	\begin{tabularx}{\linewidth}{p{50pt}X@{}X@{}X@{}XlX@{}X@{}X@{}X@{}X}
		\toprule
		Input &  &  &  &  & Result &  &  &  &  & \\
		\cmidrule(r){1-5}
		\cmidrule(r){6-11}
		& cut & vol & size & cond. &
		Alg.	& cut & vol & size & cond. & ratio \\
		\midrule
		Horiz. & 233 & 85335 & 25054 & 0.0027 & \MQI & 12 & 9852 & 2763 & 0.0012 & 225\% \\
		&     &       &       &        & \FI & 29 & 35189 & 10471 & 0.0008 & 330\% \\
		\midrule
		Vert. &  131 & 72780 & 21552 & 0.0018 & \MQI & 29 & 35195 & 10473 & 0.0008 & 220\% \\
		&      &       &       &        & \FI & 42 & 84582 & 25030 & 0.0005 & 365\% \\
		\midrule
		Colorado
		& 195 & 23377 & 6982 & 0.0083 & \MQI & 9 & 1799 & 506 & 0.0050 & 167\%  	\\
		region   &     &       &      &        & \LFI[1.0] & 97 & 23617 & 7037 & 0.0041 & 203\% \\
		&     &       &      &        & \LFI[0.1] & 101 & 26613 & 7941 & 0.0038 & 220\% \\
		&     &       &      &        & \FI & 42 & 84204 & 24916 & 0.0005 & 1672\% \\
		\midrule
		Virginia
		& 112 & 1344 & 393 & 0.0833 & \LFI[1.5] & 23 & 1067 & 312 & 0.022 & 386\% \\
		\rlap{random walks} &     &      &     &       & \LFI[1.0] & 24 & 1212 & 357 & 0.0198 & 420\% \\
		&     &      &     &       & \LFI[0.1] & 26 & 1938 & 572 & 0.0134 & 621\% \\
		\midrule
		\metis & 56 & 85926 & 25422 & 0.0007 & \MQI & 42 & 84594 & 25034 & 0.0005 & 131\% \\
		&    &         &     &        & \LFI[0.1] & 42 & 84594 & 25034 & 0.0005 & 131\% \\
		&    &         &     &        & \FI & 42 & 84594 & 25034 & 0.0005 & 131\% \\
		\bottomrule
	\end{tabularx}
	\caption{The results of applying our algorithms to input sets of various quality for the graph of \Cref{fig:usroads-results}. A few of the sets and cuts are illustrated in \Cref{fig:usroads-results}. All of the methods reduce the conductance score considerably, with improvement ratios from 131\% to 621\%. The smallest improvements happen when the input is high-quality, such as the output from \textsc{metis}.
                }
	\label{tab:usroads}
\end{table}

\begin{figure}[p]
	\centering
	\siamwidth{
	\subfigtopskip=0pt
	\subfigcapskip=-5pt
	\subfigcaptopadj=0pt
	\subfigbottomskip=0pt
	\subfigcapmargin=0pt
	\abovecaptionskip=5pt
	\subfigure[A simple horizontal split $\phi=0.002$\label{fig:usroads-results_a} $\longrightarrow$]%
	{\includegraphics[width=0.5\linewidth]{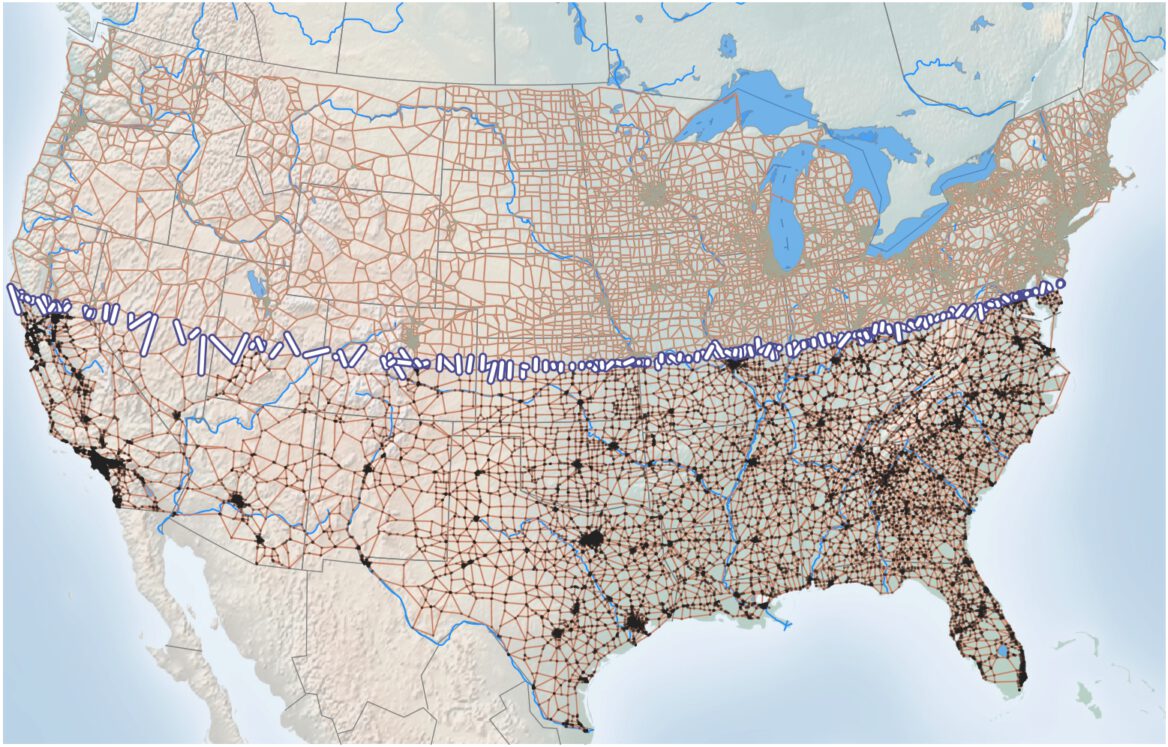}}%
	\subfigure[\MQI finds Southern California $\phi=0.0012$ \label{fig:usroads-results_b}]{\topinset{\includegraphics[width=0.37\linewidth]{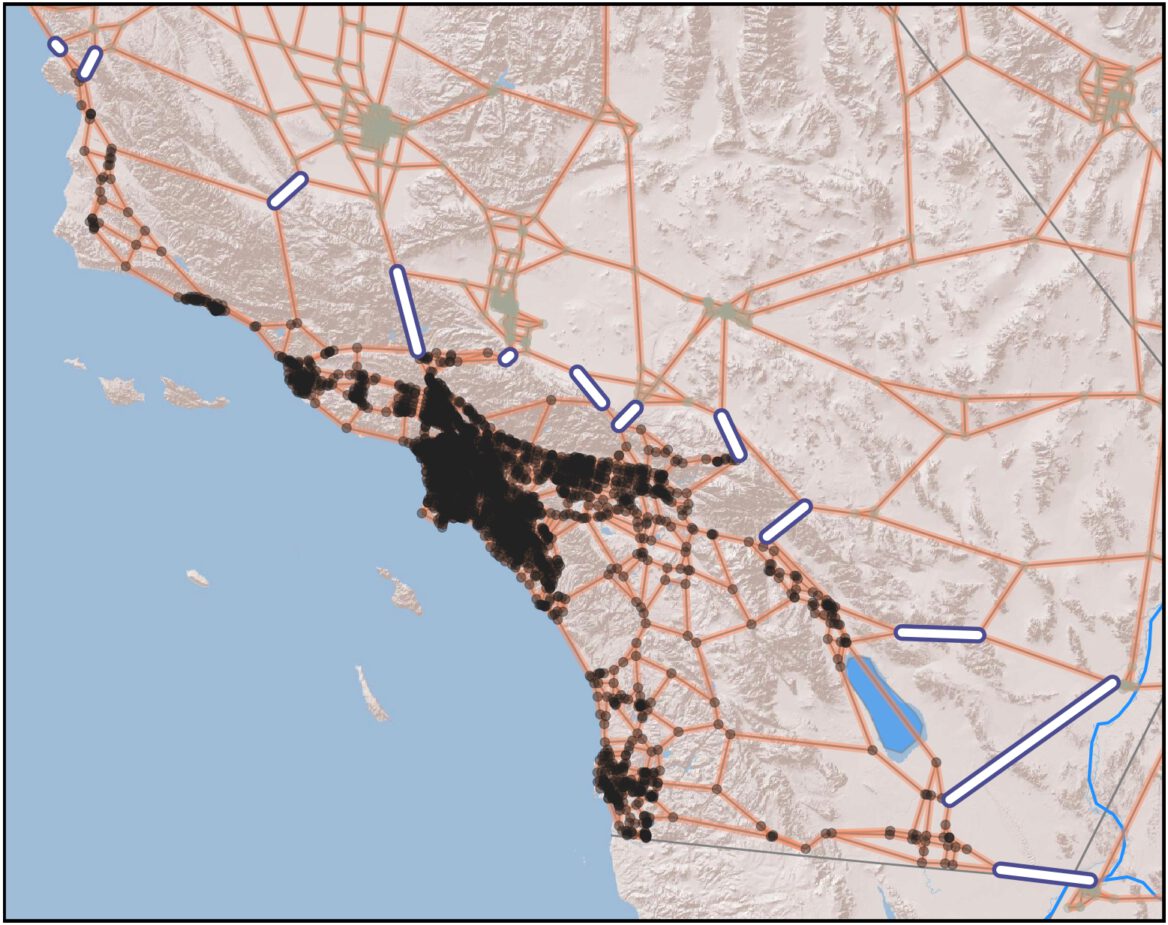}}{\includegraphics[width=0.5\linewidth]{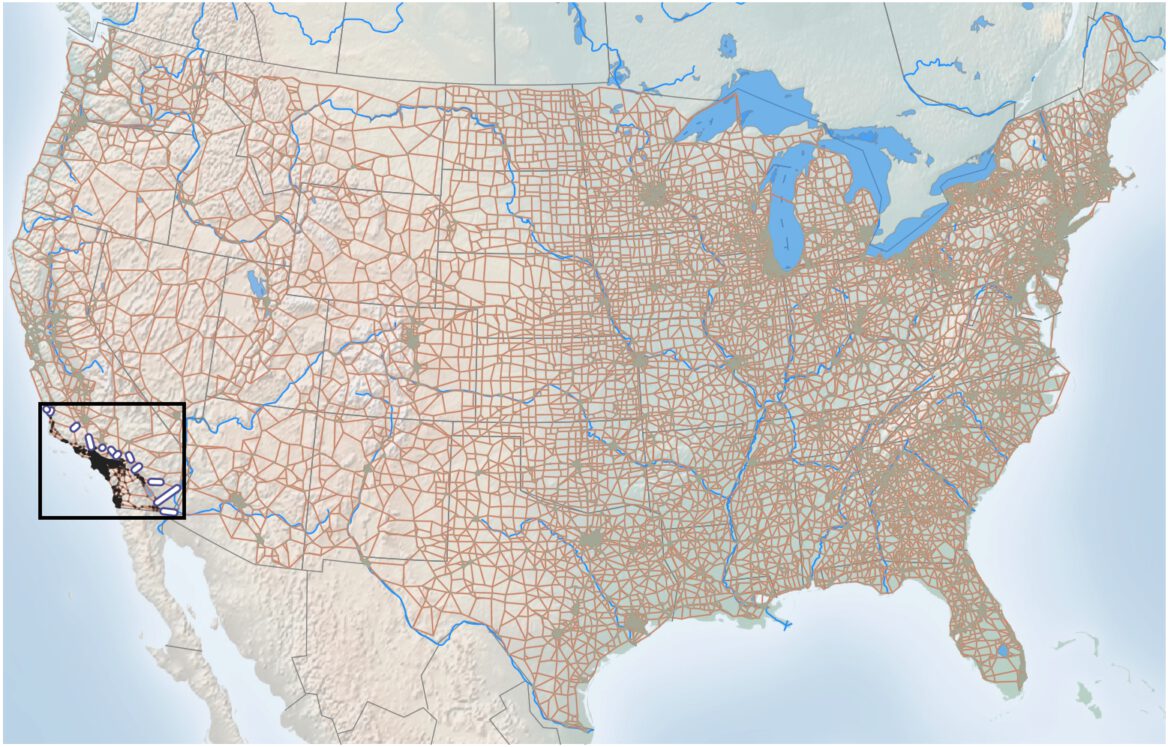}}{5pt}{15pt}}

	\subfigure[A simple vertical split $\phi=0.0018$ \label{fig:usroads-results_c} $\longrightarrow$]{\includegraphics[width=0.5\linewidth]{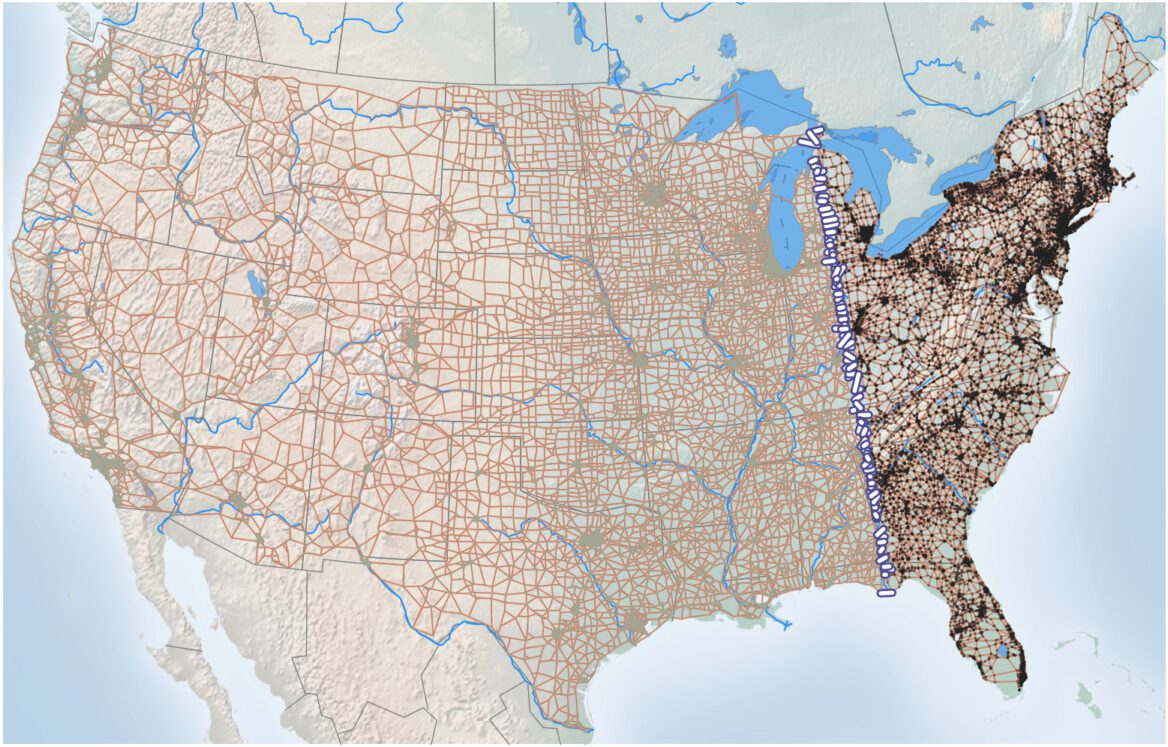}}%
	\subfigure[\FI (*) finds rivers and lakes $\phi=0.0005$ \label{fig:usroads-results_d}]{\topinset{\includegraphics[height=3.925cm]{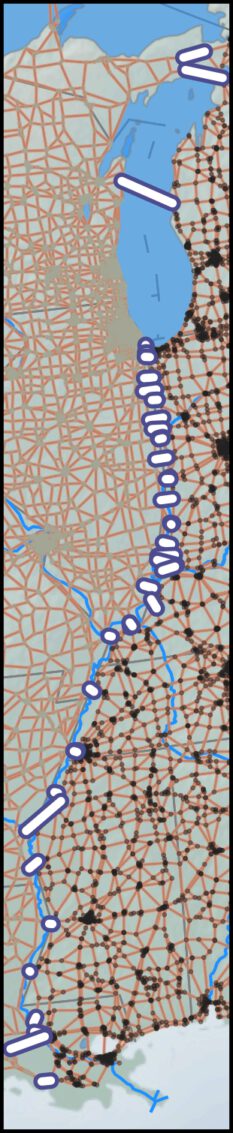}}{\includegraphics[width=0.5\linewidth]{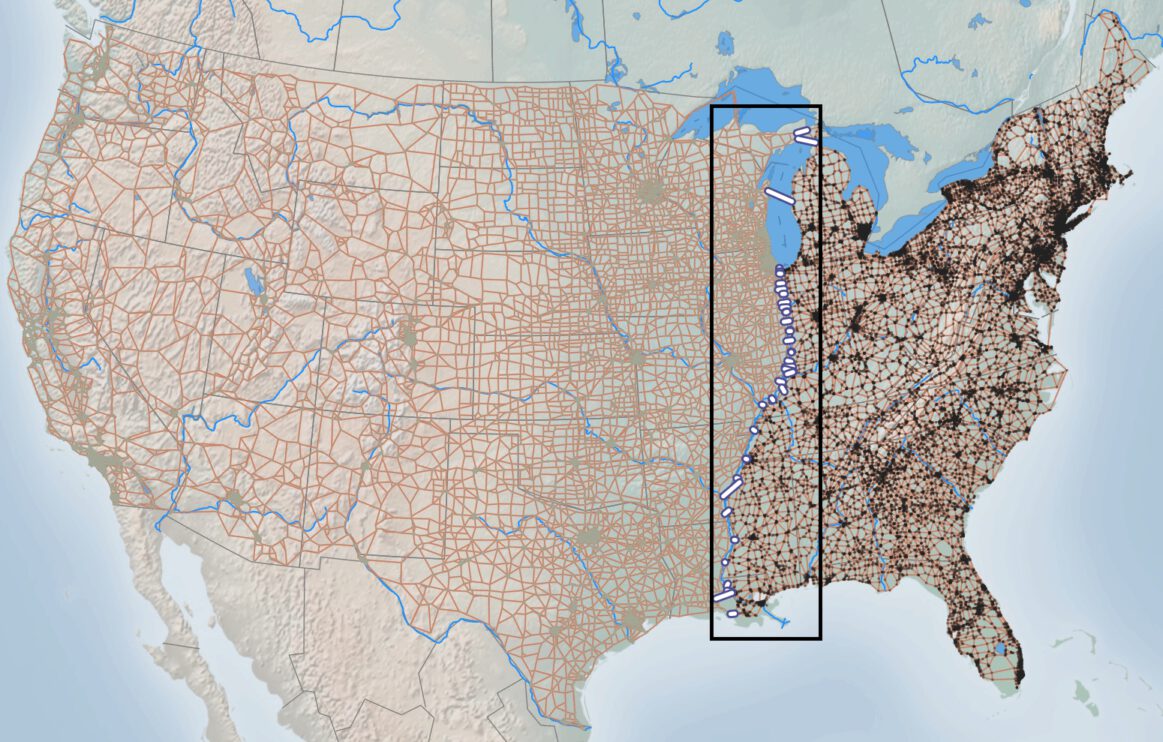}}{3pt}{-1pt}}

	\subfigure[A region around Colorado $\phi=0.008$ \label{fig:usroads-results_e} $\longrightarrow$]%
	{\includegraphics[width=0.5\linewidth]{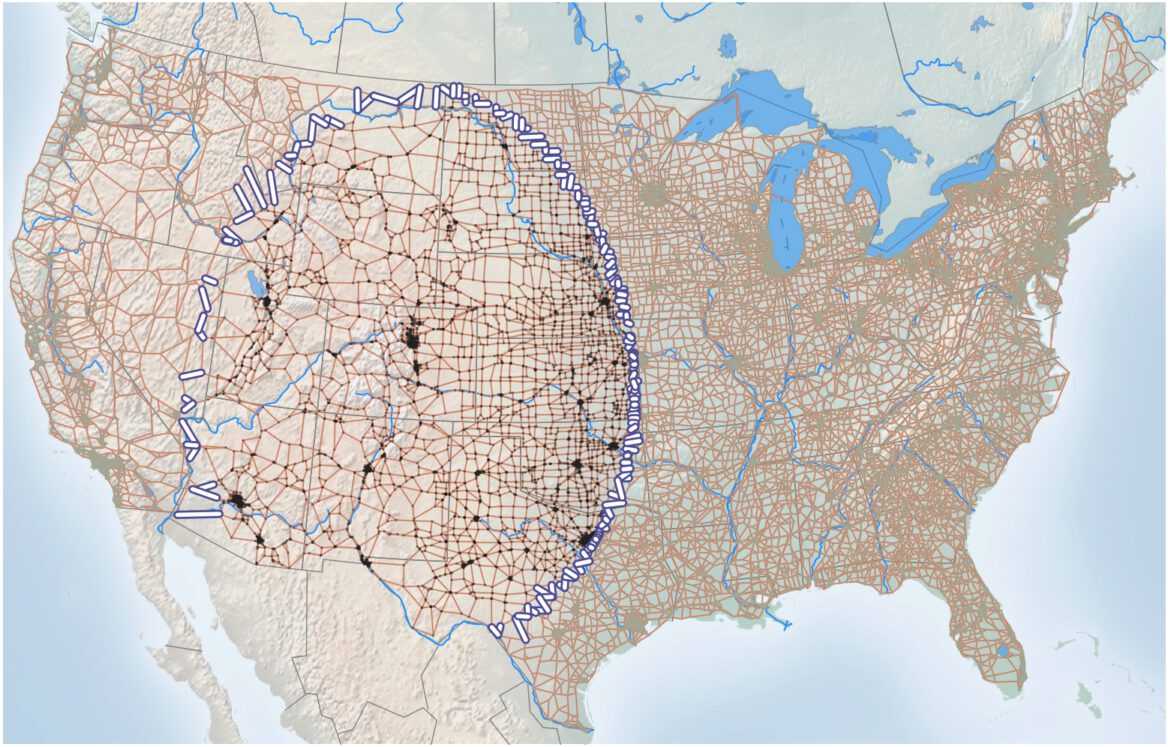}}%
	\subfigure[\mbox{\LFI[1.0]}  tracks a river and adds Dallas $\phi=0.004$\label{fig:usroads-results_f}]%
	{\includegraphics[width=0.5\linewidth]{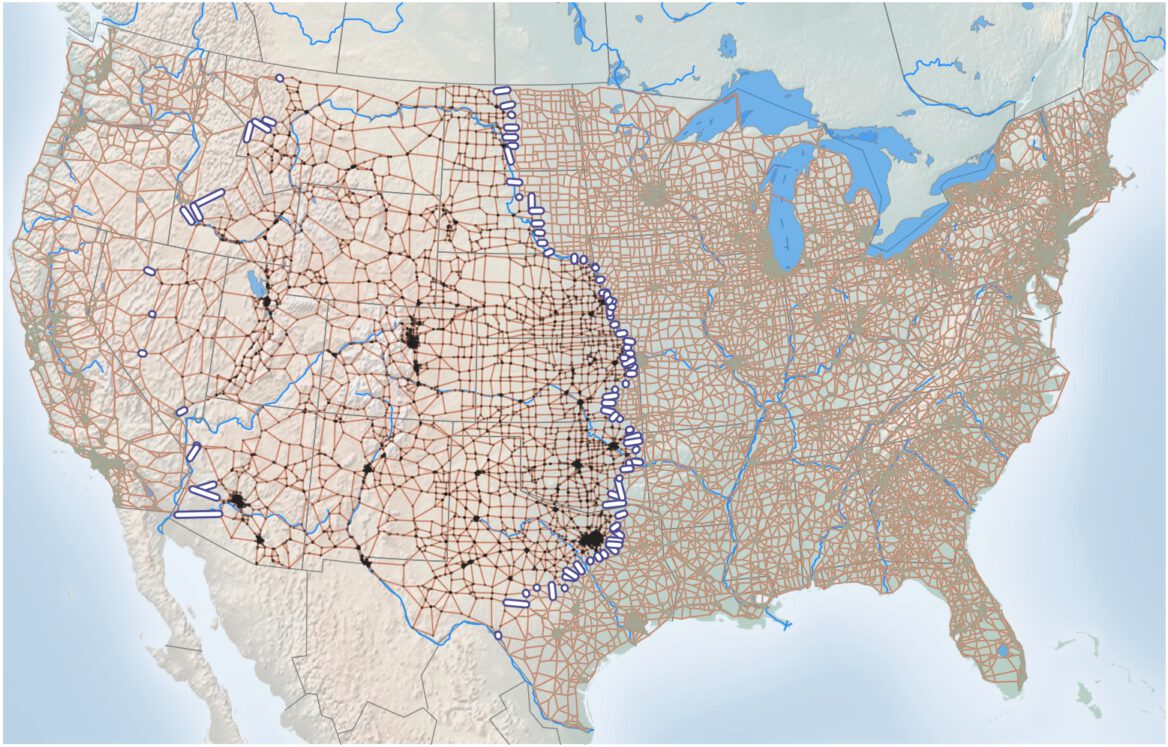}}

	\subfigure[Random walks $\phi=0.083$ \siamonly{$\longrightarrow$}%
	\label{fig:usroads-results_g}]{\includegraphics[width=0.33\linewidth]{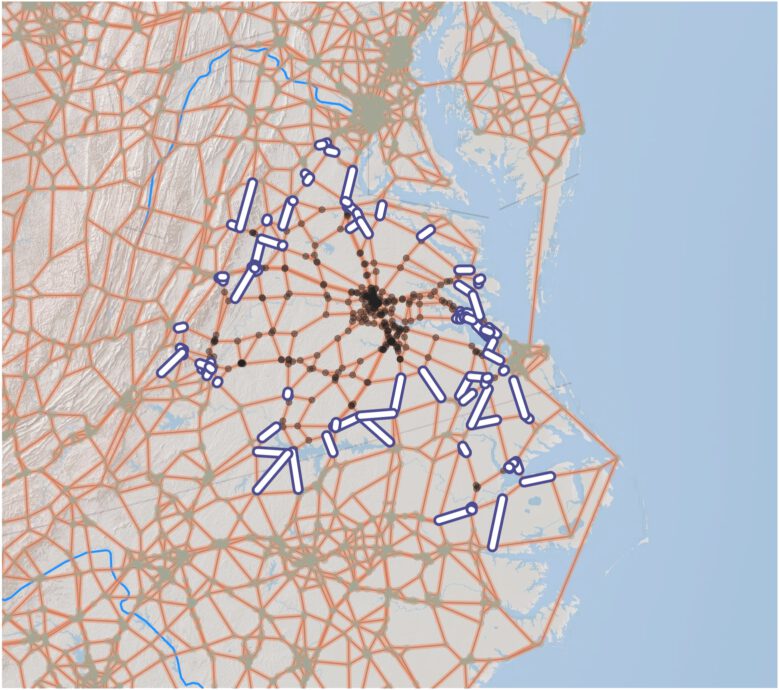}}%
	\subfigure[\mbox{\LFI[1.0]} gives $\phi=0.020$\label{fig:usroads-results_h}]{\includegraphics[width=0.33\linewidth]{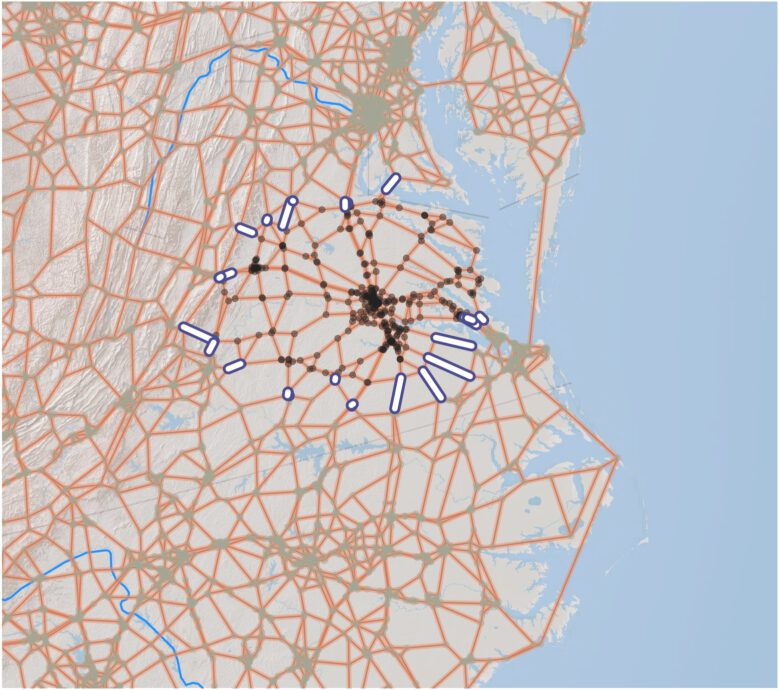}}%
	\subfigure[\mbox{\LFI[0.1]} gives $\phi=0.013$\label{fig:usroads-results_i}]{\includegraphics[width=0.33\linewidth]{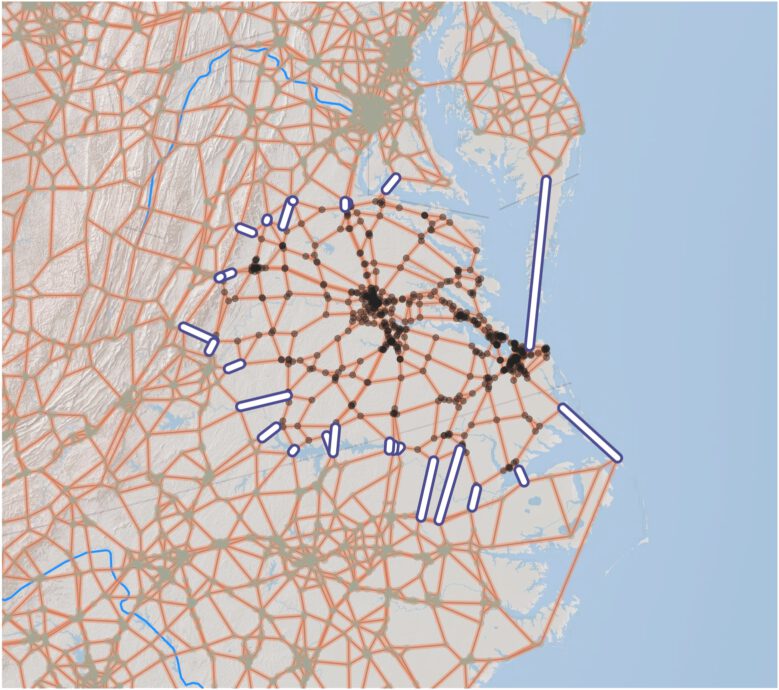}}
}
	\vspace{1.5pt}
\hrule
	\siamonly{\vspace{-8.4pt}}
	\caption{Our flow-based cluster improvement algorithms reduce the conductance of simple input sets by finding natural features including mountains, rivers, and cities.  The purple edges highlight the boundary of the  set shown in black nodes, and
		      $\phi$ is the conductance of the depicted set. Panel (a) shows an input that cuts horizontally the map and (b) is the corresponding output of \MQI.
		      Panel (c) shows an input that cuts the map vertically and
			      (d) shows the output of \FI. %
		      Panel (e) shows an input which corresponds to a large region in the western US centered on Colorado and  (f) shows the output of \LFI.
		      Finally,
		      panel (g) shows an input around the capital of Virginia, which has been created using random walks, and (h) and (i) are the corresponding output of \mbox{\LFI[1.0]} and \mbox{\LFI[0.1]}, respectively.
                }%
	\vspace{-10pt}
	\begin{minipage}{1\columnwidth}%
	\begingroup\begin{NoHyper}%
	\footnotesize\let\thefootnote\relax\footnotetext{(*) See the large set results aside (Aside~\ref{aside:largeset}).}%
	\end{NoHyper}\endgroup%
	\end{minipage}
	\label{fig:usroads-results}
\end{figure}

Overall, these results show the ability of our flow improvement algorithms to improve conductance by up to a factor of 16 in the best case scenario and by a ratio of 1.31 on the high quality \metis partition.
The most useful summary from these figures are as follows:
\begin{itemize}
	\item Reducing the value of $\delta$ in \LFI corresponds to finding smaller conductance sets compared to \MQI. We also observe that reducing $\delta$ in \LFI results in larger clusters in terms of number of nodes and volume.
        \item As predicted by \Cref{thm:conductance}, the results for \LFI and \FI are always better in terms of conductance than \MQI in terms of conductance.
\end{itemize}

While visually useful to understand our algorithms, obtaining such results on a road network is less useful and less interesting than obtaining similar results on graphs representing data with fewer or different strutural constraints. 
Thus, we now illustrate these same points in another, larger dataset with a study of around 2500 improvement calls.
This second dataset is a $k=32$-nearest neighbor graph constructed on the Main Galaxy Sample (MGS) in SDSS Data Release 7.
We briefly review the details of this standard type of graph construction and provide further details in~\Cref{sxn:app-replication}.
This data begins with the emission spectra of 517,182 galaxies in 3841 bands.
We create a node for each galaxy and connect vertices if either is within the $16$ closest vertices to the other based on a Euclidean distance-like measure (see~\Cref{sxn:app-replication}).
The graph is then weighted proportional to this distance.
The result is a weighted undirected graph with 517,182 nodes and 15,856,315 edges (and 517,182 self-loops) representing nearest neighbor relationships among galaxy spectra.
\Cref{fig:astro} provides a visualization of a global Laplacian eigenvector embedding of this graph.
For more details on this dataset, we refer readers to~\citet{LBM16_TR,lawlor2016mapping}.

\begin{figure}
	\centering
	\setlength{\fboxsep}{0pt}
	\subfigure[The full graph]{%
	\colorbox{shadecolor}{\includegraphics[width=0.5\linewidth]{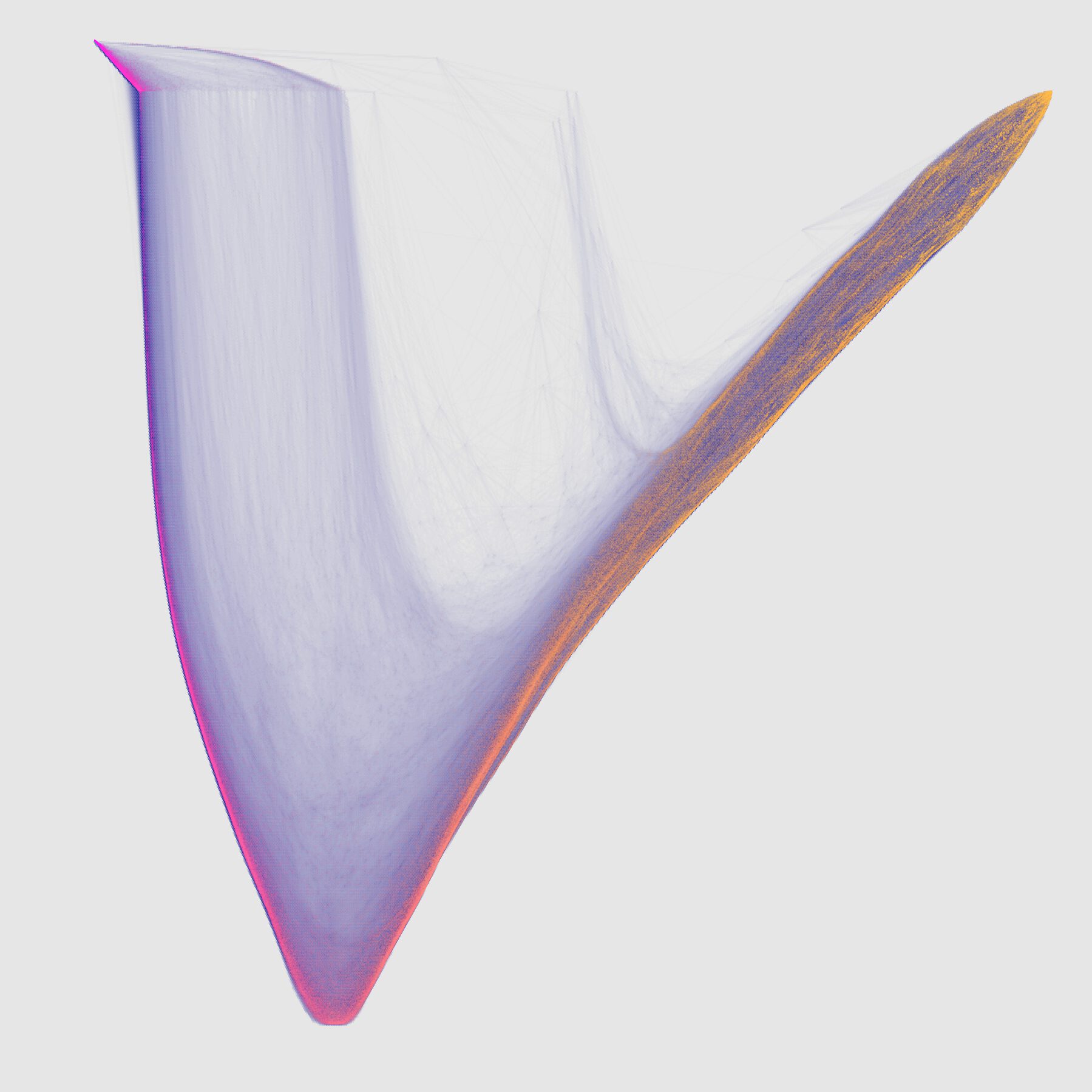}}}%
	\subfigure[Zoom into top-right]{\colorbox{shadecolor}{\includegraphics[width=0.5\linewidth,trim={12in 12in 1in 1in},clip]{figures/flow_embedding/hexbin_plots/hexbin_plot_global_low_res_spring}}}

	\caption{The Main Galaxy Sample (MGS) dataset has 517,182 nodes and 32,229,812 edges. This display shows an eigenvector embedding of the graph along with edges shown in light blue. The edges dominate the visualization in parts and nodes are only shown where there is sufficient density. The node color is determined by the horizontal coordinate (pink to orange). The right part of the visualization (dark orange to light orange coordinates in b) hints at structure hidden within the upper band, which we will study in \Cref{subsec:flowlocalstructure}.
                }
	\label{fig:astro}
\end{figure}

In this case, we compute reference sets using seeded PageRank using a random node, followed by a sweepcut procedure by~\citet{ACL06} to locally optimize the conductance of the result.
Consequently, the reference sets we start with are already fairly high quality.
Then we run \MQI, \LFI[1], \LFI[0.1], and \LFI[0.01] on the results.
We repeat this experiment 2526 times.
The output to input conductance ratio is shown in \Cref{fig:cluster-improvement} with reference to the original reference conductance from seeded PageRank (\Cref{fig:cluster-improvement_a}) and also with reference to the MQI conductance (\Cref{fig:cluster-improvement_b}).
Like the previous experiments with the road network, reducing $\delta$ in this less easily visualizable data set results in improved conductance.
Also like the previous experiments, \LFI always reduces the conductance more than \MQI.
\begin{figure}
	\subfigure[Conductance improvement relative to seeded PageRank \label{fig:cluster-improvement_a}]{\includegraphics[width=0.5\linewidth]{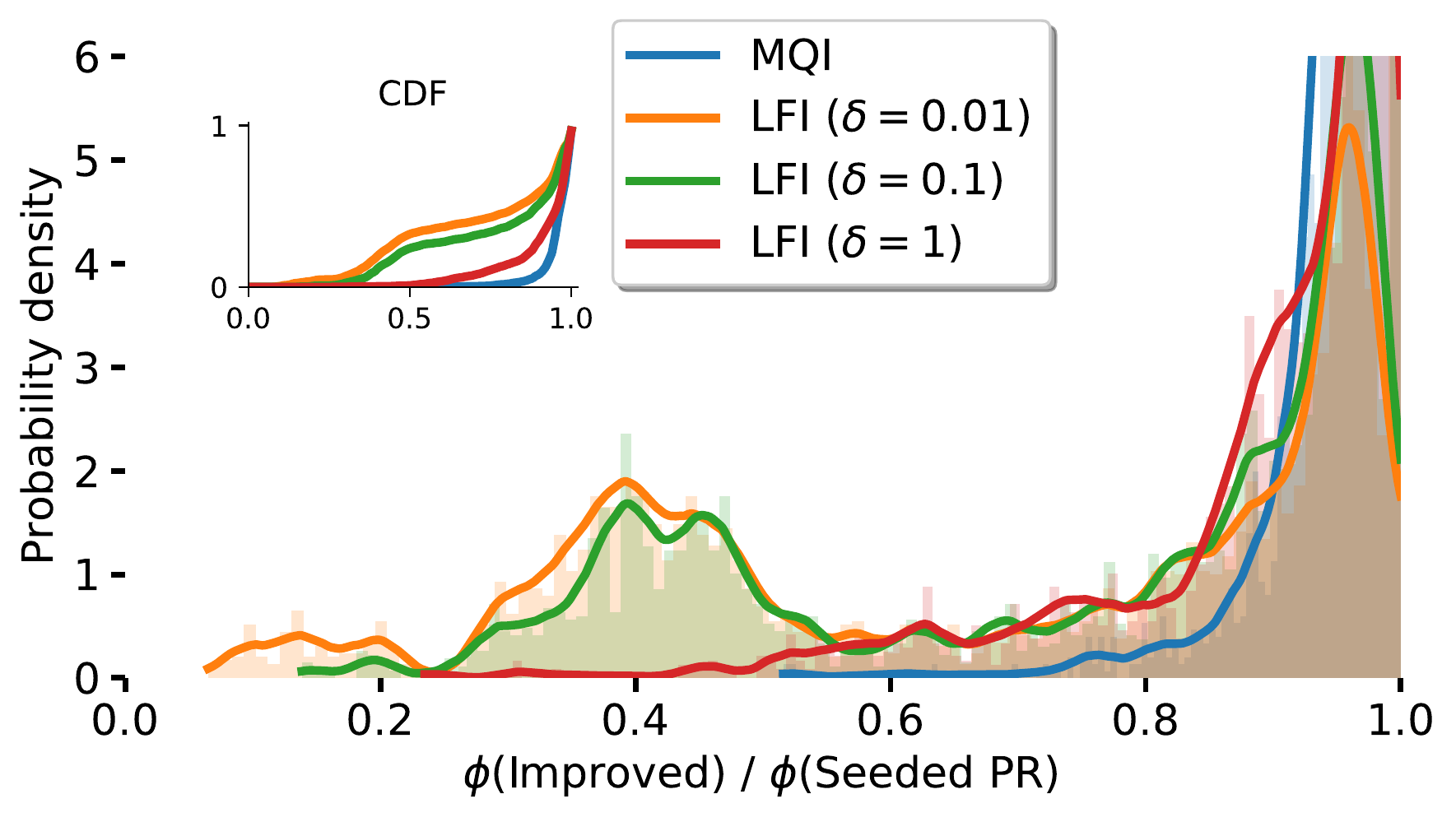}}%
	\subfigure[Conductance improvement relative to MQI \label{fig:cluster-improvement_b}]{\includegraphics[width=0.5\linewidth]{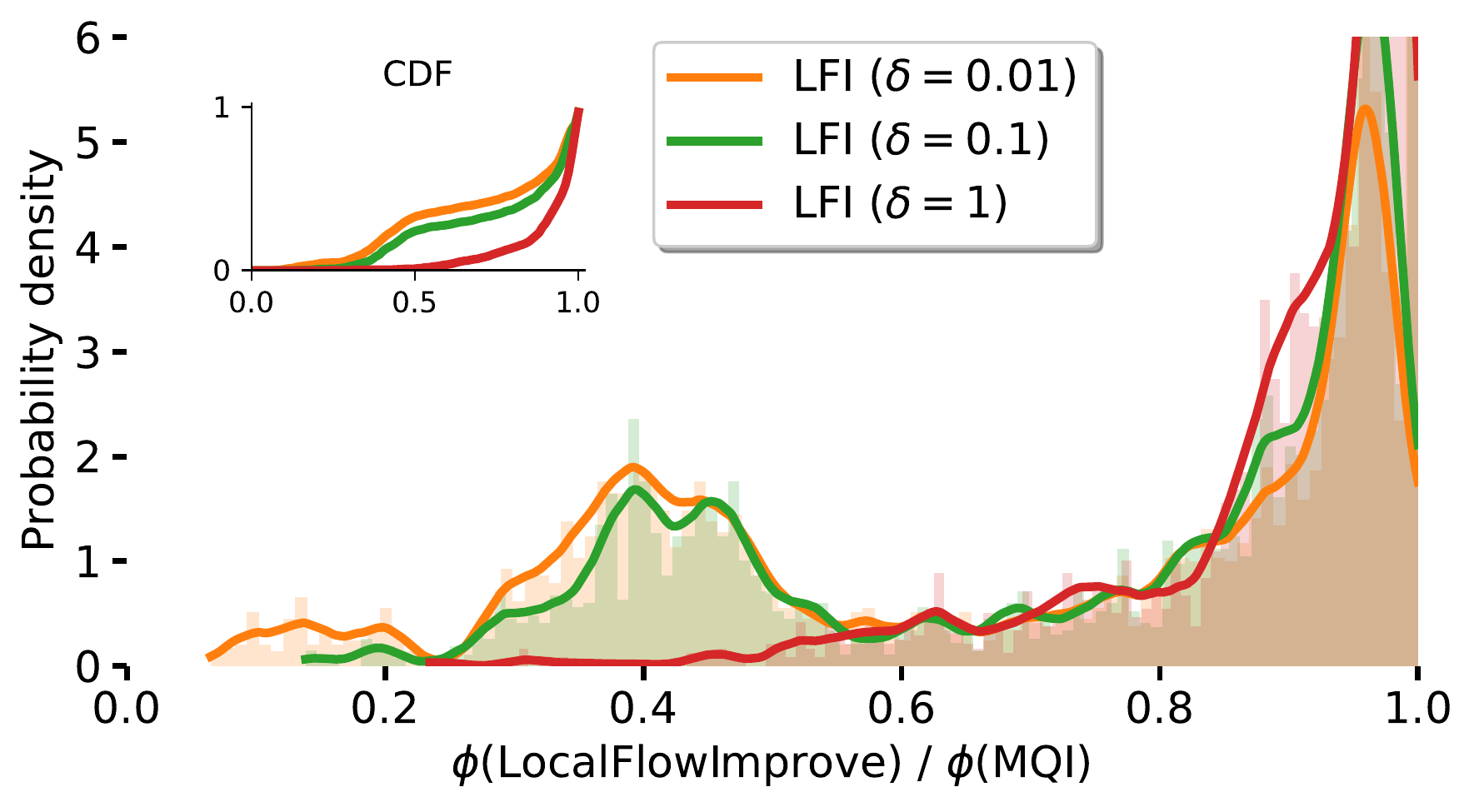}}

	\subfigure[Conductance improvement relative to 2-hop BFS \label{fig:cluster-improvement_c}]{\includegraphics[width=0.5\linewidth]{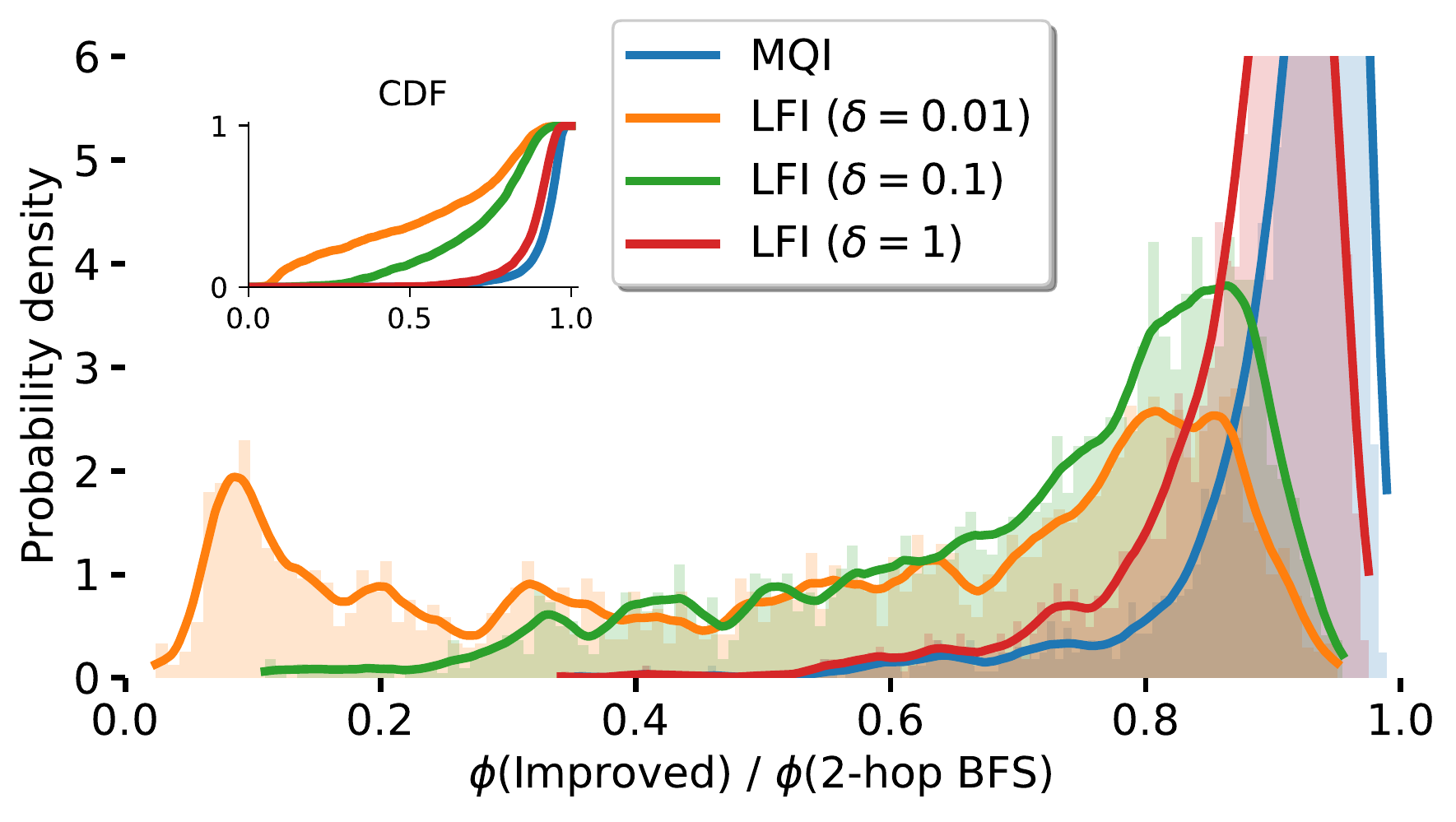}}%
	\subfigure[Conductance improvement relative to MQI \label{fig:cluster-improvement_d}]{\includegraphics[width=0.5\linewidth]{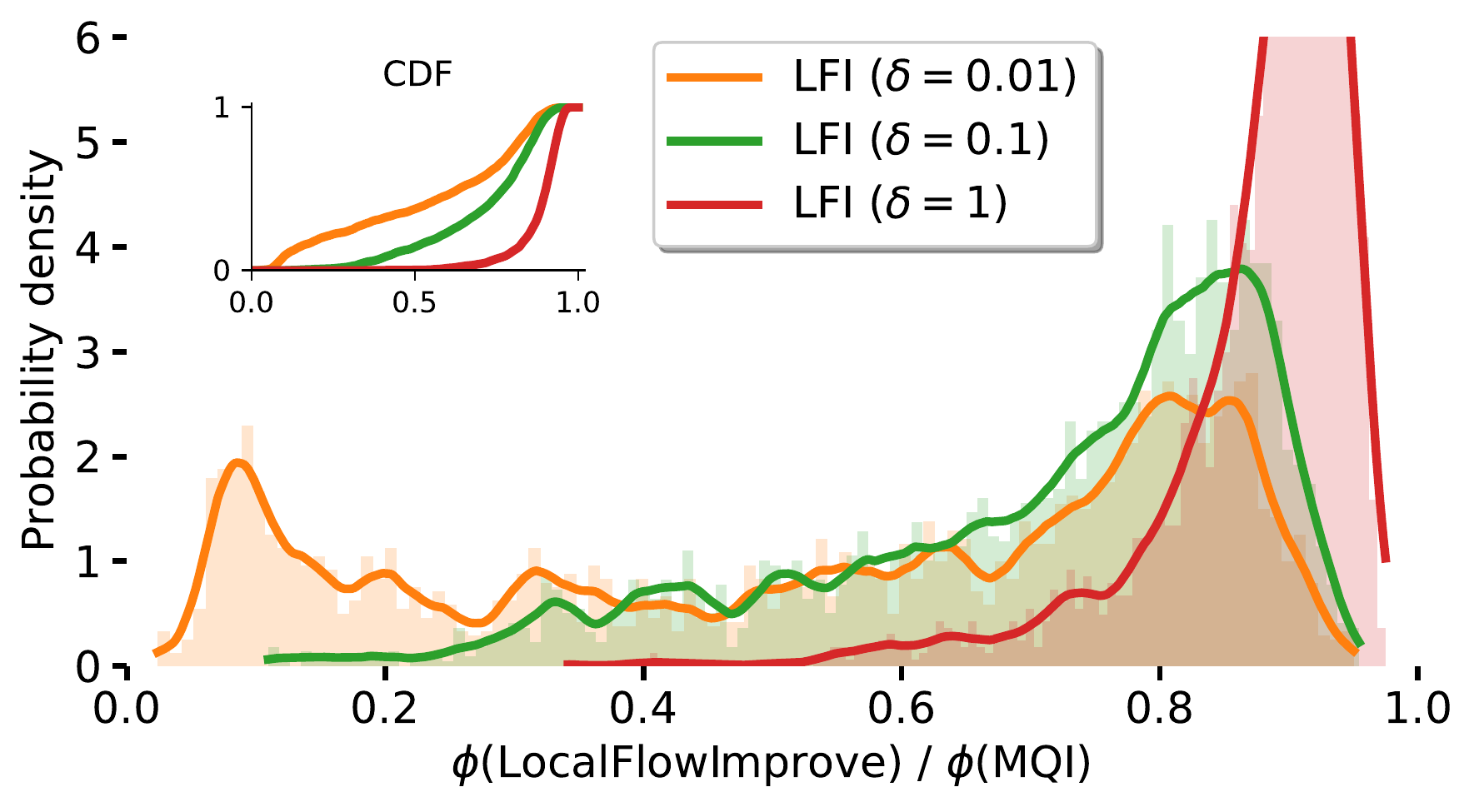}}
	\caption{A summary of 3102 (top row) and 2585 (bottom row) experiments in the \mgs dataset that show (i) that reducing $\delta$ in \LFI produces sets of smaller conductance, when the input set is a of another conductance minimizing procedure (top row) or 2-hop BFS set (bottom row), and also (ii) that \LFI and \FI always find smaller conductance sets than \MQI. The inset figures shows the cumulative density function (CDF) of the probability density.}
	\label{fig:cluster-improvement}
\end{figure}

The point of these initial experiments is to demonstrate that these algorithms achieve their primary goal of finding small conductance sets in a variety of scenarios.
They can do so both in a graph with an obviously geometric structure as well as in a graph without an obvious geometric structure that was constructed from noisy observational data. 
In addition, they can do so starting from higher or lower quality~inputs.
In the next section, we are going to evaluate our algorithms on specific tasks where finding small conductance sets is not the end goal.

\subsection{Finding nearby targets by growing and shrinking}
\label{sec:target}
\label{sec:target-set}

Another use for cluster improvement methods is to recover a hidden target set of vertices from a nearby reference set, e.g., a conjectured subregion of a graph, or a coherent section of an image.
The goal here is accuracy in returning the vertices of this set, and we can measure this in terms of precision and recall.
Let $T$ be a target set we seek to find and let $S$ be the set returned by the algorithm.
Then the precision score is $|T \cap S|/|S|$, which is the fraction of results that were correct, and the recall score is $|T \cap S|/|T|$, which is the fraction of all results that were obtained.
The ideal scenario is that both precision and recall are near $1$.

We begin by looking at the simple scenario when the initial reference $R$ is entirely contained within $T$, and also a scenario when $R$ is a strict superset of $T$.
This setting allows us to see how the flow-based algorithms grow or shrink sets to find these targets $T$, and it gives us a useful comparison against simple greedy improvement algorithms as well as against spectral graph-based approaches.
For simplicity of illustration, we examine these algorithms on weighted graphs constructed from images.
The construction of a graph based on an image is explained in \Cref{sec:image-to-graph}.

The results of the experiment are shown in \Cref{fig:grow-shrink-image}.
We consider three distinct targets within a large image, as shown in \Cref{fig:grow-shrink-image_a} and \Cref{fig:grow-shrink-image_b}: the left dog, middle dog, and right person.
In our first case, the reference is entirely contained within the target.
In this case, we can use either \FI or \LFI to attempt to enlarge to the target.
(Note that we cannot use \MQI, as the target set is larger than the seed set.)
For comparison, we use a seeded PageRank algorithm as well.
Our choice of this algorithm largely corresponds to replacing $\normof[1]{\mB \vx}$ in the flow-based objective with the minorant function $\normof[2]{\mB\vx}^2$ as discussed in \Cref{sec:fi-pagerank}.
We use two seeded PageRank scenarios that correspond to both \FI and \LFI, see Figure~\ref{fig:grow-shrink-image_c} to Figure~\ref{fig:grow-shrink-image_f}.
These show that spectral methods that grow tend either find a region that is too big or fail to grow large enough to capture the entire region.
This is quantified by a substantial drop in precision compared with the flow method.
Second, we consider the case when the target is contained within the reference set.
This corresponds to the \MQI setting as well as a variation of spectral clustering that called Local Fiedler~\cite{Chung07_localcutsLAA} (because it uses the eigenvector with minimal eigenvalue in a submatrix of the Laplacian).
The results are in \Cref{fig:grow-shrink-image_g} and \Cref{fig:grow-shrink-image_h}, and they show a small precision advantage for the flow-based methods (see the text below each image).
Finally, for reference, in \Cref{fig:grow-shrink-image_i} and \Cref{fig:grow-shrink-image_j}, we also include the results of a purely greedy strategy that grows or shrinks the reference set $R$ to improve the conductance.
This is able to find reasonably good results for only one of the test cases and shows that these sets are not overly \emph{simple} to identify, e.g., since they cannot be detected by algorithms that trivially grow or expand the seed set.
Let us also note that the results here measure 

\begin{figure}
	\subfigtopskip=0pt
	\subfigcapskip=-5pt
	\subfigcaptopadj=0pt
	\subfigbottomskip=0pt
	\subfigcapmargin=0pt
	\abovecaptionskip=5pt

	~\hfill%
	\subfigure[The full image \label{fig:grow-shrink-image_a}]{\includegraphics[width=2in]{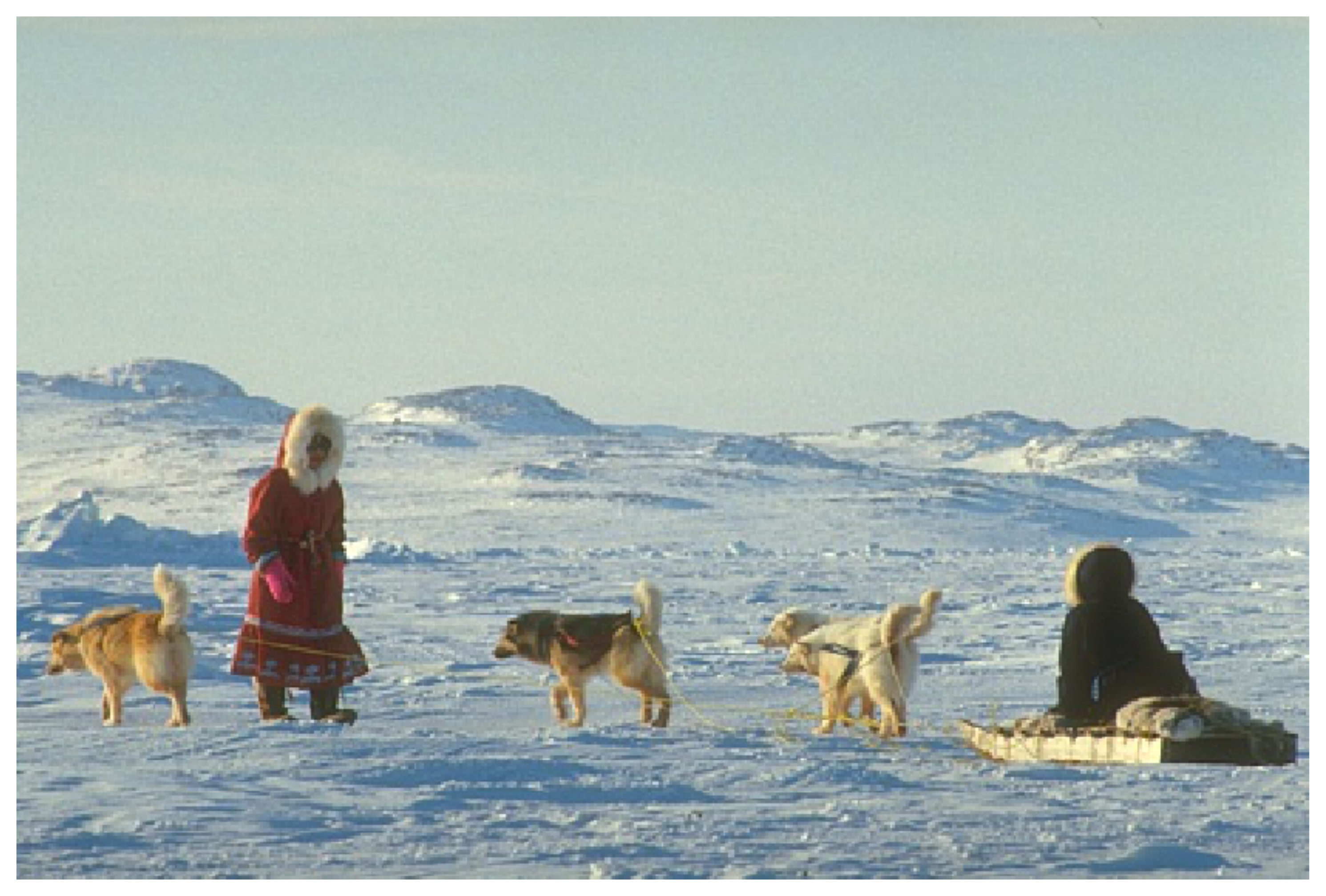}}%
	\hfill~%
	\subfigure[The targets \label{fig:grow-shrink-image_b}]{\includegraphics[width=2in]{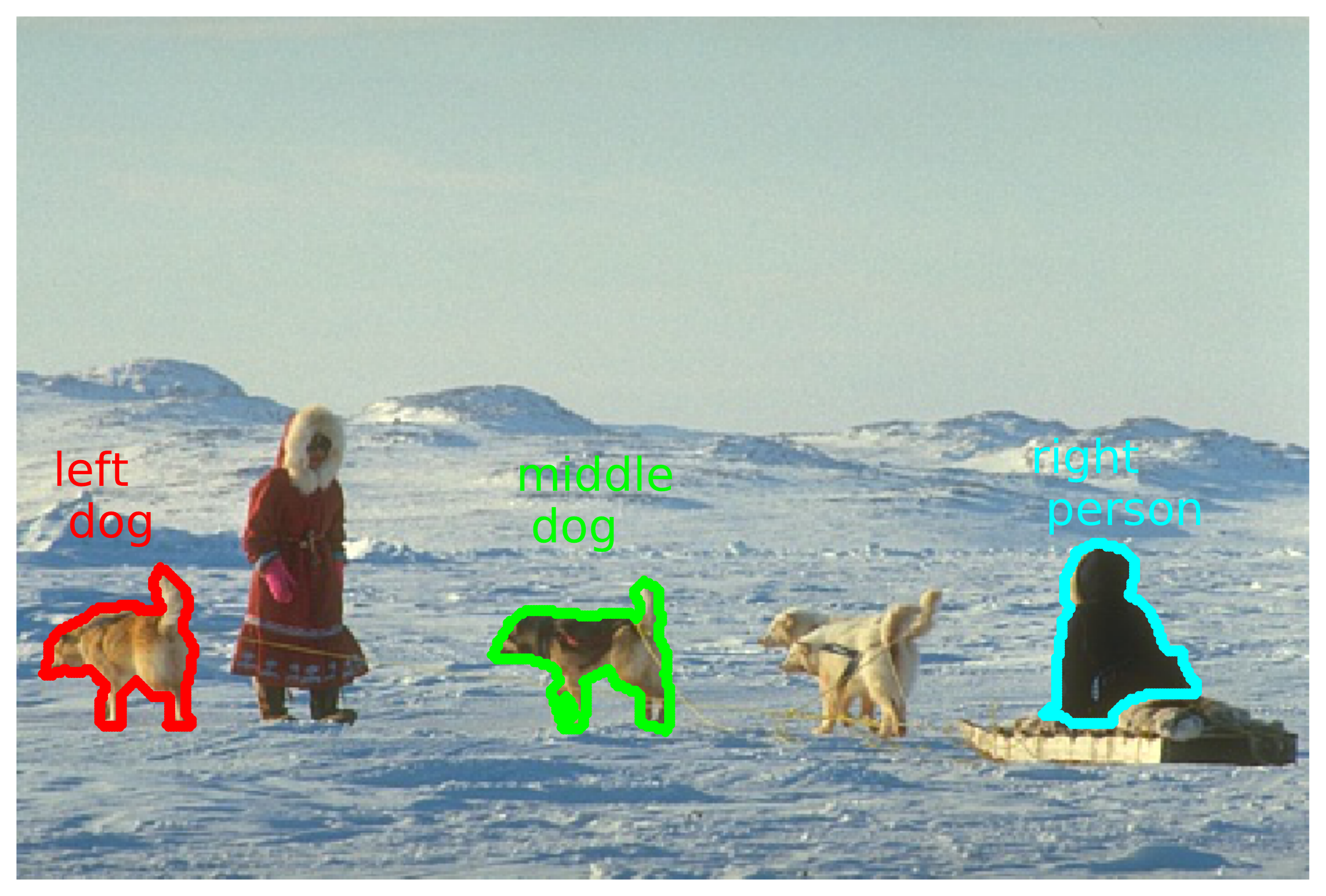}}%
	\hfill~%

	\subfigure[\FI \label{fig:grow-shrink-image_c}]{\includegraphics[width=0.5\linewidth]{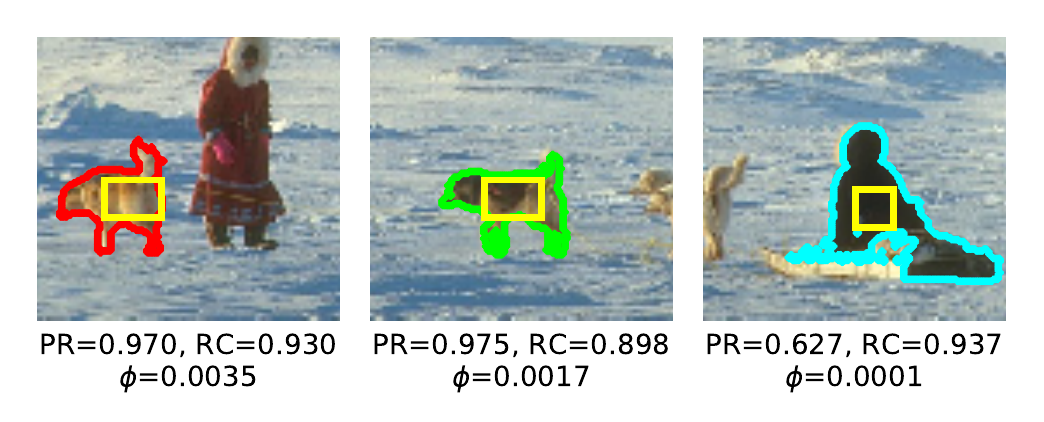}}%
	\subfigure[Seeded PageRank, $\rho=10^{-12}$ \label{fig:grow-shrink-image_d}]{\includegraphics[width=0.5\linewidth]{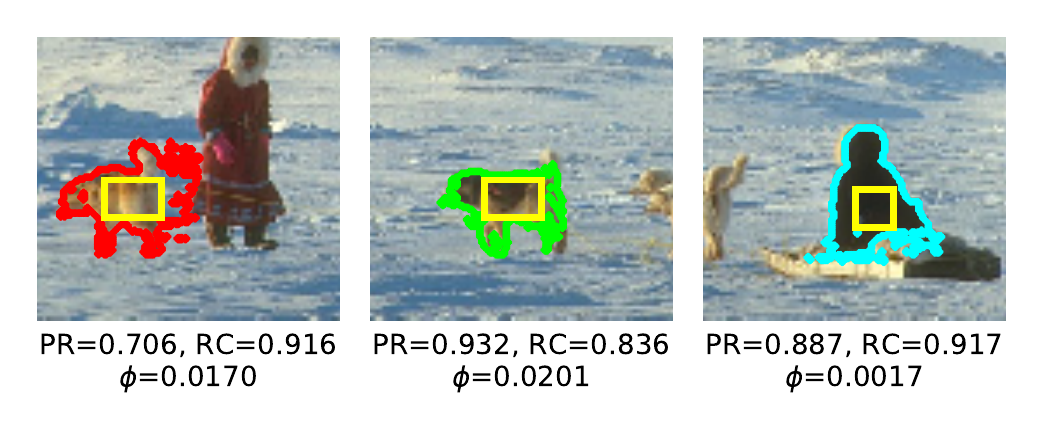}}%

	\subfigure[\mbox{\LFI[0.3]}\label{fig:grow-shrink-image_e}]{\includegraphics[width=0.5\linewidth]{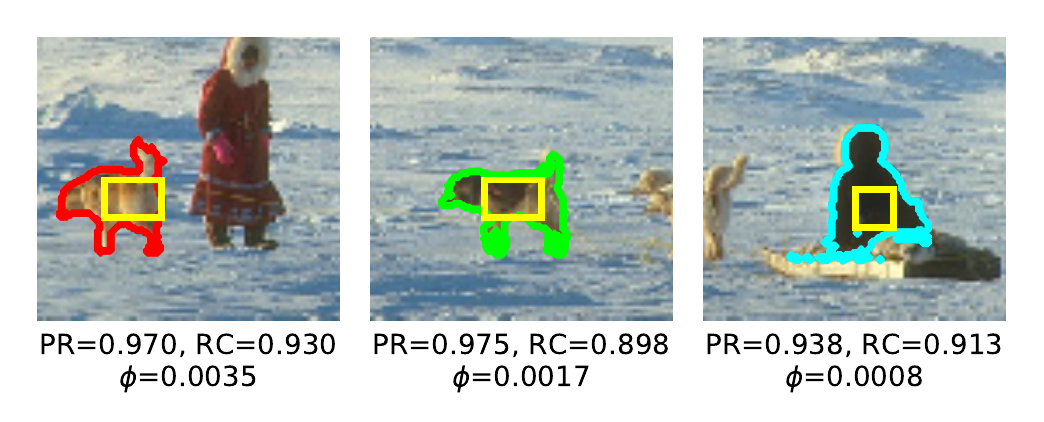}}%
	\subfigure[Seeded PageRank, $\rho=10^{-6}$ \label{fig:grow-shrink-image_f}]{\includegraphics[width=0.5\linewidth]{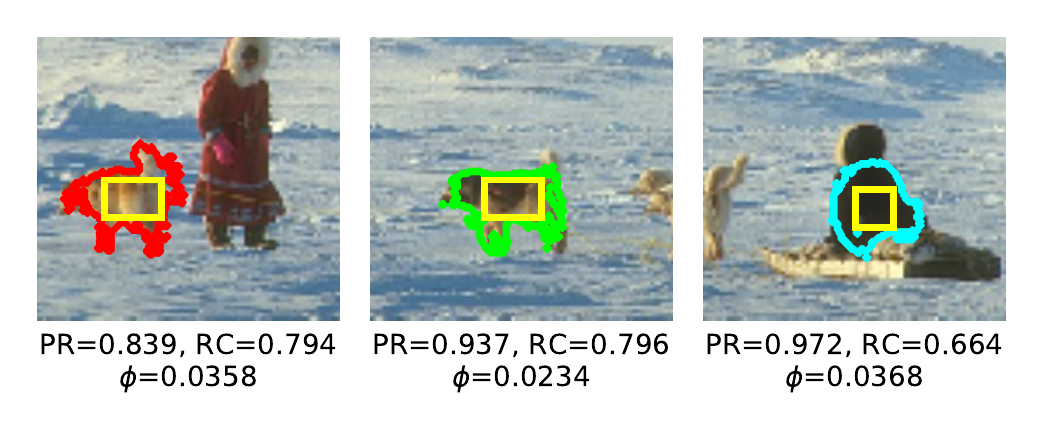}}%

	\subfigure[\MQI \label{fig:grow-shrink-image_g}]{\includegraphics[width=0.5\linewidth]{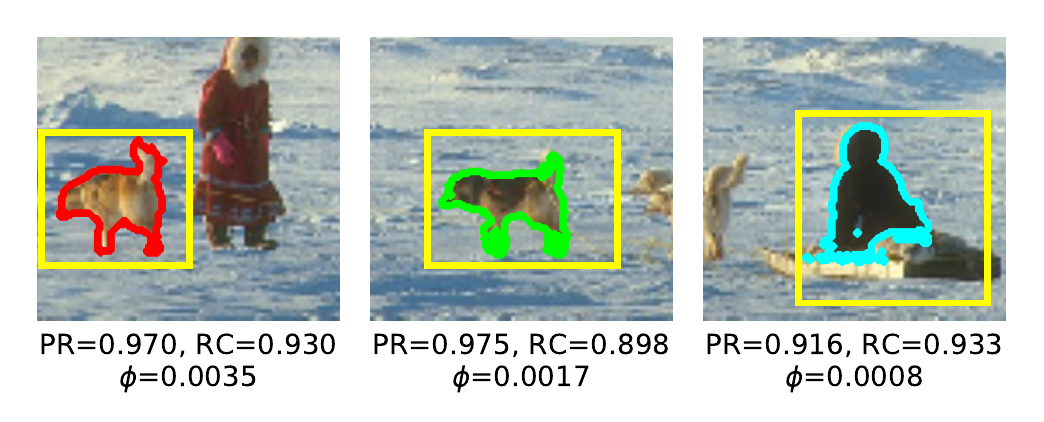}}%
	\subfigure[LocalFiedler \label{fig:grow-shrink-image_h}]{\includegraphics[width=0.5\linewidth]{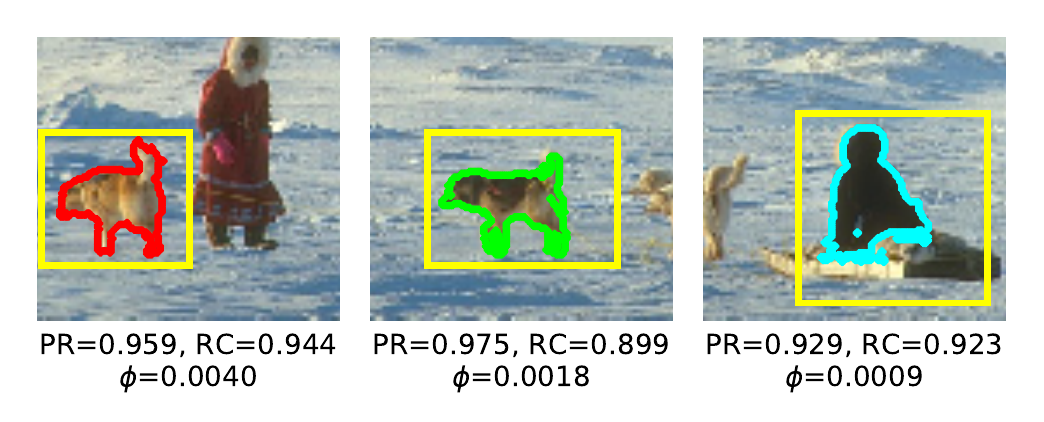}}%

	\subfigure[Greedy Grow \label{fig:grow-shrink-image_i}]{\includegraphics[width=0.5\linewidth]{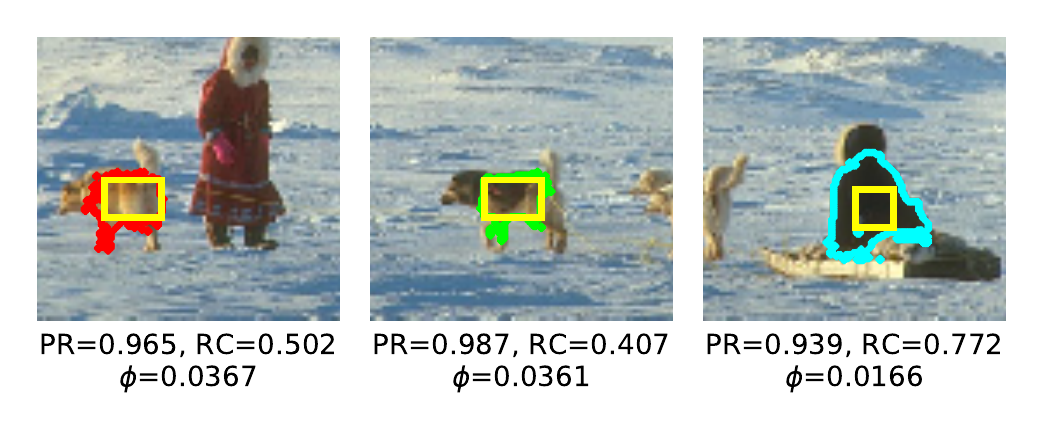}}%
	\subfigure[Greedy Shrink \label{fig:grow-shrink-image_j}]{\includegraphics[width=0.5\linewidth]{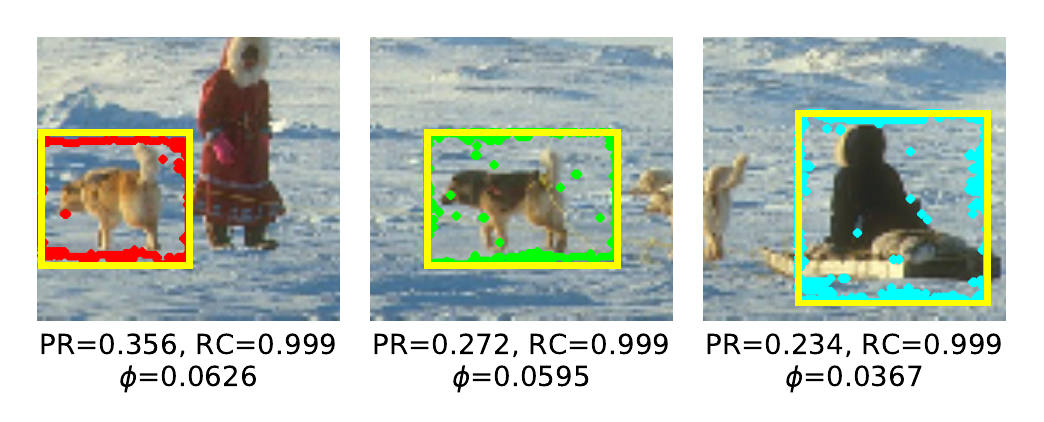}}%

	\vspace{5pt}
\hrule
	\caption{Illustration of finding targets within an image (a) corresponding to the three low-conductance regions shown in (b). The reference sets given to \MQI, \FI, and \LFI are given by the yellow regions, which either need to be grown or shrunk to find the target. For growing, we compare against seeded PageRank, which is a spectral analogue of \FI and \LFI; for shrinking, we compare against a local Fiedler vector, a spectral analogue of MQI, as well as simple greedy approaches for both. The flow-based methods capture the borders nicely and give high recall for growing and high precision for shrinking. Among other things, in this case, \FI grows too large on the right person (c) whereas \LFI (e) captures this target better. RC stands for recall and PR stands for precision.}
	\label{fig:grow-shrink-image}
\end{figure}

Next, we repeat these target set experiments using the Johns Hopkins network, a less visualizable network, for which we see similar results.
The data are a subset of the Facebook100 data-set from~\citet{Traud2011,TMP12}.
The graph is unweighted. It represents ``friendship'' ties and it has $5157$ nodes and $186572$ edges.
This dataset comes along with $6$ features: major, second major, high school, gender, dorm, and year.
We construct two targets by using the features: students with a class year of $2009$ and student with major id $217$.
The visualization shows that major id 217 looks like it will be a fairly good cluster as the graph visualization has moved the bulk away.
However, the conductance of this set is 0.26.
Indeed, neither of these sets is particularly small conductance, which makes the target identification problem much harder than in the images.
Both sets are illustrated in \Cref{fig:JH_a}.

Here, we use a simple breadth first search (BFS) method to generate the input to \MQI and \LFI to mirror the previous experiment with images.
Given a single and arbitrary node in the target cluster, we generate a seed set $R$ by including its neighborhood within 2 hops.
Like the previous examples then, we use \MQI to refine precision and \LFI to boost recall. We repeat this generating and refining procedure for 25 times for distributional statistics. The inputs as well as results from \MQI and \LFI are shown in \Cref{fig:JH_b} to \Cref{fig:JH_h}. The colors show the regions that are excluded (black) or included (red or blue) by each input set or algorithm over 25 trials.

We first summarize in \Cref{fig:JH_b} the increase in precision for \MQI and the increase in both precision and recall for \LFI. The remaining figures illustrate the sets on top of the graph layout showing where the error occurs or regions missed over these 25 trials.
In particular, in \Cref{fig:JH_c} and \Cref{fig:JH_f} we illustrate the BFS input for the target clusters. These inputs include a lot of false positives. In \Cref{fig:JH_d} and \Cref{fig:JH_g} we illustrate the corresponding outputs of \MQI. Note that
\MQI removes the most false positives by contracting the input set. But the outputs of \MQI can have low recall as \MQI can only shrink the input set. In \Cref{fig:JH_e} and \Cref{fig:JH_h} we show that \LFI is able to both contract and expand the input set and it obtains a good approximation to the target cluster.

\begin{figure}
	\centering

	\subfigure[Target sets for Johns Hopkins \label{fig:JH_a}]{\begin{minipage}[t]{0.5\linewidth}\noindent\footnotesize\centering%
			\includegraphics[width=\linewidth]{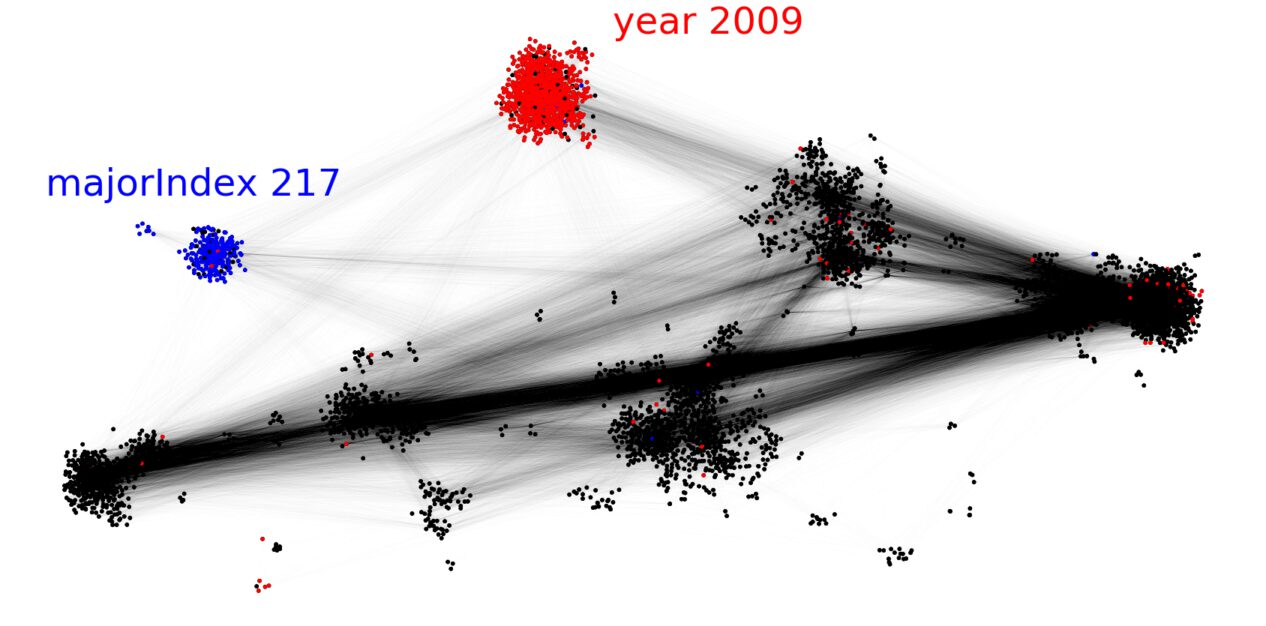}

				\begin{tabularx}{0.8\linewidth}{@{}lXXX@{}}
					\toprule
					Feature & Vol. & Size & Cond.\\
					\midrule
					Major-$217$ & $10696$ &$200$ & $0.26$ \\
					Class-$2009$ & $32454$  &$886$ & $0.19$ \\
					\bottomrule
				\end{tabularx}
		\end{minipage}}%
	\subfigure[Median statistics on input sets as well as \MQI and \LFI results \label{fig:JH_b}]{%
	\begin{minipage}[c]{0.5\linewidth}
			\footnotesize\noindent%
		\begin{tabularx}{\linewidth}{@{}l@{\;}l@{\;}XXXX@{}}
			\toprule
			Target & Set & Size & \rlap{Cond.} & Prec. & Rec. \\
			\midrule
				Major-217 & Input & 1282 & 0.58 & 0.15 & 0.94 \\
				\hfill \MQI & Result & 203 & 0.19 & 0.90 & 0.90 \\
				\hfill \LFI & Result & 218 & 0.18 & 0.88 & 0.95 \\
				\midrule
				Class-2009 & Input & 1129 & 0.52 & 0.35 & 0.51 \\
				\hfill \MQI & Result & 472 & 0.29 & 0.96 & 0.50 \\
				\hfill \LFI & Result & 802 & 0.18 & 0.94 & 0.83 \\
			\bottomrule
		\end{tabularx}%
		\end{minipage}%
		}
	\subfigure[Class-2009 Input \label{fig:JH_c}]{\includegraphics[width=0.33\linewidth]{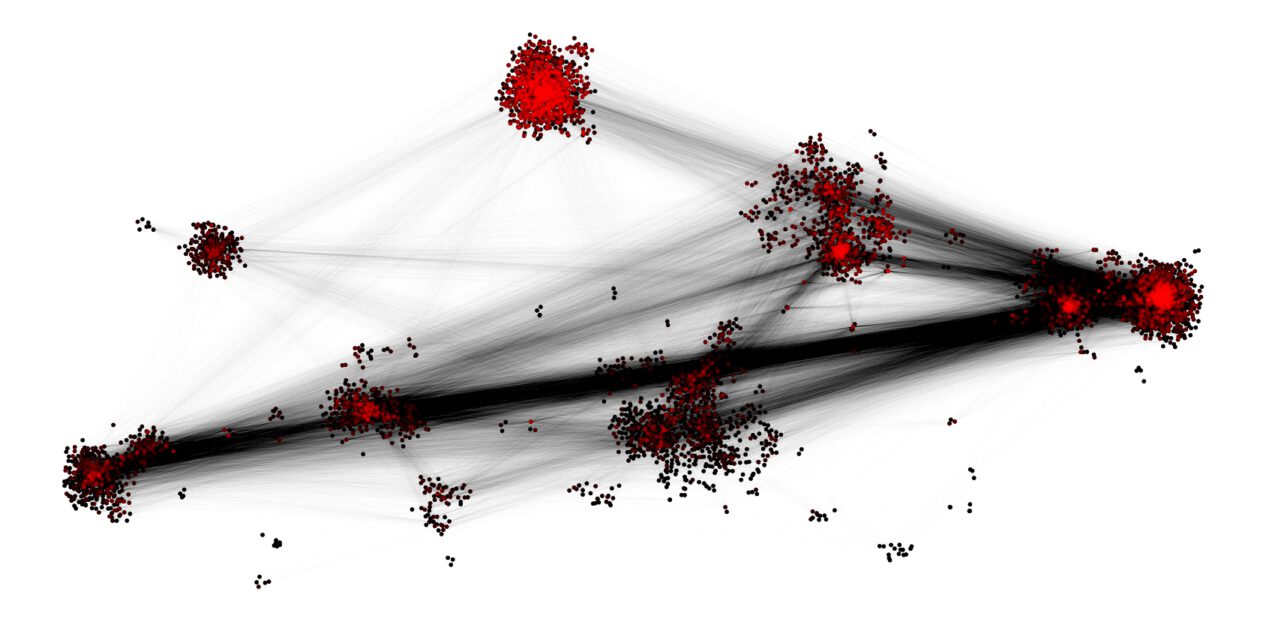}}%
	\subfigure[Class-2009 \mbox{\MQI}\label{fig:JH_d}]{\includegraphics[width=0.33\linewidth]{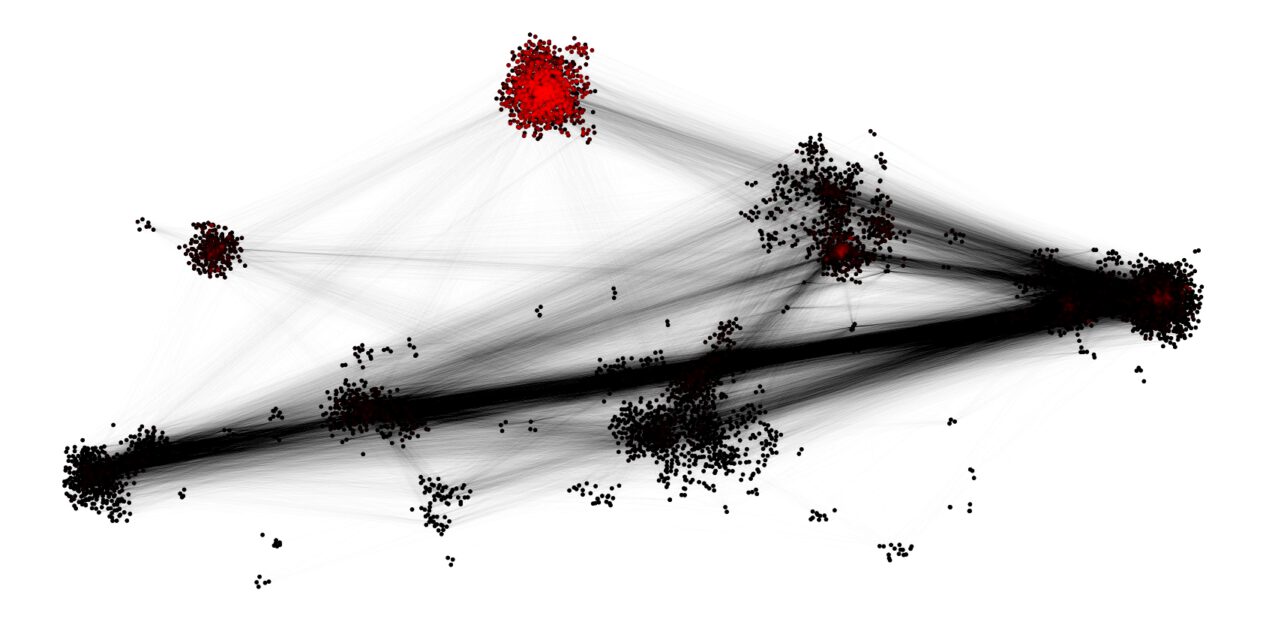}}%
	\subfigure[Class-2009 \mbox{\LFI}\label{fig:JH_e}]{\includegraphics[width=0.33\linewidth]{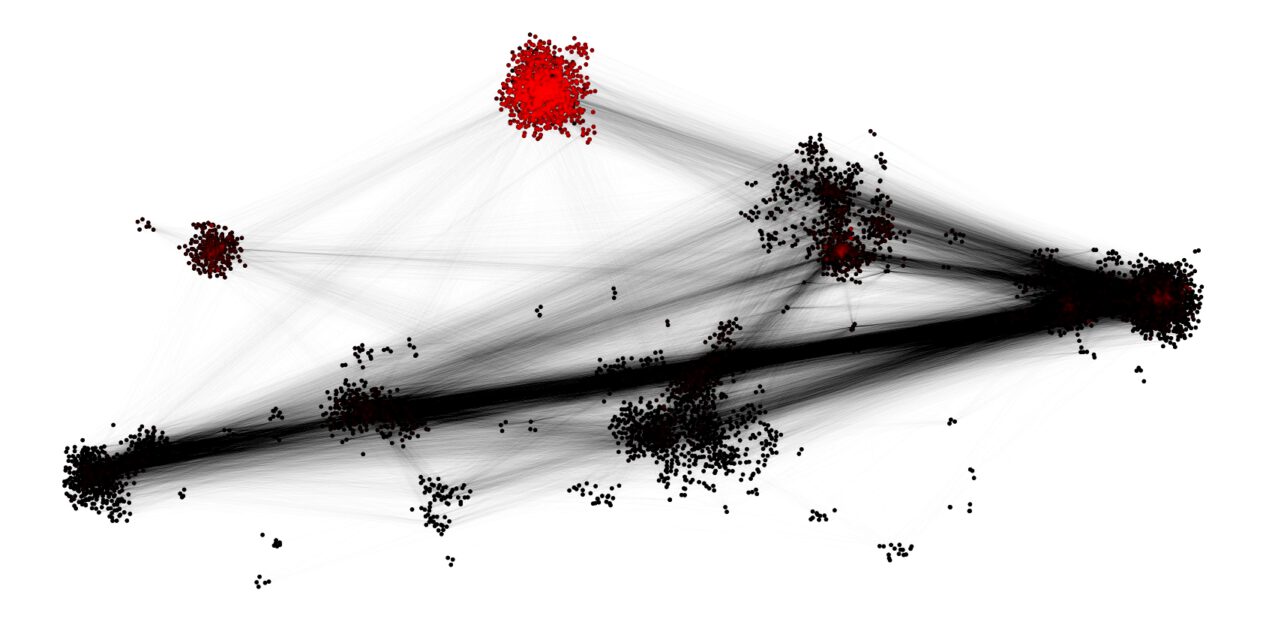}}%

	\subfigure[Major-217 Input\label{fig:JH_f}]{\includegraphics[width=0.3\linewidth]{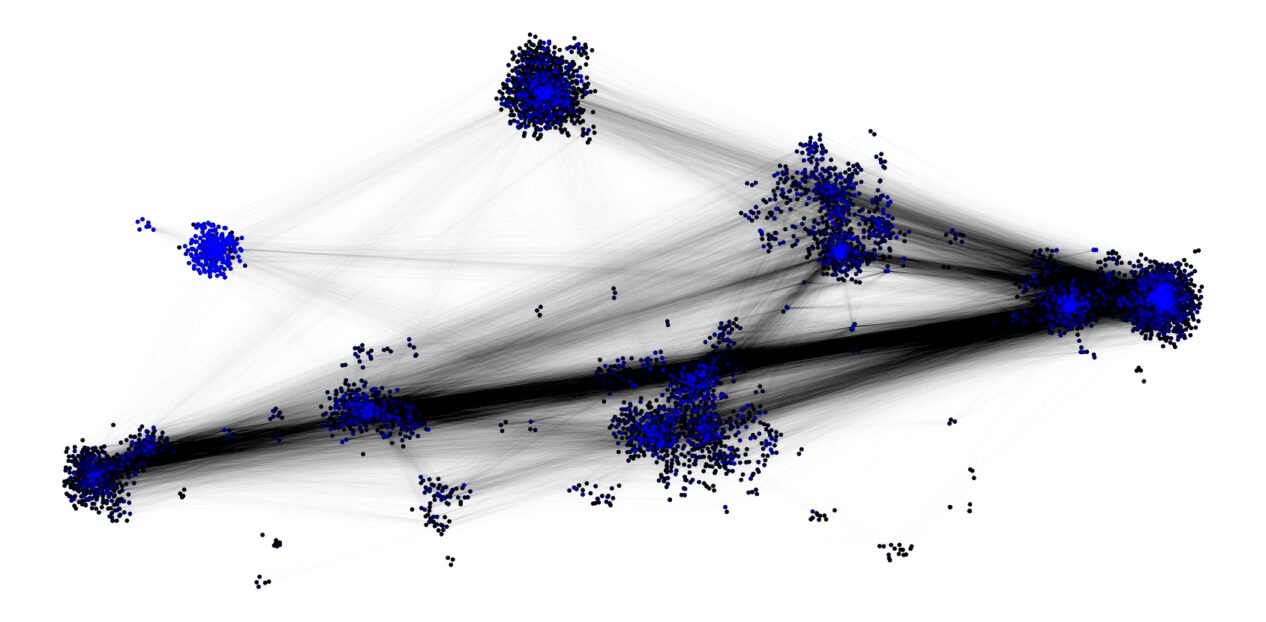}}%
	\subfigure[Major-217 \mbox{\MQI}\label{fig:JH_g}]{\includegraphics[width=0.3\linewidth]{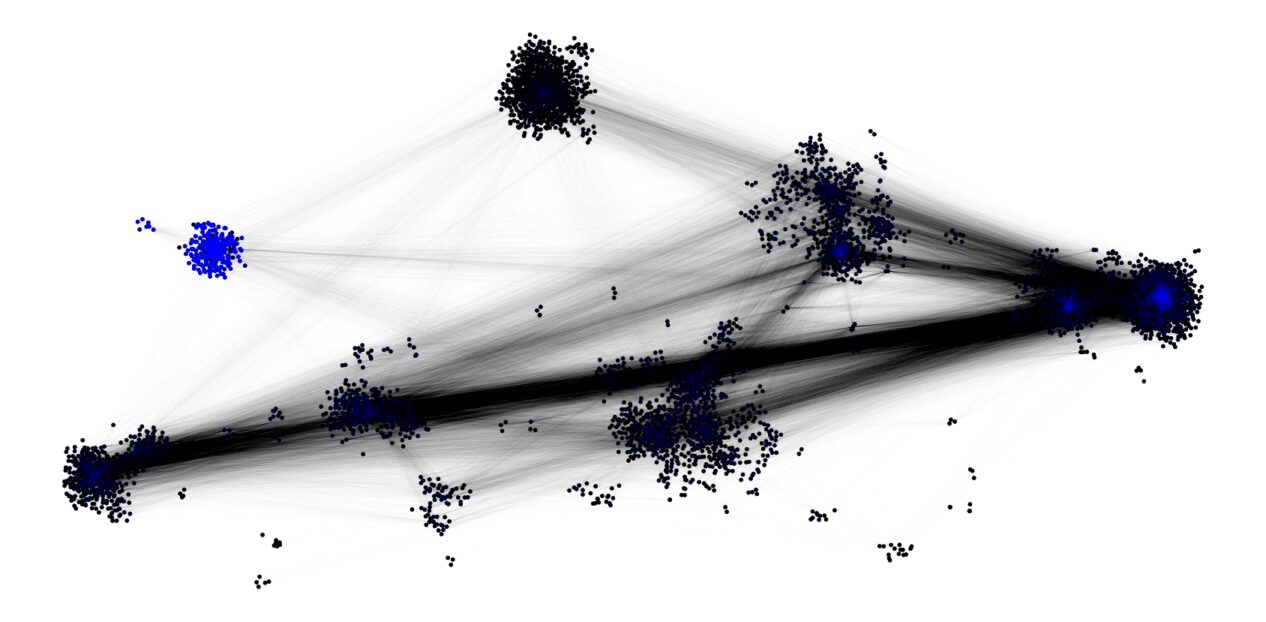}}%
	\subfigure[Major-217 \mbox{\LFI}\label{fig:JH_h}]{\includegraphics[width=0.3\linewidth]{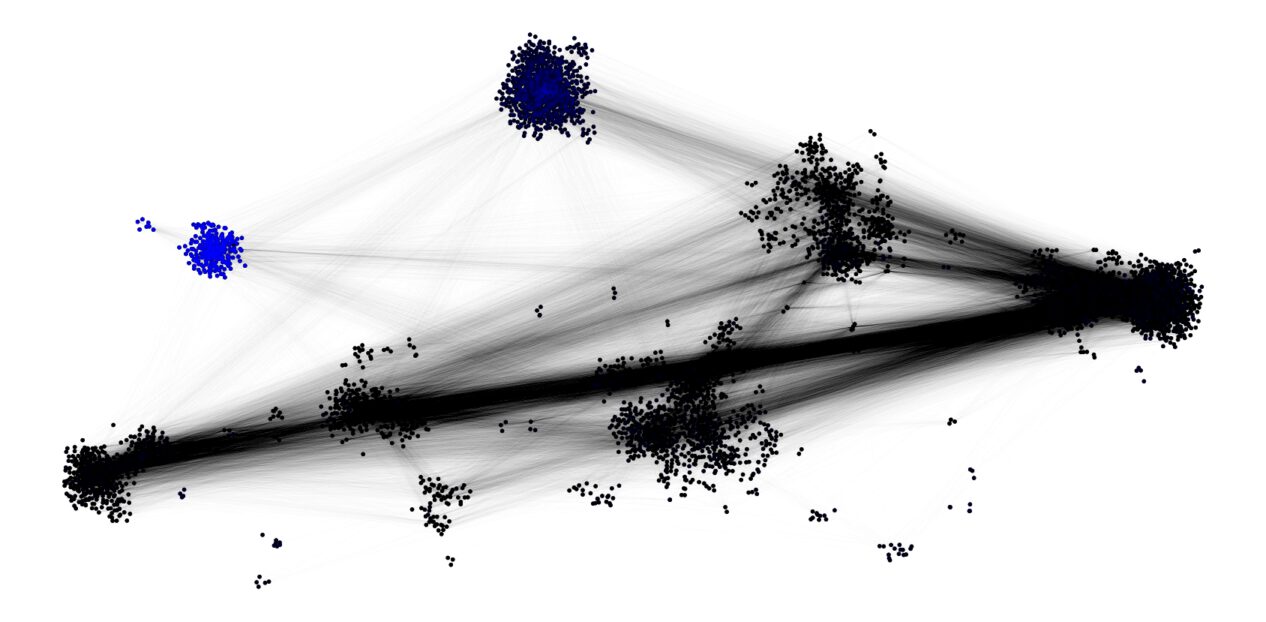}}%

	\caption{The \MQI and \LFI[0.1] algorithms can also find target sets in the Johns Hopkins Facebook social network, even though they are fairly large conductance, which makes them more challenging. The algorithms use as the reference set a simple 2-hop BFS sets with low precisions starting from a random target node. The layout for this graph has been obtained using the force-directed algorithm, which is available from the graph-tool project~\cite{peixoto_graph-tool_2014}. The colors show the regions that are excluded (black) or included (red or blue) by each input set or algorithm over 25 trials.}
	\label{fig:JH}
\end{figure}

This particular set of examples is designed to illustrate algorithm behavior in a simplified and intuitive setting. In both cases we see results that reflect the interaction between the algorithm and the data transformation. For the images, we create a graph designed to correlate with segments, then run an algorithm designed to improve and find additional structure. For the social network, we simply take the data as given, without any attempt to change it to make algorithms perform better. However, with more specific tasks in mind, we would suggest altering the graph data in light of the targeted use as in~\cite{Gleich-2015-robustifying,Benson-2016-motif-spectral,Peel-2017-relational}.

\subsection{Using flow-based algorithms for semi-supervised learning}
\label{sec:semi-supervised}
\newcommand{\asideedges}{\aside{aside:nonedge}{If edges from the graph do not represent a high likelihood of a shared attributed, then there are scenarios where the graph data itself can be transformed such that this becomes the case~\cite{Peel-2017-relational}.}}

Semi-supervised learning on graphs is the problem of inferring the value of a hidden label on all vertices, when given a few vertices with known labels. Algorithms for this task need to assume that the graph edges represent a high likelihood of sharing a label.
For instance, one of the datasets we will study is the MNIST data. 
Each node in the MNIST 
\siamonly{\asideedges}
\tronly{\asideedges}
graph represents an image of a handwritten digit, and edges connect two images based
 on a nearest neighbor relationships. 
The idea is that images that show similar digits should share many edges. Hence, knowing a few node labels would allow one to infer the hidden labels.  Note that this is a related, but distinct, problem to the target set identification problem (\Cref{sec:target-set}). The major difference is that we need to handle multiple values of a label and produce a value of the label for all vertices.

An early paper on this topic suggested that MinCut and flow-based approaches should be useful~\cite{Blum-2001-mincuts}.
In our experiments, we compare flow-based algorithms \LFI and \FI with seeded PageRank, and we find that the flow-based algorithms are more sensitive to an increase in the size of the set of known labels.
In the coming experiments, this manifests as an increase in the recall while keeping the precision fixed.
For these experiments, \MQI is not a useful strategy, as the purpose is to \emph{grow and generalize} from a fixed and known set of labels to the rest of the graph.

There are three datasets we use to evaluate the algorithm for semi-supervised learning: a synthetic stochastic block model, the MNIST digits data, and a citation~network.
\begin{itemize}
	\item \textbf{SBM}. SBM is a synthetic stochastic block model network. It consists of $6000$ nodes in three classes, where each class has $2000$ nodes. The probability of having a link between nodes in the same class is $0.005$ while the probability of having a link between nodes in different classes is $0.0005$. The one we use in the experiment has $36102$ links. By our construction, the edges preferentially indicate class similarity.

	\item \textbf{MNIST}. MNIST is a $k$-NN (nearest neighbor) network~\cite{Lecun-1998-mnist}.
              The raw data consists of $60000$ images. Each image represents a handwritten sample of one arabic digit. Thus, there are 10 total classes. In the graph, each image is represented by a single node and then connected to its 10 nearest neighbors based on Euclidean distance between the images when represented as vectors of greyscale pixels. We assume that edges indicate class similarity.

	\item \textbf{PubMed}. PubMed is a citation network~\cite{namata2012query}. It consists of $19717$ scientific publications about diabetes with 44338 citations links. Each article is labeled with one of three types. By our assumption, articles about one type of diabetes cite others about the same type more often.
\end{itemize}

The experiment goes as follows: for each class, we randomly select a small subset of nodes, and we fix the labels of these nodes as known. We then run a spectral method or flow method where this set of nodes is the reference. We vary the number of labeled nodes included from 0.5\% to 15\% of the class size. For each fixed number of labeled nodes, we repeat this $30$ times to get a distribution of precision, recall, and $F1$ scores (where $F1$ is the harmonic mean of precision and recall), and we represent an aggregate view of this.  For the flow methods, the output is a binary vector with $1$ suggesting the node belongs to the class of reference nodes. Thus, it's possible that some nodes are classified into multiple classes, while some other nodes remain unclassified. We consider the first case as false positives and the second case as false negatives when computing precision and recall. For the spectral method, we use the real-valued solution vector to uniquely assign a node to a class.

The results are in \Cref{fig:ssl-all} and show that the flow-based methods have uniformly high precision. As the set of known labels increases, the recall increases, yielding a higher overall $F1$ score. Furthermore, the regularization in \LFI[0.1] causes the set sizes to be smaller than \FI, which manifests as a decrease in recall compared with \FI. In terms of why  the flow-based algorithms have low-recall with small groups of known labels sizes, note \Cref{lem:monotonicFracProg}, which requires that the cut and denominator to reduce at each step. This makes it challenging for the algorithms and objectives to produce high recall when started with small sets unless the sets are exceptionally well-separated.

\begin{figure}
	\includegraphics[width=\linewidth]{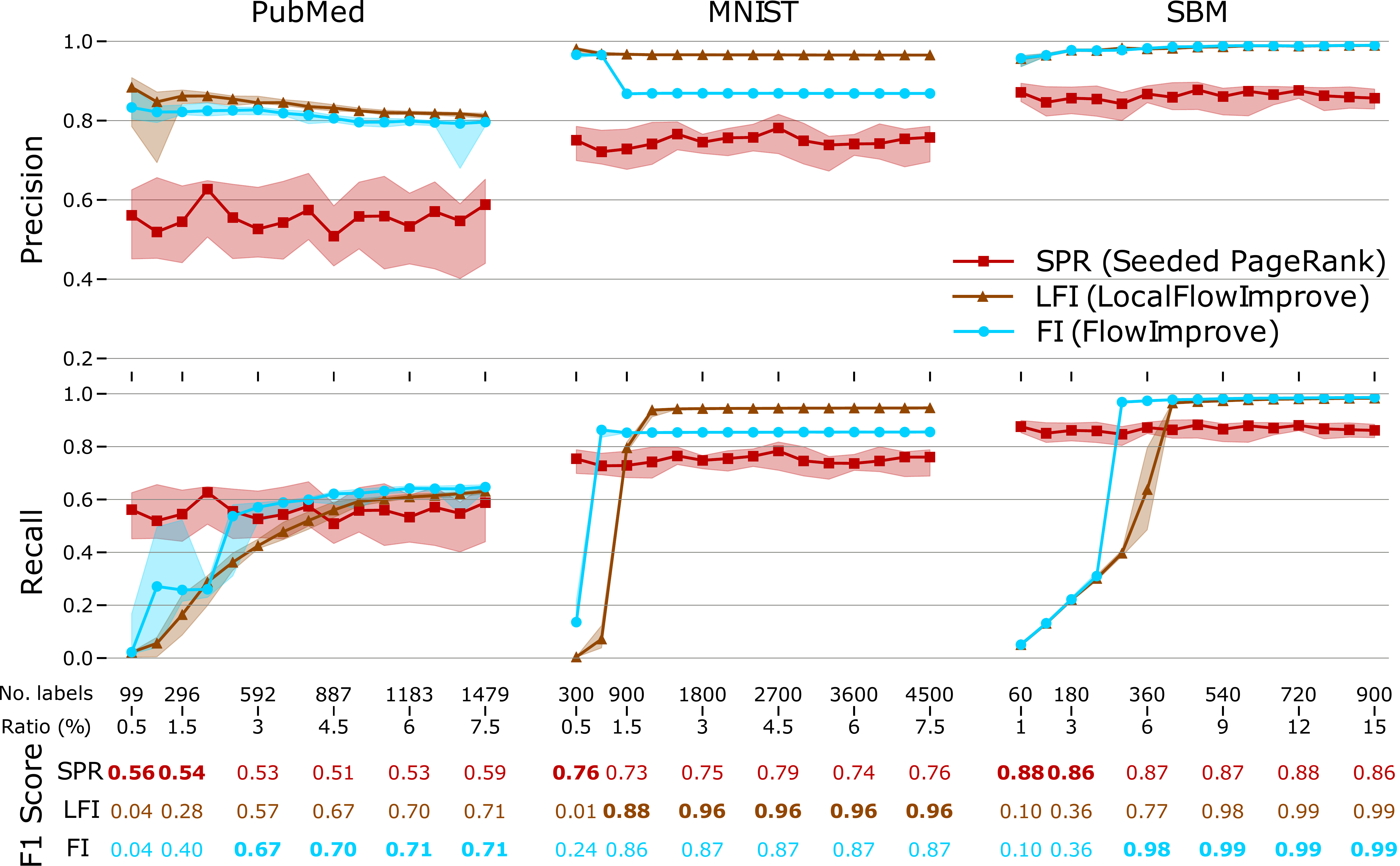}
	\caption{The horizontal axis shows the number of true labels included in the seeds and the plots are aligned with the tables so you can read off the F1 score as well as the associated precision and recall for each choice. These results on semi-supervised learning show that the flow-based methods \LFI[0.1] and \FI are more sensitive to the number of known true labels included in the reference seed sets compared with seeded PageRank.
                }
	\label{fig:ssl-all}
\end{figure}

\subsection{Improving thousands of clusters on large scale data}
\label{subsec:findingThousandOfClusters}

In practice, it is often the case that one might want to explore thousands of clusters in a single large graph.
For example, this is a common task in many computational topology pipelines~\cite{Lum-2013-topology}.
Another example that requires thousands of clusters is computing the network community profile~\cite{LLDM2009,LLDM08_communities_CONF,LLM10_communities_CONF}, which shows a conductance-vs-size result for a large number of sets, as a characteristic feature of a network.
In this section, we will explore the runtime of the flow-based solvers on two small biological networks and two large social networks, where nodes are individuals and edges represent declared friendship relationships.
\begin{itemize}
        \item \textbf{sfld}, has $232$ nodes and $15570$ edges in this graph~\citep{brown2006gold}. %
        \item \textbf{ppi}, has $1096$ nodes and $13221$ edges~\citep{pagel2004mips}. %
	\item \textbf{orkut}, has $3,072,441$ nodes and $117,185,083$ edges~\cite{Mislove-2007-measurement}. This data set can be accessed via~\citet{snapnets}.
	\item \textbf{livej}, has $4,847,571$ nodes and $68,993,773$ edges~\cite{Mislove-2007-measurement}. This data set can be accessed via~\citet{snapnets}.
\end{itemize}

The goal will be to enable studies such as those discussed above using instead the flow-based algorithms as a subroutine on these graphs.
To generate seed sets to refine, we use seeded PageRank.
Each input set is the result of a seeded PageRank algorithm on a random node with a variety of settings to generate sets between a few nodes and up to around 10,000 nodes. For each resulting set, we then run the \MQI, \LFI[0.9], \LFI[0.6], and \LFI[0.3] improvement methods. We use our code~\citep{git:localgraphclustering}, which has a Python interface and methods that are implemented using C++, for all of these experiments and runtimes. Our environment has a dual CPU Intel E5-2670 (8 cores) CPU with 128 GB RAM.
We parallelize over individual runs of the seeded PageRank and flow methods using the Python Multiprocessing module using a common shared graph structure.  Note that each individual run is independent.

In this way, we are able to explore tens of thousands of clusters in around 30-40 minutes, as we demonstrate in \Cref{tab:runtime}.
There, we present running times for producing the seeded PageRank sets and then refining it with the flow-based methods.
Note that the fastest method is \MQI.
It is even faster than the seeded PageRank method that generates the input sets. This is because \MQI only explores the input subcluster, while LocalFlowImprove reaches outside of the input seed set of nodes. Also, note the dependence of the runtime for \LFI on the parameter $\delta$. The larger the parameter $\delta$ for \LFI, the smaller the part of the graph that it explores outside of the input set of nodes.
This property is also captured in \Cref{tab:runtime} by the running time of \LFI.

\begin{table}[t] %
	\caption{Running times in seconds for generating and improving clusters on small scale biological networks and large-scale social networks. The input cluster to the flow-based improvement methods is the output of seeded PageRank. It takes around 20 minutes to generate the input clusters for large scale social networks. Running the flow-based improvement algorithms takes around the same amount of time, except for \LFI[0.3] on LiveJournal, which takes roughly 30 minutes. The time measurements reflect the pleasingly parallel computation of results for all clusters on a 16-core machine.
                }
                \centering
		\begin{tabularx}{\linewidth}{@{}lXXXXXXXX@{}}
			\toprule
			& & & & Time & (s) & \\
			\cmidrule(l){5-9}
			graph & nodes  & edges & \rlap{\mbox{clusters}} found & seeded \PRs & \MQI  & \LFI[0.3] & \LFI[0.6] & \LFI[0.9] \\
			\midrule
			sfld & 232 & 16k & $342$ & $18$& $0.5$& $1.7$ & $1.6$ & $1.5$\\
			ppi & 1096 & 13k & $1199$ & $46$& $1$& $2.6$ & $2.5$ & $2.3$\\ \addlinespace
			orkut & 3M & 117M & $13799$ & $1130$& $171$& $838$ & $701$ & $628$\\
			livej & 4.8M & 69M & $31622$& $1057$ & $105$ & $1940$ &  $1326$  & $1094$ \\
			\bottomrule
		\end{tabularx}%
		\label{tab:runtime}%
\end{table}%

\subsection{Using flow-based methods for local coordinates}
\label{subsec:flowlocalstructure}

\newcommand{\asideembed}{\aside{embed-weakness}{A bigger issue with spectral embeddings is that they often produce useless results for many large networks; see~\citet{Lang2005-spectral-weaknesses}. Here, we use networks where these techniques yield interesting results.}}

A common use case for global (but also local~\citep{LBM16_TR,lawlor2016mapping}) spectral methods on graphs and networks is using eigenvector information in order to define coordinates 
\tronly{\asideembed}
for the vertices of a graph. 
This is often called a spectral embedding or eigenvector embedding~\cite{Hall-1970}; it may use two or more eigenvectors directly or with simple transformations of them in order to define coordinates for each node~\cite{ng2001spectral}. The final choice is typically made for aesthetic reasons. 
An example of a spectral embedding for the US highway network is shown in \Cref{fig:usroads-embed}.
One of the problems with such global embeddings is that they often squash interesting and relevant regions of the network into filamentary structures.
For instance, notice that both the extreme pieces of this embedding compress massive and interesting population centers of the US on the east and west coast. Alternative eigenvectors show different but similar structure. 
A related problem is that they smooth out interesting features, making them difficult to use.

\siamonly{\asideembed}

\begin{figure}
	\centering
	\includegraphics[width=0.6\linewidth]{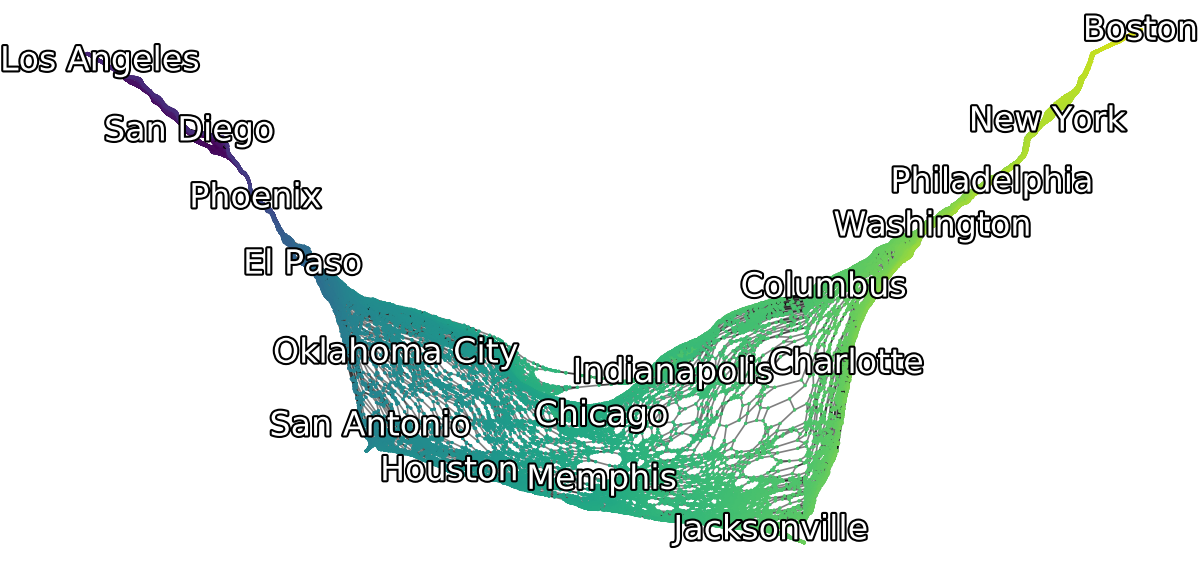}
	\caption{A spectral embedding of the US Highway Network corresponding to the first and second non-trivial eigenvectors of the Laplacian matrix. The embedded locations of major cities are  labeled as well. Node color is determined by the true longitude of a node, which shows that the first eigenvector of the Laplacian correlates with an east-west split of the network. This global embedding, however, compresses major regions of the northeastern US (Washington, New York, Boston) as well as the Western US (Los Angeles, San Diego, Phoenix). }
	\label{fig:usroads-embed}
\end{figure}

Semi-supervised eigenvectors are one way to address this aspect of global spectral embeddings~\cite{Hansen-2014-semi-supervised-eigenvectors,LBM16_TR,lawlor2016mapping}.
These seek orthogonal vectors that minimize a constrained Rayleigh quotient. One challenge in using related ideas to study \emph{flow-based} computation is that the solution of flow-problem is fundamentally discrete and binary. That is, a spectral solution produces a real-valued vector whose entries, e.g., for seeded PageRank, can be interpreted as probabilities. We can thus meaningfully discuss and interpret sub-optimal, orthogonal solutions. Flow computations only gives 0 or 1 values, where orthogonality implies disjoint sets.

In this section, we investigate how flow-based methods can be used to compute real-valued coordinates that can show different types of structure within data compared with spectral methods.
In the interest of space, we are going to be entirely procedural with our description, justification is provided in \Cref{sxn:app-replication}.

Given a reference set $R$, we randomly choose $N$ subsets (we use 500-2500 subsets) of $R$ with exactly $k$ entries; for each subset we add all nodes within distance $d$ and call the resulting sets called $R_1, \ldots, R_N$. These serve as inputs to the flow algorithms. For each subset, we compute the result of a flow-based improvement algorithm, which gives us sets $S_i$. For each $S_i$, we form an indicator vector over the vertices, $\vx_i$,  where the entry is $1$ if the vector is in the set and $0$ otherwise. We assemble these vectors as columns of a matrix $\mX$, and we use the coordinates of the dominant two left singular vectors as flow-based coordinates. This procedure is given as an algorithm in \Cref{algo:flow-coords}. Note also that this procedure can be performed with spectral algorithms as well. (See the appendix for additional details.)

\begin{algorithm}[t]
	\caption{The local flow-based algorithm to generate flow-based coordinates. }
	\label{algo:flow-coords}
	\begin{algorithmic}[1]
		\REQUIRE A graph $G$, a set $R$ and parameters
		\begin{itemize}
			\item $N$ : the number of sets to sample
			\item $k$: the size of each subset
			\item $d$: the expansion distance
			\item $c$: the dimension of the final embedding
			\item \texttt{improve}: a cluster improvement algorithm
		\end{itemize}
		\ENSURE An embedding of the graph into $c$ coordinates for each node

\STATE Let $n$ be the number of vertices.
\STATE Allocate $\mX$, an $n$-by-$N$ matrix of zeros.
\FOR{$i$ in $1$ to $n$}
\STATE Let $T$ be a sample of $k$ entries from $R$ at random without replacement
\STATE Let $R_i$ be the set of $T$ and also all vertices within distance $d$ from $T$
\STATE Let $S_i$ be the set that results from \texttt{improve}$(G,R_i)$
\STATE Set $X[v,i] = 1$ for all $v \in S_i$
\ENDFOR
\STATE Compute the rank-$c$ truncated SVD of $\mX$ and let $\mU$ be the left singular vectors.
\STATE \textbf{Return} $\mU$, each row gives the $c$ coordinates for a node
	\end{algorithmic}
\end{algorithm}

The results of using \Cref{algo:flow-coords} (see parameters in \Cref{sxn:app-replication-tables}) to generate local coordinates for a set of vertices on the west coast of the United States Highway Map is shown in \Cref{fig:usroads-local-embed}.
The set of vertices shown in \Cref{fig:usroads-embed-seed} is in a region where the spectral embedding compresses substantial information.
This region is shown on a map in  \Cref{fig:usroads-embed-map}, and it includes major population centers on the west coast.
In \Cref{fig:usroads-embed-local-spectral}, we show the result of a local spectral embedding that uses seeded PageRank in \Cref{algo:flow-coords}, along with a few small changes that are discussed in our reproducibility section.
(Here, we note that these changes do not change the character of our findings, they simply make the spectral embedding look better.)
In the spectral embedding, the region shows two key areas: 1.~Seattle, Portland, and San Francisco and 2.~Los Angeles, San Diego, and Phoenix.
In \Cref{fig:usroads-embed-local-flow}, we show the result of the local flow-based embedding that uses \LFI[0.1] as the algorithm.
This embedding clearly and distinctly highlights major population centers, and it does so in a way that is clearly qualitatively different from spectral methods.

\begin{figure}
	\centering
	\subfigure[\label{fig:usroads-embed-seed} The subset of nodes (left) from the spectral embedding of the US Highway Network used to compute local embeddings]{\includegraphics[width=0.4\linewidth]{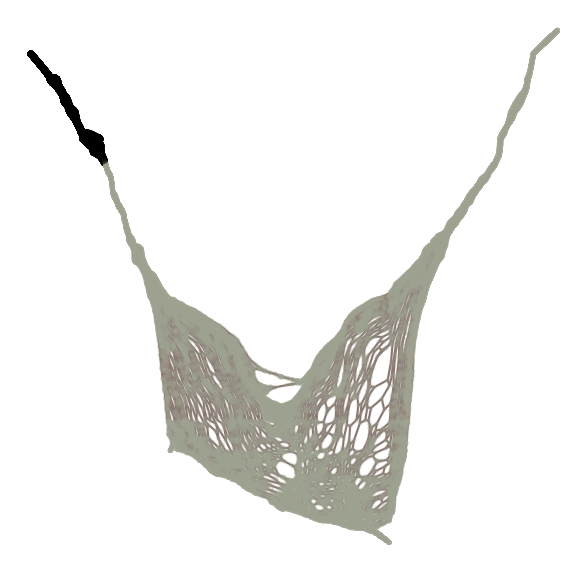}}
	\hspace{2em}
	\subfigure[\label{fig:usroads-embed-map} The same subset shown on a map with major cities labeled]{\includegraphics[width=0.4\linewidth]{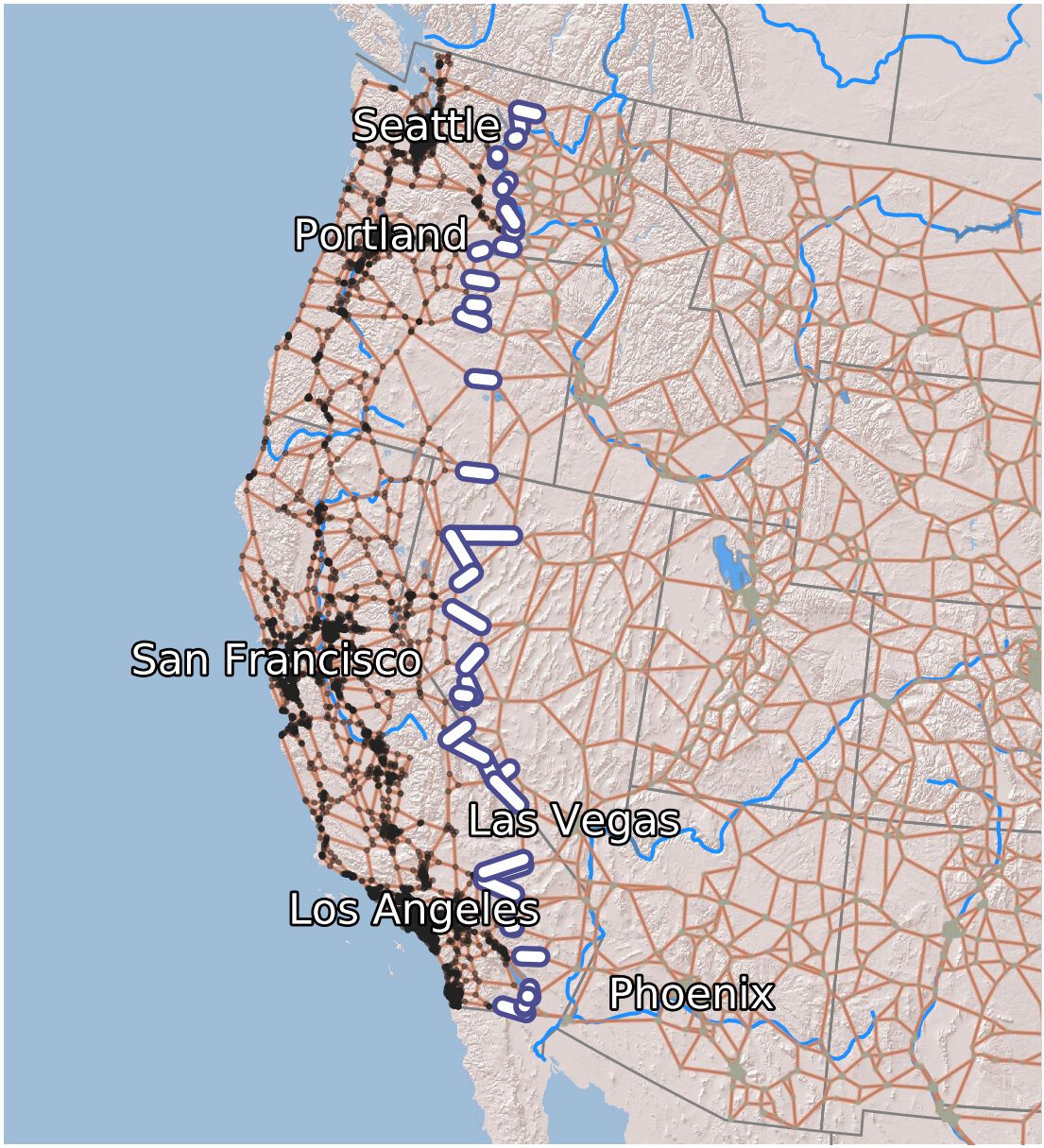}}

	\subfigure[\label{fig:usroads-embed-local-spectral} The local spectral embedding of the US west coast cities]{\includegraphics[width=0.4\linewidth]{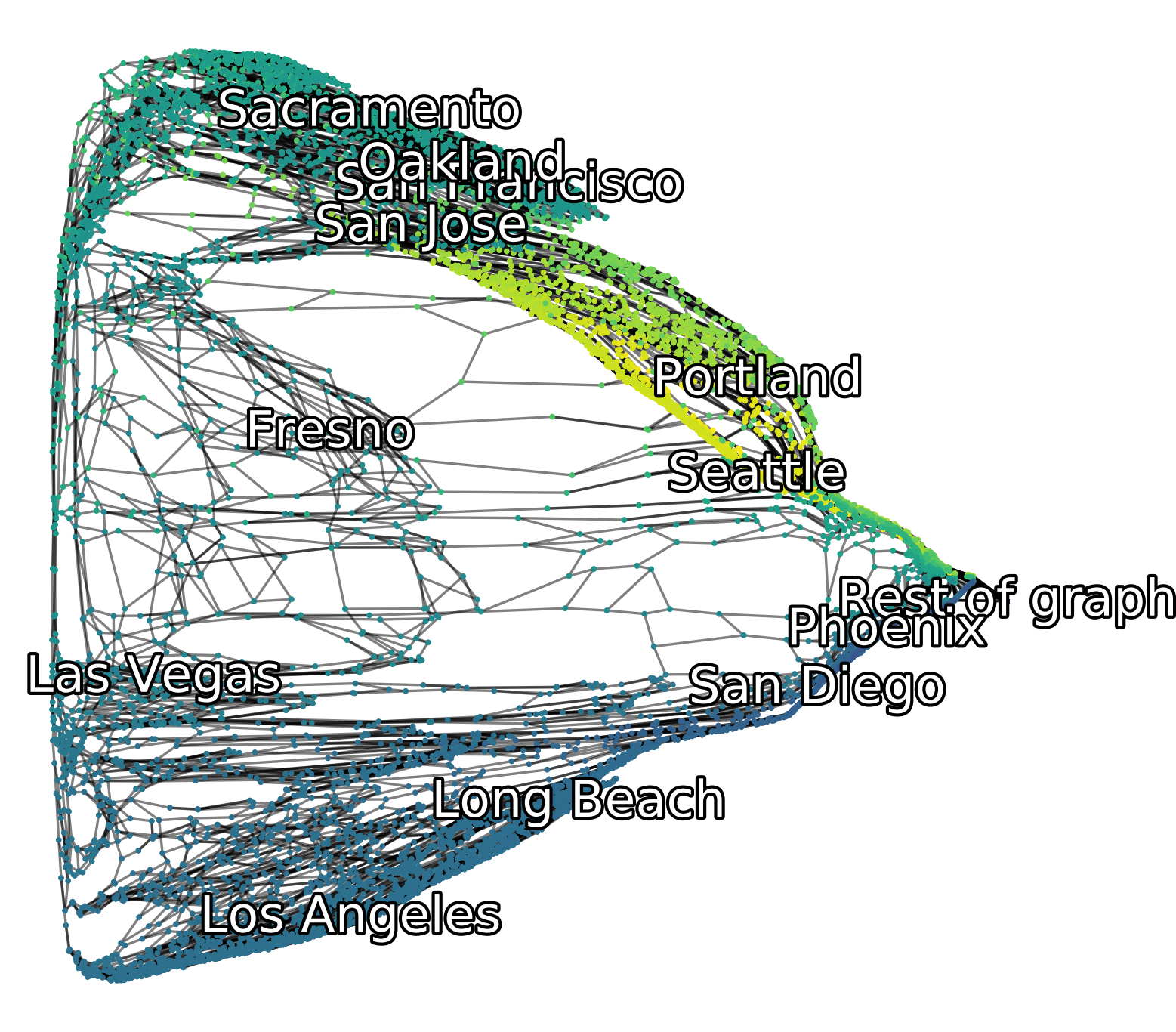}}
	\hspace{2em}
	\subfigure[The local flow embedding of the US west coast cities \label{fig:usroads-embed-local-flow}]{\includegraphics[width=0.4\linewidth]{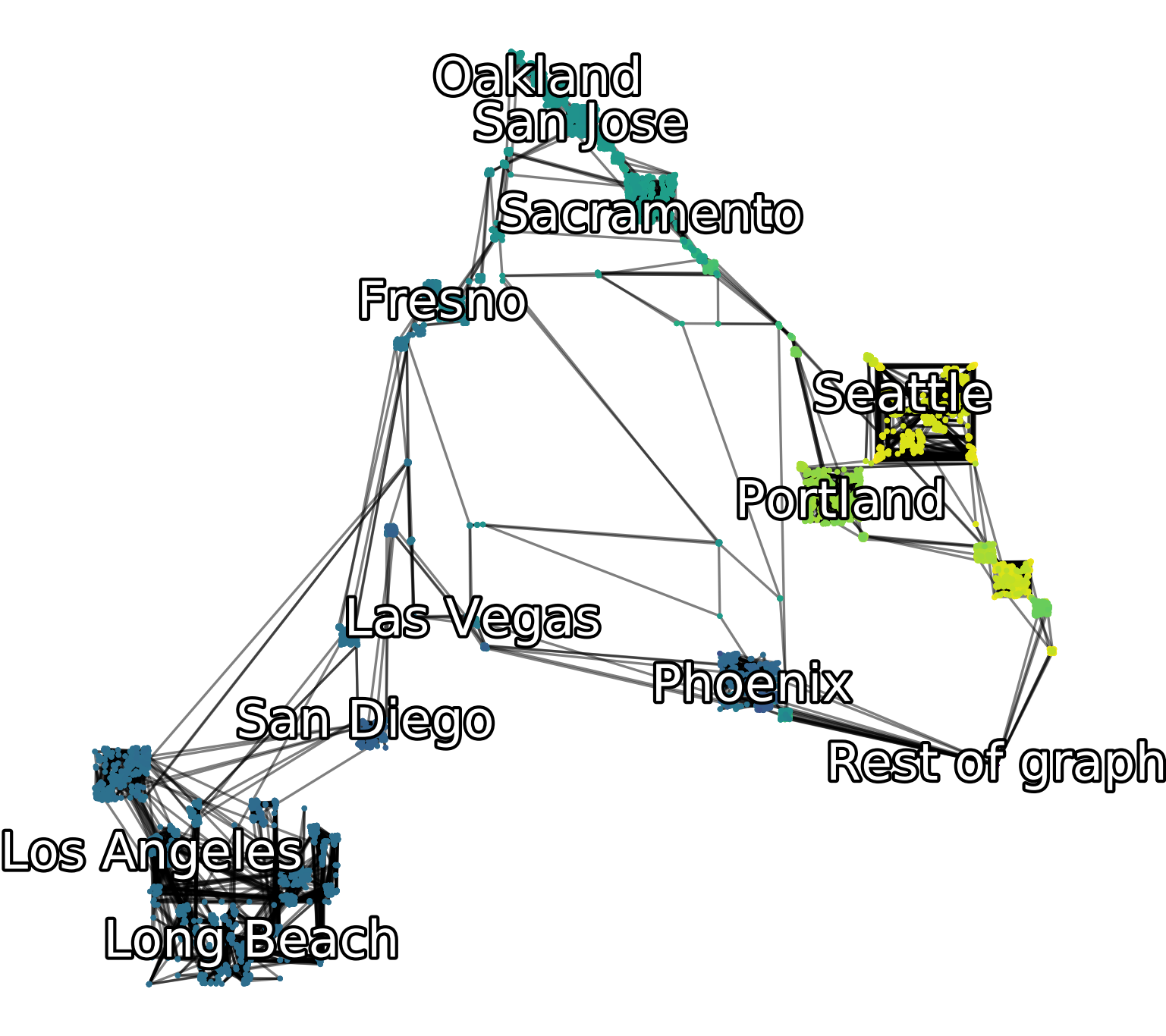}}
	\caption{We select a subset of 7143 nodes that are compressed in the spectral embedding of the US Highway network (shown in red and blue in figures (a) and (b)) that represent the majority of major cities on the west coast. Note that interior cities such as Phoenix and Las Vegas are not included in the set. In (c) and (d) we show the results of running the pipeline from \Cref{algo:flow-coords} to generate local spectral and local flow based embeddings into two dimensions. The color of a node is determined by its north-south latitude. Note that both include Phoenix and Las Vegas. 
	The local flow embedding clearly and distinctly delineates clusters corresponding to major population centers whereas the local spectral embedding shows a smooth view with only two major regions: 1.~Northern California to Seattle and 2.~Southern California to Phoenix and Las Vegas. }
	\label{fig:usroads-local-embed}
\end{figure}

We repeat this analysis on the Main Galaxy Sample (MGS) dataset to highlight the local structure in a particularly dense region of the spectral embedding that was used for \Cref{fig:astro}. The seed region we use is shown in \Cref{fig:astro-embed-seed} and has 201,252 vertices, which represents almost half the total graph. We use \Cref{algo:flow-coords} (see parameters in the \Cref{sxn:app-replication}) again to get local spectral~(\Cref{fig:astro-embed-spectral}) and local flow embeddings~(\Cref{fig:astro-embed-flow}). Again, we find the the local flow embedding shows considerable substructure that is useful for future analysis.

\begin{figure}
		\setlength{\fboxsep}{0pt}
		\subfigure[The seed region \label{fig:astro-embed-seed}]{\colorbox{shadecolor}{\includegraphics[width=0.32\linewidth,trim={12in 12in 0.5in 0.5in},clip]{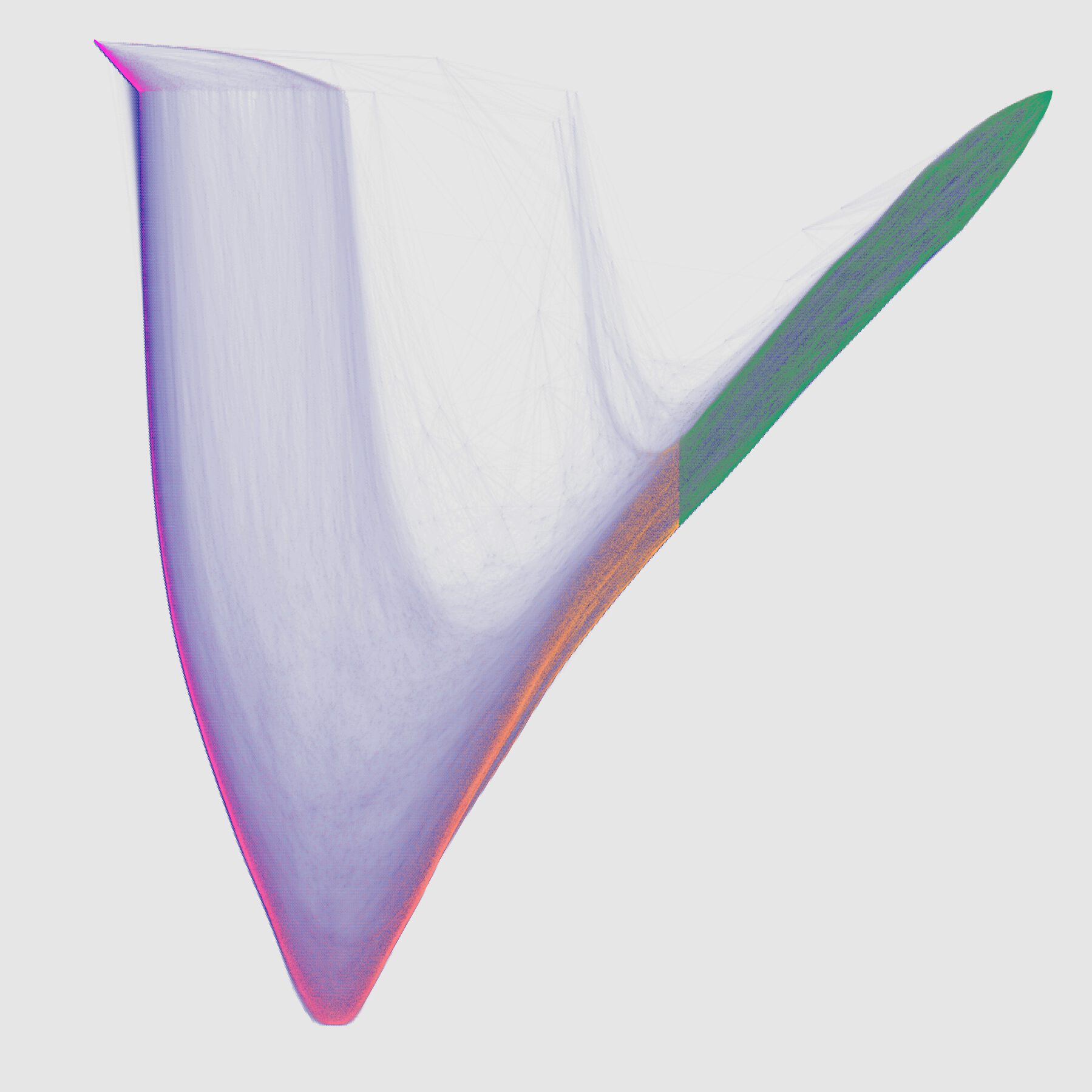}}}
		\subfigure[Local spectral  embedding \label{fig:astro-embed-spectral}]{\colorbox{shadecolor}{\includegraphics[width=0.32\linewidth]{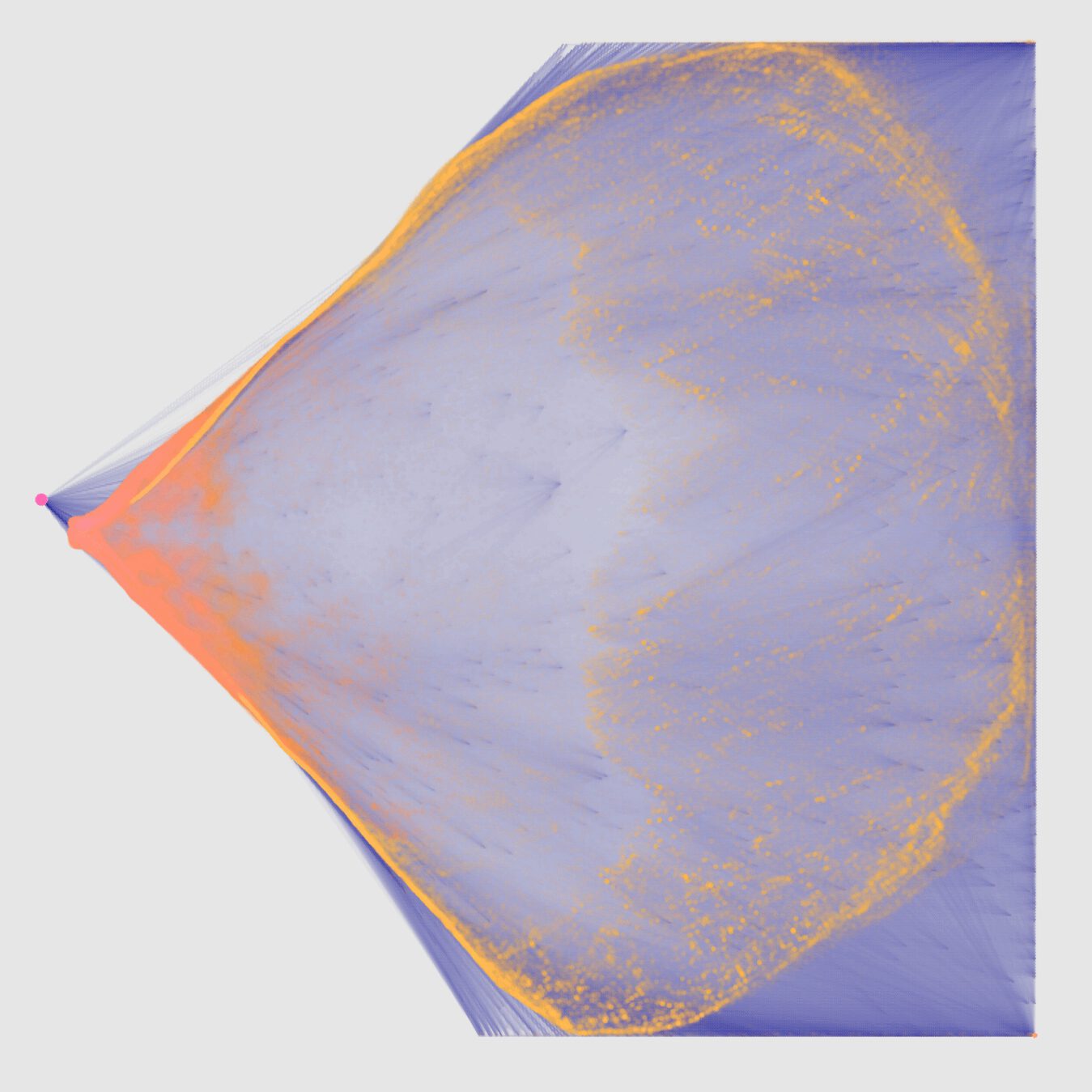}}}
		\subfigure[Local flow  embedding \label{fig:astro-embed-flow}]{\colorbox{shadecolor}{\includegraphics[width=0.32\linewidth]{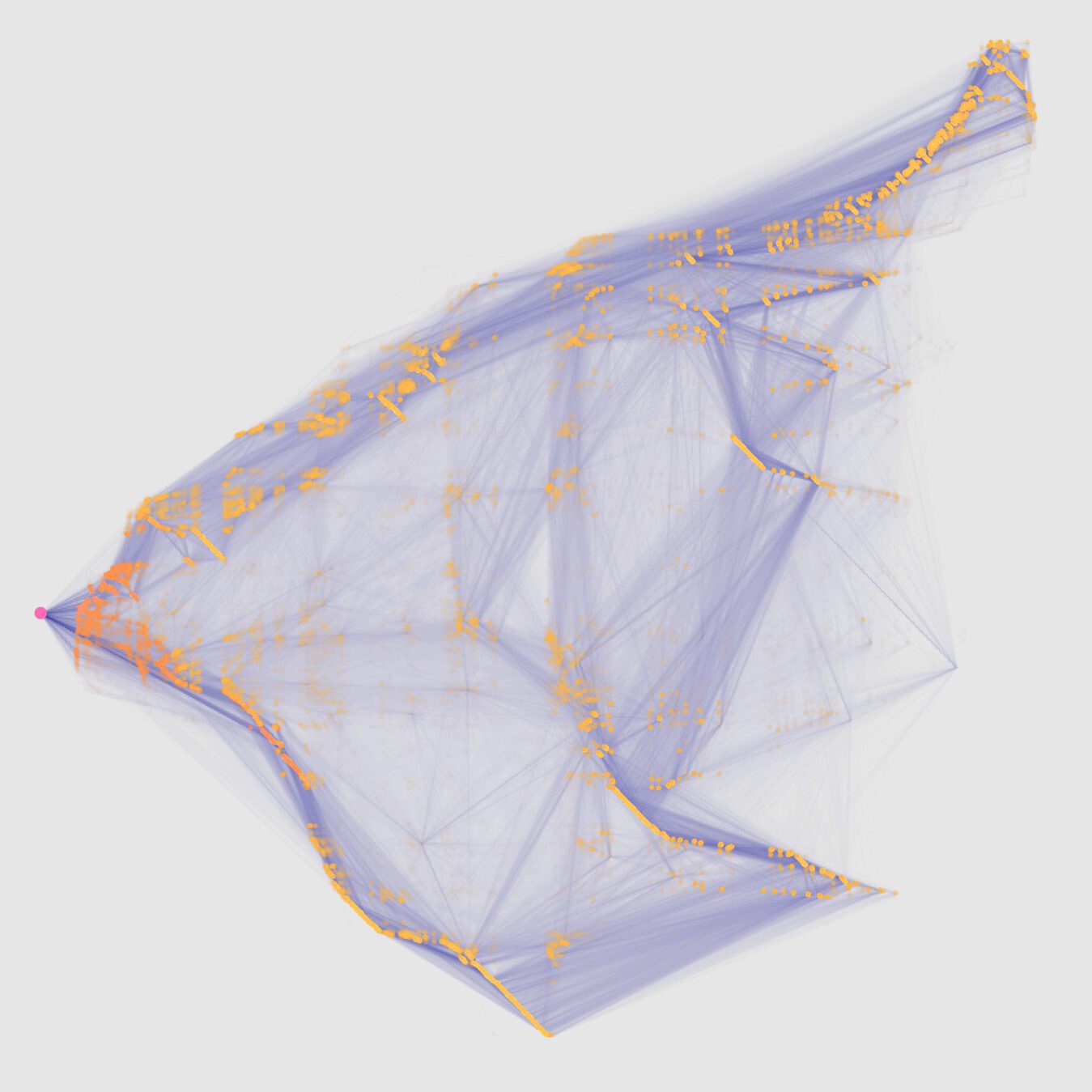}}}

\caption{Local spectral and local flow embeddings of the large, 201,252 node, seed region -- shown in green in (a) -- that is compressed in the global spectral embedding from \Cref{fig:astro}. In (b), the local spectral shows the nodes colored with the same color as in \Cref{fig:astro}. Nodes that were not touched by the local embedding are shown with the big node on the right hand side. In (c), the local flow embedding with the same color scheme and same big node on the right hand side. Note that the spectral embedding does not show any clear sub-structure besides a top-bottom split. In contrast, the flow embedding shows a number of pockets of structure indicative of small conductance subsets. }
\label{fig:astro-embed}
\end{figure}

As a simple validation that this substructure is real, we use the 2d embedding coordinates as input to a $k$-means clustering procedure on both the local spectral and local flow coordinates. For each cluster that results from this procedure, we compute its conductance. Histograms of conductance values are shown in \Cref{fig:astro-cond-hist} for $k=50$ and $k=100$. Both of these histograms show consistently smaller conductance values for the flow-based embedding.

\begin{figure}
	\subfigure[$k=50$]{\includegraphics[width=0.5\linewidth]{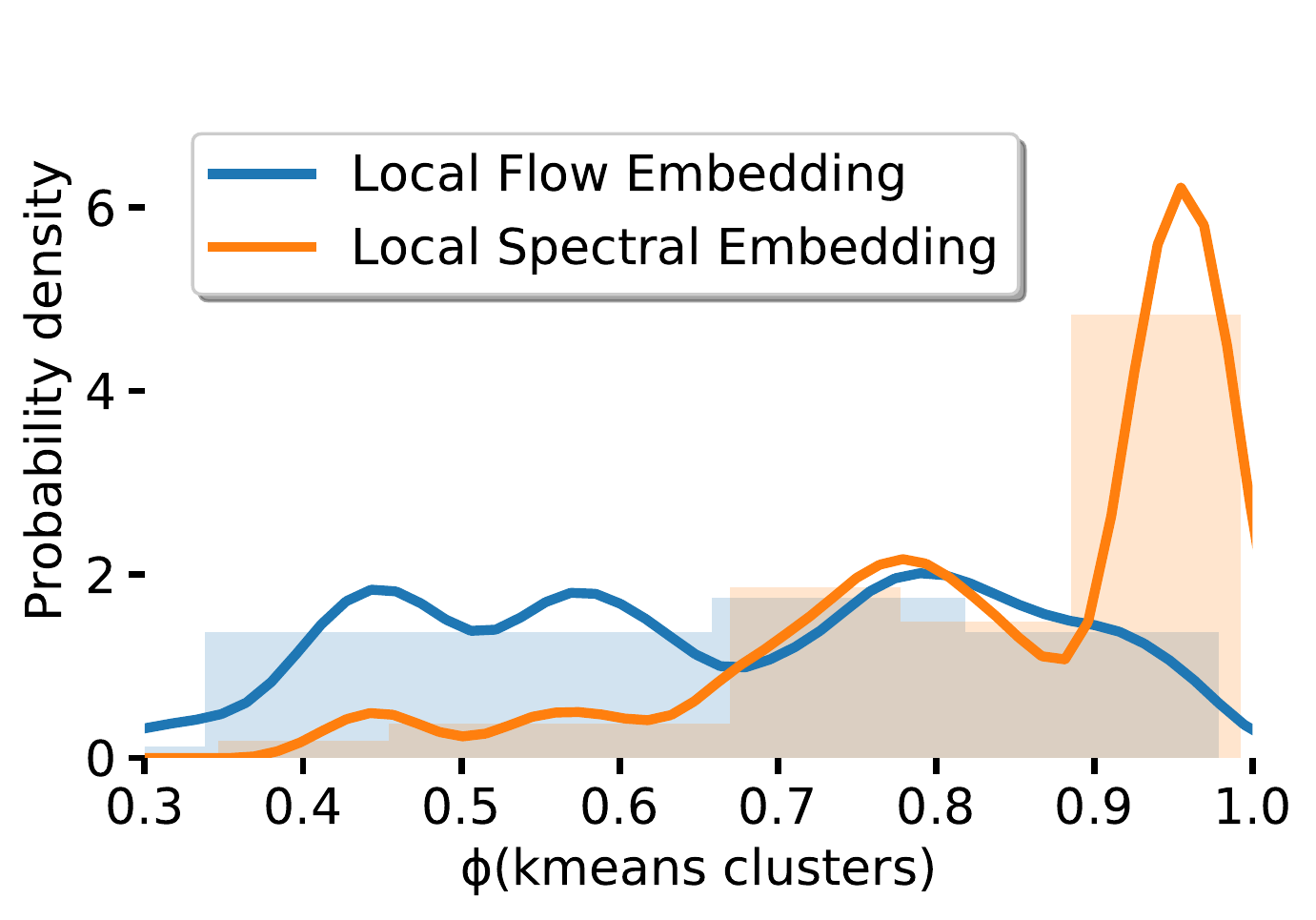}}%
	\subfigure[$k=100$]{\includegraphics[width=0.5\linewidth]{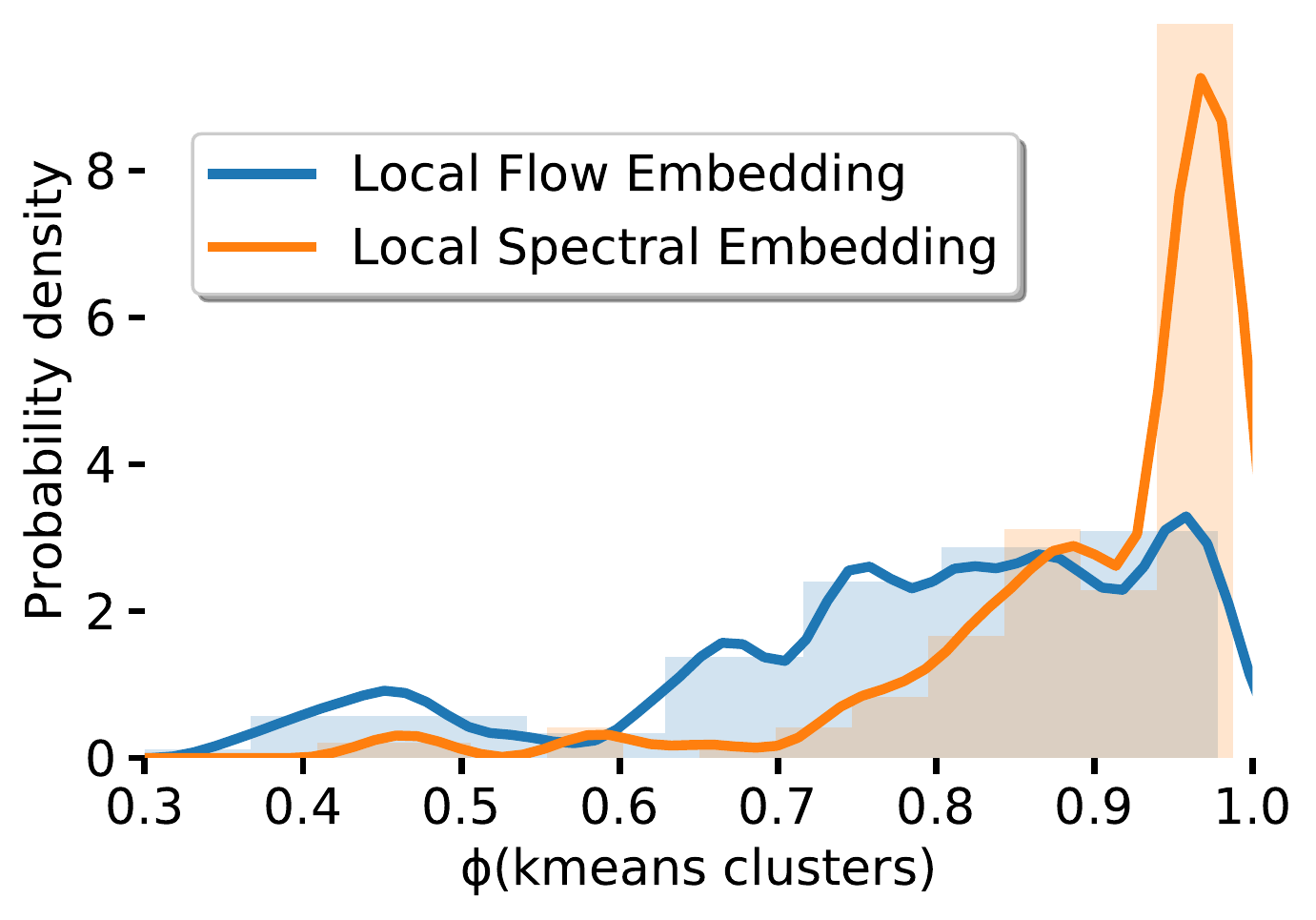}}

\caption{A histogram of cluster conductance scores that come from using $k$-means on the two-dimensional local spectral and local flow embeddings from \Cref{fig:astro-embed}. These show that the flow embedding produces clusters with smaller conductance, and they support the intuition from \Cref{fig:astro-embed} that the additional structure suggested by the flow embedding reflects meaningful sub-structure within the data.  }
\label{fig:astro-cond-hist}
\end{figure}

\section{Discussion and Conclusion}
\label{sec:conclusion}

Our goal with this survey is to highlight the advantages and wide utility of flow-based algorithms for improving clusters.
The literature on these methods is much smaller compared with other graph computation methodologies, despite attractive theoretical benefits.
For example, global spectral methods based on random walks or eigenvectors are ubiquitous in computer science, machine learning, and statistics. 
Here, we illustrated similar possibilities for flow based methods.
We have also shown that these local flow based improvement algorithms can scale to very large graphs, often returning outputs without even touching most of the graph, and that many popular machine learning and data analysis uses of flow based methods can be applied to them. 
This is the motivation behind our software package where these algorithms have been implemented~\cite{git:localgraphclustering}. 
An alternative implementation is available in Julia~\cite{Veldt-2019-localflow}. 
These results and methods open the door for novel analyses of very large graphs.

As an example of these types of novel analyses, note that the fractional ratio $\delta_k$ inside MQI, FlowImprove, and LocalFlowImprove (\Cref{algo:mqi,algo:flowImprove,algo:localflowimprove}) can be interpreted as a ratio between the numerator and denominator. This enables one to search for a value of $\delta$ that would correspond to a given solution. See the ideas in \citet{Veldt-2019-resolution} for how to use search methods to choose $\delta$ for a specific application of clustering. 

In our explanation of the theory behind the methods, we often encountered decisions where we could have made more general, albeit more complex, statements. Our guiding principle was to make it easy to appreciate the opportunities for these methods. As an example where there are more general results, note that much of the theory of this survey holds where $\vol(R) \ge \vol(\Rbar)$ for the seed. For instance, the MQI, FI, and LFI procedures are all well defined algorithms in this scenarios although our theory statements list the explicit requirement that $\vol(R) \le \vol(\Rbar)$. What happens in these scenarios is that some of the details of runtime and other aspects change. In terms of another generalization, the methods could have been stated in terms of a general volume function as noted in \Cref{sec:qcut-general}. Again, however, this setting becomes more complex to state for non-integer volume functions and a number of other subtle issues. In these cases, we sought for the explanations that would make the underlying issues clear and focused on conductance in order to do that.

There are a number of interesting directions that are worth further exploring. First, in the theory from this survey, the binary search or bisection-based search methods have superior worst-case time. However, in practice, these methods are rarely used. For instance our own implementations always use the Dinkelbach greedy framework. This is because this strategy commonly terminates in just a few iterations. This was noted in both the MQI and FlowImprove papers~\cite{LR04,AL08_SODA}, yet there is still no theoretically satisfying explanation. To provide some data, for the LocalFlowImprove experiments in \Cref{fig:cluster-improvement}, we never needed to evaluate more than 10 values of the fractional ratio to find the optimal solution in Dinkelbach's algorithm. As evidence that this effect is real, note that \Cref{lem:monotonicFracProg} actually shows that $\cut(R)$ is a bound on the number of iterations for Dinkelbach's algorithm. But for weighted graphs, this becomes $\cut(R)/\mu$, where $\mu$ is the minimum spacing between elements (think of the floating point machine value $\eps$). A specific case where this type of insight would be helpful is in terms of weighted graphs with non-negative weights. Dinkelbach's algorithm does not appear to be much slower on such problems, yet the worst-case theory bound is extremely bad and depends on the minimum spacing between elements. 

Another direction is a set of algorithms that span the divide between the fractional program and the \stMinCut problem. For instance, it not necessary to completely solve the flow problem to optimality (until the last step) as all that is needed is a result that there is a better solution available. This offers a wide space to deploy recent advances in Laplacian-based solvers to handle the problem -- especially because the electrical flow-based solution largely correspond to PageRank problems.  It seems optimistic, but reasonable, to expect good solutions in time that is more like a random walk or spectral algorithm. 

Finally, another direction for future research is to study these algorithms in hypergraphs~\cite{veldt2020localized} and other types of higher-order structures. This was a part of early work on graph-cuts in images that showed that problems where hyperedges had at most three nodes could be solved exactly~\cite{Kolmogorov-2004-graph-cuts}. More recently, hypergraphs have been used to identify refined structure in networks based on motifs~\cite{Li-2017-hypergraph}. 

In closing, our hope is that this survey and the associated LocalGraphClustering package~\citep{git:localgraphclustering} helps to make these powerful and useful methods---both basic flow-based analysis of smaller graphs, but especially local flow-based analysis of very large data graphs---more common in the future.

	\addcontentsline{toc}{part}{Part IV. Replicability Appendices and References}
	\section*{\large Part IV. Replicability Appendices and References}
	\siamonly{~\newline}
	
	\appendix

\section{Replicability Details for Figures and Tables}
\label{sxn:app-replication}
\label{sxn:app-replication-tables}

In the interest of reproducibility and replicability, we provide additional details on the methods underlying the figures. To replicate these experiments, see our publicly-available code~\citep{git:localgraphclustering_experiments}. All of the seeded PageRank examples in this survey use an $\ell_1$-regularized PageRank method~\citep{FKSCM2017}. We use $\rho$ to denote the regularization parameter in $\ell_1$-regularized PageRank, $\alpha$ to denote the teleportation parameter in $\ell_1$-regularized PageRank and $\delta$ to denote the parameter of LocalFlowImprove.

The implementations we have use Dinkelbach's method, \Cref{algo:fractionalprog}, for the fractional programming problem and Dinic's algorithm for exact solutions of the weighted MaxFlow problems at each step. Put another way, for MQI, we use \Cref{algo:mqi} and Dinic's algorithm to solve the MaxFlow problems. For FlowImprove and LocalFlowImprove, we use the Dinkelbach variation on \Cref{algo:localflowimprove} with Dinic's algorithm used to compute blocking flows in \Cref{algo:simplelocal}. We use the same implementation for LocalFlowImprove and FlowImprove and simply set $\delta = 0$ for FlowImprove. Using Dinkelbach's method and Dinic's MaxFlow has a worse runtime in theory, but better performance in practice.  The implementations always return the smallest connected set that achieves the minimum of the objective functions. They also always return a set with less than half the total volume of the graph.

\paragraph{\Cref{fig:example-sbm}}
We use the implementation of the Louvain algorithm in~\citep{git:community}.
We use our own code to generate the SBM. %
The code for this experiment is in the notebook \texttt{sbm\_demo.ipynb} in the subdirectory~\path{ssl} available in~\citep{git:localgraphclustering_experiments}.

\paragraph{\Cref{fig:example-geograph}}
This is a geometric-like stochastic block model ``hybrid''. A short description of the data-generation procedure follows but the code is the precise description. Create $g$ groups of $n$ points. Each group is assigned the same random 2d spatial coordinate from a standard mean 0, variance 1 normal distribution. But each node within a group is also perturbed by a mean 0 random normally distributed amount with variance $\sigma$.  Add $p$ additional points with normally distributed 
$\rho$ (a ``center'' group). These determine the coordinates of all the nodes. Now, add edges to $k$ nearest neighbors
and also within radius $\epsilon$. For this experiment, we set $g=25$, $n=100$, $\sigma=0.05$, $p=2000$, $\rho=5$, $k=5$ and $\epsilon=0.06$.
The code is in the Jupyter notebook \texttt{Geograph-Intro.ipynb}~\citep{git:localgraphclustering_experiments}.

\paragraph{\Cref{fig:example-astronaut}}
The image can be downloaded from~\citet{scikit-image}.
The image is turned into a graph using the procedure described in \Cref{sec:image-to-graph}. In particular, we set $r=80$, $\sigma_p^2=\mathcal{O}(10^2)$ and $\sigma_c^2=l/10$, where $l$ is the maximum between the row and column length of the image.
The code for this experiment is in the Jupyter notebook \texttt{astronaut.ipynb} in the subdirectory~\path{usroads} available in~\citep{git:localgraphclustering_experiments}.

\paragraph{\Cref{fig:mqi-for-images}}
The original image is 100 by 100 pixels with a 13 pixel wide by 77 pixel tall vertical strip and a 77 pixel wide by 25 pixel tall horizontal strip that intersect in a 13 by 25 pixel region. This setup and intersection produces a rotated T-like shape centered in the 100 by 100 pixel grid. 
To generate the noisy, blurred figure, we used a Moffat kernel~\cite{Moffat-kernel} that arises from stellar photography (parameters $\alpha=1.5, \beta=1.2$ and length scale $5$) and added uniform $[-0.1,0.1]$ noise (roughly 38\% of max blurred value 0.261) for each pixel before scaling by $1/0.3$ and clamping to $[0,1]$ range. This stellar photography scenario was chosen to simulate a binary image reconstruction scenario.  Let $\vf$ be noisy, blurry image with values in the range $[0,1]$. Let $G$ be the grid graph associated with the grid underlying the image $\vf$. Make sure to read the \Cref{chap:mqi} before reading the details of the reconstruction algorithm. Then we construct the following augmented graph: connect $s$ to $i \in V$ with weight $\delta f_i d_i$ (where $d$ is the degree); connect $t$ to $i \in V$ where $f_i = 0$ with weight $\infty$. We show the mincut solution $S$ for the two values of $\delta$ explained in the problem. This corresponds to a minimization problem similar to \eqref{eq:mqi-orig-problem}, namely $\minimize \cut(S) - \delta \sum_{i \in S}{d_i f_i} \text{ subject to } S \subseteq \{ i \mid f_i > 0 \} $, which uses a \emph{biased} notion of volume $\nu(S) = \sum_{i \in S} d_i f_i$ (as in \Cref{sec:qcut-general}). In this case, if $\delta$ is 0.04, then we can no longer continue improving the result and we end up with the convex set. For $\delta = 0.11$, we have the rough reconstruction of the original shape. For our own purposes, we used a Julia implementation~\cite{Veldt-2019-localflow} of the flow code that is available in the \path{mqi-images} subdirectory in~\citep{git:localgraphclustering_experiments}.

\paragraph{\Cref{fig:usroads-results}}
All details are given in the main text of the survey. The code is in the Jupyter notebook \texttt{usroads-figures.ipynb} in the subdirectory~\path{usroads} available in~\citep{git:localgraphclustering_experiments}.

\paragraph{\Cref{tab:usroads}} This table provides additional details for the results of \Cref{fig:usroads-results}. The code for this experiment is in the Jupyter notebook \texttt{usroads-figures.ipynb} in the subdirectory~\path{usroads} available in~\citep{git:localgraphclustering_experiments}.

\paragraph{\Cref{fig:astro}}
This dataset has been obtained from~\citet{lawlor2016mapping}.
It is a $k=32$-nearest neighbor graph constructed on the Main Galaxy Sample (MGS) in SDSS Data Release 7. Each galaxy is captured in a 3841-band spectral profile. Each spectra is normalized based on the median signal over 520 bands selected in~\citet{lawlor2016mapping}. Since the results are sensitive to this set and it is not available elsewhere, the indices of the bands were
\begin{quote} \tiny
856, 857, 858, 859, 860, 861, 862, 863, 864, 865, 866, 867, 868, 869, 870, 871, 872, 873, 874, 875, 876, 877, 878, 879, 880, 881, 882, 883, 884, 885, 886, 887, 888, 889, 890, 891, 892, 893, 894, 895, 896, 897, 898, 899, 900, 901, 902, 903, 904, 905, 906, 907, 908, 909, 910, 911, 912, 913, 914, 915, 916, 917, 918, 919, 920, 921, 922, 923, 924, 925, 926, 927, 928, 929, 930, 931, 932, 933, 934, 935, 936, 937, 938, 939, 940, 941, 942, 943, 944, 945, 946, 947, 948, 949, 950, 951, 952, 953, 954, 955, 956, 957, 1251, 1252, 1253, 1254, 1255, 1256, 1257, 1258, 1259, 1260, 1261, 1262, 1263, 1264, 1265, 1266, 1267, 1268, 1269, 1270, 1271, 1272, 1273, 1274, 1275, 1276, 1277, 1278, 1279, 1280, 1281, 1282, 1283, 1284, 1285, 1286, 1287, 1288, 1289, 1290, 1291, 1292, 1293, 1294, 1295, 1296, 1297, 1298, 1299, 1300, 1301, 1302, 1303, 1304, 1305, 1306, 1307, 1308, 1309, 1310, 1311, 1312, 1313, 1314, 1315, 1316, 1317, 1318, 1319, 1320, 1321, 1322, 1323, 1324, 1325, 1326, 1327, 1328, 1329, 1330, 1331, 1332, 1333, 1334, 1335, 1336, 1337, 1338, 1339, 1340, 1341, 1342, 1343, 1344, 1345, 1346, 1347, 1348, 1349, 1350, 1351, 1352, 1353, 1354, 1355, 1356, 1357, 1358, 1359, 1360, 1361, 1362, 1363, 1364, 1365, 1366, 1367, 1368, 1369, 1370, 1371, 1372, 1373, 1374, 1375, 1376, 1377, 1378, 1379, 1380, 1381, 1382, 1383, 1384, 1385, 1386, 1387, 1388, 1389, 1390, 1391, 1392, 1393, 1394, 1395, 1396, 1397, 1398, 1399, 1400, 1401, 1402, 1403, 1404, 1405, 1406, 1407, 1408, 1409, 1410, 1411, 1412, 1413, 1414, 1415, 1416, 1417, 1418, 1419, 1420, 1421, 1422, 1423, 1424, 1425, 1426, 1427, 1428, 1429, 1430, 1431, 1432, 1433, 1434, 1435, 1948, 1949, 1950, 1951, 1952, 1953, 1954, 1955, 1956, 1957, 1958, 1959, 1960, 1961, 1962, 1963, 1964, 1965, 1966, 1967, 1968, 1969, 1970, 1971, 1972, 1973, 1974, 1975, 1976, 1977, 1978, 1979, 1980, 1981, 1982, 1983, 1984, 1985, 1986, 1987, 1988, 1989, 1990, 1991, 1992, 1993, 1994, 1995, 1996, 1997, 1998, 1999, 2000, 2001, 2002, 2003, 2004, 2005, 2006, 2007, 2008, 2009, 2010, 2011, 2012, 2013, 2014, 2015, 2016, 2017, 2018, 2019, 2020, 2021, 2022, 2023, 2024, 2025, 2026, 2027, 2106, 2107, 2108, 2109, 2110, 2111, 2112, 2113, 2114, 2115, 2116, 2117, 2118, 2119, 2120, 2121, 2122, 2123, 2124, 2125, 2126, 2127, 2128, 2129, 2130, 2131, 2132, 2133, 2134, 2135, 2136, 2137, 2138, 2139, 2140, 2141, 2142, 2143, 2144, 2145, 2146, 2147, 2148, 2149, 2150, 2151, 2152, 2153, 2154, 2155, 2156, 2157, 2158, 2159, 2160, 2161, 2162, 2163, 2164, 2165, 2166, 2167, 2168, 2169, 2170, 2171, 2172, 2173, 2174, 2175, 2176, 2177, 2178, 2179, 2180, 2181, 2182, 2183, 2184, 2185, 2186, 2187, 2188, 2189, 2190, 2191, 2192, 2193, 2194, 2195, 2196, 2197, 2198, 2199, 2200, 2201, 2202, 2203, 2204, 2205, 2206, 2207, 2208, 2209, 2210, 2211, 2212, 2213, 2214, 2215, 2216, 2217, 2218, 2219, 2220, 2221, 2222, 2223, 2224, 2225, 2226, 2227, 2228, 2229, 2230, 2231, 2232, 2233, 2234, 2235, 2236, 2237, 2238, 2239, 2240, 2241, 2242, 2243, 2244, 2245, 2246, 2247, 2248, 2249, 2250, 2251, 2252, 2253, 2254, 2255, 2256, 2257, 2258
\end{quote}
We create a node for each galaxy and connect vertices if either is within the $16$ closest vertices to the other based on a Euclidean after this median normalization.
The graph is then weighted proportional to this distance and the distance to the the 8th nearest neighbor based on a $k$-nearest neighbor tuning procedure in manifold learning. (The results for spectral embeddings are somewhat sensitive to this procedure.) Formally, let $\rho_i$ be the distance to the $8$th nearest neighbor (or $\infty$ if all of these distances are 0). We add a weighted undirected edge based on node $i$ to node $j$ with distance $d_{i,j}$ as $W_{i,j} = \exp(-(d_{i,j}/\rho_i))$. If $i$ and $j$ are both nearest neighbors, then we increment the weights, so the construction is symmetric. Each node also has a self-loop with weight 1. The adjacency matrix of the graph has 32,229,812 non-zeros, which is 15,856,315 edges and 517,182 self-loops.
The code for this experiment is in the Jupyter notebook \texttt{hexbingraphplots\_global.jl} in the subdirectory~\path{flow_embedding/hexbin_plots} available in~\citep{git:localgraphclustering_experiments}. The full code to process the graph is available upon request.

\paragraph{\Cref{fig:cluster-improvement}} For this experiment we used seeded PageRank to find the seed set for the flow algorithm \MQI and \LFI.
We set the teleportation parameter of the seeded PageRank algorithm to $0.01$.
The code for this experiment is in the Jupyter notebook \texttt{plot\_cluster\_improvement.ipynb} in the subdirectory~\path{cluster_improvement} available in~\citep{git:localgraphclustering_experiments}.

\paragraph{\Cref{fig:grow-shrink-image}}
In our experiments constructing the graph from the image, we follow \Cref{sec:image-to-graph} and we set $r=80$, $\sigma_p^2=\mathcal{O}(10^2)$ and $\sigma_c^2=l/10$, where $l$ is the maximum between the row and column length of the image.
The code for this experiment is in the Jupyter notebook \texttt{image\_segmentation.ipynb} in the subdirectory~\path{image_segmentation} available in~\citep{git:localgraphclustering_experiments}.

\paragraph{\Cref{fig:JH}} The input is a 2-hop BFS set starting from a random target node. We independently generate 25 such BFS sets. The transparency level of red or blue nodes is determined by the ratio of including each node in the resulting sets.
The code for this experiment is in the Jupyter notebook \texttt{social.ipynb} in the subdirectory~\path{social} available in~\citep{git:localgraphclustering_experiments}. Specific details about tuning can also be found in the code.

\paragraph{\Cref{fig:ssl-all}} For every class we randomly select a small percentage of labeled nodes, the exact percentages are given in the main text. The nodes that are selected from each class are considered a single seed set.
For each seed set and for each class we use seeded PageRank with teleportation parameter equal to $0.01$. This procedure provides one PageRank vector per class. For each unlabeled node in the graph we look at the corresponding coordinates in the PageRank vectors and we give to each unlabeled node the label that corresponds to the largest value in the PageRank vectors. For flow methods, for every labelled node that is used, we run one step of breadth-first-search to expand the single seed node to a seed set. The expanded seed set is used as input to the flow methods. We find a cluster and each node in the cluster is considered to have the same label
as the seed node. Based on this technique, it is possible that one node can be allocated in more than one classes, we consider such node as false positives.
The code for this experiment is in the Jupyter notebook \texttt{semisupervised\_learning.ipynb} in the subdirectory~\path{ssl} available in~\citep{git:localgraphclustering_experiments}. The MNIST graph was weighted for this experiment. The distance between two images is computed by a radial basis function with width to be 2. To robustify the process of rounding diffusion vector to class labels, we use a strategy from~\citet{Gleich-2015-robustifying}, which involves rounding to classes based on the node with the smallest rank in the ranked-list of each diffusion vector.

\paragraph{\Cref{tab:runtime}} The code for this experiment is in the Jupyter notebooks in the subdirectory~\path{large_scale} available in~\citep{git:localgraphclustering_experiments}.

\paragraph{\Cref{fig:usroads-embed}}
We use the eigenvector of the Laplacian matrix $\mD - \mA$ associated with the smallest non-zero eigenvalues to compute the vectors $v_1$ and $v_2$. The coordinates of the plot are generated by assigning $x$ and $y$ based on the rank of a node in $v_1$ and $v_2$ in a sorted order. This has the effect of stretching out the eigenvector layout, which often compresses many nodes at similar point. The color of the nodes is proportional to the east-west latitude. The code for this experiment is in the notebook \texttt{usroads-embed.ipynb} in the subdirectory~\path{usroads} in~\citep{git:localgraphclustering_experiments}.

\paragraph{\Cref{fig:usroads-local-embed}}
We use \Cref{algo:flow-coords} with $N=500$ sets, $k=1$, $d=20$, $c=2$, along with \LFI[0.1] as the improve algorithm. For the local spectral embedding, we use the same seeding parameters with seeded PageRank with $\rho=\mbox{1e-6}$. When we create the matrix $\mX$ for seeded PageRank, we take the base-10 logarithm of the result value (which is always between $0$ and $1$). For vertices with 0 values, we assign them $-10$, which is lower than any other value. We found that this gave a more useful embedding and helped the spectral show more structure. The node labeled ``Rest of graph'' was manually placed in both because the embedding does not suggest a natural place for this. Here, we also used the rank of the node in a sorted order, which helps to spread out nodes that are all placed in exactly the same location. The code for this experiment is in the Jupyter notebook \texttt{usroads-embed.ipynb} in the subdirectory~\path{usroads} available in~\citep{git:localgraphclustering_experiments}.

\paragraph{\Cref{fig:astro-embed}}
We use \Cref{algo:flow-coords} with $N=500$ sets, $k=1$, $d=3$, $c=2$, along with \LFI[0.1] as the improve algorithm. We used the same local spectral methodology as in \Cref{fig:usroads-local-embed}. The large red node represents the remainder of the graph and all ``unembedded nodes,'' which is manually placed to highlight edges to the rest of the graph. Here, we also used the rank of the node in a sorted order, which helps to spread out nodes that are all placed in exactly the same location. The code for this experiment is in the Python script \texttt{flow\_embedding.py} in the subdirectory~\path{flow_embedding} available in~\citep{git:localgraphclustering_experiments} and Jupyter notebooks in the subdirectory~\path{flow_embedding/hexbin_plots} available in~\citep{git:localgraphclustering_experiments}. The Python script needs to be run first to generate data and then the notebook can be used to generate the figures.

\paragraph{\Cref{fig:astro-cond-hist}} The code is in the Jupyter notebooks \texttt{social.ipynb} in the subdirectory~\path{flow_embedding/cond_hists} available in~\citep{git:localgraphclustering_experiments}. They both need the embedding results from \Cref{fig:astro-embed} to generate the figures.

\paragraph{The rationale for the the local flow embedding procedure}
We now briefly justify the motivation for the structure of the local flow embedding algorithm. The key idea is that spectral algorithms are based on linear operations: if we have any way of sampling the reference set $R$ with a normalized set indicator $T$ such that $E[T] = \frac{1}{|R|} \mathbf{1}_R$, then if $f$ is a linear function -- such as an exact seeded PageRank computation -- we have $E[f(T)] = f(\frac{1}{|R|} \mathbf{1}_R)$. This expectation corresponds to the seeded PageRank result on the entire set. To include another dimension, we would seek to find an orthogonal direction to $E[f(T)]$, such as is done with constrained eigenvector computations. It is this linear function perspective that inspired our flow-embedding algorithm: collect samples of $f(T_i)$ into a matrix and then use the SVD on the samples of $T$ to approximate $E[f(T)]$ and the orthogonal component (given by the second singular vector). While some of these arguments can be formalized and made rigorous for a linear function, that is an orthogonal discussion (pun intended). Here, we simply use the observation that this perspective enables us to use a nonlinear procedure $f$ without any issue. This gave rise to the \Cref{algo:flow-coords}, which differs only in that we grow the sets $T \to R_i$ by including all vertices within graph distance $d$.
\section{Converting Images to Graphs}
\label{sec:image-to-graph}

\begin{figure}[t]
	\includegraphics[width=\linewidth]{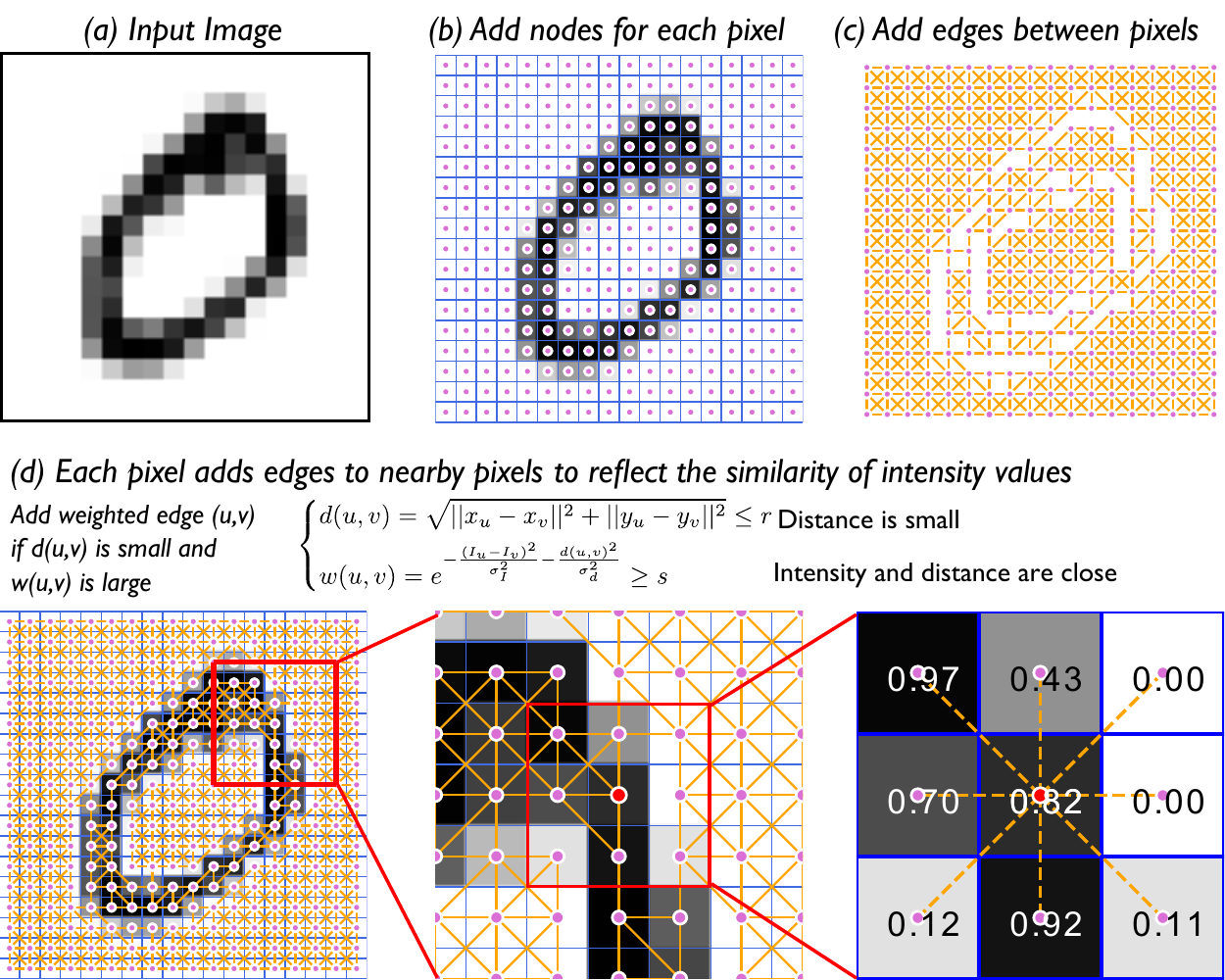}
	\caption{We turn an image into a graph by adding a node for every pixel (b). Then we connect the nodes if the associated pixels are close by (distance less than $r$) as well has have similar pixel values). We weight the edge by the degree of similarity. The resulting graph has small conductance sets when there are regions with similarly colored pixels.  }
	\label{fig:image-to-graph}
\end{figure}

For illustration purposes, we use images to generate graphs in various examples throughout the survey. The purpose of this construction is that visually distinct segments of the picture should have small conductance. Given an image we create a weighted nearest neighbor graph using a Gaussian kernel as described in~\citet{SM2000}.
We create a node for each pixel. Then we connect pixels with weighted edges. In particular, let $w_{ij}$ denote the the weight of the edge between pixels $i$ and $j$, let $p_i\in\mathbb{R}^2$ be the position of pixel $i$, $c_i\in\mathbb{R}^3$ is the color representation of pixel $i$, $\sigma_d^2$ is the variance for the position, $\sigma_I^2$ is the variance for the color. Then, we define the edge weights as
\begin{equation*}
w_{ij} :=
\begin{cases}
e^{- \frac{\|p_i - p_j\|^2_2}{\sigma_d^2} - \frac{\|c_i - c_j\|^2_2}{\sigma_I^2}} & \mbox{if} \ \|p_i - p_j\|^2 \le r  \\
0 & \mbox{otherwise}
\end{cases}
\end{equation*}
Note that there is a region $r$ that restricts the feasible edges, illustrated in \Cref{fig:image-to-graph}.
\section*{Acknowledgements}
\footnotesize
We would like to thank many individuals for discussions about these ideas over the years. We would also like to especially thank Nate Veldt for a careful reading of an initial draft, Charles Colley for reviewing a later draft, both Di Wang and Satish Rao for discussions on geometric aspects of flow algorithms, and finally Kent Quanrud for many helpful pointers. 	
	\bibliographystyle{dgleich-bib3}
	\bibliography{references} 
\end{document}